\renewcommand{\cite}{\citep}
\newtheorem{assumption}[theorem]{Assumption}
\begin{document}

\title{Precise High-Dimensional Asymptotics for\\Quantifying Heterogeneous Transfers}

\author{\name Fan Yang \email fyangmath@mail.tsinghua.edu.cn\\
       \addr Yau Mathematical Sciences Center,
       Tsinghua University\\
       Beijing 100084, China
       \AND
       \name Hongyang R. Zhang \email ho.zhang@northeastern.edu \\
       \addr Khoury College of Computer Sciences,
       Northeastern University\\
       Boston, MA 02115, US
       \AND
        \name Sen Wu \email senwu@cs.stanford.edu \\
       \addr Department of Computer Science,
       Stanford University\\
       Stanford, CA 94305, US
       \AND
        \name Christopher R\'e \email chrismre@stanford.edu \\
       \addr Department of Computer Science,
       Stanford University\\
       Stanford, CA 94305, US
       \AND
       \name Weijie J. Su \email suw@wharton.upenn.edu \\
       \addr Department of Statistics and Data Science,
       University of Pennsylvania\\
       Philadelphia, PA 19104, US
       }

\editor{Mladen Kolar}
\maketitle
\begin{abstract}%
The problem of learning one task using samples from another task is central to transfer learning. In this paper, we focus on answering the following question: when does combining the samples from two related tasks perform better than learning with one target task alone? This question is motivated by an empirical phenomenon known as \emph{negative transfer}, which has been observed in practice. While the transfer effect from one task to another depends on factors such as their sample sizes and the spectrum of their covariance matrices, precisely quantifying this dependence has remained a challenging problem. In order to compare a transfer learning estimator to single-task learning, one needs to compare the risks between the two estimators precisely. Further, the comparison depends on the distribution shifts between the two tasks. This paper applies recent developments of random matrix theory to tackle this challenge in a high-dimensional linear regression setting with two tasks. We show precise high-dimensional asymptotics for the bias and variance of a classical hard parameter sharing (HPS) estimator in the proportional limit, where the sample sizes of both tasks increase proportionally with dimension at fixed ratios. The precise asymptotics apply to various types of distribution shifts, including covariate shifts, model shifts, and combinations of both. We illustrate these results in a random-effects model to mathematically prove a phase transition from positive to negative transfer as the number of source task samples increases. One insight from the analysis is that a rebalanced HPS estimator, which downsizes the source task when the model shift is high, achieves the minimax optimal rate. The finding regarding phase transition also applies to multiple tasks when covariates are shared across all tasks. Simulations validate the accuracy of the high-dimensional asymptotics for finite dimensions.
\end{abstract}
\begin{keywords}
transfer learning, negative transfer, covariate shift, high-dimensional asymptotics, random matrix theory
\end{keywords}

\section{Introduction}\label{sec_introduction}

Given samples from two tasks, does combining the samples from both tasks together yield a better estimator for the target task of interest?
Concretely, suppose there are $n_1$ samples drawn from a $p$-dimensional distribution $D_1$ with real-valued labels in a \emph{source} task.
Suppose there are $n_2$ samples drawn from a $p$-dimensional distribution $D_2$ with real-valued labels in the \emph{target} task.
Does combining the samples from both the source and the target tasks together lead to an estimator whose performance would be better than the learning outcome from utilizing the $n_2$ samples alone?

This question is motivated by scenarios where one would like to use samples from an auxiliary task to augment the sample size of a target task.
It is widely observed that when the two tasks are far apart, negative transfer can occur, where the transfer learning performance is worse than that of single-task learning \cite{PY09,WZR20}.
However, a rigorous proof of when and why negative transfer can happen (even within some simple statistical models) has remained elusive in the statistical learning literature.

Identifying negative transfer requires modeling the relationship between two tasks.
Early work has approached such questions through structural learning \cite{AZ05}, where a shared, low-dimensional structure is extracted from several (possibly unlabeled) tasks via an alternating optimization framework.
Another influential line of early work is the quantification of transfer learning with uniform convergence bounds \cite{BBCKP10}.
Notice that negative transfer can happen if the label distribution of task one, conditioned on the covariates, differs significantly from task two.
For brevity, we refer to settings where $D_1 \neq D_2$ as {covariate shift} and settings where label distributions differ (conditioned on the covariates) as model shift.
However, analyzing these settings requires going beyond the in-distributional assumptions of supervised learning, which has led to recent results in transfer learning for the two-task linear regression setting \cite{lei2021near,dar2022double}.
Recent results by \citet{dhifallah2021phase} shed light on an intriguing phase transition that has been observed in empirical transfer learning. They study both hard transfer and soft transfer via a precise analysis of the transfer rates.
In this paper, we propose to quantify transfer effects in high-dimensional linear regression by employing recently developed techniques from the random matrix theory literature in the proportional limit setting.

Concretely, we study a high-dimensional linear regression setting with two tasks under combinations of covariate and model shifts.
We study a hard transfer estimation procedure in the high-dimensional limit.
This procedure is rooted in the classical literature \cite{C97}, whose theoretical properties are not well-understood.
We study functions involving two sample covariance matrices with arbitrarily different population covariances.
We estimate the asymptotic limits of the bias and variance of hard parameter sharing and illustrate our estimates in a random-effects model.
We achieve these results using tools from modern random matrix theory and free probability theory \cite{bai2009spectral,erdos2017dynamical,nica2006lectures}.
In particular, building on recent developments in random matrix theory \cite{BES_free1,isotropic,Anisotropic}, we derive precise high-dimensional asymptotics involving combinations of two covariance matrices, along with {\itshape almost sharp convergence rates} to the limits.

\subsection{Problem Setup}\label{sec_mot}

\begin{figure}
    \centering
    \includegraphics[width=0.75\textwidth]{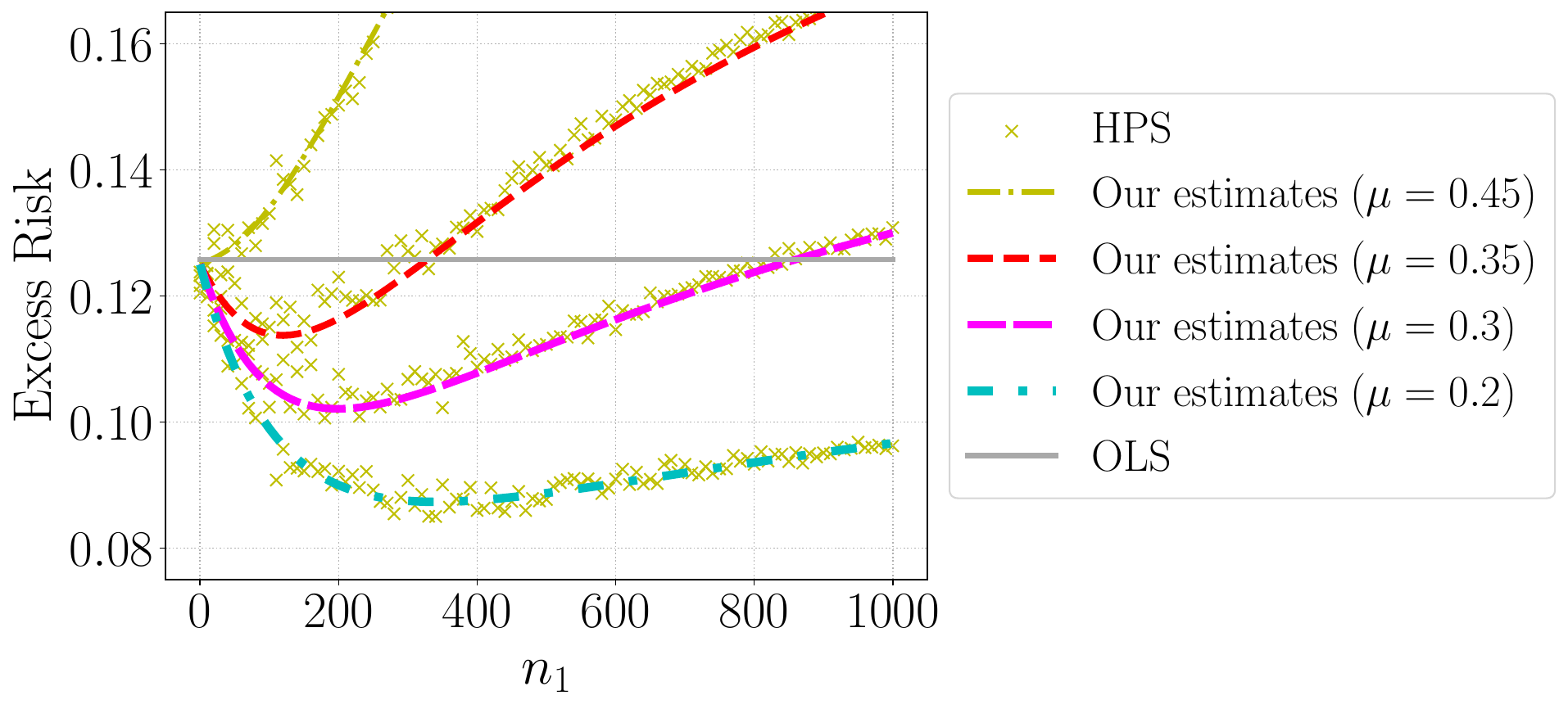}
    \caption{We illustrate the performance of HPS vs. OLS (ordinary least squares) for different levels of model shifts, indicated by $\mu \approx \bignorm{\beta^{(1)} - \beta^{(2)}} / \sqrt{2}$. We consider two linear regression tasks and vary the sample size of the source task, denoted as $n_1$ and $\mu$, while holding the target task fixed.
    The gray line refers to the performance of OLS.
    Any point above the gray line represents negative transfer, while any point below represents positive transfer.
     In this simulation, we set $p = 100$, $n_2 = 300$, and $\sigma^2 = \frac 1 4$.
    We sample the covariates $X$ from a $p$-dimensional isotropic Gaussian and
    plot the excess risk of HPS (see equation \eqref{HPS_loss}).
    For further details about the setup, see Proposition \ref{claim_model_shift}, Section \ref{sec_sizeratio}.}
    \label{fig_motivation}
\end{figure}

We first introduce our problem setup.
For either task $i = 1$ or task $i = 2$, let $x_1^{(i)}, x_2^{(i)}, \dots, x_{n_i}^{(i)}$ denote the feature covariates and %
$y_1^{(i)}, y_2^{(i)}, \dots, y_{n_i}^{(i)}$ denote the prediction labels; Recall that $n_1$ and $n_2$ refer to the number of samples from task one and task two, respectively.
We assume a linear model specified by an unknown parameter vector $\beta^{(i)} \in \real^p$:
\begin{align}
    y_j^{(i)} = \big(x_j^{(i)}\big)^{\top}{\beta^{(i)}} + \varepsilon_j^{(i)}, ~ \text{ for any } \  j = 1,\dots,n_i, \label{eq_linear}
\end{align}
where $\varepsilon^{(i)}_j\in\real$ denotes a random noise variable with mean zero and variance $\sigma^2$.
We refer to the first task as the \emph{source} task and the second task as the \emph{target} task for ease of presentation.
We assume $n_2 > p$ while $n_1$ can be either less than or greater than $p$.

Following the existing random matrix theory literature, we study random designs where the covariates are given by
\[ x_j^{(i)} = \big(\Sigma^{(i)}\big)^{\frac 1 2} z_{j}^{(i)}, \text{ for } i\in\{1,2\} \text{ and for any } j = 1,\dots,n_i, \]
where $z_j^{(i)}$ consists of independent and identically distributed entries of mean zero and variance one, and $\Sigma^{(1)} \in \real^{p \times p}$ and $\Sigma^{(2)} \in \real^{p \times p}$ denote the population covariance matrices.
Let $X^{(i)}=(x_1^{(i)},\ldots,x_{n_i}^{(i)})^\top \in\real^{n_i \times p}$ denote a matrix that corresponds to all the covariates of task $i$, for $i \in \{1, 2\}$.
Let $Y^{(1)}\in \R^{n_1}$, $Y^{(2)}\in \R^{n_2}$ be the vectors of labels.
Then, we minimize the following objective, parameterized by a shared variable $B \in \real^{p}$:
\begin{align}\label{eq_hps}
    \ell(B) = \bignorm{X^{(1)} B - Y^{(1)}}^2 + \bignorm{X^{(2)} B - Y^{(2)}}^2.
\end{align}
Notice that $B$ provides a shared feature vector for both tasks.
The results described in the next section will permit straightforward extensions to variants of objective \eqref{eq_hps}, including adding weights to each task and a ridge penalty.
Thus, we will describe our main results for unweighted and unregularized objectives without losing generality.
One can see that when $n_2 > p$, there is a unique minimizer of $f$. We denote this estimator for the target task as $\hat{\beta}_2^{\MTL}$, which is also called {hard parameter sharing} (HPS) or hard transfer in the multi-task learning literature \citep{C97}.
We will analyze the bias and variance of the HPS estimator and connect the analysis to transfer effects.

Our motivation for studying this simple HPS estimator is that it provides a simple setting in which both positive and negative transfers can happen.
In Figure \ref{fig_motivation}, we illustrate that if the model shift is high (as indicated by $\mu = 0.45$), HPS always performs worse than OLS (ordinary least squares), regardless of the value of $n_1$.
If the model shift is moderate (as indicated by $\mu = 0.2, 0.3, 0.35$), the transfer effect is positive within a small range of $n_1$ until reaching a threshold and turns negative afterward.
In summary, the HPS estimator provides an ideal scenario for understanding the effects of transfer learning under different types of distribution shifts between the source task and the target task.

\subsection{Summary of Our Results}

We give a summary of our results, which follow the assumptions of a proportional limit setting, with both $n_1$ and $n_2$ increasing to infinity in proportion as $p$ goes to infinity. 
Our goal is to find the asymptotic limit of the bias and variance of the HPS estimator applied to the target task ($i = 2$). 
In particular, deriving the asymptotic limit of the bias and variance of the HPS estimator requires studying several functions involving two independent sample covariance matrices with different population covariance matrices. 
Below, we provide a simplified exposition and elaborate on the results later in Section \ref{sec_HPS}.

First, the variance of the HPS estimator is equal to 
{\begin{align}\label{eq_var_simp}
	\bigtr{\Sigma^{(2)} \Big({X^{(1)}}^{\top} X^{(1)} + {X^{(2)}}^{\top} X^{(2)}\Big)^{-1}}.
\end{align}}
The bias of the HPS estimator is more involved and is deferred until Lemma \ref{lem_HPS_loss}. 
The bias and variance both involve the inverse of the sum of the sample covariance matrices of both tasks, which exhibit a covariate shift between them.

When $n_2 > p$, the OLS estimator exists, and it is well-known that the limit of its excess risk 
is equal to ${ \frac {\sigma^2 p} {n_2 - p}}$ \cite{bai2009spectral}.
To the best of our knowledge, the asymptotic limit of equation \eqref{eq_var_simp} (when $n_1/p \rightarrow c_1, n_2 /p \rightarrow c_2$ as $p \rightarrow \infty$) has not been identified in the literature. Our paper takes the first step to fill the gap.
We summarize our main results as follows:
\begin{itemize}
\item First, we consider the covariate shift setting, where $\Sigma^{(1)}$ and $\Sigma^{(2)}$ are arbitrary and $\beta^{(1)} = \beta^{(2)}$.
In Theorem \ref{thm_main_RMT}, we describe the asymptotic limit of the variance formula as a function of the singular values of the covariate shift matrix. 
This result generalizes classical results on the limit of the trace of the inverse of one sample covariance matrix to two covariance matrices under covariate shifts.

\item Second, we consider the model shift setting, where $\beta^{(1)}$ and $\beta^{(2)}$ are arbitrary and $\Sigma^{(1)} = \Sigma^{(2)}$.
In Theorem \ref{cor_MTL_loss}, we describe the asymptotic limit of the bias and variance of HPS as a function of the Euclidean distance between $\beta^{(1)}$ and $\beta^{(2)}$, and the sample sizes.
We then illustrate the results in a random-effects model. Motivated by the phase transition, we show that \emph{with a simple adjustment, HPS can match the minimax lower bound} (see Theorem \ref{prop_lb}).

\item Third, we extend the above results in two aspects:
i) When both covariate and model shifts are present;
ii) When there are multiple tasks in the input.
The detailed results, along with supporting simulations, are provided in Section \ref{sec3_combined}.
In particular, we demonstrate that the findings regarding complementary covariate shifts and phase transitions in transfer learning are applicable to more general settings.
\end{itemize}

Finally, we present a case study comparing the statistical behavior of a soft transfer estimator with HPS.
The soft parameter sharing (SPS) estimator (which is also called soft transfer in \citet{dhifallah2021phase}) is defined by minimizing the following for $\lambda>0$:
\begin{align}\label{target-SPS}
    \ell(B, z) = \bignorm{X^{(1)}(B + z) - Y^{(1)}}^2 + \bignorm{X^{(2)} B - Y^{(2)}}^2 +  {\lambda}  \bignorm{z}^2.
\end{align}
Let $\hat B,\hat z$ denote the minimizer, and the SPS estimator for the target task is defined as $\hat{\beta}_2^{\SPS}(\lambda) = \hat B$.
The SPS estimator makes sense in the model shift setting where $\beta^{(1)}$ is far from $\beta^{(2)}$.
We will provide the bias and variance of SPS under deterministic designs.
Then, we provide an example to contrast the behavior of SPS with that of HPS.
We find that the regularization effect of SPS is similar to downsizing the source task in HPS when the model shift is large.
 
We illustrate the above results in a random-effects model, where $\beta^{(i)}$ is equal to a shared model vector across all $i$ plus an independent random effect for each task.
The random-effects model provides a natural way to measure the heterogeneity of each task's underlying linear model and has been used to study distributed ridge regression \cite{dobriban2020wonder}.

First, we demonstrate that covariate shifts can either enhance or hinder transfer learning performance compared to single-task learning. 
We describe an example in Proposition \ref{claim_dichotomy}, showing that when $n_1 < n_2$, transferring from any covariate-shifted data source (i.e.,  $\Sigma^{(1)} \neq \Sigma^{(2)}$) achieves a lower excess risk of HPS than transferring from the data source with $\Sigma^{(1)} = \Sigma^{(2)}$.
On the other hand, when $n_1 > n_2$, transferring from any covariate-shifted data source always incurs a higher excess risk of HPS than transferring from the data source with $\Sigma^{(1)} = \Sigma^{(2)}$.

Second, we identify three transfer regimes in the random-effects model. 
Let $\mu$ denote the Euclidean distance between $\beta^{(1)}$ and $\beta^{(2)}$. We show the following dichotomy between $\mu$ and $n_1, n_2$:
\begin{itemize}%
    \item When ${\mu^2}{} \frac{\bigtr{\Sigma^{(1)}}} {p} \le \frac{\sigma^2 p}{2(n_2 - p)}$, the transfer effect is positive for any value of $n_1 > 0$: HPS always performs better than OLS.
	\item When $\frac{\sigma^2 p}{2(n_2 - p)} < {\mu^2} \frac{\bigtr{\Sigma^{(1)}}}{p} < \frac{\sigma^2 n_2}{2(n_2 - p)}$, there exists a deterministic constant $\rho > 0$ such that the transfer effect is positive if and only if $\frac {n_1} p < \rho$.
    This corresponds to the crossing point between HPS and OLS in Figure \ref{fig_motivation}.
	\item When ${\mu^2}\frac{\bigtr{\Sigma^{(1)}}}{p} \ge \frac{\sigma^2 n_2}{2(n_2 - p)}$, the transfer effect is negative for any value of $n_1 > 0$: HPS always performs worse than OLS.
\end{itemize}

\textbf{Summary of contributions.} This paper presents a rigorous study of the transfer effects of HPS estimators under various types of data set shifts between two tasks.
We contribute to the transfer learning literature by using random matrix theory to analyze negative transfer of HPS in the \emph{high-dimensional} asymptotic setting under covariate shifts.
In particular, we derive the asymptotic limit of the HPS estimator under covariate shift (Theorem \ref{thm_main_RMT}), which generalizes a classical result in the random matrix theory literature.
For the model shift setting, our result (Theorem \ref{cor_MTL_loss}) can explain a phase transition (Proposition \ref{claim_model_shift}) that has been observed in empirical studies of transfer learning.
We extend these results to show that the insights hold more generally beyond the specific examples.
Simulation studies demonstrate the accuracy of the derived asymptotic in finite dimensions.
We highlight numerous technically challenging open questions, and we hope our work can inspire further studies on applying random matrix theory to understand transfer learning.

\subsection{Related Work}\label{sec_related}

Our work expands the existing statistical learning literature by contributing a random matrix theory perspective to quantify transfer effects in high-dimensional linear regression with two tasks.
Our perspective differs from the existing literature, which uses uniform convergence arguments \cite{B00,BS03,M06} and generalization bounds \cite{crammer2008learning,BBCKP10} to quantify the performance of transfer learning. 
The benefit of using precise asymptotics to study transfer learning performance compared to generalization bounds is that they can be used to compare the performances of different estimators, and their predictions remain accurate in finite dimensions.
This is crucial for explaining phenomena such as phase transitions, as highlighted in the simulation.

There has been a long line of work that tries to formulate the notion of information or knowledge transfer in the statistical learning literature.
In the context of the many Normal means models, lasso regularization can be used to leverage the shared support of multiple tasks \cite{kolar2011union}, and similar insights can also be fleshed out in multi-task learning with marginal regression \cite{kolar2012marginal}.
The work of \citet{dhifallah2021phase} provides a precise analysis of both the hard transfer and soft transfer approaches in a student-teacher framework.
They provide the convergence of the training loss in the asymptotic setting where $n_1/p, n_2/p$ converge to a constant.
Then, they provide illustrative examples of the precise analysis in regression and classification models to explain the phase transition in transfer learning.
One of the key differences between our results and their work is that we address covariate shifts in the high-dimensional setting, while also providing sharp convergence rates. This requires the use of random matrix theory techniques from more recent literature. 
\citet{dar2022double} derive generalization error rates for the two-task transfer learning in the important \emph{overparameterized} regime (see also \citet{ju2023generalization} for further studies in the overparameterized transfer learning regime).
They derive a generalization error formula for the minimum $\ell_2$-norm solution of the target task, which can reproduce a double descent behavior and explain the negative transfer phenomenon \cite{dar2024common}.
In this vein, the high-dimensional asymptotics we have provided under various distribution shifts may be viewed as complementary to this line of work, contributing to the mathematical formalization of transfer learning.

In the context of linear regression tasks, recent studies have investigated the optimal estimators for transfer learning under distribution shifts.
\citet{lei2021near} give an in-depth study of minimax estimators for transfer learning from two linear regression tasks.
They derive minimax optimal estimators that can be formulated as solving a convex program, and the approach can be applied under both covariate and model shifts.
In the setting of multiple linear regression, \citet{li2020transfer} design an adaptive estimation procedure in a setting where many tasks with Gaussian covariates are present. 
\citet{duan2023adaptive} introduce an adaptive estimation procedure when many tasks are present in multi-task learning.
There is another related line of work on multi-view regression, where the input variable (which is a real vector) can be partitioned into two different views, and it is assumed that either view of the input is sufficient to make accurate predictions \cite{kakade2007multi}.
\citet{zhao2024trans} design a two-stage procedure for matrix estimation that uses a multi-task learning objective to capture shared and unique features, and then refines them to adjust for structural differences between the target and source matrices.
Our paper complements these existing works, as we focus on the proportional limit setting to examine the performance of commonly used estimators and provide insights.

When covariates are sampled from Gaussian distributions, the precise asymptotic limit of the inverse of a sample covariance matrix can be derived directly from the properties of the Wishart distribution. 
In the high-dimensional setting, the eigenvalues of a Wishart matrix satisfy the well-known Marchenko–Pastur (MP) law, whose Stieltjes transform characterizes the variance limit.
Furthermore, it is well known that the MP law holds universally regardless of the underlying data distribution of the covariates (see, e.g., \citet{bai2009spectral}). \citet{isotropic} obtain a sharp convergence rate of the empirical spectral distribution (ESD) to the MP law for sample covariance matrices with isotropic population covariances. \citet{Anisotropic} later extend this result to sample covariance matrices with arbitrary population covariances. These results are proved by establishing the optimal convergence estimates of the Stieltjes transforms of sample covariance matrices, also known as \emph{local laws} in the random matrix theory literature. We refer interested readers to \citet{erdos2017dynamical} and the references therein for a detailed review.
One technical contribution of this work is to extend these techniques to the two-task setting and prove an almost sharp local law for the sum of two sample covariance matrices with arbitrary covariate shifts. This local law allows us to derive the precise variance limit depending on the singular values of the covariate shift matrix.
Finally, recent work has also studied the statistical behavior of interpolators in the high-dimensional setting \citep{sur2019modern}, which would be another promising setting for exploring statistical transfer learning.

The asymptotic limit of equation \eqref{eq_var_simp} may also be derived using free probability theory. %
However, this approach is not fully justified when the covariates are sampled from non-Gaussian distributions with non-diagonal covariate shift matrices.
Furthermore, our result provides almost sharp convergence rates to the asymptotic limit, while it is unclear how to obtain such rates using free probability techniques.
The bias term involves asymmetric matrices in terms of two sample covariance matrices, whose analysis is technically involved.
Our techniques are inspired by free additions of random matrices \citep{nica2006lectures} and recent results \citep{BES_free1}. %
In particular, we provide the first precise bias limit in the model shift setting, assuming that $\Sigma^{(1)} = \Sigma^{(2)}$ or that $\Sigma^{(2)}$ is isotropic, and the covariates are sampled from Gaussian distributions.
Showing the asymptotic bias limit under arbitrary covariate and model shifts is an interesting open problem for future work (See Remarks \ref{remark_bias} and \ref{remark_open} for further discussions).

\subsection{Organization}

The rest of this paper is organized as follows.
In Section \ref{sec_HPS}, we state the data model and its underlying assumption.
Then, we connect the transfer effect of the HPS estimator with its bias-variance decomposition.
In Section \ref{sec_main}, we present the high-dimensional asymptotic limits of the bias and variance of HPS under covariance shifts.
In Section \ref{sec_sizeratio}, we characterize the high-dimensional asymptotic limits of the bias and variance of HPS under model shifts.
Section \ref{sec3_combined} extends our main findings from the above two settings to more general settings.
Section \ref{subsec_SPS} provides a preliminary study of the bias and variance of the SPS estimator.
Finally, in Section \ref{sec_conclude}, we state the conclusion of our paper.
Appendix \ref{app_tool} to Appendix \ref{sec:pf_decomp_SPS} provides detailed proofs of our results.

\section{Preliminaries}\label{sec_HPS}

This section describes the data model that we will work with, along with the underlying assumptions.
Then, we describe the bias and variance of the HPS estimator.
Finally, we connect the bias-variance decomposition to the transfer analysis in the rest of the paper.

\subsection{Data Model and Assumptions}\label{sec_data}

Recall that we have two tasks.
For $i = 1, 2$, $X^{(i)} \in \real^{n_i \times p}$ corresponds to task $i$'s covariates and
$Y^{(i)} \in \real^{n_i}$ corresponds to their labels. Moreover, let $\varepsilon^{(i)} \in \real^{n_i}$ be the vector notation corresponding to the additive noise of data set $i$. Then, equation \eqref{eq_linear} can be reformulated as
$Y^{(i)}= X^{(i)}\beta^{(i)} + \varepsilon^{(i)}.$
Assume that $\beta^{(1)}$ and $\beta^{(2)}$ are two arbitrary (deterministic or random) vectors that are independent of $X^{(i)}$ and $\varepsilon^{(i)}$.
Throughout the paper, we make the following assumptions on  $X^{(i)}$ and  $\varepsilon^{(i)}$, which are standard in the random matrix theory literature  (see, e.g., \citet{tulino2004random,bai2009spectral}).

First, the row vectors of $X^{(i)}$ are i.i.d.~centered random vectors with $p\times p$ population covariance matrix $\Sigma^{(i)}$ and an $n_i\times p$ random matrix $Z^{(i)}=[Z^{(i)}_{jk}]$ with independent entries of zero mean and unit variance:
\begin{align}\label{XofZ}
    X^{(i)} = Z^{(i)} (\Sigma^{(i)})^{1/2} \in \real^{n_i\times p}.
\end{align}
Let $\tau>0$ be a small constant.
Suppose the $\varphi$-th moment of each entry $Z^{(i)}_{jk}$ is bounded from above by $1/\tau$,   for a constant $\varphi>4$:
\begin{align} \label{conditionA2}
\ex{\big\vert Z^{(i)}_{jk} \big\vert^\varphi}  \le {\tau}^{-1}.
\end{align}
The eigenvalues of $\Sigma^{(i)}$, denoted as $\si^{(i)}_1,\cdots,\si^{(i)}_p$, are all bounded between $\tau$ and $\tau^{-1}$:
\begin{equation}\label{assm3}
   \tau \le  \si^{(i)}_p \le\cdots\le \si^{(i)}_2 \le \si^{(i)}_1 \le \tau^{-1}.
\end{equation}
Second,  $\varepsilon^{(i)} \in \real^{n_i}$ is a random vector with independent entries having mean zero, variance $\sigma^2$, and bounded moments up to any order, i.e., for any fixed $k\in \N$, there exists a constant $C_k>0$ such that
\begin{align}\label{eq_highmoments}
\ex{\big| \ve^{(i)}_{j} \big|^k} \le C_k.
\end{align}
Third, the sample sizes are comparable to the dimension $p$.
Denote by $\rho_1 = n_1 / p$ and $\rho_2 = n_2 / p$.
Assume
\begin{align}\label{assm2}
	0 < \rho_1 \le p^{\tau^{-1}},\quad  1+\tau \le \rho_2 \le p^{\tau^{-1}},\quad 0 < {\rho_1}/{\rho_2}\le \tau^{-1}.
\end{align}
The condition $\rho_2\ge 1 + \tau$ ensures that the target task's sample covariance matrix is of full rank with high probability.
The upper bound $\rho_i \le p^{\tau^{-1}}$ is a mild condition; otherwise, standard concentration results, such as the central limit theorem, already give accurate estimates in the linear model.
The condition ${\rho_1}/{\rho_2}\le \tau^{-1}$ ensures the sample size imbalance between the two tasks is bounded by a factor that does not grow with $p$.
In the transfer learning setting, one can think of $\rho_1$ and $\rho_2$ as fixed, positive constants that do not grow or diminish with $p$ to model practical applications.
To summarize, the underlying assumptions of the data model are as follows.
\begin{assumption}\label{assm_big1}
Let $\tau > 0$ be a small constant.
Suppose $X^{(1)}$, $X^{(2)}$, $\varepsilon^{(1)}$, and $\varepsilon^{(2)}$ are mutually independent. Moreover, suppose the following holds for $i=1$ and $i=2$: %
\begin{enumerate}%
\item  $X^{(i)}$ takes the form of equation \eqref{XofZ}, where $Z^{(i)}$ is a random matrix with i.i.d.~entries having zero mean, unit variance, and bounded moments as in equation \eqref{conditionA2}, and $\Sig^{(i)}$ is a deterministic positive definite symmetric matrix satisfying equation \eqref{assm3}.

\item $\varepsilon^{(i)} \in \real^{n}$ is a random vector consisting of i.i.d.~entries of zero mean, variance $\sigma^2$, and bounded moments as in equation \eqref{eq_highmoments}.

\item $\rho_{i}$ satisfies equation \eqref{assm2}. %
\end{enumerate}
\end{assumption}

\subsection{Bias and Variance}\label{sec_risk}

The risk of $\hat{\beta}_i^{\MTL}$ over an unseen sample $(x,y)$ of the target task $i$ is given by (under the mean squared loss):
$$\exarg{x, y}{\left\|y-x^\top \hat{\beta}_i^{\MTL} \right\|^2}= \left\|{\Sigma^{(i)}}^{\frac 1 2} \left(\hat{\beta}_i^{\MTL} - \beta^{(i)}\right)\right\|^2 + \sigma^2.$$
The excess risk is the difference between the above risk and the expected risk of the population risk optimizer:
\begin{align}\label{HPS_loss}
    L(\hat{\beta}_i^{\MTL}) := \left\| {\Sigma^{(i)}}^{\frac 1 2} \left(\hat{\beta}_i^{\MTL} - \beta^{(i)}\right)\right\|^2.
\end{align}
We shall treat task two as the target task and treat task one as the source task.  
The single-task learning estimator for the target task, which is the OLS estimator denoted by $\hat{\beta}_2^{\STL} $, is well-defined given Assumption \ref{assm_big1}, because the covariance matrix of the target task is invertible with high probability under the assumption.

Next, we present a bias-variance decomposition of the excess risk of HPS.\footnote{One way is to estimate the asymptotic limit of equation \eqref{HPS_loss} directly. However, this would involve an estimation that needs to take into account the randomness of the covariates $X$ along with the noise $\varepsilon$ simultaneously. Instead, given that $X$ and $\varepsilon$ are independent, our approach is to first derive the bias-variance decomposition of the excess risk, which is equal to the expectation of $L(\hat \beta_i^{\MTL})$ over the randomness of $\varepsilon$, and then compare the bias and variance of two different estimators.}
Denote the sum of both tasks' sample covariance matrix as:
\begin{align}\label{Sigma_a}
    \hat{\Sigma} = {X^{(1)}}^{\top} X^{(1)} + {X^{(2)}}^{\top} X^{(2)}.
\end{align}
The next result provides the bias and variance of the HPS estimator.

\begin{lemma}\label{lem_HPS_loss}
     Under Assumption \ref{assm_big1}, for any small constant $\e>0$, with high probability over the randomness of the training samples $X^{(1)}, Y^{(1)}, X^{(2)}, Y^{(2)}$, the following estimates hold: 
    \begin{align}
        L(\hat{\beta}_2^{\MTL}) =& \left(1+\OO(p^{-\frac 1 2 + c})\right)  \left(L_{\bias} + L_{\vari}\right)     , \label{L_HPS_simple}
    \end{align}
    where the bias and variance formulas are defined as %
    \begin{align}
        L_{\bias}  &= \left\| {\Sigma^{(2)}}^{\frac 1 2}\hat \Sigma^{-1} \bigbrace{{X^{(1)}}^\top X^{(1)}} \big(\beta^{(1)}- \beta^{(2)}\big) \right\|^2,  \label{Lbias} \\
        L_{\vari}   &= \sigma^2  \tr\big[{\Sigma^{(2)}\hat \Sigma^{-1}  }\big].  \label{Lvar}
    \end{align}
\end{lemma}

Hereafter, an event $\Xi$ is said to hold \emph{with high probability} (w.h.p.) if $\mathbb P(\Xi)\to 1$ as $p\to \infty$. Moreover, the big-$\OO$ notation $A=\OO(B)$ means that $|A|\le C|B|$ for a constant $C>0$ depending on the model parameters in Section \ref{sec_data} (i.e., $\tau$, $\varphi$ and $C_k$'s), but not on $p$, $n_1$ and $n_2$. Hence, the event that equation \eqref{L_HPS_simple} holds w.h.p. can be equivalently stated as
\[ \lim_{p\to \infty}\P\left[ \left|L(\hat{\beta}_2^{\MTL}) - L_{\bias} - L_{\vari}\right| \le  Cp^{-\frac 1 2 + c}(L_{\bias} + L_{\vari})\right] = 1 \]
for a constant $C>0$. The proof of Lemma \ref{lem_HPS_loss} can be found in Appendix  \ref{app_firstpf}.

\subsection{Using Bias and Variance to Analyze Transfer Effects}

The bias and variance decomposition above allows us to reason about transfer effects by comparing the bias and variance of HPS with that of OLS.
Notice that the bias of OLS is zero since $n_2 \ge (1 + \tau) p$ under Assumption \ref{assm_big1}.
Hence, the bias of HPS is always larger than that of OLS.
On the other hand, the variance of HPS is always lower than that of OLS.
Let $X^{(1)}=0$ in equation \eqref{Lvar}. 
By Woodbury matrix identity, the variance of OLS is always higher than formula \eqref{Lvar}:
\begin{align*}
    \bigtr{\Sigma^{(2)} \big({X^{(2)}}^{\top} X^{(2)}\big)^{-1}}
    \ge
    \bigtr{\Sigma^{(2)} {\hat{\Sigma}}^{-1}}. %
\end{align*}
Thus, the transfer effect of HPS is determined by the bias-variance decomposition: the bias always increases while the variance always decreases. 
Whether or not the transfer effect of HPS is positive depends on which effect dominates.

The above discussion highlights the need for precise bias-variance estimates to determine transfer effects. However, finding the exact limits is challenging because of data set shifts. Moreover, these shifts manifest in various forms.
To make progress on this important yet challenging problem, we divide our study into various combinations of covariate and model shifts.
Section \ref{sec_main} is devoted to the precise estimate of $L_{\vari}$ for the HPS estimator under an arbitrary covariate shift but without model shift.
Section \ref{sec_sizeratio} studies the effect of model shifts: Section \ref{subsect_modelshiftonly} gives precise estimates of $L_{\bias}$ and $L_{\vari}$ for the HPS estimator. 
Then, Section \ref{sec_adjusted} discusses the adjusted HPS procedure. Finally, Section \ref{sec_minimax} states the minimax lower bound. Section \ref{sec3_combined} extends our main findings from the previous two sections to settings with both covariate and model shifts, as well as multiple tasks.

\section{Covariate Shifts}\label{sec_main}

This section presents a precise estimate of the variance $L_{\vari}$ for the HPS estimator in equation \eqref{Lvar} with an almost sharp convergence rate under two different population covariance matrices $\Sigma^{(1)}, \Sigma^{(2)}$.
Then, we will provide two examples to illustrate the asymptotic limit, which provide varying levels of covariate shifts without model shifts.
Our simulation study shows that the asymptotic estimates are quite accurate, even when the feature dimension $p$ ranges from $50$ to $100$.
Finally, we provide numerical comparisons of the excess risk of HPS with several other transfer learning estimators.

\subsection{Main Results}\label{sec3_cov}

Suppose the two tasks satisfy the same linear model ($\beta^{(1)}=\beta^{(2)}$) but have different population covariance matrices ($\Sigma^{(1)}\ne \Sigma^{(2)}$).
Recall that the matrix $\hat{\Sigma}$ in equation \eqref{Sigma_a} is a sum of two sample covariance matrices.
Thus, the expectation of \smash{$\hat{\Sigma}$} is equal to a mixture of $\Sigma^{(1)}$ and $\Sigma^{(2)}$, with mixing proportions determined by the sample sizes $n_1$ and $n_2$.
Intuitively, the spectrum of $\hat{\Sigma}^{-1}$ not only depends on $n_1, n_2$, but also depends on how well aligned $\Sigma^{(1)}$ and $\Sigma^{(2)}$ are.
To capture this alignment, we introduce the covariate shift matrix
$M \define {\Sigma^{(1)}}^{\frac 1 2}{\Sigma^{(2)}}^{- \frac 1 2}$.
Let $\lambda_1 \ge \lambda_2 \ge \dots\ge \lambda_p $ be the singular values of $M$ in descending order.
Our first main result is the following theorem on the variance limit, which characterizes the exact dependence of $L_{\vari}$ on the singular values of $M$ and the sample sizes $n_1$ and $n_2$.

\begin{theorem}[Precise estimates under covariate shifts]\label{thm_main_RMT}
    Under Assumption \ref{assm_big1}, for any small constant $c>0$, with high probability over the randomness of $(X^{(1)}, X^{(2)})$, the following holds: 
	\begin{align}\label{lem_cov_shift_eq}
		\bigabs{L_{\vari}- \frac{\sigma^2}{n_1+n_2}\bigtr{  \Bigbrace{\alpha_1  M^\top M + \alpha_2\id_{p\times p}}^{-1}  }}
		\le \sigma^2 p^{\frac 1 2}(n_1 + n_2)^{\frac 2 {\varphi} -  \frac 3 2 + c},
	\end{align}
    where $\alpha_1$ and $\alpha_2$ are the unique positive solutions of the following system of equations:
	\begin{align}
		\alpha_1 + \alpha_2 = 1- \frac{p}{n_1 + n_2}, \quad
		\alpha_1 + \frac1{n_1 + n_2}   \sum_{i=1}^p \frac{\lambda_i^2 \alpha_1}{\lambda_i^2 \alpha_1 + \alpha_2}  = \frac{n_1}{n_1 + n_2}. \label{eq_a12extra}
	\end{align}
\end{theorem}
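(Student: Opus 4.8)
The plan is to reduce the variance functional \eqref{Lvar} to the trace of the inverse of a sum of two \emph{independent} sample covariance matrices with canonical population covariances, to establish an anisotropic local law for this sum down to the hard spectral edge, and then to read off the limit at the edge, where the self-consistent system \eqref{eq_a12extra} emerges. \emph{Step 1 (Reduction).} Since $X^{(i)} = Z^{(i)}(\Sigma^{(i)})^{1/2}$ by \eqref{XofZ}, conjugating $\hat\Sigma(a)$ (cf. \eqref{Sigma_a}) by $W := (\Sigma^{(2)})^{1/2}$ and using $M^\top = W^{-1}(\Sigma^{(1)})^{1/2}$ yields
\[ L_{\var}(a) = \sigma^2\,\bigtr{\big(W^{-1}\hat\Sigma(a)W^{-1}\big)^{-1}} = \sigma^2\,\bigtr{\mathcal S_a^{-1}}, \qquad \mathcal S_a := a^2 M^\top\big(Z^{(1)}\big)^\top Z^{(1)} M + \big(Z^{(2)}\big)^\top Z^{(2)}. \]
Thus $\mathcal S_a$ is a sum of two independent sample covariance matrices: the first built from $Z^{(1)}$ ($n_1$ rows) with population covariance $\Sigma_1 := a^2 M^\top M$, the second from $Z^{(2)}$ ($n_2$ rows) with population covariance $\Sigma_2 := I_p$. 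Because $\rho_2 \ge 1+\tau$, the smallest eigenvalue of $(Z^{(2)})^\top Z^{(2)}$, hence of $\mathcal S_a$, is bounded below by $c\,p$ for a constant $c>0$ with high probability, so $\mathcal S_a^{-1}$ is well defined and bounded; we may therefore study the resolvent $G_a(z) := (\mathcal S_a - z)^{-1}$ for $z$ in a small neighborhood of $0$ and then let $z\to 0$.

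\emph{Step 2 (Self-consistent equations).} Write $N := n_1 + n_2$. Applying the block linearization for sample covariance matrices (as in \citet{Anisotropic,DY}) to the two diagonal blocks $Z^{(1)}, Z^{(2)}$, we introduce two scalar subordination-type functions $\alpha_1(z),\alpha_2(z)$ --- one per sample covariance block --- and show that $G_a(z)$ is approximated by a deterministic matrix $\Pi_a(z)$ which, at $z=0$, equals $\frac1N\big(\alpha_1 a^2 M^\top M + \alpha_2 I_p\big)^{-1}$, where $(\alpha_1,\alpha_2)$ solve \eqref{eq_a12extra}; taking the trace then reproduces the claimed limit. A self-contained ingredient is the stability analysis: \eqref{eq_a12extra} (and its deformation for $z\ne 0$) has a unique solution in the positive orthant, jointly continuous in $z$, $a$, and the $\lambda_i$, and the linearized map is invertible with norm bounded uniformly up to $z=0$ --- this is where $\rho_2>1$ enters. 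Two consistency checks: at $a=0$ the system forces $\alpha_1 = n_1/N$, $\alpha_2 = (n_2-p)/N$, and the right-hand side of \eqref{lem_cov_shift_eq} collapses to $\sigma^2 p/(n_2-p)$, recovering Lemma \ref{fact_tr}; as $|a|\to\infty$ (with $n_1>p$) it forces $\alpha_1 = (n_1-p)/N$, $\alpha_2 = n_2/N$, matching the large-$a$ limit of $\sigma^2 a^{-2}\bigtr{\Sigma^{(2)}\big((X^{(1)})^\top X^{(1)}\big)^{-1}}$.

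\emph{Step 3 (Local law --- the main obstacle).} The crux is to prove $G_a(z) = \Pi_a(z) + (\text{error})$, both entrywise and in the averaged form $\frac1p\bigtr{B\,G_a(z)} = \frac1p\bigtr{B\,\Pi_a(z)} + (\text{error})$ for deterministic $B$, with the error of order $N^{2/\varphi - 1/2 + c}/\sqrt p$ relative to the size of $\Pi_a$, uniformly up to $z=0$. This requires combining (i) the anisotropic local law machinery of \citet{Anisotropic,DY} --- resolvent/minor expansions and large-deviation bounds for quadratic forms that use only the $\varphi$-th moment bound \eqref{conditionA2}, which after truncating the heavy-tailed entries produces the $N^{2/\varphi}$ factor --- with (ii) the free-addition and subordination techniques of \citet{BES_free1,BES_free2} needed to handle the \emph{sum} of the two sample covariance matrices, in particular the non-commutativity of $\Sigma_1 = a^2 M^\top M$ and $\Sigma_2 = I_p$. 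The delicate point is pushing the law all the way to the hard edge $z=0$: one needs an a priori operator-norm bound on $G_a$ near $0$, obtained from the lowest-eigenvalue estimate for $(Z^{(2)})^\top Z^{(2)}$, together with a bootstrap that upgrades the weak local law away from $0$ to the optimal rate at $0$. I expect this step to be by far the most technically demanding part of the proof.

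\emph{Step 4 (Uniformity in $a$ and conclusion).} Evaluating the averaged local law at $z=0$ with $B = I_p$ (the reduction in Step 1 has already absorbed $\Sigma^{(2)}$) gives \eqref{lem_cov_shift_eq} for each fixed $a$. To make it uniform over $a\in\real$: for $|a|$ bounded, $\mathcal S_a$, $G_a(0)$ and $\Pi_a(0)$ are Lipschitz in $a$ with polynomial-in-$p$ constants, so a net over $a$ of polynomial cardinality suffices; for $|a|$ large, $\mathcal S_a = a^2 (X^{(1)})^\top X^{(1)}\big(1 + \OO(a^{-2})\big)$ on the range of $(X^{(1)})^\top X^{(1)}$ (with $\mathcal S_a^{-1}$ of order $a^{-2}$ there, plus a convergent contribution on its kernel when $n_1\le p$), which is exactly why the error term in \eqref{lem_cov_shift_eq} carries the prefactor $(a^2+1)^{-1}$; correspondingly $\alpha_1 a^2$ converges as $|a|\to\infty$, so both sides of \eqref{lem_cov_shift_eq} stay bounded. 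Patching the two regimes yields the estimate uniformly in $a$, completing the proof.
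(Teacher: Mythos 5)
Your proposal follows the same overall architecture as the paper's proof: reduce $L_{\var}(a)$ to the trace of the inverse of a sum of two independent sample covariance matrices with one of the populations the identity, derive the deterministic limit from a two-variable self-consistent system, prove a local law for the resolvent down to the hard edge $z=0$ with the stated rate, and patch over $a$ with an $\ve$-net plus a separate treatment of large $|a|$. The consistency checks at $a=0$ and $|a|\to\infty$ are correct. However, Step~3 contains a few imprecisions worth correcting.

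First, the paper does \emph{not} invoke free additive convolution or the subordination machinery of \citet{BES_free1,BES_free2} anywhere in the proof of this theorem --- those tools enter only in the bias analysis (Theorems~\ref{cor_MTL_loss} and \ref{thm_Sigma2Id}). The variance theorem is handled entirely by the linearization trick: both $Z^{(1)}$ and $Z^{(2)}$ are placed as separate off-diagonal blocks of a single $(p+n_1+n_2)\times(p+n_1+n_2)$ symmetric matrix $H$ (cf.\ Definition~\ref{defn_resolventH}), and an anisotropic local law is proved for the resolvent of $H$ in the style of \citet{Anisotropic,yang2019spiked}, with a \emph{system} of two self-consistent equations replacing the usual scalar one. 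Second, your worry about the "non-commutativity of $\Sigma_1 = a^2 M^\top M$ and $\Sigma_2 = \id_p$" is moot: $\Sigma_2=\id_p$ commutes with everything, so after your Step~1 conjugation the population covariances already commute. The genuine non-commutativity obstacle in the paper is the dependence between the entries of $Z^{(1)}U\Lambda$ (where $M=U\Lambda V^\top$ is the SVD), and the paper resolves it by a three-stage argument you do not mention: first prove the local law for Gaussian $Z^{(i)}$ with $U=V=\id$ (Proposition~\ref{prop_diagonal}) via resolvent minor expansions and a fluctuation averaging lemma; then reduce general $U,V$ to this case via Gaussian rotational invariance \eqref{eq in Gauss}; and finally extend to non-Gaussian entries by a Lindeberg-type continuous comparison (Section~\ref{sec_Gauss}), with the $Q=(n_1+n_2)^{2/\varphi}$ factor coming from a preliminary truncation (Corollary~\ref{fact_minv}). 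Your plan correctly flags the need for the a priori resolvent bound near $z=0$ from the lowest-eigenvalue estimate on $(Z^{(2)})^\top Z^{(2)}$ --- that is indeed Lemma~\ref{lemm apri}. In short, the skeleton of your proposal is the paper's, but Step~3 as written would have you chase a free-probability route that the paper's related-work section explicitly notes is "not entirely justified" for non-Gaussian covariates with non-diagonal $M$; the correct filling for Step~3 is the Gaussian-first-plus-comparison chain above.
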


Recalling that $\varphi > 4$, for a small enough constant $c$, equation \eqref{lem_cov_shift_eq} characterizes the limit of $L_{var}$ with an error term that is smaller than the deterministic leading term by a factor $\oo({p}^{-1/2})$.

Theorem \ref{thm_main_RMT} generalizes a classical result in multivariate statistics to the sum of two independent sample covariance matrices with arbitrary covariate shift: with high probability over the randomness of $X^{(2)}$,
\begin{align}
    \bigtr{\Sigma^{(2)} \bigbrace{{X^{(2)}}^\top X^{(2)}}^{-1} }
	= \frac{p}{n_2 - p} +  \OO\left( p^{\frac 1 2}n^{\frac{2}{\varphi}-\frac{3}{2}+c}\right). \label{fact_tr}
\end{align}
To see this, note that \eqref{fact_tr} is a special case of Theorem \ref{thm_main_RMT} with $n_1 = 0$.
For this case, equation \eqref{eq_a12extra} implies that $\alpha_1 = 0$ and $\alpha_2 = (n_2-p)/{n_2}.$
Therefore,
\begin{align*}
    \frac{1}{n_1 + n_2} \bigtr{\Bigbrace{\alpha_1  M^{\top} M + \alpha_2\id_{p\times p}}^{-1}} = \frac{p}{n_2 - p}. %
\end{align*}
The result \eqref{fact_tr} has a rich history in the literature of random matrix theory.
When $Z^{(2)}$ is a Gaussian random matrix, the limit (without a convergence rate) follows from properties of the inverse Wishart distribution.
Otherwise, one can use the Stieltjes transform method to derive the estimate (Lemma 3.11, \citet{bai2009spectral}).
The estimate \eqref{fact_tr} with sharp convergence rates can be found in Theorem 2.4 \cite{isotropic}.

Theorem \ref{thm_main_RMT} follows from a sharp \emph{local law} on the resolvent of $\hat \Sigma$, which we will establish based on recently developed techniques in the random matrix theory literature \citep{Anisotropic}.
We refer the reader to Appendix \ref{appendix RMT} for more details.

As a remark, in the covariate shift setting, since $\beta^{(1)} = \beta^{(2)}$, we only need to study hard transfer, which does not add any model adjustment to the source task.

\subsection{Illustrative Examples}

The formulas in the above result are complex.
To interpret their scaling, we provide two illustrative examples of Theorem \ref{thm_main_RMT}, based on which we revisit the effect of covariate shift upon transfer.
Notice that the answer to this question is not always clear in the transfer learning literature due to the intricacy of covariate shifts. Thus, we will leverage the precise asymptotics we have developed to provide a preliminary study of this question. Our goal is to provide several illustrative examples that are not meant to be exhaustive.
Also note that in the case where both tasks share the same linear model ($\beta^{(1)} = \beta^{(2)}$), HPS always incurs a \emph{lower} risk than OLS.
Therefore, the transfer effect is strictly \emph{positive} in the covariate shift setting, and our comparison will be between HPS estimators with different degrees of covariate shift.

In the first example, we show that the impact of covariate shift depends on the ratio between $n_1$ and $n_2$.
In this example, we demonstrate that having greater covariate shifts between source and target tasks can help improve estimation.
Let $p$ be an even integer, and let
    \[ g(M) = \frac{\sigma^2}{n_1 + n_2} \bigtr{(\alpha_1 M^{\top} M + \alpha_2\id_{p\times p})^{-1}}. \]
We compare $g(M)$ for different $M$ matrices within a set $\cS$ such that $\lambda_{p +1 - i}  =  1 /{\lambda_i}$ for $i = 1,2,\dots,p /2$. %
Therefore, every matrix in $\cal S$ is normalized with $\det(M)=1$.\footnote{Note that the covariate shift intensifies when we scale $M$ by a larger scalar $C>1$. Moreover, intuitively, $g(CM)$ generally would decrease as the scalar $C$ increases. Thus, to ensure a fair comparison between scenarios with and without covariate shift, we fix a normalization for the $M$ matrix by requiring that $\det(M)=1$.}
In particular, $M = \id_{p \times p}$ represents two tasks with the same population covariance matrix.
We show that when $n_1 < n_2$, the presence of covariate shifts can be beneficial for HPS.
On the other hand, when $n_1 > n_2$, the presence of covariate shifts can hurt the performance of hard transfer.

\begin{proposition}\label{claim_dichotomy}
    Within the set $M\in\cS$, the following dichotomy regarding $g(\id_{p\times p})$ and $g(M)$ holds:
	i) When $n_1 < n_2$, $g(M) \le g(\id_{p\times p})$ for any $M \in \cS$.
    ii) When $n_1 \ge n_2$, $g(\id_{p\times p}) \le g(M)$ for any $M \in \cS$.
\end{proposition}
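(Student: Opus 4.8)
\emph{Proof plan.} The idea is to turn the comparison of $g(M)$ over $M\in\cS$ into a one‑dimensional monotonicity question about the weight $\alpha_2$, and then to use the reciprocal‑pairing structure of $\cS$. Throughout write $t_i=\lambda_i^2$ for the eigenvalues of $M^\top M$, and set $s=n_1+n_2$, $\gamma=p/s$, $\eta=n_1/s$ and $c=1-\gamma$. First, substituting $\tfrac{t_i\alpha_1}{t_i\alpha_1+\alpha_2}=1-\tfrac{\alpha_2}{t_i\alpha_1+\alpha_2}$ into the second equation of \eqref{eq_a12extra} (with $a=1$) and using $\alpha_1=c-\alpha_2$ from the first, a short calculation gives $\tfrac1s\sum_{i=1}^p\tfrac1{\alpha_1 t_i+\alpha_2}=\tfrac{1-\eta-\alpha_2}{\alpha_2}$, hence $g(M)=\sigma^2\big(\tfrac{n_2}{(n_1+n_2)\alpha_2}-1\big)$, a strictly decreasing function of $\alpha_2(M)$. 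So the dichotomy is equivalent to: $\alpha_2(M)\ge\alpha_2(\id_{p\times p})$ for every $M\in\cS$ when $n_1<n_2$, and the reverse when $n_1\ge n_2$. To access $\alpha_2$ I would reparametrize by $u\define\alpha_1$ (so $\alpha_2=c-u$); the second equation of \eqref{eq_a12extra} then reads $\Psi(u;M)=\eta$, where $\Psi(u;M)\define u+\tfrac us\sum_{i=1}^p\tfrac{t_i}{(t_i-1)u+c}$. On $(0,c)$ each denominator $(t_i-1)u+c=t_iu+(c-u)$ is positive and $\tfrac{d}{du}\tfrac{ut_i}{(t_i-1)u+c}=\tfrac{c\,t_i}{((t_i-1)u+c)^2}>0$, so $\Psi(\cdot;M)$ is strictly increasing from $\Psi(0;M)=0$ to $\Psi(c;M)=1$; since $0<\eta<1$ it has a unique root $u(M)\in(0,c)$, which is $\alpha_1(M)$ by Theorem~\ref{thm_main_RMT}, and $\alpha_2(M)=c-u(M)$.

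The key step is to compare $\Psi(\cdot;M)$ with $\Psi(\cdot;\id_{p\times p})$ at a \emph{fixed} $u\in(0,c)$; one cannot simply freeze the weights and perturb the spectrum, since $\alpha_1$ and $\alpha_2$ both move with $M$. Using $\tfrac{t_i}{(t_i-1)u+c}-\tfrac1c=\tfrac{(t_i-1)(c-u)}{c\,((t_i-1)u+c)}$ and grouping the singular values of $M\in\cS$ into their $p/2$ reciprocal pairs $\{t,1/t\}$ (possible precisely because $\det(M^\top M)=1$ there), a short simplification yields
\[
\Big(\tfrac{t}{(t-1)u+c}-\tfrac1c\Big)+\Big(\tfrac{1/t}{(1/t-1)u+c}-\tfrac1c\Big)=\frac{(t-1)^2(c-u)(c-2u)}{c\,\big(tu+c-u\big)\big(u+t(c-u)\big)},
\]
whose sign equals that of $c-2u$ since both factors in the denominator are positive. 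Summing over the pairs,
\[
\Psi(u;M)-\Psi(u;\id_{p\times p})=\frac{u(c-2u)}{cs}\sum_{\text{pairs}}\frac{(t-1)^2(c-u)}{\big(tu+c-u\big)\big(u+t(c-u)\big)},
\]
so $\Psi(u;M)\ge\Psi(u;\id_{p\times p})$ whenever $u\le c/2$ and $\Psi(u;M)\le\Psi(u;\id_{p\times p})$ whenever $u\ge c/2$.

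To finish, I would locate the reference root: plugging in $t_i=1$ gives $\Psi(u;\id_{p\times p})=u/c$, so $u(\id_{p\times p})=c\eta=\tfrac{n_1(n_1+n_2-p)}{(n_1+n_2)^2}$, and $u(\id_{p\times p})<c/2$ exactly when $n_1<n_2$. If $n_1<n_2$, evaluating the sign statement at $u=u(\id_{p\times p})$ gives $\Psi(u(\id_{p\times p});M)\ge\eta=\Psi(u(M);M)$, and strict monotonicity of $\Psi(\cdot;M)$ forces $u(M)\le u(\id_{p\times p})$, i.e.\ $\alpha_2(M)\ge\alpha_2(\id_{p\times p})$ and $g(M)\le g(\id_{p\times p})$. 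The case $n_1>n_2$ is the mirror image with all inequalities reversed, and $n_1=n_2$ gives equality. I expect the comparison in the middle paragraph to be the main obstacle: the pairwise contribution changes sign exactly at $u=c/2$, and the reference root $u(\id_{p\times p})$ must be shown to land on the side of $c/2$ dictated by $\operatorname{sign}(n_1-n_2)$ — this is what turns a potentially messy spectral optimization into the clean dichotomy, and the reciprocal‑pair constraint defining $\cS$ is exactly what makes each pair's contribution a nonnegative multiple of $c-2u$.
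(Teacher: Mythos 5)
Your proof is correct, and it takes a genuinely different route from the paper's. The paper computes the difference $g(M)-g(\id_{p\times p})$ directly: it rewrites $g(\id_{p\times p})=\frac{\sigma^2p}{n_1+n_2-p}$ as $\frac{\sigma^2}{n_1+n_2}\sum_{i=1}^{p/2}\frac{2}{\alpha_1+\alpha_2}$ (valid because $\alpha_1+\alpha_2=1-\gamma$ for every $M$), combines the reciprocal pairs to get the identity $g(M)-g(\id_{p\times p})=\frac{\sigma^2}{n_1+n_2-p}\sum_{i=1}^{p/2}\frac{(\lambda_i^2-1)^2\alpha_1(\alpha_1-\alpha_2)}{(\alpha_1+\lambda_i^2\alpha_2)(\lambda_i^2\alpha_1+\alpha_2)}$, and then proves the separate claim $\alpha_1(M)\gtrless\alpha_2(M)\iff n_1\gtrless n_2$ from the constraint equations. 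You instead first derive the scalar identity $g(M)=\sigma^2\bigl(\frac{n_2/(n_1+n_2)}{\alpha_2(M)}-1\bigr)$ (absent from the paper), which turns the problem into a monotone-root comparison for the increasing function $\Psi(\cdot;M)$; the reciprocal-pair computation then enters in showing $\Psi(u;M)-\Psi(u;\id_{p\times p})$ has the sign of $c-2u$ uniformly over $\cS$, and evaluating at the explicit reference root $u(\id_{p\times p})=c\,n_1/(n_1+n_2)$ closes the argument. What the paper's route buys is one less preliminary identity; what your route buys is that the threshold $n_1\lessgtr n_2$ falls out of an explicit formula for $\alpha_1(\id_{p\times p})$ rather than needing a separate inequality to locate $\alpha_1(M)$ relative to $\alpha_2(M)$, and the reduction $g\leftrightarrow\alpha_2$ is a reusable observation. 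Both proofs hinge on the same pairwise cancellation producing a $(\lambda^2-1)^2$ factor times a common sign, so the structural insight is shared even though the surrounding scaffolding differs.
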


\begin{figure}[!t]
	\begin{subfigure}[b]{0.495\textwidth}
		\centering
		\includegraphics[width=6.5cm]{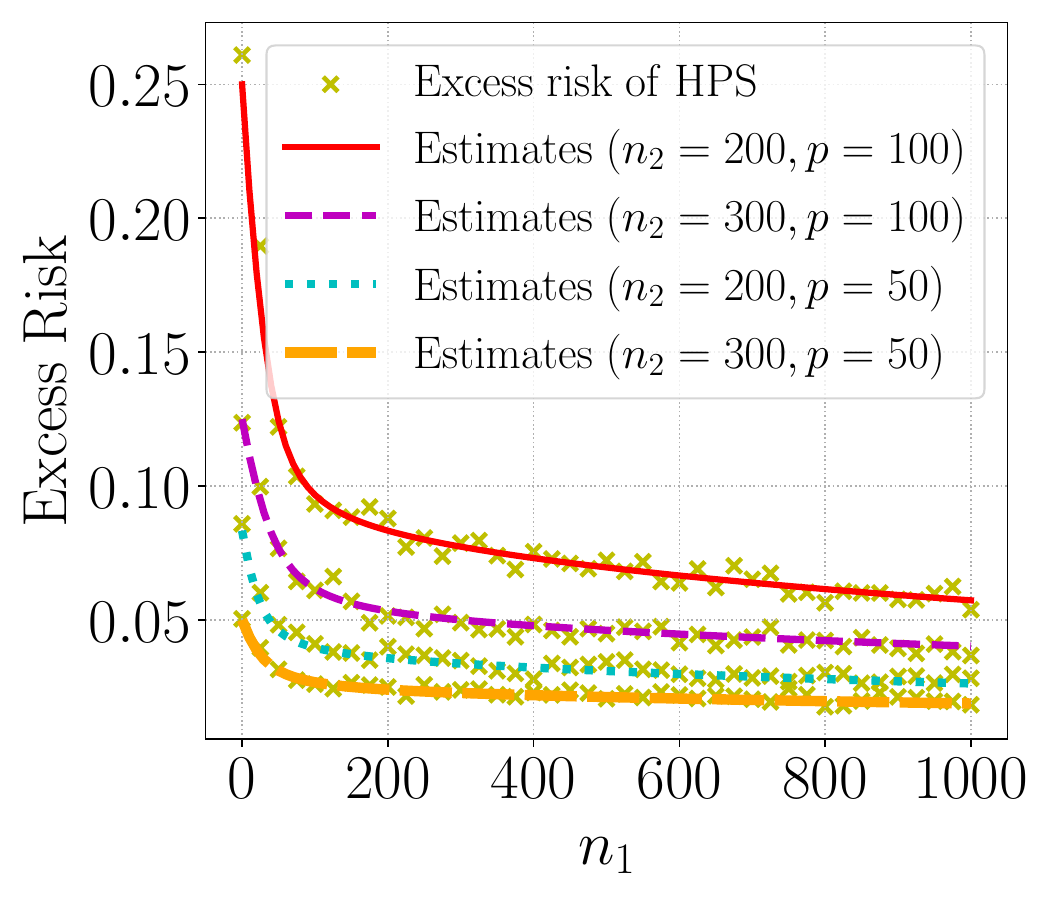} %
		\caption{Varying $n_2$ and $p$}
		\label{fig_sec3_verify_cov}
	\end{subfigure}
	\begin{subfigure}[b]{0.495\textwidth}
		\centering
		\includegraphics[width=6.5cm]{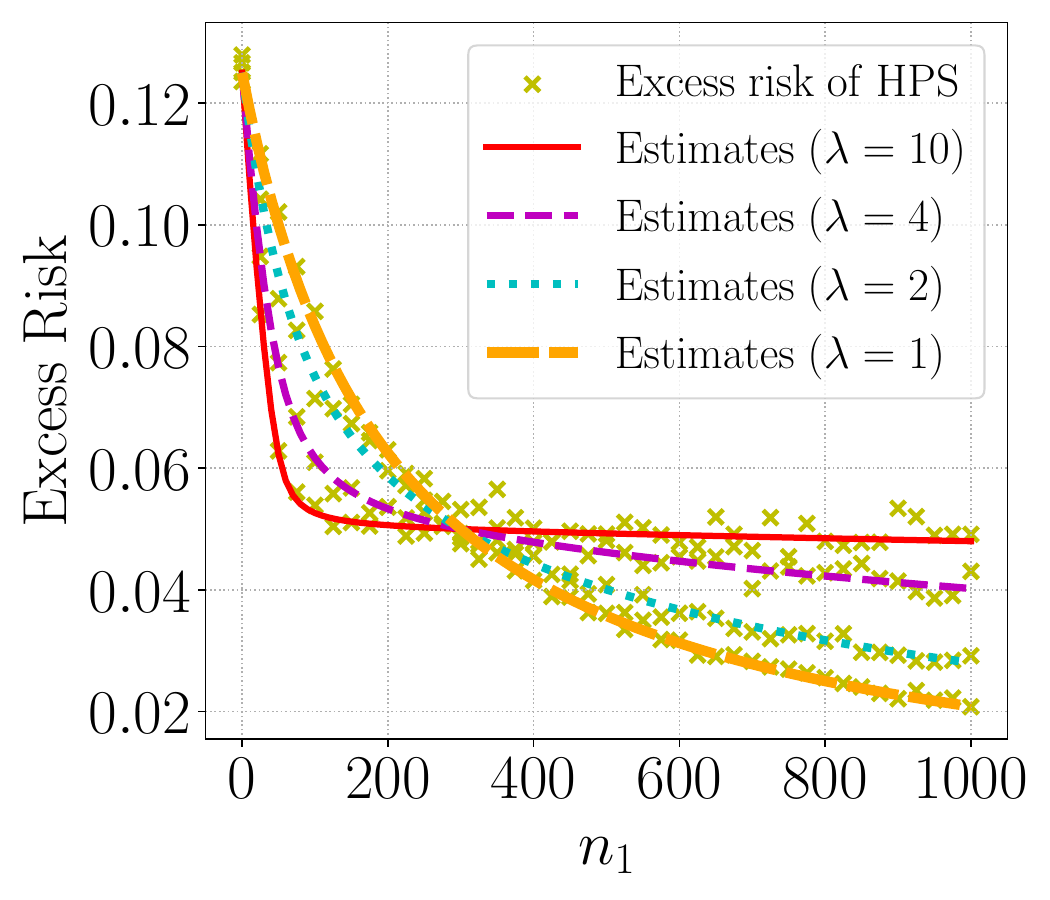} %
		\caption{Varying covariate shift level $\lambda$}
		\label{fig_sec3_covariate}
	\end{subfigure}
	\caption{Illustration of Theorem \ref{thm_main_RMT} under various sample sizes and dimensions.
	Figure \ref{fig_sec3_verify_cov} shows that the variance estimate can accurately fit the excess risk of HPS.
	This simulation fixes $\lambda = 4$ and varies $n_1, n_2, p$.
	Figure \ref{fig_sec3_covariate} shows the dichotomy in Proposition \ref{claim_dichotomy}.
	When $n_1 < n_2$, the lowest risk is achieved by transferring from a covariate-shifted data set.
	When $n_1 \ge n_2$, the lowest risk is achieved by transferring from a data set with $\Sigma^{(1)} = \Sigma^{(2)}$.
	This simulation fixes $p = 100, n_2 = 300$ and varies $n_1, \lambda$. Both simulations use $\sigma = 1/2$.
    The covariates are sampled independently from an isotropic Gaussian distribution.}
	\label{fig_sec31}
\end{figure}

Figure \ref{fig_sec31} illustrates a special case where $\lambda_1 = \cdots = \lambda_{{p /2}} = \lambda > 1$ and $\lambda_{{p /2}+1}  = \cdots = \lambda_{p} =1/ \lambda $.
Thus, higher $\lambda$ corresponds to a worse covariate shift. %
We plot the theoretical estimate using $g(M)$ and the excess risk using equation \eqref{Lvar}.
Our theoretical estimate in Theorem \ref{thm_main_RMT} matches the empirical risk incredibly well.
As a result, we indeed observe the dichotomy in Proposition \ref{claim_dichotomy}.
Furthermore, for larger $\lambda$, the excess risk of HPS decreases more slowly, indicating a worse ``rate of transfer'' from task one.
As a remark, impossibility results for transfer learning under covariate shift have been observed for classification \citep{david2010impossibility}.
Our results are stated in the high-dimensional regression setting.
A related result regarding the effect of covariate shift on transfer has also appeared in the work of \citet{lei2021near}.
They show that the minimax estimator depends on the singular values of the $M$ matrix.

In the second example, we consider the case where $n_1 \ge n_2$.
A typical scenario in transfer learning is that the source data set is much larger than the target data set. 
We show that when $n_1$ is larger than $n_2$ times a sufficiently large constant, $M = \id_{p\times p}$ indeed minimizes $g(M)$ within a bounded set of matrices whose determinants are equal to one.

\begin{proposition}\label{prop_covariate}
	Let $c \in (0, 1)$ be a fixed constant.
	Let $\cS'$ be a set of matrices such that for any $M\in\cS'$:
	(i) $\det(M) = 1$;
	(ii) the eigenvalues $\lambda_1^2,\ldots,\lambda_p^2$ of $M^\top M$ are bounded between $c$ and $1 / c$.
    If $n_1-p\ge c^{-1}n_2$, then $g$ achieves the global minimum at $\id_{p\times p}$, i.e.,
	\begin{align} g(\id_{p\times p}) \le  g(M),~\text{for any}~M \in \cS'. \label{eq_claim_id}
	\end{align}
\end{proposition}

We remark that the condition that the eigenvalues $\lambda_1^2,\ldots,\lambda_p^2$ are bounded would be necessary for equation \eqref{eq_claim_id} to hold.
For example, choose $M=M_x$ with $1 < \lambda_1^2 = x$ and $\lambda_2^2=\cdots=\lambda_{p}^2=x^{-(p-1)^{-1}}$. Let $(\al_1(x),\al_2(x))$ be the solution to the system of equations
\begin{align*}
		\alpha_1 + \alpha_2 = 1- \frac{p}{n_1 + n_2}, 
		\alpha_1 + \frac1{n_1 + n_2}  \Big( \frac{(p-1)x^{-(p-1)^{-1}}\al_1 }{x^{-(p-1)^{-1}}\al_1 +\al_2}+ \frac{x\al_1 }{x\al_1  + \al_2}\Big) = \frac{n_1}{n_1 + n_2}. %
\end{align*}
From this equation, we obtain that $\al_1(x)\to \frac{n_1-1}{n_1+n_2}$ and $\al_1(x)\to \frac{n_2+1-p}{n_1+n_2}$ as $x\to \infty$. 
Thus, 
\begin{align*}
    \lim_{x\to \infty} g(M_{x})&=\lim_{x\to\infty}\frac{\sigma^2}{n_1+n_2}\left(\frac{p-1}{ x^{-(p-1)^{-1}}\al_1(x) + \al_2(x)} + \frac{1}{x\al_1(x)  + \al_2(x)}\right)\\
    &=\frac{\sigma^2(p-1)}{n_2+1-p} \ge \frac{\sigma^2 p }{n_1+n_2-p}=g(\id_{p\times p}).
\end{align*}
Similar arguments, by comparing $g(\id_{p\times p})$ with $g(M)$ for certain $M$ on the boundary of $\cal S'$, show that $(n_1-p)/n_2$ should have a proper lower bound for $g(\id_{p\times p})$ to be the global minimum.
We will use an induction argument to show that $(n_1-p)/n_2\ge c^{-1}$ is a sufficient condition.
The proof of Propositions \ref{claim_dichotomy} and \ref{prop_covariate} can be found in Appendix \ref{sec:claim_dich} and Appendix \ref{appendix RMT0}, respectively.

Note that in both Propositions \ref{claim_dichotomy} and \ref{prop_covariate}, our goal is to illustrate the exact asymptotics of Theorem \ref{thm_main_RMT}, since the formula is intricate. 
Relatedly, \citet{dhifallah2021phase} have also conducted a precise analysis of the hard transfer and soft transfer estimation. These precise estimates seem necessary when examining the effects of transfer learning and studying negative transfer, as we need to compare the performance of joint learning with that of single-task learning.
The main intuition behind the above examples is that when the source and target tasks have complementary spectra, combining both tasks can help transfer by increasing the smallest eigenvalue in the combined covariance matrix $\hat \Sigma$.
Later in Section \ref{sec_cov_mod_shift}, we will extend these examples to a setting involving both covariate and model shifts (see Figures \ref{fig_sec3_cov_mo_a} and \ref{fig_sec3_cov_mo_b} for the illustration).

\subsection{Numerical Comparisons}

\begin{figure}%
    \begin{subfigure}[b]{0.32\textwidth}
        \centering
	    \includegraphics[width=0.99\textwidth]{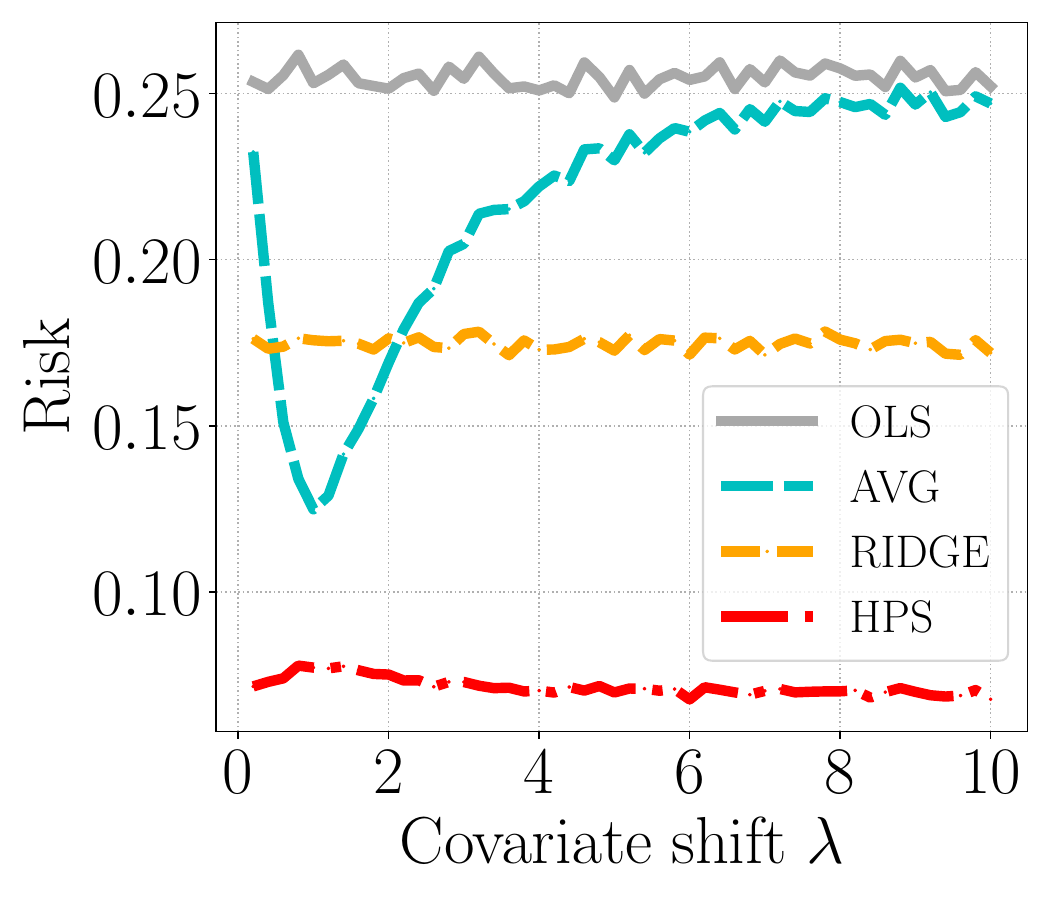}
        \caption{Varying covariate shift ($\lambda$) without model shift}\label{fig_cov_shi_ri}
	\end{subfigure}\hfill
	\begin{subfigure}[b]{0.32\textwidth}
		\centering
		\includegraphics[width=0.99\textwidth]{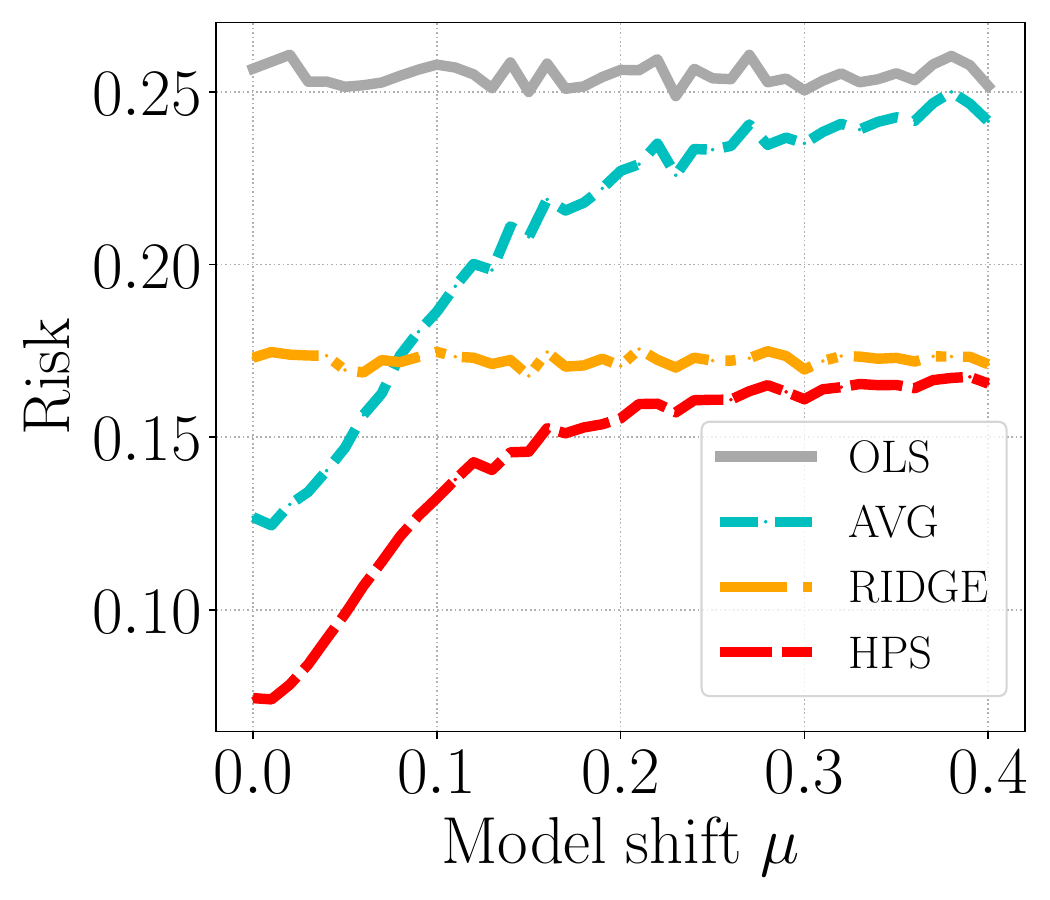}
		\caption{Varying model shift ($\mu)$ without covariate shift}
		\label{fig_sec5_model}
	\end{subfigure}\hfill%
	\begin{subfigure}[b]{0.32\textwidth}
		\centering
		\includegraphics[width=0.99\textwidth]{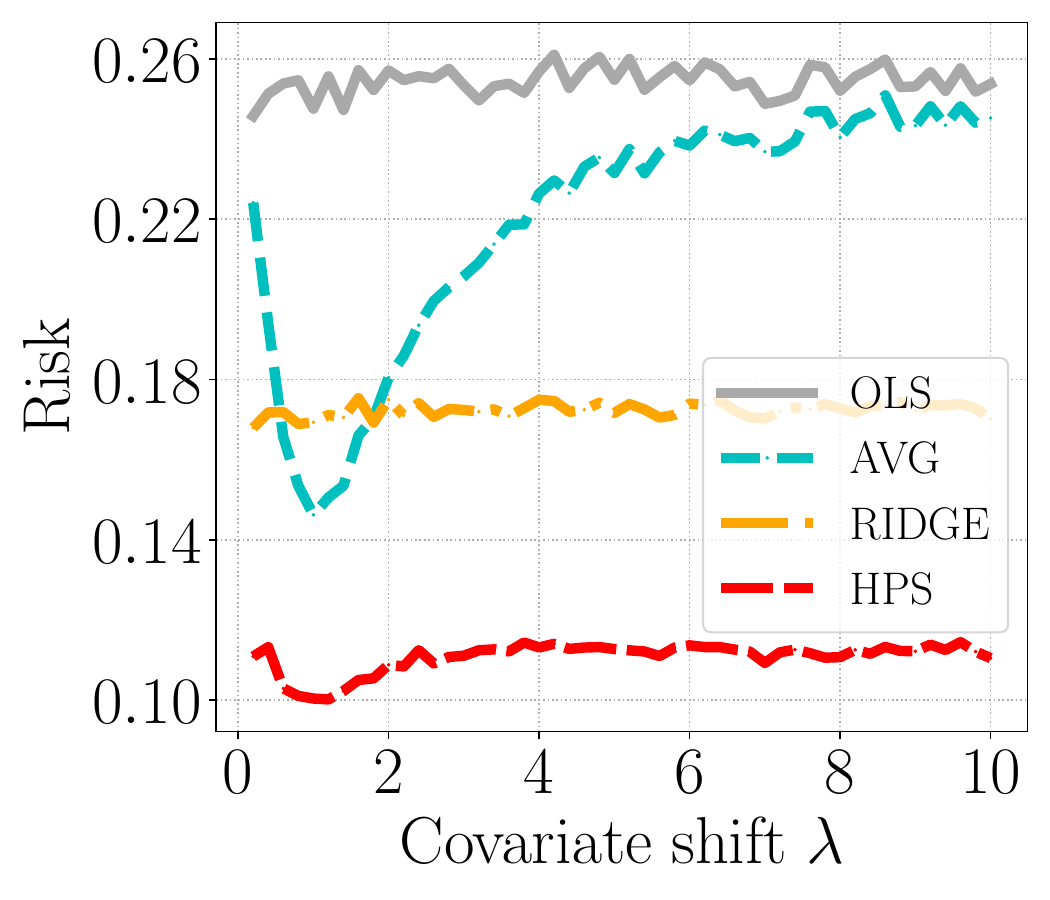}
		\caption{Varying covariate shift with fixed model shift ($\mu = 0.05$)}
		\label{fig_sec5_cov_model}
	\end{subfigure}
	\caption{Comparing the excess risks of several estimators, showing that HPS achieves the lowest excess risk compared to OLS, AVG, and RIDGE in various settings with a combination of covariate and model shifts. }\label{fig_sec5_covariate}
\end{figure}
We show that the HPS estimator provides strong empirical performance compared to several transfer learning estimators.
For this comparison, we consider the OLS estimator and the ridge estimator (RIDGE), given by:
\[ \hat{\beta}_2^{\RIDGE}(k) = \big({X^{(2)}}^{\top} X^{(2)} + k \cdot \id_{p\times p}\big)^{-1} {X^{(2)}}^{\top} Y^{(2)}. \]
We also consider an averaging estimator (AVG), which takes a convex combination of their OLS estimators:
\[ \hat{\beta}_2^{\text{AVG}}(b) =b \cdot \hat{\beta}^{\STL}_1 + (1 - b) \cdot \hat{\beta}^{\STL}_2. \]
The parameters $b$ and $k$ are optimized using a validation set independent of the training set. 
For HPS, a weight parameter is added for each task, along with a ridge penalty.
Both of these are optimized using the validation set.
We remark that all the high-dimensional asymptotic limits for HPS can be extended to this setting.

For the comparison in Figure \ref{fig_cov_shi_ri}, we sample the entries of $Z^{(1)}$ and $Z^{(2)}$ from Gaussian distributions and consider the same covariate shift as in Figure \ref{fig_sec31}, that is, $\lambda_1 = \cdots = \lambda_{\lfloor\frac p 2\rfloor} = \lambda > 1$ and $\lambda_{\lfloor{\frac p 2}\rfloor + 1}  = \cdots = \lambda_{p} = \lambda^{-1}$. %
Then, we compare the excess risk of equation \eqref{HPS_loss} for different estimators under varying levels of $\lambda$.
For this simulation, we set $p = 50, n_1 = n_2 = 100,$ and $\sigma = \frac 1 2$.
Figure \ref{fig_cov_shi_ri} shows that HPS can consistently outperform OLS, RIDGE, and  AVG in this simulation.

In Figures \ref{fig_sec5_model} and \ref{fig_sec5_cov_model}, we show that similar results continue to hold in more general settings with a combination of covariate and model shifts. In particular, we generate covariate-shifted features and different linear models for the two tasks.
We take the average of 100 random seeds because of high variances due to small sample sizes.
We find that HPS achieves strong empirical performance under various settings of covariate and model shifts.
The experiment code for reproducing these simulation results can be found at \url{https://github.com/VirtuosoResearch/Transfer_learning_random_matrix_simulations}.

\section{Model Shifts}\label{sec_sizeratio}

This section presents precise estimates of the bias and variance of HPS under model shifts, assuming no covariate shifts are present.
With these results, we can connect the bias-variance decomposition to transfer effects by comparing the bias and variance of transfer learning estimators with those of single-task learning.
In particular, we will present a detailed analysis in a random-effects model, which uncovers a phase transition in transfer learning.
This motivates us to design an adjusted HPS procedure that downsizes the source task when the model shift is too high. 
We show that the performance of this procedure can match a minimax lower bound for estimating one task from the samples of two linear regression tasks.

\subsection{Precise Estimates}\label{subsect_modelshiftonly}

In this subsection, we study the impact of model shifts on the performance of the HPS estimator when there is no covariate shift.
Particularly, both tasks have the same population covariance matrix ($\Sigma^{(1)} = \Sigma^{(2)}$) but follow different linear models ($\beta^{(1)} \neq \beta^{(2)}$).
The following result states the exact asymptotic limit of the excess risk of HPS in this case.

\begin{theorem}[Precise estimates under model shifts]\label{cor_MTL_loss}
Let $\Sigma^{(1)}=\Sigma^{(2)}$ be the same.
Under Assumption \ref{assm_big1}, suppose $Z^{(1)}$ and $Z^{(2)}$ are both Gaussian random matrices. 
Then, for any small constant $\e>0$,  with high probability over the randomness of $Z^{(1)}, Z^{(2)}$, the following estimates hold:
\begin{align}
    L_{\vari} &= \sigma^2 L_1 +    \OO\left(\sigma^2\sqrt p {(n_1+ n_2)^{-\frac 3 2 + c}}\right), \label{Lvar_samplesize} \\
    L_{\bias}&= \bignorm{{\Sigma^{(1)}}^{\frac 1 2}\big(\beta^{(1)}-\beta^{(2)}\big)}^2   L_2 
    + \OO\left( \frac{p^{- \frac 1  2 + c} n_1^2}{(n_1+n_2)^2} \left\|\beta^{(1)}-\beta^{(2)} \right\|^2 \right).\label{Lbias_samplesize}
\end{align}
Above, $L_1$ and $L_2$ are defined as 
\begin{align*}%
    L_1 & = \frac{p}{n_1+n_2-p}, \quad L_2  = \frac{n_1^2(n_1 + n_2 - p) + p n_1 n_2}{(n_1 + n_2)^2 (n_1 + n_2 - p)}.
\end{align*}
\end{theorem}

Combining equations \eqref{cor_MTL_loss} with \eqref{lem_HPS_loss} results in an exact estimate for the excess risk of HPS under model shifts.
The variance estimate \eqref{Lvar_samplesize} is a special case of Theorem \ref{thm_main_RMT} with $M=\id_{p\times p}$ and $\varphi\to\infty$ (since  Gaussian random variables have bounded moments up to any order).

The bias estimate \eqref{Lbias_samplesize} requires the assumption that both $Z^{(1)}$ and $Z^{(2)}$ are Gaussian.
We briefly describe the proof ideas.
Let $\bv=(\Sigma^{(1)})^{1/2}\left(\beta^{(1)}- \beta^{(2)}\right)$.
Then, the bias formula \eqref{Lbias} can be written as:
\begin{align*}
    L_{\bias} &=\bv^\top {Z^{(1)}}^\top Z^{(1)} { \left({Z^{(1)}}^\top Z^{(1)}+ {Z^{(2)}}^\top Z^{(2)}  \right)^{-2}}{Z^{(1)}}^\top Z^{(1)} \bv.
\end{align*}
Since both $Z^{(1)}$ and $Z^{(2)}$ are Gaussian random matrices, the distributions of ${Z^{(1)}}^\top Z^{(1)}$ and ${Z^{(2)}}^\top Z^{(2)}$ are rotation-invariant.
Thus, the following (approximate) identity holds up to a small error:
\begin{align}%
    L_{\bias} &\approx \frac {\|\bv\|^2} p\bigtr{ \big({Z^{(1)}}^\top Z^{(1)}\big)^2 { \left({Z^{(1)}}^\top Z^{(1)}+ {Z^{(2)}}^\top Z^{(2)}\right)^{-2}}} \nonumber\\
            &= \frac{\|\bv\|^2}{p} \left. \frac{\dd }{\dd x}\right|_{x=0}\bigtr{  \Big( {Z^{(1)}}^\top Z^{(1)} + x \big({Z^{(1)}}^\top Z^{(1)}\big)^2 + {Z^{(2)}}^\top Z^{(2)} \Big)^{-1}  }. \label{Lbias_idea2} 
\end{align}
Due to the rotation invariance, the asymptotic limit of equation \eqref{Lbias_idea2} is determined by the free addition of ${Z^{(1)}}^\top Z^{(1)}+ x({Z^{(1)}}^\top Z^{(1)})^2$ and  ${Z^{(2)}}^\top Z^{(2)}$. 
Building on free probability techniques (see, e.g., \citet{nica2006lectures,BES_free1}), an explicit formula for this free addition can be derived for any $x$ around zero.
This observation allows us to derive equation \eqref{Lbias_samplesize} by taking the derivative with respect to $x$ as in equation \eqref{Lbias_idea2}.
More details including the complete proof of Theorem \ref{cor_MTL_loss} are presented in Appendix \ref{subsec:HPSmodel}.

\begin{remark}\label{remark_bias}
We conjecture that the bias limit \eqref{Lbias_samplesize} is still the exact asymptotic form even if $Z^{(1)}$ and $Z^{(2)}$ are non-Gaussian random matrices.
One approach to show this is using the local laws for polynomials of random matrices. This requires checking certain technical regularity conditions, which are left as an open question for future work.
\end{remark}

\subsection{Examples and Adjustments}\label{sec_adjusted} %

We will consider a random-effects model \citep{dobriban2020wonder}.
Each $\beta^{(i)}$ consists of two components, in this case, one shared by all tasks and one that is task-specific.
Let $\beta_0$ be the shared component and $\gamma_{i}$ be the $i$-th task-specific component.
For any $i$, the $i$-th model vector is equal to
    $\beta^{(i)}=\beta_0 + \gamma_{i}.$ %
The entries of the task-specific component $\gamma_i$ are drawn independently from a Gaussian distribution with mean zero and variance $p^{-1}\mu^2$, for a parameter $\mu > 0$.
In expectation, the Euclidean distance between the two model vectors is equal to $2\mu^2$.

Based on Theorem \ref{cor_MTL_loss}, we present a precise analysis of information transfer in this random-effects model using the precise limits. The proof of the following proposition will be presented in Appendix \ref{sec:claim_model_shift}.

\begin{proposition}[Phase transition in the random-effects model]\label{claim_model_shift}
Under the assumptions of Theorem \ref{cor_MTL_loss}, suppose the random-effect model applies and $n_2 \ge 3p$. 
  For any small constant $c > 0$, the following statements hold with high probability over the randomness of training samples and model vectors.
    \begin{enumerate}%
        \item If $ {\mu^2} \frac {\bigtr{\Sigma^{(1)}}}{p} \le \frac{\sigma^2 p}{2(n_2 - p)}$, then the transfer effect is always positive:
        \begin{align}\label{HPS_le_OLS}
        L(\hat{\beta}_2^{\MTL}) \le \left(1+\OO(p^{-1/2+c})\right)\cdot L(\hat{\beta}_2^{\STL}) . %
        \end{align}
        \item If $\frac{\sigma^2 p}{2(n_2 - p)} < {\mu^2} \frac{\bigtr{\Sigma^{(1)}}}{p} < \frac{\sigma^2 n_2}{2(n_2 - p)}$, then there exists a deterministic value $n_0>0$ (which may not be an integer) such that if $n_1  \le n_0$, then equation \eqref{HPS_le_OLS} holds;
        otherwise if $n_1  > n_0$, then
        \begin{align}\label{HPS_ge_OLS}
        L(\hat{\beta}_2^{\STL}) \le \left(1+\OO(p^{-1/2+c})\right)\cdot L(\hat{\beta}_2^{\MTL}). %
        \end{align}
        \item If ${\mu^2} \frac{\bigtr{\Sigma^{(1)}}}{p} \ge \frac{\sigma^2 n_2} {2(n_2 - p)}$, then equation \eqref{HPS_ge_OLS} holds for any $n_1 > p$.  
    \end{enumerate}
\end{proposition}

Figure \ref{fig_motivation} illustrates phases 2 and 3 of Proposition \ref{claim_model_shift} for multiple values of $\mu$.
We plot our estimate vs. the excess risk using equation \eqref{Sigma_a}. 
This simulation shows that our estimate accurately matches the bias and variance.
For this simulation, we set $\Sigma^{(1)}=\Sigma^{(2)}=\id_{p\times p}$, and fix $p = 100$, $n_2 = 300$, $\sigma = 1/2$ while varying $n_1$ and $\mu$.
Notice that the above result requires $n_2 \ge 3p$; when $p < n_2 < 3p$, the threshold conditions can be derived with a similar analysis.

Figure \ref{fig_motivation} suggests that a natural training procedure is to downsample the sample size of the source task until the transfer learning performance starts to drop.
In the setting of Figure \ref{fig_motivation}, this procedure will terminate at the optimal value of $n_1$.
We can rigorously show this {\it dichotomy} in the setting of Proposition \ref{claim_model_shift} (see Appendix \ref{sec:claim_model_shift} for further details):
    \begin{enumerate}
        \item If $\mu^2 \frac {\bigtr{\Sigma^{(1)}}}{p} \ge  \frac{\sigma^2n_2}{2(n_2 - p)}$, then HPS gets strictly worse as $n_1$ increases in $[0,\infty)$. \label{claim_inc_item1}

        \item If $\mu^2 \frac {\bigtr{\Sigma^{(1)}}}{p} < \frac{\sigma^2n_2}{2(n_2 - p)}$, then there exists $n_0>0$ such that HPS first improves as $n_1$ increases in $[0, n_0]$ but turns worse as soon as $n_1 > n_0$. %
    \end{enumerate}

The above claim demonstrates that a simple adjustment to the HPS estimator can be made to reduce negative transfers.
Essentially, we will choose between HPS and OLS, whichever performs better for the target task.
In other words, if $\mu$ is too large, we will throw away the source task data.
Next, we will compare the performance of this adjusted HPS estimation with a minimax lower bound, which shows that our estimate for the adjusted HPS estimator matches the lower bound.

\subsection{A Matching Lower Bound}\label{sec_minimax}

Lastly, we complement our study in the model shift setting with a minimax lower bound.
Suppose we are trying to estimate an unknown parameter denoted as $\beta^{(2)}$.
We are given $n_2$ samples drawn from a linear parametric model, following $\beta^{(2)}$, with isotropic Gaussian covariates $X^{(2)}$, contaminated by Gaussian noise with variance $\sigma^2$.
Then, we are given $n_1$ samples from another linear parametric model, following $\beta^{(1)}$, again with isotropic Gaussian covariates $X^{(1)}$, contaminated by Gaussian noise with variance $\sigma^2$.
The parameter vectors belong to the set \[\Theta(\mu) = \set{\beta^{(1)} \in \real^p, \beta^{(2)} \in \real^p: \|{\beta^{(1)} - \beta^{(2)}}\| \le \mu, \|{\beta^{(1)}}\| \le 1, \|{\beta^{(2)}}\| \le 1}. \]
Note that our proof can be readily extended to cases with anisotropic Gaussian covariates as in equation \eqref{XofZ}, and the length constraints on ${\beta^{(1)}}$ and ${\beta^{(2)}}$ can be replaced with any other constant.\footnote{In particular, to extend the proof to the case with covariate shifts, one can replace the bound in equation \eqref{eq_norm}, Appendix \ref{proof_lb}, with different values of $c$ for each respective $\Sigma^{(1)}$ and $\Sigma^{(2)}$. The rest of the covering arguments will still go through. Then, one would set $\mu = 0$ in the specification of $\Theta(\mu)$.}

Let $\hat\beta$ be any estimation procedure that, given the above $n_1 + n_2$ samples, produces an estimate of the unknown vector $\beta^{(2)}$.
We prove the following minimax rate on the estimation error of $\hat\beta$.

\begin{theorem}\label{prop_lb}
    In the setting described above, let $\hat\beta$ be a fixed estimation procedure.
    Assume that $n_1 \ge (1+\tau) p$ and $n_2 \ge (1+\tau) p$ for a constant $\tau>0$.
    For any $(\beta^{(1)}, \beta^{(2)})$ within the set $\Theta(\mu)$, we have that
    \begin{align}
        \inf_{\hat\beta} \sup_{\Theta(\mu)} \ex{\frac 1 {n_2}\bignorm{\wt X^{(2)}\left(\hat\beta - \beta^{(2)}\right)}^2} \ge 
        c{\Big({\min\Big(\frac{n_1^2\mu^2}{(n_1+n_2)^2}, \frac{\sigma^2 p} {n_2} \Big) + \frac{\sigma^2 p} {n_1 + n_2}}\Big)}, \label{eq_lb}
    \end{align}
    where the expectation is over the randomness of $X^{(1)}, X^{(2)}, Y^{(1)}, Y^{(2)}$ and an independently drawn $\wt X^{(2)}$ that follows the same distribution as $X^{(2)}$.
    $c$ is a fixed constant that does not grow with $p$.
\end{theorem}
The lower bound in the right-hand side of equation \eqref{eq_lb} involves two parts:
\begin{itemize}
    \item For the first part, if $\mu$ is large, the source samples are not helpful, so the rate $\frac{\sigma^2 p}{n_2}$ is the OLS rate using only the target task samples. If $\mu$ is small and $n_1$ is large, then the rate $\mu^2$ appears if we use the OLS estimator for the source task.

    \item For the second part, $\frac{\sigma^2 p} {n_1 + n_2}$ is the rate in the case without model drift, i.e., $\beta^{(1)}=\beta^{(2)}$.
\end{itemize}
We now compare the minimax lower bound in equation \eqref{eq_lb} to the rates in Theorem \ref{cor_MTL_loss} and show that for an adjusted HPS estimation, the two bounds match up to each other up to constant factors.
Let $\bignorm{\beta^{(1)} - \beta^{(2)}} = \mu$.
In the setting of Theorem \ref{cor_MTL_loss}, one can see that the bias in equation \eqref{Lvar_samplesize} and the variance in equation \eqref{Lbias_samplesize} are given by the following orders:
\begin{equation}\label{eq:showorder1}
    L_{\bias} = \Theta\left(\frac{n_1^2 \mu^2}{(n_1 + n_2)^2}\right),\quad 
    L_{\vari} = \Theta\left(\frac{\sigma^2 p}{n_1+n_2}\right).
\end{equation}
These estimates hold more generally for the bias-variance decomposition in Lemma \ref{lem_HPS_loss} (beyond the setting of Theorem \ref{cor_MTL_loss});
See the derivation in Appendix \ref{app_firstpf} (equations \eqref{Op_norm2} and \eqref{Op_norm2_add}).
In particular, we can derive these estimates using concentration estimates on the eigenvalues of random matrices.

If $\mu^2 \le \frac{\sigma^2 p (n_1 + n_2)^2} {n_1^2 n_2}$, we can see that $L_{\bias} + L_{\vari}$ already matches the lower bound in equation \eqref{eq_lb}.
Then, by invoking Lemma \ref{lem_HPS_loss}, we conclude that the excess risk of HPS matches the minimax lower bound. 

Otherwise, if $\mu^2 > \frac{\sigma^2 p (n_1 + n_2)^2} {n_1^2 n_2}$, the adjusted HPS procedure---which throws away all the source task data---gives an excess risk at most $\frac{\sigma^2 p}{n_2 - p}$ (plus small concentration errors).
On the other hand, the lower bound in equation \eqref{eq_lb} becomes $\OO(\sigma^2 p (\frac 1 {n_1} + \frac 1 {n_1 + n_2}))$.
Since $n_1$ and $n_2$ are within a constant away of each other, the rate of the adjusted HPS still matches the lower bound in the case that $\mu$ is larger than the above threshold.
This concludes our study of the model shift setting.
The proof of this minimax lower bound is based on a covering argument (see, e.g., Chapter 12, \citet{zhang2023mathematical}), and the details can be found in Appendix \ref{proof_lb}.

\section{Extensions}\label{sec3_combined}

This section presents two extensions to reinforce our findings from previous sections.
First, we consider a setting that involves both covariate shifts and model shifts, and our goal is to show that the insights obtained in Section \ref{sec_main} can still be extended to this setting.
Second, we consider a setting involving multiple regression tasks with shared feature covariates, but following different linear models. The goal is to leverage these shared structures to help estimate the target task. See, e.g., \citet{AZ05,wang2016distributed} for some prior works that study similar settings.
In particular, we can still uncover a phase transition similar to our findings from Section \ref{sec_sizeratio}.

\subsection{A Setting with Both Covariate and Model Shifts}\label{sec_cov_mod_shift}

We first consider a setting with both covariate and model shifts. 
Define a function that depends on the source task's covariance:
\begin{align}\label{g1new} g_0(x)= \frac{1}{n_1}\bigtr{{\Sigma^{(1)}}\bigbrace{\id_{p\times p} + x\cdot \Sigma^{(1)}}^{-1}}  -\frac{1}{x}.
\end{align}
Let $g_0'(x)$ be the derivative of $g_0(x)$. %
It can be shown that there is a unique positive solution to the equation
\begin{align}\label{y0_simple_eq}
 ( 1+    x)g_0(x) = -n_1^{-1}(n_1+n_2-p).
\end{align}
Let $y_0$ denote this solution. The following theorem provides the exact asymptotic variance and bias limits when the population covariance of the target task is isotropic.

\begin{theorem}[Precise estimates under covariate and model shifts]\label{thm_Sigma2Id}
    Under Assumption \ref{assm_big1}, suppose that $\Sigma^{(2)}=\id_{p\times p}$ and  $Z^{(1)}$ and $Z^{(2)}$ are both Gaussian random matrices. Suppose further that the random-effects model applies. Let
    \begin{align}
        f_1 = \frac{n_1}{p}y_0 + \frac{n_1-p}{p\cdot g_0(y_0)} ,\ f_2 = \frac{n_1}{p\cdot g_0'(y_0)} - \frac{n_1-p}{p\cdot (g_0(y_0))^2} , \text{ and } f_3 = - g_0(y_0). \label{f1f2f3_new_eq}
    \end{align}
    Then, for any small constant $\e>0$, with high probability over the randomness of training samples and model vectors, the following estimates hold:
    \begin{align*}%
        &L_{\vari} =\left(1+\OO(p^{-\frac 1 2 + c})\right) \sigma^2 {p}n_1^{-1}f_1 ,  \\
        &L_{\bias} =\left(1+\OO(p^{-\frac 1 2 + c})\right) 2\mu^2 \frac{1 - 2f_1 f_3  +f_2 f_3^2  }{ 1 - n_2^{-1}{p}f_2 f_3^2 }. %
    \end{align*}
\end{theorem}

When the population covariance of the source task is also isotropic (i.e., $\Sigma^{(1)}=\id_{p\times p}$), we can verify that the above result is consistent with the variance and bias limits in Theorem \ref{cor_MTL_loss}.
The proof of Theorem \ref{thm_Sigma2Id} uses similar techniques as Theorem \ref{cor_MTL_loss} (see Appendix \ref{sec_add_modelcov}).

\subsubsection{The Anisotropic Case}

Theorem \ref{thm_Sigma2Id} states exact estimates when the target task's population covariance matrix is isotropic. In the general anisotropic case, it is possible to estimate the bias formula \eqref{Lbias} by approximating the source task's sample covariance matrix with its expectation.
It turns out that this renders the analysis of the bias formula similar to that of the variance formula in Theorem \ref{thm_main_RMT}.
However, this approximation results in an error term that decreases to zero (relative to the estimated term) as $n_1/p$ increases to infinity; see the right-hand side of equation \eqref{lem_cov_derv_eq}.
Thus, the accuracy of the estimate increases as $n_1/p$ increases.
This approximation is motivated by transfer learning scenarios where the source task is significantly larger than the target task.
The statement of this result for the anisotropic case (Theorem \ref{prop_main_RMT}), together with a complete proof, is described in Appendix \ref{sec_approx_cm}.

\begin{remark}\label{remk_hard_nonH}
In order to obtain an exact asymptotic limit of $L_{\bias}$ under general forms of $\Sigma^{(2)}$, one needs to study the singular values and singular vectors of an asymmetric random matrix 
\be\label{eq:asym_mat}\big({X^{(1)}}^{\top} X^{(1)}\big)^{-1} {X^{(2)}}^{\top} X^{(2)} + \id_{p\times p}.\ee
The eigenvalues of this matrix have been studied in the name of Fisher matrices \cite{Fmatrix}. Notice, however, that its singular values are different from its eigenvalues because of asymmetry.
This asymmetry makes it difficult to characterize the asymptotic behavior of its resolvent. We leave deriving a precise estimate of the anisotropic case to future work.
\end{remark}

\subsection{Numerical Comparisons}
We complement our theoretical analysis of HPS with empirical evaluations.
Figure \ref{fig_sec51} illustrates the result for a setting where half of the eigenvalues of $\Sigma^{(1)}$ are equal to $\lambda>1$ and the other half are equal to $1/\lambda$. Our theoretical estimates consistently match the empirical risks. 
Figure \ref{fig_sec3_cov_mo_a} shows a similar dichotomy as in Figure \ref{fig_sec3_covariate}.
We fix model shift $\mu = 0.1$ while varying covariate shift $\lambda$ for each curve.
Figure \ref{fig_sec3_cov_mo_b} illustrates different transfer effects as in Figure \ref{fig_motivation}.
We fix the level of covariate shift $\lambda = 4$ while varying $\mu$ for each curve.
Both simulations use $p = 100,$ $n_2 = 300$, and $\sigma = 1/2$.

\begin{figure}[!t]
	\begin{subfigure}[b]{0.48\textwidth}
		\centering
		\includegraphics[width=6.85cm]{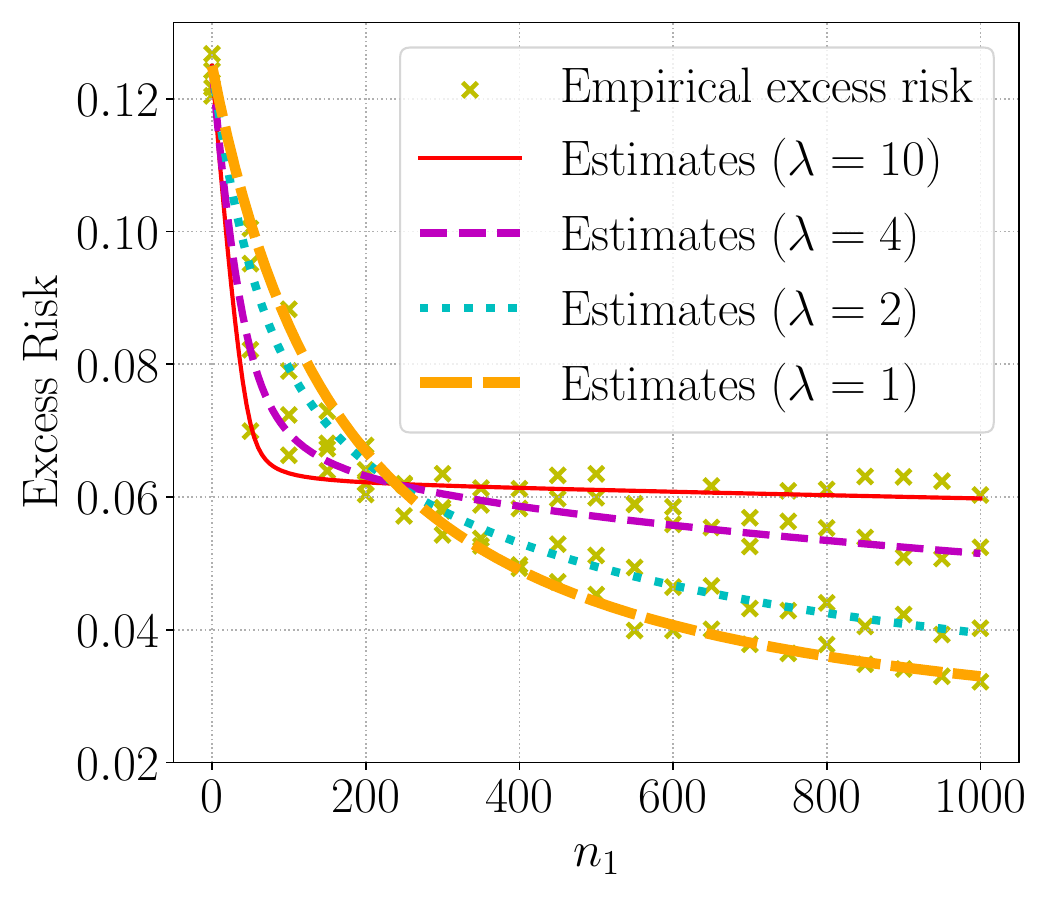} %
		\caption{Varying covariate shift with fixed model shift}
		\label{fig_sec3_cov_mo_a}
	\end{subfigure}\hfill
	\begin{subfigure}[b]{0.48\textwidth}
		\centering
		\includegraphics[width=6.85cm]{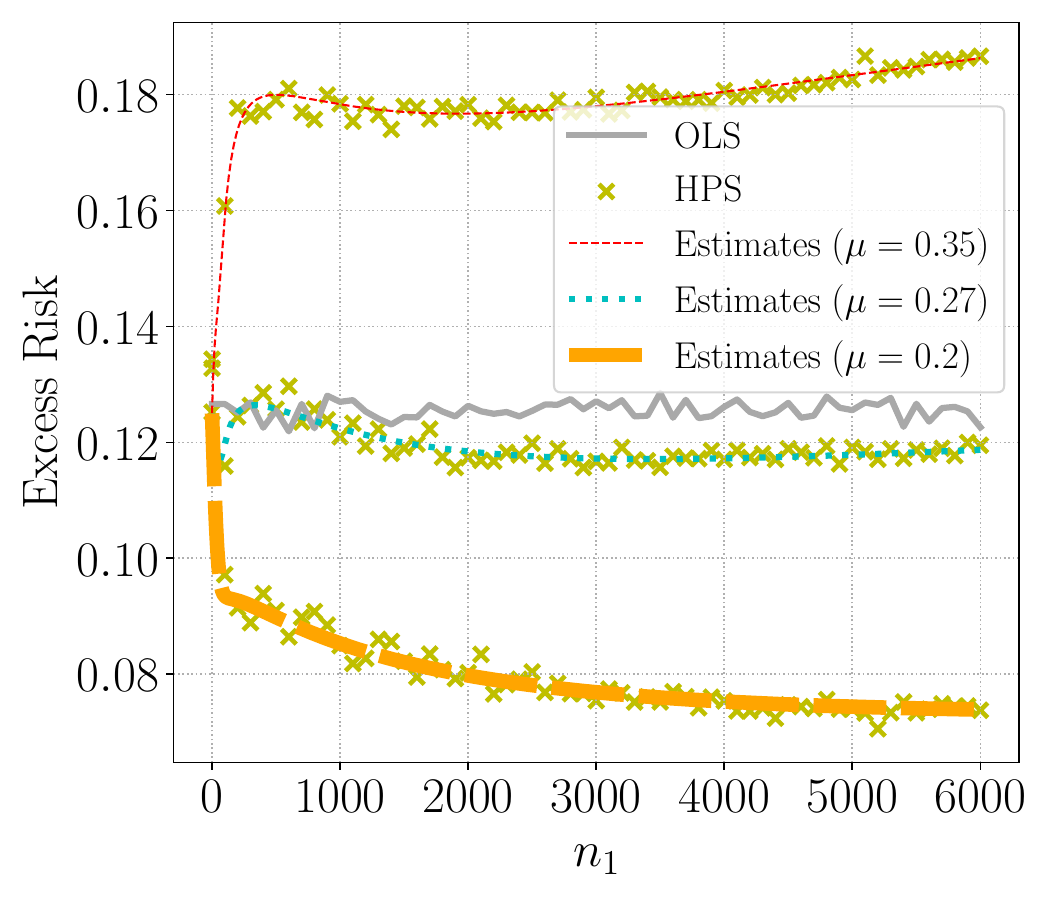} %
		\caption{Varying model shift with fixed covariate shift}
		\label{fig_sec3_cov_mo_b}
	\end{subfigure}
	\caption{Showing that our findings from Section \ref{sec_main} continue to hold under the presence of both covariate and model shifts in the data.}
	\label{fig_sec51}%
\end{figure}

\subsection{Multitask Learning with Shared Features and Different Models}\label{sec_same}

Next, we extend our findings from Section \ref{sec_sizeratio} to multiple source tasks involving model shifts.
Suppose there are $t$ tasks whose feature covariates are all equal to $X \in \real^{n \times p}$. The label vector of the $i$-th task follows a linear model with an unknown $p$ dimensional vector $\beta^{(i)}$, for $i=1, 2,\dots, t$:
\begin{align}\label{eq_mtl_data}
    Y^{(i)} = X \beta^{(i)} + \varepsilon^{(i)}.
\end{align}
This setting may also be referred to as multiple label regression, and variants of this setting have been studied in the prior literature \cite{kolar2011union,wang2016distributed}.
Similar to the two-task case, one of the tasks is viewed as the primary target task of interest, while the others are used as source tasks that aid in learning.

We assume that $X = Z\Sigma^{\frac 1 2} \in \real^{n \times p}$ is a random matrix satisfying the same assumption as $X^{(2)}$ in Assumption \ref{assm_big1}, and $\varepsilon^{(i)} \in \real^{n}$, $i=1, 2,\dots, t$, are independent random vectors, each of which is independent of $X$ and satisfies the same assumption as $\varepsilon^{(2)}$ in Assumption \ref{assm_big1}.
Furthermore, each $\beta^{(i)} \in \real^{p}$ is a (random or deterministic) vector independent of any other $\beta^{(j)}$ for $j \neq i$, the matrix $X$, and $\varepsilon^{(j)}$ for all $j=1,\dots,t$.
The sample size $n$ satisfies $1+\tau \le \frac n p\le p^{\tau^{-1}}$.

We combine the samples from all the tasks with a shared feature matrix $B$ and a task-specific prediction vector for each task.
Specifically, let $B \in \real^{p \times r}$ denote the shared feature matrix and let $A = [A_1, A_2, \dots, A_t] \in \real^{r \times t}$ denote the combined prediction variables.
We note that, unlike the two-task setting concerning covariate shifts, both $A$ and $B$ are part of the model in the multi-task setting. The reason is twofold.
First, this paper focuses on the under-parameterized setting. Thus, in the two-task setting, one can show that the optimal rank of $B$ would be one, i.e., $r = 1$, in which case $B$ becomes a scalar. 
Second, to tackle more than two tasks, we need to incorporate more of the structural information from the other tasks into the model \cite{AZ05}, which is encoded as the low-rank structures of $A$ and $B$. By contrast, in the two-task setting, we can focus on the setting where we have a shared parameter vector for both tasks.

We can now write down the loss objective as follows:
\begin{align}
	\ell(A, B) = \sum_{j=1}^t \bignorm{X B A_j - Y^{(j)}}^2, \label{eq_mtl_same_cov}
\end{align}
In particular, we focus on the case where $r < t$.
Otherwise, if $r \ge t$, the problem reduces to single-task learning (for reference, see Proposition 1 from \citet{WZR20}).

Let $(\hat A, \hat B)$ denote the global minimizer of $\ell(A, B)$.
In particular, we compute the global minimizer of $\ell(A, B)$ by first solving $\hat B$ as a function of $\hat A$---this turns out to be $(X^{\top}X)^{-1} X^{\top}Y$ multiplied by  $A^{\top} (A A^{\top})^+$, with $(AA^{\top})^{+}$ denoting the pseudo-inverse of $AA^{\top}$. 
Next, we could find the optimal $\hat A$ by taking the rank-$r$ SVD of $Y^{\top} X (X^{\top}X)^{-1} X^{\top}Y$ to get the leading $r$ left singular vectors as $U_r \in\real^{t\times r}$; See Appendix \ref{sec_multiproof} for the derivation. Then, we can obtain $\hat B \hat A$ as $(X^{\top} X)^{-1} X^{\top} Y U_r U_r^{\top}$.

We now define the HPS estimator for task $i$ as $\hat \beta_i^{\MTL} = \hat B \hat A_i$, where $\hat A_i$ denotes the $i$-th column of $\hat A$.
The excess risk of $\hat{\beta}_i^{\MTL}$ is equal to:
\begin{align}\label{ith_loss}
    L_i(\hat{\beta}_i^{\MTL}) = \bignorm{\Sigma^{\frac 1 2} \left(\hat{\beta}^{\MTL}_i - \beta^{(i)}\right)}^2.
\end{align}

\subsubsection{Results}

We demonstrate that in the aforementioned multi-task setting, hard transfer identifies the optimal rank-$r$ approximation, which is utilized to share information across multiple tasks.
To describe the result, we introduce several notations.
Let $B^\star \define [{\beta}^{(1)},{\beta}^{(2)},\dots,{\beta}^{(t)}] \in \real^{p\times t}$ be the matrix of concatenated model vectors.
Let $A^{\star} {A^{\star}}^{\top}$ be the best approximation of ${B^{\star}}^\top\Sigma B^{\star}$ in the set of rank-$r$ subspaces in $\real^t$:
\begin{align}\label{eq_A_star}
	A^{\star} \define \argmax{U\in\real^{t\times r} :\ U^{\top} U = \id_{r\times r}} \inner{U U^{\top}} {{B^{\star}}^{\top} \Sigma B^{\star}},
\end{align}
where $\langle \cdot ,\cdot \rangle $ denotes the Frobenius inner product between two matrices.
Let $a_i^{\star}\in\real^r$ be the $i$-th column vector of $A^{\star}{A^{\star}}^{\top}$.
For any matrix $X$, let $\norm{X}_2$ be its spectral norm and $\norm{X}_F$ be its Frobenius norm.
A precise estimate of the excess risk of HPS is stated below.

\begin{theorem}\label{thm_many_tasks}
Suppose the setting described above holds.
Let $r < t$ be a positive integer.
Suppose the $r$-th largest eigenvalue $\lambda_r$ of ${B^\star}^\top \Sigma B^\star$ is strictly larger than its $(r+1)$-th largest eigenvalue $\lambda_{r+1}$.
Then, for $i = 1,\dots,t$ and any small constant $c>0$, the following estimate holds with high probability over the randomness of the training samples:
\begin{align}
     \left|{L_i(\hat{\beta}_i^{\MTL}) - L_i(B^{\star}a_i^{\star}) -\frac{ \sigma^2 p \bignorm{a_i^{\star}}^2}{n-p}  }\right|  
	\le 
    \Bigbrace{\bignorms{{B^\star}^\top\Sigma B^\star} +  \sigma^2} \sqrt{\frac{ \frac{\bignorms{{B^\star}^\top\Sigma B^\star}}  {n^{\frac 1 2 - \frac 2 {\varphi} -   c}}  + \frac{\sigma^2} {n^{\frac 1 2 - c}}}{\lambda_r - \lambda_{r+1}}}. \label{Li_multi1}
\end{align}
\end{theorem}

In equation \eqref{Li_multi1}, $L_i(B^{\star} a_i^{\star})$ is the asymptotic limit of the bias of HPS, while $\frac{ \sigma^2 p}{n - p} \bignorm{a_i^{\star}}^2$ is the asymptotic limit of the variance of HPS. 
The proof of Theorem \ref{thm_many_tasks} relies on a characterization of the global minimizer of $\ell(A, B)$ by rank-$r$ SVD.
The detailed proof can be found in Appendix \ref{proof_mtl}.

\begin{remark}\label{remark_open}
The data model in equation \eqref{eq_mtl_data} studied in this section has also been considered by several prior works in which there is a shared set of nonzero coordinates in the model vectors $\beta^{(1)},  \dots, \beta^{(t)}$ among all tasks \cite{kolar2011union,wang2016distributed}. The goal is to leverage this shared sparsity to recover the support set of nonzero coordinates. 
    \citet{duan2023adaptive} examined an adaptive multi-task learning setting that does not require $X$ to be shared among all tasks.
    They introduced an adaptive estimation procedure that can achieve minimax optimality. 
    Notice that in both the work of \citet{duan2023adaptive} and the setting of Theorem \ref{thm_many_tasks}, only model shifts are assumed to be present, and instead, covariate shifts are not imposed across different tasks.
    It is an interesting question to study a combination of covariate and model shifts in the multi-task learning setting.
    More recently, \citet{li2023identification} introduce a task modeling framework to capture task relationships through influence functions.
    \citet{li2024scalable} design a boosting procedure on top of this influence estimation framework.
    It is an interesting question to theoretically formalize the benefit of boosting for multitask learning when tasks have strong negative interference.
    One promising direction is to revisit methods from active clustering and connect that task structures \cite{voevodski2012active}.
\end{remark}

\subsubsection{Illustrative Examples}

We illustrate the above result with the random-effects model.
The model vector of every task is equal to a shared vector $\beta_0$ plus a task-specific component $\gamma_{i}$, for $i=1,2,\cdots, t$: 
\begin{align}
    \beta^{(i)} = \beta_0 + \gamma_{i}.\label{eq_re_mt}
\end{align}
The entries of $\gamma_{i}$ are drawn independently from a Gaussian distribution with mean zero and variance $p^{-1} \mu^2$. 
We study two natural questions:
\begin{enumerate}
    \item What is the optimal rank $r$ of the shared feature matrix $B$?
    \item When does HPS transfer positively to a particular task, depending on the sample size $n$ and the model shift parameter $\mu$?
\end{enumerate}

For the first question, since the setting is symmetric in the $t$ tasks, we analyze the averaged limiting risk
$$g_r(n, \mu):=\frac1t\bignormFro{\Sigma^{1/2} B^{\star} (A^\star {A^\star}^{\top} - \id_{t\times t})}^2 + \frac{ \sigma^2 p}{n-p} \frac{r}{t},$$
which accurately approximates $t^{-1}\sum_{i=1}^t L_i(\hat{\beta}_i^{\MTL})  $ by Theorem \ref{thm_many_tasks}. To see this, we notice that the average of the bias and variance limits in equation \eqref{Li_multi1} is 
    $$\frac1t\sum_{i=1}^t \left(L_i(B^{\star}a_i^{\star}) + \frac{ \sigma^2 p}{n-p}  \bignorm{a_i^{\star}}^2\right) =\frac1t\bignorm{\Sigma^{1/2} B^{\star} (A^\star {A^\star}^{\top} - \id_{t\times t})}_F^2 + \frac{ \sigma^2p }{n-p} \frac{ r }{t}.$$
    Above, we have used the matrix notation $A^{\star}$ to rewrite the bias component. Moreover, we apply the identity $\sum_{i=1}^t \norm{a_i^{\star}}^2 = \tr[(A^{\star} {A^\star}^{\top})^2]=r$ to the variance component, since ${A^{\star}}^{\top} A^{\star} = \id_{r\times r}$ following Definition \eqref{eq_A_star}.
Then, the optimal rank of $B$ can be identified by sweeping through $g_r(n, \mu)$ for different values of $r$.

For the second question, recall that by equation \eqref{fact_tr}, the excess risk of the OLS estimator is given by $\frac{\sigma^2 p}{n - p}$.
Thus, by comparing $g_r(n, \mu)$ to $\frac{\sigma^2 p}{n - p}$, the exact threshold between positive and negative transfer can be identified.
This is stated as follows.

\begin{proposition}[Phase transition in multi-task learning]\label{claim_re_multi}
    Consider the multi-task setting stated above.
    Assume further that the random-effects model in equation \eqref{eq_re_mt} holds.
    Then, for any small constant $c>0$, the following claims hold with high probability over the randomness of the training data samples and the underlying linear models:
    \begin{enumerate}
        \item When $\mu^2 > \left(1 + p^{-\frac 1 2 + c}\right) \frac{\sigma^2 p^2}{(n - p) \tr[\Sigma]}$, then $g_r(n, \mu) \ge \frac{\sigma^2 p}{n - p} $ for any $1\le r < t$.
    	\item When $\mu^2 < \left(1 - p^{-\frac 1 2 + c}\right) \frac{\sigma^2 p^2}{(n - p)\tr[\Sigma]} $, then $g_r(n, \mu)$ is minimized when $r=1$.  Furthermore, when $r = 1$, we have that $g_1(n, \mu) \le \frac{\sigma^2 p}{n - p}$.
    \end{enumerate}
\end{proposition}

\begin{figure}[!t]
	\begin{subfigure}[b]{0.49\textwidth}
		\centering
		\includegraphics[width=0.99\textwidth]{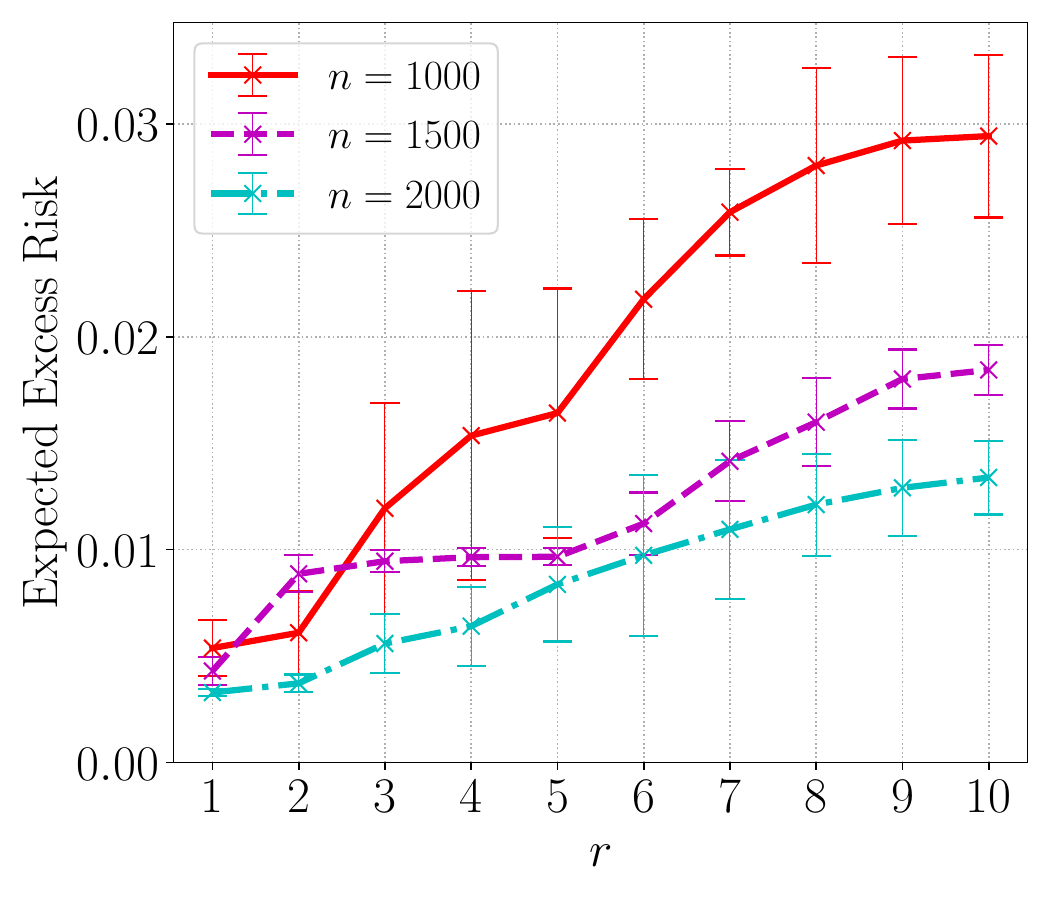}
		\caption{Varying the rank of the shared matrix $B$}
		\label{fig_sec4_width}
	\end{subfigure}
	\begin{subfigure}[b]{0.49\textwidth}
		\centering
		\includegraphics[width=0.99\textwidth]{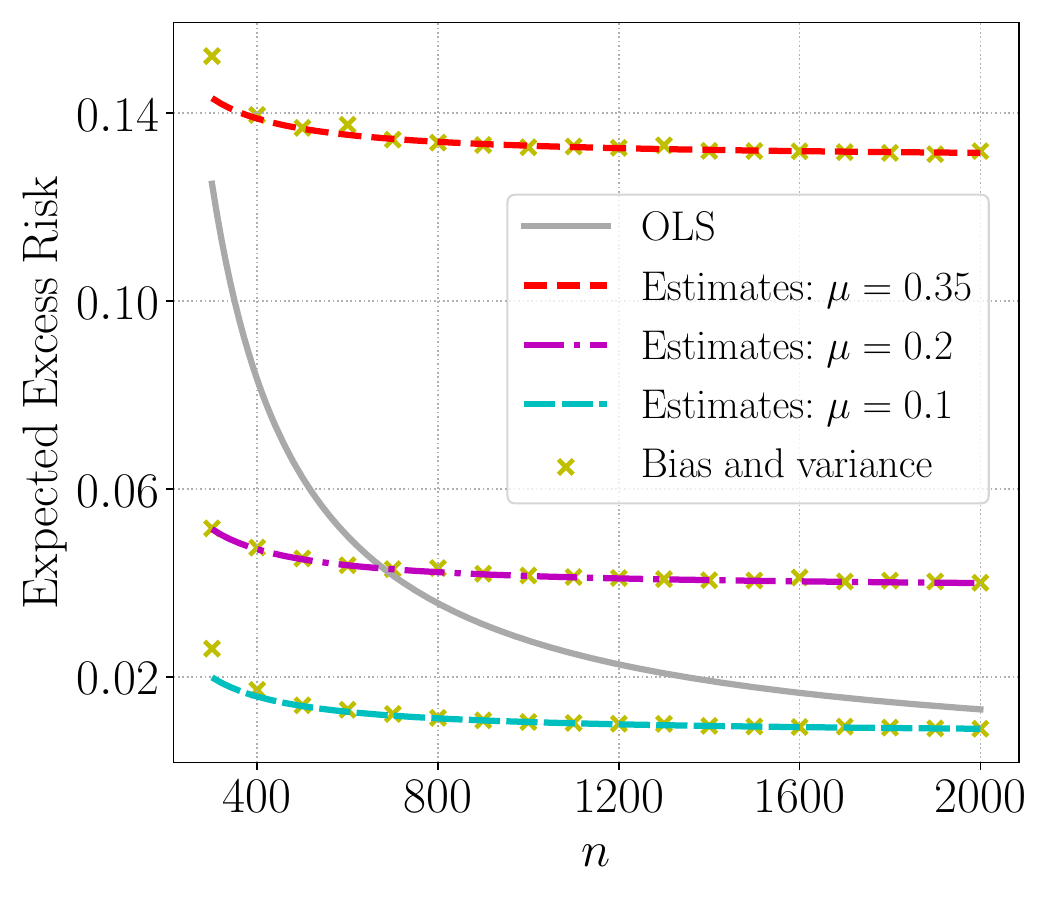}
		\caption{Varying model shift $\mu$}
		\label{fig_sec4_transfer}
	\end{subfigure}
	\caption{Illustration of transfer effects in the random-effects model with multiple source tasks.
	Figure \ref{fig_sec4_width} shows that for rank $r$ from $1$ to $10$, the lowest excess risk of task $t$ is achieved at $r = 1$ (averaged over three random seeds).
	Figure \ref{fig_sec4_transfer} fixes $r = 1$ and varies $\mu, n$.
    Similar to Figure \ref{fig_motivation}, the transfer effect of HPS can be either positive or negative depending on $\mu, n$.}
	\label{fig_sec4}
\end{figure}

Figure \ref{fig_sec4} illustrates the above result with ten tasks of dimension $p = 100$ and noise variance $\sigma = \frac 1 2$.
We plot the theoretical estimate using $g_r(n, \mu)$ and the empirical value of bias plus variance of $\hat{\beta}_t^{\MTL}$ (equation \eqref{eq_mtl_bv} in Section \ref{sec_multiproof}).
Figure \ref{fig_sec4_width} shows that the performance of HPS is optimized when $r = 1$.
Figure \ref{fig_sec4_transfer} shows different transfer effects by varying $n$ and $\mu$ in the multi-task setting.
The results under different values of $\mu$ also match the conditions in Proposition \ref{claim_re_multi}.
The proof of Proposition \ref{claim_re_multi} can be found in Appendix \ref{sec:pf-claim_re_multi}.

\section{Comparison to Soft Transfer}\label{subsec_SPS}

In this section, we provide a case study of the statistical behavior of soft transfer estimation.
Soft transfer works by penalizing the distance between the linear models for the target task and the source task.
In general, soft transfer offers a more flexible regularization approach for adjusting the distance.
Thus, one can expect the statistical behavior of soft transfer to be different from that of HPS.
In this section, we provide a case study in several settings to illustrate this difference.

\subsection{Bias-Variance under Deterministic Designs}
We first derive the expectation of the excess risk $ L(\hat\beta^{\SPS}_2(\lambda))$ over the randomness of $\varepsilon^{(1)}$ and $ \varepsilon^{(2)}$ to get the bias and variance of SPS.
For simplicity, we denote $\hat\Sigma^{(1)} = {X^{(1)}}^{\top} X^{(1)}$, $\hat\Sigma^{(2)} = {X^{(2)}}^{\top} X^{(2)}$, and $\hat\Sigma=\hat\Sigma^{(1)}+\hat\Sigma^{(2)}$ as in \eqref{Sigma_a}.
Let $\SigSPS$ (which depends on the regularization parameter $\lambda>0$\footnote{The case that $\lambda = 0$ is not interesting since there would be no information transfer in this case.}) be
\begin{equation}\label{eq:SigSPS}
    \SigSPS: = \lambda^{-1}\hat\Sigma^{(1)}\hat\Sigma^{(2)} +\hat\Sigma.
\end{equation}
The bias and variance of the SPS estimator can be derived from straightforward algebraic calculations.  

\begin{lemma}\label{lem_SPS_loss}
In expectation over the randomness of $\varepsilon^{(1)}$ and $\varepsilon^{(2)}$, we have the following equation for any $\lambda > 0$: 
    \begin{align}
        &\exarg{\varepsilon^{(1)}, \varepsilon^{(2)}}{L\big(\hat{\beta}_2^{\SPS}(\lambda)\big)} = L_{\bias}(\lambda) +  L_{\vari}(\lambda), \label{L_SPS_simple} 
    \end{align}
    where the bias and variance formulas are defined as  
    \begin{align}
        &L_{\bias} (\lambda) = \bignorm{ {\Sigma^{(2)}}^{\frac 1 2}  {\big({\SigSPS}\big)^{-1} \hat\Sigma^{(1)} \left(\beta^{(1)} - \beta^{(2)}\right)}}^2,  \label{Lbias_SPS} \\
        &L_{\vari} (\lambda) = \sigma^2 \bigtr{ \bigg(\big({\SigSPS}\big)^{-1}\bigg)^\top \Sigma^{(2)} \big({\SigSPS}\big)^{-1} \Bigbrace{\hat\Sigma^{(1)} + \Bigbrace{ \frac{\hat\Sigma^{(1)}}{\lambda} + \id} \hat\Sigma^{(2)}\Bigbrace{ \frac{\hat\Sigma^{(1)}}{\lambda} + \id}}}. \label{Lvar_SPS}
    \end{align}
\end{lemma}
As a sanity check, one can see that when $\lambda$ tends to infinity, the above reduces to the bias and variance of SPS, since ${\SigSPS}$ tends to $\hat \Sigma$ and $\hat\Sigma^{(1)}/\lambda$ tends to zero.

\subsection{An Illustrative Example} 
Above, we have considered $\hat\Sigma^{(1)}$ and $\hat\Sigma^{(2)}$ to be deterministic covariance matrices by conditioning on $X^{(1)}$ and $X^{(2)}$. 
Next, we are given a special example to compare the behavior of SPS with HPS, assuming that $(Z^{(2)})^\top Z^{(2)} / n_2 = \id_{p\times p}$ while task one still follows a random design.
Given any $\lambda>0$, denote 
$$\wt \Sigma(\lambda)=\frac{n_1}{\lambda} \Sigma^{(2)} + \frac{n_1}{n_2 } \id_{p\times p} .$$
Let $a_0$ be the unique positive solution to the equation 
\begin{equation*}%
\frac{1}{a_0}=1 + \frac{1}{n_1}\tr\left[\frac{\wt\Sigma(\lambda)}{\id_{p\times p}+a_0\wt \Sigma(\lambda)}\right].
\end{equation*}
Then, let
\begin{equation}\label{x0-SPS} 
    x_0 = \frac{1}{n_1}\bigtr{{\Big(\id_{p\times p}+a_0\wt \Sigma(\lambda)\Big)^{-2}}},\ \ 
    y_0 ={a_0^2 x_0}\Big(1-\frac{p}{n_1}-2a_0 + x_0\Big)^{-1}.
\end{equation}
Lastly, define two matrices as follows:
\begin{equation}\label{def-M12-SPS}
    M_1 =\frac{n_1 a_0}{n_2} \Big({\id_{p\times p}+a_0\wt\Sigma(\lambda)}\Big)^{-1},\ \
    M_2 = \Big(\frac{n_1}{n_2}\Big)^2\left(a_0^2-y_0\right){\Big(\id_{p\times p}+a_0 \wt \Sigma(\lambda)\Big)^{-2}}.
\end{equation}
We state the limits of bias and variance of SPS as follows.
\begin{proposition}\label{thm_SPS}
Suppose Assumption \ref{assm_big1} holds.
Additionally, assume that $\Sigma^{(1)}=\Sigma^{(2)}$ and ${Z^{(2)}}^\top Z^{(2)} / n_2 = \id_{p\times p}$. Then, for any small constant $\e>0$,  with high probability over the randomness of the training samples, the following estimates hold: 
\begin{align}
    L_{\bias} &=\big(\beta^{(1)}- \beta^{(2)}\big)^\top {\Sigma^{(2)}}^{\frac 1 2} M_2{{\Sigma^{(2)}}}^{\frac 1 2}\big(\beta^{(1)}- \beta^{(2)}\big) 
    +     \OO\left( {n^{\frac{1}{\varphi}-\frac1  4 + c}\|\beta^{(1)}- \beta^{(2)}\|^2} \right), \label{Lvar_sps} \\
    L_{\vari} &= \frac{\sigma^2 p}{n_2} - \frac{\sigma^2}{n_2} \tr\big[{M_1}\big]- \frac{\sigma^2}{\lambda} \tr\big[\Sigma^{(2)} M_2\big]  +  \OO\left(\Big(\frac{p\sigma^2 }{n_2}  +  \frac{p\sigma^2 }{\lambda}\Big)n^{\frac1\varphi -\frac 14+c} \right),\label{Lbias_sps}
\end{align}
where we denote $n=n_1+n_2.$
\end{proposition}

Recalling that $\varphi > 4$, for a small enough constant $c$, equations \eqref{Lvar_sps} and \eqref{Lbias_sps} characterize the limits of $L_{\vari}$ and $L_{\bias}$ with an error term that is smaller than the deterministic leading term by an asymptotically vanishing factor.
The proof of Lemma \ref{lem_SPS_loss} and Proposition \ref{thm_SPS} can be found in Appendix \ref{sec:pf_decomp_SPS}.

It is an interesting question to derive the asymptotic bias and variance limits for the SPS estimator under random designs for both $\hat\Sigma^{(1)}$ and $\hat\Sigma^{(2)}$, with arbitrary covariate and model shifts. This requires developing new tools beyond the current random matrix theory literature.
Similar to the issues in Remark \ref{remk_hard_nonH}, one needs to deal with an even more complicated asymmetric random matrix than equation \eqref{eq:asym_mat}, which is left for future work:
$$(\hat\Sigma^{(1)})^{-1}\SigSPS = \lambda^{-1} \hat\Sigma^{(2)} + (\hat\Sigma^{(1)})^{-1} \hat\Sigma^{(2)} + \id_{p\times p}.$$

\subsection{Numerical Comparisons}
We conduct a numerical comparison of SPS, OLS, and HPS.
In Figure \ref{fig_sps}, we use the same data-generating process as Figure \ref{fig_motivation}, except that we now take ${Z^{(2)}}^\top Z^{(2)} =n_2 \id_{p\times p}$.
We find that the regularization effect of SPS can generally help reduce negative transfer, especially when the model shift is large.
Additionally, our estimates in Proposition \ref{thm_SPS} match the bias and variance values in finite dimensions. Here, we set $p=100$.

\begin{figure}[t!]
    \centering
    \includegraphics[width = 0.8\textwidth]{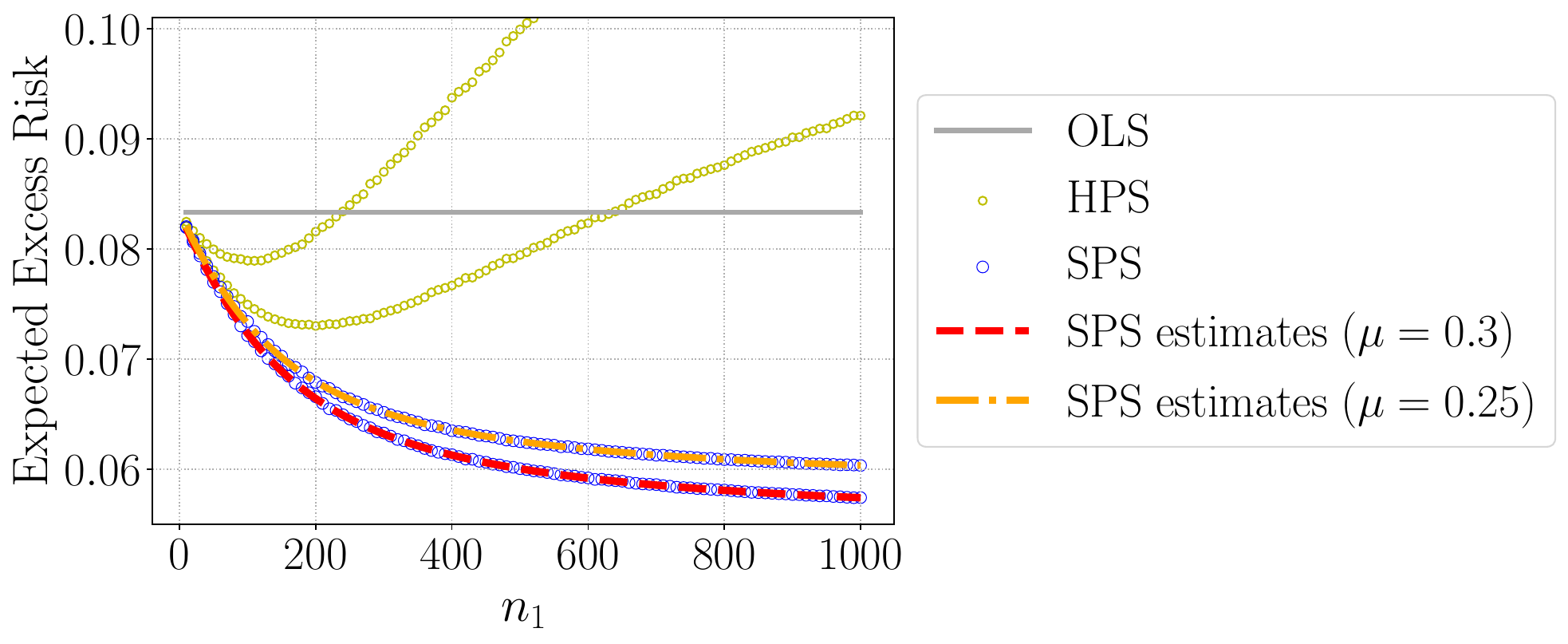}
    \caption{We find that the regularization effect of SPS can help mitigate negative transfer, which is more prominent when $\mu$ is higher (as indicated by the two lines where $\mu = 0.3$ and $\mu = 0.25$. We leave a thorough understanding of this phenomenon, including soft transfer, for future work.}\label{fig_sps}
\end{figure}

\subsection{Discussion}
Our analysis of SPS suggests that it behaves similarly to HPS when the model shift is low or modest, and SPS can generally help reduce negative transfer when the model shift is high. In this sense, the regularization of SPS behaves similarly to the data set shrinkage applied to HPS. More general forms of soft transfer can be found in transfer learning with foundation models, such as fine-tuning and instruction tuning, where overfitting often occurs when a large model is fine-tuned on a small target data set. For such scenarios, applying the regularization in soft transfer can effectively mitigate overfitting. It would be interesting to further study the regularization effect of soft transfer in large models, which would also require developing tools for analyzing over-parameterized models (and perhaps a separate data model). These are left for future work.

\section{Conclusion}\label{sec_conclude}

This paper presents several high-dimensional asymptotic results for analyzing transfer learning with two linear regression tasks: one is the source task, and the other is the target task.
The main technical ingredients involve estimating the high-dimensional asymptotic limits of several functions involving two independent sample covariance matrices with different population covariance matrices.
In particular, our main result involves:
\begin{itemize}
    \item A generalization of a well-known result from the random matrix theory literature on the trace of the inverse of one sample covariance matrix to the sum of two sample covariance matrices under covariate shifts (Theorem \ref{thm_main_RMT}), along with an almost sharp convergence rate to this limit.
    \item A phase transition of transfer learning under model shifts between two high-dimensional linear regression tasks, depending on the level of model shifts and the sample sizes of source and target tasks (Theorem \ref{cor_MTL_loss}). With a simple adjustment, HPS is nearly optimal, matching the minimax lower bound up to constant factors.
    \item An extension of the findings to a setting involving both covariate shift and model shift, and a multi-task learning setting where the feature covariates are shared among all tasks.
\end{itemize}
Numerical simulations are provided to show the accuracy of our estimates in finite dimensions.

Our work establishes a rigorous connection between transfer learning and random matrix theory, highlighting several technically challenging questions that are worth further study.
The theoretical modeling of transfer learning is still a relatively new research area. 
We hope our work can inspire future studies that apply random matrix theory to transfer learning.

\acks{Thanks to Tony Cai, Edgar Dobriban, Kaizheng Wang, Yaqi Duan, Wei Hu, and Hongji Wei for helpful discussions at various stages of this project.
We are grateful to the action editor and the anonymous referees for their helpful comments, which helped improve this paper.

Hongyang Zhang is supported in part by NSF award IIS-2412008.
Fan Yang is supported in part by the National Key R\&D Program of China (No. 2023YFA1010400), and is also affiliated with Beijing Institute of Mathematical Sciences and Applications.
Fan Yang and Hongyang Zhang were partly supported by the Wharton Dean's Fund for Postdoctoral Research during the early stage of this project at UPenn.
Weijie Su is supported in part by NSF through DMS-1847415 and the Wharton Dean's Fund for Postdoctoral Research.}

\appendix

\section{Basic Tools}\label{app_tool}

Here is a roadmap of the appendix materials.
In Appendix \ref{sec:notation} and \ref{sec:concentration}, we describe some basic tools used to establish the proofs.
Then, in Appendix \ref{app_firstpf}, we prove the bias-variance decomposition of the HPS estimation from Section \ref{sec_HPS}.
In Appendix \ref{proof_sec_cov}, we prove the results stated in Section \ref{sec_main}.
In Appendix \ref{app_iso_cov}, we give the complete proofs for the estimates under model shifts from Section \ref{sec_sizeratio} and the extension in Section \ref{sec_cov_mod_shift}.
We state the proofs for the multi-task case in Appendix \ref{sec_multiproof}.
Finally, Appendix \ref{sec:pf_decomp_SPS} provides proofs for the SPS estimation.

\subsection{Notations}\label{sec:notation}

In this supplement, we will use the following notations. The fundamental large parameter in our proof is $p$. All quantities that are not explicitly constant may depend on $p$, and we usually omit $p$ from our notations.
Given any matrix $X$, let $\norm{X}$ or $\bignorms{X}$ denote its operator norm
(or equivalently, the largest singular value); let $\bignormFro{X}$ denote its Frobenius norm; let $\lambda_{\max}(X)$ and $\lambda_{\min}(X)$ denote its largest and smallest singular values, respectively; let $\lambda_1(X)\ge \lambda_2(X)\ge \cdots $ denote the singular values of $X$ in descending order;
let $X^+$ denote the Moore-Penrose pseudoinverse of $X$. 
As a special case, the operator or Frobenius norm of a vector $v$ is also its Euclidean norm, i.e., $\|v\|=\|v\|_2=\|v\|_F$. 
We will often write an identity matrix as $1$ if this does not cause any confusion in the specific context.

We say an event $\Xi$ holds with high probability if $\P(\Xi)\to 1$ as $p\to \infty$.

We will use the big-O notation $g(p) = \OO(f(p))$ if there exists a constant $C>0$ such that $|g(p)| \le C f(p)$ for large enough $p$. Moreover, for any  $g(p)\ge 0$ and $f(p)\ge 0$, we will use the notations $g(p)\lesssim f(p)$ if $g(p) = \OO(f(p))$, and $g(p)\sim f(p)$ if $g(p) \lesssim f(p)$ and $f(p) \lesssim g(p)$.

\subsection{Concentration Estimates}\label{sec:concentration}

In this section, we collect some useful tools that will be used in the proof. 
First, it is convenient to introduce the following notation.
\begin{definition}[Overwhelming probability]
We say an event $\Xi$ holds \emph{with overwhelming probability} (w.o.p.) if for any constant $D>0$, $\mathbb P(\Xi)\ge 1- p^{-D}$ for large enough $p$. Moreover, we say $\Xi$ holds with overwhelming probability in an event $\Omega$ if for any constant $D>0$, $\mathbb P(\Omega\setminus \Xi)\le p^{-D}$ for large enough $p$.
\end{definition}

The following notion of stochastic domination is commonly used to study random matrices.

\begin{definition}[Stochastic domination]\label{stoch_domination}
Let $\xi\equiv \xi^{(p)}$ and $\zeta\equiv \zeta^{(p)}$ be two $p$-dependent random variables.
We say that $\xi$ is stochastically dominated by $\zeta$, denoted by $\xi\prec \zeta$ or $\xi=\OO_\prec(\zeta)$, if for any small constant $\e > 0$ and large constant $D > 0$, there exists a $p_0(\e, D)\in \N$ such that for all $p > p_0(\e, D)$,
\[ \bbP\left(|\xi| >p^\e |\zeta|\right)\le p^{-D}. \]
In other words, $\xi\prec \zeta$ if $|\xi| \le p^\e |\zeta|$ with overwhelming probability for any small constant $\e>0$. If $\xi(u)$ and $\zeta(u)$ are functions of $u$ supported in a set $\cal U$, then we say $\xi(u)$ is stochastically dominated by $\zeta(u)$ uniformly in $\cal U$ if %
\[ \bbP\left(\cup_{u\in \cal U}\{|\xi(u)|>p^\e |\zeta(u)|\}\right)\le p^{-D} \]
for large enough $p$. Given any event $\Omega$, we say $\xi \prec \zeta$ on $\Omega$ if $\mathbf 1_{\Omega}\xi \prec \zeta$.
\end{definition}

\begin{remark}\label{rem_stoch_add}
We make two simple remarks.
First, since we allow for an $p^\e$ factor in stochastic domination, we can ignore $\log$ factors without loss of generality since $(\log p)^C\prec 1$ for any constant $C>0$.
Second, given a random variable $\xi$ with unit variance and finite moments up to any order as in \eqref{eq_highmoments}, we have that $|\xi|\prec 1$.
This is because, by Markov's inequality, there is
$$ \P(|\xi|\ge p^{\e})\le p^{-k\e}{\E |\xi|^k}\le p^{-D},$$
as long as $k$ is taken to be larger than $D/ \e$.
\end{remark}

The following lemma collects several basic properties of stochastic domination that will be used tacitly in the proof. Roughly speaking, it says that the stochastic domination ``$\prec$" can be treated as the conventional less-than sign ``$<$" in some sense.

\begin{lemma}[Lemma 3.2 in \citet{isotropic}]\label{lem_stodomin}
Let $\xi$ and $\zeta$ be two families of nonnegative random variables depending on some parameters $u\in \cal U$ and $v\in \cal V$. Let $C>0$ be an arbitrary constant. 
\begin{itemize}
\item[(i)] {\bf Sum.} Suppose that $\xi (u,v)\prec \zeta(u,v)$ uniformly in $u\in \cal U$ and $v\in \cal V$. If $|\cal V|\le p^C$, then $\sum_{v\in \cal V} \xi(u,v) \prec \sum_{v\in \cal V} \zeta(u,v)$ uniformly in $u$.

\item[(ii)] {\bf Product.} If $\xi_1 (u)\prec \zeta_1(u)$ and $\xi_2 (u)\prec  \zeta_2(u)$ uniformly in $u\in \cal U$, then $\xi_1(u)\xi_2(u) \prec \zeta_1(u) \zeta_2(u)$ uniformly in $u\in \cal U$.

\item[(iii)] {\bf Expectation.} Suppose that $\Psi(u)\ge p^{-C}$ is a family of deterministic parameters, and $\xi(u)$ satisfies $\mathbb E\xi(u)^2 \le p^C$. If $\xi(u)\prec \Psi(u)$ uniformly in $u$, then we also have $\mathbb E\xi(u) \prec \Psi(u)$ uniformly in $u$.
\end{itemize}
\end{lemma}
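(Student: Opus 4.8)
\medskip
\noindent\textbf{Proof proposal.}
All three clauses follow directly from the definition of stochastic domination (Definition \ref{stoch_domination}). The plan is, in each case, to fix arbitrary constants $\e>0$ and $D>0$, pass to a single ``good'' event of probability at least $1-p^{-D}$ on which the domination hypotheses hold as deterministic inequalities simultaneously over all parameter values, verify the desired inequality pointwise on that event, and read off the conclusion. The only clause needing more than a union bound is (iii), where I would additionally use Cauchy--Schwarz together with the second-moment bound $\mathbb E\,\xi(u)^2\le p^{C}$ to absorb the contribution of the rare event on which $\xi(u)$ is atypically large.

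For (i), I would apply $\xi(u,v)\prec\zeta(u,v)$ with exponents $\e$ and $D+C$ to obtain an event $\Xi$ with $\mathbb P(\Xi)\ge 1-p^{-D}$ (the factor $|\cal V|\le p^{C}$ being absorbed by a union bound over $v$ if domination is only known per-$v$; if it is already uniform in $(u,v)$ no union bound is needed) on which $|\xi(u,v)|\le p^{\e}|\zeta(u,v)|$ for every $u$ and $v$. Summing these pointwise inequalities over $v$ on $\Xi$ gives $\sum_{v}|\xi(u,v)|\le p^{\e}\sum_{v}|\zeta(u,v)|$ for all $u$, which is precisely $\sum_v\xi(u,v)\prec\sum_v\zeta(u,v)$ uniformly in $u$. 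For (ii), I would intersect the good events for $\xi_1\prec\zeta_1$ and $\xi_2\prec\zeta_2$, each taken with exponent $\e/2$ and $D$ so that the intersection still has probability at least $1-2p^{-D}$, and multiply the two pointwise inequalities to get $|\xi_1(u)\xi_2(u)|\le p^{\e}|\zeta_1(u)\zeta_2(u)|$ uniformly in $u$.

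For (iii), I would fix $\e,D$ and apply $\xi(u)\prec\Psi(u)$ with a large auxiliary exponent $D'$ to be chosen; for each $u$ I would then split the expectation at the truncation level $p^{\e}\Psi(u)$:
\begin{align*}
\mathbb E\,\xi(u)&=\mathbb E\big[\xi(u)\,\mathbf 1\{\xi(u)\le p^{\e}\Psi(u)\}\big]+\mathbb E\big[\xi(u)\,\mathbf 1\{\xi(u)> p^{\e}\Psi(u)\}\big]\\
&\le p^{\e}\Psi(u)+\big(\mathbb E\,\xi(u)^2\big)^{1/2}\,\mathbb P\big(\xi(u)> p^{\e}\Psi(u)\big)^{1/2},
\end{align*}
bounding the first term trivially and the second by Cauchy--Schwarz. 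Using $\mathbb E\,\xi(u)^2\le p^{C}$ and the pointwise tail bound $\mathbb P(\xi(u)> p^{\e}\Psi(u))\le p^{-D'}$ (a consequence of the uniform tail bound implied by $\xi(u)\prec\Psi(u)$), the second term is at most $p^{(C-D')/2}$, which is $\le p^{-C}\le\Psi(u)$ once $D'\ge 3C$. Hence $\mathbb E\,\xi(u)\le 2p^{\e}\Psi(u)\le p^{2\e}\Psi(u)$ for $p$ large, uniformly in $u$; since $\mathbb E\,\xi(u)$ is deterministic, this pointwise bound is exactly the assertion $\mathbb E\,\xi(u)\prec\Psi(u)$ uniformly in $u$ (the $2\e$ in place of $\e$ being immaterial because $\e>0$ was arbitrary).

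The one point that is not completely mechanical lies in (iii): one has to observe that $\prec$ between deterministic quantities collapses to an ordinary inequality, so that after the truncation the argument reduces to comparing two numbers, and that it is precisely the combination of the moment bound $\mathbb E\,\xi(u)^2\le p^{C}$ with the polynomial floor $\Psi(u)\ge p^{-C}$ that renders the tail term negligible. Beyond this I anticipate no real obstacle; this is the standard ``$\prec$ behaves like $<$'' toolkit from \citet{isotropic}, and the proof reduces to checking each clause against Definition \ref{stoch_domination}.
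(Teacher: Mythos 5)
Your proof is correct, and it is the standard argument for this lemma. The paper itself gives no proof of Lemma \ref{lem_stodomin} -- it simply cites Lemma 3.2 of \citet{isotropic} -- so there is no in-paper proof to compare against; in that reference the argument is precisely the one you lay out: a union bound over $v$ for (i), intersection of two high-probability events for (ii), and a truncation at level $p^{\e}\Psi(u)$ combined with Cauchy--Schwarz and the moment bound for (iii). Your parenthetical remark on (i) correctly identifies the purpose of the hypothesis $|\mathcal V|\le p^{C}$: it is what lets the union bound over $v$ absorb the polynomial blow-up when domination is read per-$v$ (the natural reading given that the condition appears at all); if one instead interprets the hypothesis as joint uniformity over $(u,v)$ as in Definition \ref{stoch_domination}, the argument is even shorter and the size constraint on $\mathcal V$ is vacuous. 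The bookkeeping in (iii) -- choosing $D'\ge 3C$ so that $p^{(C-D')/2}\le p^{-C}\le\Psi(u)$, and then noting that a deterministic inequality $\mathbb E\xi(u)\le p^{2\e}\Psi(u)$ uniformly in $u$ is exactly what the conclusion $\mathbb E\xi(u)\prec\Psi(u)$ means -- is also right, including the observation that the factor $2$ and the exponent $2\e$ are immaterial because $\e$ was arbitrary and $p$ is large.
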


Let $Q>0$ be a ($p$-dependent) deterministic parameter. We say a random matrix $Z \in\real^{n \times p}$ satisfies the {\it{bounded support condition}} with $Q$ (or $Z$ has \emph{bounded support} $Q$) if
\begin{equation}
	\max_{1\le i \le n, 1 \le j \le p}\vert Z_{i j} \vert \prec Q. \label{eq_support}
\end{equation}
As shown in Remark \ref{rem_stoch_add}, if the entries of $Z$ have finite moments up to any order, then $Z$ has bounded support $Q=1$.
More generally, if every entry of $Z$ has a finite $\varphi$-th moment as in \eqref{conditionA2} and $n\ge p$, then using Markov's inequality and a simple union bound we get that %
\begin{align}
	\P\left(\max_{1\le i\le n, 1\le j \le p}|Z_{i  j}|\ge (\log n) n^{\frac{2}{\varphi}}\right) &\le \sum_{i=1}^n \sum_{j=1}^p \P\left(|Z_{i j}|\ge (\log n) n^{\frac{2}{\varphi}}\right)  \nonumber\\
	&\lesssim \sum_{i=1}^n \sum_{j=1}^p  \left[(\log n) n^{\frac{2}{\varphi}}\right]^{-\varphi} \le (\log n)^{-\varphi}.\label{Ptrunc}
	\end{align}
In other words, $Z$ has bounded support $Q=n^{{2}/{\varphi}}$ with high probability.

The following lemma, which can also be found from Lemma 3.8 in \citet{EKYY1}, gives sharp concentration bounds for linear and quadratic forms of random variables with bounded support. %
\begin{lemma}\label{largedeviation}
Let the sequence $(x_i)$, $(y_j)$ be families of centered and independent random variables, and $(A_i)$, $(B_{ij})$ be families of deterministic complex numbers. Suppose the entries $x_i$ and $y_j$ have variance at most $1$, and satisfy the bounded support condition (\ref{eq_support}) for a deterministic parameter $Q \ge 1$. %
Then, we have the following estimates:
\begin{align}
& \Big| \sum_{i=1}^n A_i x_i \Big\vert \prec Q \max_{i=1}^n \vert A_i \vert+ \Big(\sum_{i=1}^n |A_i|^2 \Big)^{1/2} ,\label{eq largedev10}  \\ 
& \Big\vert  \sum_{i,j=1}^n x_i B_{ij} y_j \Big\vert \prec Q^2 B_d  + Q n^{1/2}B_o +  \Big(\sum_{1\le i\ne j\le n} |B_{ij}|^2\Big)^{{1}/{2}},\label{eq largedev11}  \\
& \Big\vert  \sum_{i=1}^n (|x_i|^2-\mathbb E|x_i|^2) B_{ii}  \Big\vert  \prec Q n^{1/2}B_d   ,\label{eq largedev20}\\ 
&\Big\vert  \sum_{1\le i\ne j\le n} x_i B_{ij} x_j \Big\vert  \prec Qn^{1/2}B_o +  \Big(\sum_{1\le i\ne j\le n} |B_{ij}|^2\Big)^{{1}/{2}} ,\label{eq largedev21}
\end{align}
where we denote $B_d:=\max_{i} |B_{ii} |$ and $B_o:= \max_{i\ne j} |B_{ij}|.$ Moreover, if $ x_i$ and $ y_j$ have finite moments up to any order, then we have the following stronger estimates:
\begin{align}
& \Big\vert \sum_{i=1}^n A_i x_i \Big\vert \prec  \Big(\sum_{i=1}^n |A_i|^2 \Big)^{1/2} , \label{eq largedev0} \\
&  \Big\vert \sum_{i,j=1}^n x_i B_{ij} y_j \Big\vert \prec  \Big(\sum_{i, j=1}^n |B_{ij}|^2\Big)^{{1}/{2}}, \label{eq largedev1} \\
& \Big\vert  \sum_{i=1}^n (|x_i|^2-\mathbb E|x_i|^2) B_{ii}  \Big\vert  \prec  \Big( \sum_{i=1}^n |B_{ii} |^2\Big)^{1/2}  ,\label{eq largedev2} \\
&   \Big\vert  \sum_{1\le i\ne j\le n} x_i B_{ij} x_j \Big\vert  \prec \Big(\sum_{1\le i\ne j\le n } |B_{ij}|^2\Big)^{{1}/{2}}.\label{eq largedev3}
\end{align}
\end{lemma}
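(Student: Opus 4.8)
The plan is to establish the four bounds \eqref{eq largedev10}--\eqref{eq largedev21} by a high-moment computation followed by Markov's inequality, and then to read off the stronger bounds \eqref{eq largedev0}--\eqref{eq largedev3} as a corollary: when the $x_i$ and $y_j$ have finite moments of all orders, Remark \ref{rem_stoch_add} shows they have bounded support with $Q=1$, and since $\max_i|A_i|\le(\sum_i|A_i|^2)^{1/2}$ and $B_d,B_o\le(\sum_{i\ne j}|B_{ij}|^2)^{1/2}$, all the $Q$-dependent terms on the right-hand sides get absorbed into the remaining ones. So the real content is the four estimates for variables with bounded support $Q$.

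First I would record the elementary moment bound that drives everything: for a centered $x$ with $\E|x|^2\le 1$ satisfying \eqref{eq_support} with parameter $Q\ge 1$, one has $\E|x|^k\lesssim Q^{k-2}$ for every integer $k\ge 2$ (work on the overwhelming-probability event $\{|x|\le Q\}$, on which $|x|^k\le Q^{k-2}|x|^2$; the complement contributes at most $p^{-D}$ to every quantity of interest and is discarded at the end). The core step is then the combinatorial bound on $\E\bigl|\sum_i A_i x_i\bigr|^{2m}$: expanding the $2m$-th power and using independence and $\E x_i=0$, only those index tuples survive in which each distinct value appears at least twice; grouping such a tuple by its distinct values with multiplicities $k_1,\dots,k_\ell\ge 2$, $\sum_r k_r=2m$, its contribution is at most $\prod_r\bigl(Q^{k_r-2}\sum_i|A_i|^{k_r}\bigr)\le\prod_r\bigl(Q\max_i|A_i|+(\sum_i|A_i|^2)^{1/2}\bigr)^{k_r}$, using $\sum_i|A_i|^{k_r}\le(\max_i|A_i|)^{k_r-2}\sum_i|A_i|^2$. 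Since the number of admissible grouping patterns depends only on $m$, this yields $\E\bigl|\sum_i A_i x_i\bigr|^{2m}\le C_m\bigl(Q\max_i|A_i|+(\sum_i|A_i|^2)^{1/2}\bigr)^{2m}$ for a constant $C_m$, and Markov's inequality with $m>D/\e$ gives \eqref{eq largedev10}.

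The quadratic forms \eqref{eq largedev11}, \eqref{eq largedev20} and \eqref{eq largedev21} follow the same scheme with heavier bookkeeping. One expands $\E\bigl|\sum_{i,j}x_iB_{ij}y_j\bigr|^{2m}$ (respectively the $x$--$x$ versions, after removing the diagonal and, for \eqref{eq largedev20}, recentering $|x_i|^2$), represents each surviving term as a closed walk on the bipartite (resp.\ complete) index graph in which every vertex is visited at least twice, and separates three kinds of contributions: diagonal edges $i=j$, producing the $Q^2B_d$ term; ``light'' edges traversed exactly twice, each producing a factor controlled by $\sum_{i\ne j}|B_{ij}|^2$ and hence a factor $(\sum_{i\ne j}|B_{ij}|^2)^{1/2}$ overall; and the remaining ``heavy'' structures with extra edge repetitions, producing the $Q\,n^{1/2}B_o$ term. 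Because the number of admissible topologies is bounded by a constant depending only on $m$, summing over them and applying Markov as before gives the three stated bounds.

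I expect the quadratic-form bookkeeping to be the main obstacle: one has to argue carefully that in every surviving configuration the product of matrix entries and moment factors is dominated by a product over independent light edges (each contributing $|B_{ij}|^2$, summed to $\sum_{i\ne j}|B_{ij}|^2$) times powers of $Q$ and of $B_o$ (or $B_d$) coming from the boundary of the graph, and that only finitely many topologies---their number depending only on $m$---can occur. This is precisely the content of Lemma 3.8 of \citet{EKYY1} and Theorem B.1 of \citet{Delocal}, which I would invoke directly; the outline above is meant only to recall the mechanism.
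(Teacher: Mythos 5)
The paper does not prove this lemma; it is imported verbatim from Lemma 3.8 of \citet{EKYY1} and Theorem B.1 of \citet{Delocal}, and your proposal ultimately defers to the same sources, which is consistent. Your sketch of the high-moment computation for the linear form \eqref{eq largedev10}, and of the graph-combinatorial bookkeeping for the quadratic forms, also matches the mechanism in those references. However, there is a genuine error in your claimed reduction of the stronger estimates \eqref{eq largedev0}--\eqref{eq largedev3} to \eqref{eq largedev10}--\eqref{eq largedev21} with $Q=1$. The reduction works only for the linear form, since $\max_i|A_i|\le(\sum_i|A_i|^2)^{1/2}$ so \eqref{eq largedev0} follows from \eqref{eq largedev10}. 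For the quadratic forms it fails: the term $n^{1/2}B_o$ in \eqref{eq largedev11}, \eqref{eq largedev21} and the term $n^{1/2}B_d$ in \eqref{eq largedev20} are \emph{not} dominated by the Frobenius-type quantities on the right-hand sides of \eqref{eq largedev1}--\eqref{eq largedev3}. Concretely, take $B$ with $B_{12}=1$ and all other entries zero: then \eqref{eq largedev21} at $Q=1$ yields $|\sum_{i\ne j}x_iB_{ij}x_j|\prec n^{1/2}+1$, whereas \eqref{eq largedev3} asserts the much sharper $\prec 1$; similarly $B_{11}=1$, else zero, shows \eqref{eq largedev2} does not follow from \eqref{eq largedev20} with $Q=1$. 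Your statement ``$B_d,B_o\le(\sum_{i\ne j}|B_{ij}|^2)^{1/2}$'' is also off for $B_d$ (it involves diagonal entries only), but even for $B_o$ the inequality that would be needed is $n^{1/2}B_o\lesssim\|B\|_F$, which is false in general.

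The reason the cleaner estimates hold under ``finite moments of all orders'' is not that this hypothesis gives bounded support $Q=1$, but that it gives $\E|x_i|^{k}\le C_k$ with constants \emph{uniform in $n$}. Bounded support $Q=1$ only forces $\E|x_i|^{k}\lesssim n^{\e}$ for every $\e>0$, which leaves the moment factors free to blow up slowly and is exactly why the $n^{1/2}B_o$, $n^{1/2}B_d$ terms appear in the first set of bounds. With genuinely bounded moments, the configuration count in the high-moment expansion closes via Cauchy--Schwarz against $\|B\|_F^{2m}$ with no leftover $n^{1/2}$-scaled $B_d$ or $B_o$ contributions; this is a different combinatorial closure, not a specialization. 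So the second half of the lemma needs its own argument (or its own citation), and the corollary-style reduction in your first paragraph should be deleted.
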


It is well-known that the empirical spectral distributions of ${Z^{(1)}}^\top Z^{(1)}$ and ${Z^{(2)}}^\top Z^{(2)}$ satisfy the Marchenko-Pastur (MP) law \citep{MP}. Moreover, their eigenvalues are all inside the support of the MP law with high probability. In the proof, we will need a slightly stronger result that holds with overwhelming probability, as given by the following lemma.
 
\begin{lemma}\label{SxxSyy}
Suppose $Z\in \R^{n\times p}$ is an $n\times p$ random matrix satisfying the same assumptions as $Z^{(2)}$ in Assumption \ref{assm_big1}. Suppose $1+\tau \le {n}/{p} \le p^{\tau^{-1}}$, and $Z$ satisfies the bounded support condition \eqref{eq_support} for a deterministic parameter $Q$ such that $ 1\le Q \leq n^{ 1/2 - c_Q} $ for a constant $c_Q>0$.  Then, we have that %
\be\label{op rough2} (\sqrt{n}-\sqrt{p})^2 - \OO_\prec(\sqrt{n}\cdot Q) \le  \lambda_p (Z^\top Z)  \le  \lambda_1(Z^\top Z) \le (\sqrt{n}-\sqrt{p})^2 + \OO_\prec(\sqrt{n}\cdot Q)  .
\ee
\end{lemma}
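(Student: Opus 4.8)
The assertion is the quantitative edge rigidity of the sample covariance matrix $Z^\top Z$ in the null case $\Sigma=\id$: all $p$ eigenvalues of $Z^\top Z$ (equivalently, the squared singular values of $Z$) lie within an $\OO_\prec(\sqrt n\,Q)$ neighbourhood of the support $[(\sqrt n-\sqrt p)^2,(\sqrt n+\sqrt p)^2]$ of the Marchenko--Pastur law with aspect ratio $p/n$; the lower edge $(\sqrt n-\sqrt p)^2$ bounds $\lambda_p(Z^\top Z)$ from below and the upper edge bounds $\lambda_1(Z^\top Z)$ from above (so the relevant right endpoint in \eqref{op rough2} is the upper MP edge $(\sqrt n+\sqrt p)^2$). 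The plan is to obtain this as the overwhelming-probability sharpening of the ``no eigenvalues outside the support'' result of \citet{No_outside}, by invoking the local Marchenko--Pastur law of \citet{isotropic,Anisotropic,DY} in the bounded-support regime and tracking the dependence on $Q$.

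First I would record the inputs. Let $R(z)=(Z^\top Z-z)^{-1}$, $m_n(z)=p^{-1}\tr R(z)$, and let $m_{\mathrm{MP}}(z)$ be the deterministic MP Stieltjes transform for aspect ratio $p/n$; its self-consistent equation makes explicit the square-root vanishing of $\mathrm{Im}\,m_{\mathrm{MP}}$ at $(\sqrt n\pm\sqrt p)^2$ and the decay $\mathrm{Im}\,m_{\mathrm{MP}}(E+\mathrm{i}\eta)\downarrow 0$ as $\eta\downarrow 0$ for $E\notin[(\sqrt n-\sqrt p)^2,(\sqrt n+\sqrt p)^2]$. Since $Z$ has i.i.d.\ centered unit-variance entries with bounded support $Q$, $1\le Q\le n^{1/2-c_Q}$, the entrywise/isotropic local law of \citet{isotropic} (specialized to $\Sigma=\id$; cf.\ its extensions \citet{Anisotropic,DY}) controls $m_n(z)-m_{\mathrm{MP}}(z)$ and the individual entries of $R(z)$ uniformly on a spectral domain $\{z=E+\mathrm{i}\eta:\ c\le E\le Cn,\ \eta\ge n^{-1+c}\}$, with errors governed by $(n\eta)^{-1}$, $\sqrt{\mathrm{Im}\,m_{\mathrm{MP}}(z)/(n\eta)}$, and lower-order terms carrying powers of $Q$.

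Second, I would convert this into rigidity at the two edges by the standard dichotomy argument (as in \citet{No_outside} and the edge analysis behind Theorem~2.4 of \citet{isotropic} / Theorem~3.14 of \citet{DY}). For $E$ lying a distance $\kappa$ outside $[(\sqrt n-\sqrt p)^2,(\sqrt n+\sqrt p)^2]$, the eigenvalue count in $[E-\eta,E+\eta]$ is $\lesssim n\eta\,\mathrm{Im}\,m_n(E+\mathrm{i}\eta)\prec n\eta\big(\mathrm{Im}\,m_{\mathrm{MP}}(E+\mathrm{i}\eta)+\text{error}\big)$; choosing $\eta$ near the smallest admissible scale one checks this is $o(1)$ as soon as $\kappa\gg\sqrt n\,Q$, and the hypothesis $Q\le n^{1/2-c_Q}$ is exactly what keeps the $Q$-dependent errors subdominant at that scale. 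A union bound over an $n^{-C}$-net of candidate locations $E$, together with the crude a priori bound $\|Z^\top Z\|\prec npQ^2$ from the bounded support (to confine the net to a compact set), promotes this to overwhelming probability, yielding $\lambda_p(Z^\top Z)\ge(\sqrt n-\sqrt p)^2-\OO_\prec(\sqrt n\,Q)$ and $\lambda_1(Z^\top Z)\le(\sqrt n+\sqrt p)^2+\OO_\prec(\sqrt n\,Q)$.

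The hard part is the edge: away from $(\sqrt n\pm\sqrt p)^2$ the count estimate is soft, but at the endpoints the MP density vanishes like a square root, so the local law has to be used at edge strength and the scale $\eta$ carefully balanced against $\mathrm{Im}\,m_{\mathrm{MP}}$ and the $Q$-dependent errors; this balance is where the restriction $Q\le n^{1/2-c_Q}$ is forced and where the rate $\OO_\prec(\sqrt n\,Q)$ originates. As a more self-contained alternative for the extreme singular values only, one could combine the truncation bound \eqref{Ptrunc} with a Bai--Yin-type high-moment/net estimate to get $\|Z\|\le\sqrt n+\sqrt p+\OO_\prec(Q)$ and, by the analogous argument, $\lambda_{\min}(Z)\ge\sqrt n-\sqrt p-\OO_\prec(Q)$; squaring, and using $Q^2\le\sqrt n\,Q$ (valid since $Q\le n^{1/2}$), then recovers the displayed two-sided estimate.
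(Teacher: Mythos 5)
Your proposal is correct and follows essentially the same route as the paper: the paper's proof is simply a citation of Theorem 2.10 of \citet{isotropic} (for $Q\sim 1$) and Lemma 3.11 of \citet{DY} (for general $1\le Q\le n^{1/2-c_Q}$), which are exactly the bounded-support local-law/rigidity statements whose proof you reconstruct. You were also right to read the rightmost endpoint in \eqref{op rough2} as the upper Marchenko--Pastur edge $(\sqrt n+\sqrt p)^2$; the displayed $(\sqrt n-\sqrt p)^2$ there is evidently a typo.
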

\begin{proof}
When $Q$ is of order 1, this lemma follows from Theorem 2.10 of \citet{isotropic}. %
The result for the general case with $ 1\le Q \leq n^{ 1/2 - c_Q} $ follows from Lemma 3.11 of \citet{DY}. 
\end{proof}

Using a standard cut-off argument, we can extend Lemma \ref{largedeviation} and Lemma \ref{SxxSyy} to random matrices whose entries satisfy only certain moment assumptions but not necessarily the bounded support condition.

\begin{corollary}\label{fact_minv}
Suppose $Z\in \R^{n\times p}$ is an $n\times p$ random matrix satisfying the same assumptions as $Z^{(2)}$ in Assumption \ref{assm_big1}. Suppose  $1+\tau \le {n}/{p} \le p^{\tau^{-1}}$. Then, equation \eqref{op rough2} holds on a high probability event with $Q=n^{2/\varphi}$, where $\varphi$ refers to the constant in equation \eqref{conditionA2}.
\end{corollary}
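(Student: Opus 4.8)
The plan is to deduce Corollary \ref{fact_minv} from Lemma \ref{SxxSyy} by a standard truncation (cut-off) argument. First I would fix the threshold $\bar Q := (\log n)\,n^{2/\varphi}$ and introduce the truncated matrix $\widetilde Z = (\widetilde z_{jk})$ with $\widetilde z_{jk} := z_{jk}\,\mathbf 1(|z_{jk}|\le \bar Q)$. Exactly as in the estimate \eqref{Ptrunc}, Markov's inequality and a union bound give $\P\big(\max_{j,k}|z_{jk}|>\bar Q\big)\lesssim (\log n)^{-\varphi}\to 0$, so $Z=\widetilde Z$ (hence $Z^\top Z=\widetilde Z^\top\widetilde Z$) on a high-probability event. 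Thus it suffices to prove \eqref{op rough2} for $\widetilde Z$ with support parameter $\bar Q$ in place of $Q$; since $\varphi>4$ we have $\bar Q\le n^{1/2-c_Q}$ for a suitable constant $c_Q>0$ and all large $p$, which is the regime covered by Lemma \ref{SxxSyy}, and $\sqrt n\,\bar Q=\sqrt n\,(\log n)\,n^{2/\varphi}\prec \sqrt n\, n^{2/\varphi}$ (Remark \ref{rem_stoch_add}), so the resulting error has the claimed form.

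The one point that needs care is that truncation slightly perturbs the first two moments of the entries, so Lemma \ref{SxxSyy} does not apply to $\widetilde Z$ verbatim. Using the moment bound \eqref{conditionA2}, the common entry mean $m:=\E\widetilde z_{jk}=-\E[z_{jk}\mathbf 1(|z_{jk}|>\bar Q)]$ satisfies $|m|\lesssim \bar Q^{1-\varphi}$ and the entry variance is $s^2:=\E\widetilde z_{jk}^2-m^2=1-\OO(\bar Q^{2-\varphi})$. I would therefore pass to the recentered, rescaled matrix $\widehat Z=(\widehat z_{jk})$, $\widehat z_{jk}:=(\widetilde z_{jk}-m)/s$, which has i.i.d.\ entries of mean zero, unit variance, bounded moments, and bounded support $2\bar Q$ for large $p$, and apply Lemma \ref{SxxSyy} to obtain \eqref{op rough2} for $\widehat Z^\top\widehat Z$ with error $\OO_\prec(\sqrt n\,\bar Q)$; in particular $\lambda_1(\widehat Z^\top\widehat Z)\lesssim n$. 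It then remains to transfer the bounds from $\widehat Z$ to $\widetilde Z$ via $\widetilde Z=s\,\widehat Z+m\,\mathbf 1_n\mathbf 1_p^\top$, which gives
\[
\widetilde Z^\top\widetilde Z-\widehat Z^\top\widehat Z=(s^2-1)\,\widehat Z^\top\widehat Z+s\,m\big(\widehat Z^\top\mathbf 1_n\mathbf 1_p^\top+\mathbf 1_p\mathbf 1_n^\top\widehat Z\big)+m^2 n\,\mathbf 1_p\mathbf 1_p^\top .
\]
Bounding the three terms by $|s^2-1|\cdot\|\widehat Z^\top\widehat Z\|\lesssim \bar Q^{2-\varphi}n$, by $|m|\cdot\|\widehat Z\|\cdot\sqrt{np}\lesssim \bar Q^{1-\varphi}n^{3/2}$, and by $m^2 np\lesssim \bar Q^{2-2\varphi}n^2$, and using $\varphi>4$, one checks that each of these is $\OO_\prec(\sqrt n\,\bar Q)$ (in fact $o(\sqrt n\,\bar Q)$, since the relevant exponent comparisons $2/\varphi-3/2<0$, $2/\varphi-2<0$, and $2/\varphi-5/2<0$ all hold). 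Weyl's inequality then yields \eqref{op rough2} for $\widetilde Z^\top\widetilde Z$ with error $\OO_\prec(\sqrt n\,\bar Q)=\OO_\prec(\sqrt n\, n^{2/\varphi})$, which is the desired statement.

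I do not expect a genuine obstacle here; the authors describe it as ``a standard cut-off argument,'' and the only thing to monitor is the bookkeeping in the last step, namely verifying that the variance-rescaling factor $(s^2-1)$ and the rank-one mean shift $m\,\mathbf 1_n\mathbf 1_p^\top$ disturb $\widetilde Z^\top\widetilde Z$ by an amount negligible compared with $\sqrt n\,n^{2/\varphi}$. This is precisely where the hypothesis $\varphi>4$ enters twice: it puts $\bar Q$ into the admissible range $[1,n^{1/2-c_Q}]$ required by Lemma \ref{SxxSyy}, and it makes all the exponent inequalities above go through.
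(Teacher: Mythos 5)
Your proposal is correct and follows essentially the same route as the paper's proof: truncate $Z$ at the level $Q\log n$, observe that the truncated matrix agrees with $Z$ with high probability by \eqref{Ptrunc}, estimate the bias in the first two moments introduced by truncation, pass to the recentered and rescaled matrix $\widehat Z$ to which Lemma \ref{SxxSyy} applies with support $Q$, and then transfer the spectral bound back to $\widetilde Z^\top \widetilde Z$ by controlling the perturbation. Your write-out of the last step via the decomposition $\widetilde Z = s\widehat Z + m\,\mathbf 1_n\mathbf 1_p^\top$ and Weyl's inequality is more explicit than the paper's terse ``combining this estimate with \eqref{meanshif} and \eqref{EZ norm}''; the only small slip is that the exponent you quote for the middle term should be $-1$ rather than $2/\varphi-2$ (one has $\bar Q^{1-\varphi}n^{3/2}/(\sqrt n\,\bar Q)=\bar Q^{-\varphi}n\asymp n^{-1}$ up to logs), but this is still negative, so the conclusion is unaffected.
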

\begin{proof}
We introduce a truncated matrix $\wt Z$ with entries 
\be\label{truncateZ} \wt Z_{ij}:= \mathbf 1\left( |Z_{ij}|\le Q \log n\right)\cdot Z_{ij}.\ee
From equation \eqref{Ptrunc}, we get that
\begin{equation}\label{XneX222}
\mathbb P(\wt Z= Z) = 1- \P\left(\max_{i,j}|Z_{i  j}| > Q \log n \right)=1-\OO ( (\log n)^{-\varphi}).
\end{equation}
By definition, we have 
\be \label{EwtZ}
\begin{split}
 \E  \wt  Z_{ij} &= - \mathbb E \left[ \mathbf 1\left( |Z_{ij}|> Q \log n \right)Z_{ ij}\right] ,\\ 
\E  |\wt  Z_{ ij}|^2 & = 1 - \mathbb E \left[ \mathbf 1\left( |Z_{ ij}|> Q \log n \right)|Z_{ ij}|^2\right] .
\end{split}
\ee
Using the tail probability expectation formula, we can check that
\begin{align*}
  \mathbb E \left| \mathbf 1\left( |Z_{ ij}|> Q\log n \right)Z_{ ij}\right| &= \int_0^\infty \P\left( \left| \mathbf 1\left(  |Z_{ ij}|> Q\log n \right)Z_{ ij}\right| > s\right)\dd s \\
& = \int_0^{Q\log n}\P\left( |Z_{ ij}|> Q\log n \right)\dd s +\int_{Q\log n}^\infty \P\left(|Z_{ ij}| > s\right)\dd s  \\
& \lesssim \int_0^{Q\log n}\left(Q\log n \right)^{-\varphi}\dd s +\int_{Q\log n}^\infty s^{-\varphi}\dd s \le n^{-2(\varphi-1)/\varphi},
\end{align*}
where in the third step we use the finite $\varphi$-th moment condition \eqref{conditionA2} for $Z_{ij}$ %
and Markov's inequality. Similarly, we can obtain that
\begin{align*}
  \mathbb E \left| \mathbf 1\left( |Z_{ ij}|> Q \log n \right)Z_{ ij}\right|^2  &=  2\int_0^\infty s \P\left( \left| \mathbf 1\left( |Z_{ij}|>Q\log n \right)Z_{ij}\right| > s\right)\dd s \\
&=  2\int_0^{Q\log n} s \P\left( |Z_{ ij}|> Q\log n \right)\dd s +2\int_{Q\log n}^\infty s\P\left(|Z_{ ij}| > s\right)\dd s  \\
& \lesssim  \int_0^{Q\log n}s\left(Q\log n \right)^{-\varphi}\dd s +\int_{Q\log n}^\infty s^{-\varphi+1}\dd s \le n^{-2(\varphi-2)/\varphi}.
\end{align*}
Plugging the above two estimates into \eqref{EwtZ} and using $\varphi>4$, we get that
\be\label{meanshif}
|\mathbb E  \wt Z_{ij}| =\OO(n^{-3/2}), \quad  \mathbb E |\wt Z_{ij}|^2 =1+ \OO(n^{-1}).
\ee
From the first estimate in equation \eqref{meanshif}, we also get a bound on the operator norm:
\be\label{EZ norm}\|\E[\wt Z]\| = \OO(n^{-1/2}).
\ee
Then, we centralize and rescale $\wt Z$ as 
\be\label{center_truncateZ} \wh Z:=(\wt Z - \E \wt Z)/\sqrt{\E|\wt Z_{11}|^2} .\ee 
Now, $\wh Z$ is a matrix satisfying the assumptions of Lemma \ref{SxxSyy} with bounded support $Q$. Thus, we get that
\be\nonumber (\sqrt{n}-\sqrt{p})^2 - \OO_\prec(\sqrt{n}\cdot Q) \le  \lambda_p (\wh Z^\top \wh Z)  \le  \lambda_1(\wh Z^\top \wh Z) \le (\sqrt{n}-\sqrt{p})^2 + \OO_\prec(\sqrt{n}\cdot Q).
\ee
Combining this estimate with lines \eqref{meanshif} and \eqref{EZ norm}, we can readily get that equation \eqref{op rough2} holds for the eigenvalues of $\wt Z^\top \wt Z$, which concludes the proof by equation \eqref{XneX222}. 
\end{proof}

\begin{corollary} \label{cor_largedeviation}
Suppose $Z\in \R^{n\times p}$ is an $n\times p$ random matrix satisfying the same assumptions as $Z^{(2)}$ in Assumption \ref{assm_big1}. Then, there exists a high probability event on which the following estimate holds for any deterministic vector $v\in \R^p$: %
\be\label{Zv_cor}\left|\|Zv\|^2- n\|v\|^2\right|\prec n^{1/2}Q \|v\|^2,\quad \text{for} \quad Q=n^{2/\varphi}.\ee
\end{corollary}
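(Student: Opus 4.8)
By homogeneity I would first reduce to the case $\|v\|=1$, the general statement following by rescaling. The plan is then to reuse the truncation device from the proof of Corollary~\ref{fact_minv} and to run a two-level large-deviation estimate. Concretely, set $Q=n^{2/\varphi}$ and let $\wt Z$ be the truncation $\wt Z_{ij}:=\mathbf 1(|Z_{ij}|\le Q\log n)Z_{ij}$; by \eqref{Ptrunc} the event $\{\wt Z=Z\}$ has high probability and will serve as the event in the statement, so on it $\|Zv\|^2=\|\wt Z v\|^2$. Centering and rescaling, put $\wh Z:=(\wt Z-\E\wt Z)/\sqrt{\E|\wt Z_{11}|^2}$, which has i.i.d.\ centered, unit-variance entries with bounded support $Q$; moreover, by \eqref{meanshif}--\eqref{EZ norm} one has $\|\E\wt Z\|=\OO(n^{-1/2})$ and $\E|\wt Z_{11}|^2=1+\OO(n^{-1})$.

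The heart of the argument is the bound $\bigabs{\|\wh Z v\|^2-n}\prec n^{1/2}Q$. Write $w_i:=(\wh Z v)_i=\sum_{j=1}^p \wh Z_{ij}v_j$. The $w_i$ are independent across $i$, centered, with $\E|w_i|^2=\sum_j v_j^2=1$, and by the linear-form estimate \eqref{eq largedev10} we get $\max_{1\le i\le n}|w_i|\prec Q\max_j|v_j|+\|v\|\lesssim Q$, so $(w_i)_{i=1}^n$ satisfies the bounded support condition with parameter $Q$. Applying the quadratic concentration estimate \eqref{eq largedev20} with $B_{ii}=1$ to $\|\wh Z v\|^2-n=\sum_{i=1}^n(|w_i|^2-\E|w_i|^2)$ then yields $\bigabs{\|\wh Z v\|^2-n}\prec n^{1/2}Q$. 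To pass back to $Z$, I would write $\wt Z=s\,\wh Z+\E\wt Z$ with $s=\sqrt{\E|\wt Z_{11}|^2}=1+\OO(n^{-1})$ and expand $\|\wt Z v\|^2=s^2\|\wh Z v\|^2+2s\langle \wh Z v,(\E\wt Z)v\rangle+\|(\E\wt Z)v\|^2$; since $\varphi>4$ gives $Q\ll n^{1/2}$, we have $\|\wh Z v\|\le\sqrt{2n}$ w.o.p., so the cross term is $\OO(1)$ and the last term is $\OO(n^{-1})$, both negligible against $n^{1/2}Q$, while $s^2\|\wh Z v\|^2=n+\OO_\prec(n^{1/2}Q)$. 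On $\{\wt Z=Z\}$ this gives $\bigabs{\|Zv\|^2-n}\prec n^{1/2}Q$, and rescaling to general $v$ produces \eqref{Zv_cor}.

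The step I expect to be most delicate is obtaining the sharp rate $n^{1/2}Q\|v\|^2$ rather than the wasteful $\sqrt{np}\,Q\|v\|^2$ one would get by treating $\|Zv\|^2=v^\top Z^\top Z v$ as a single quadratic form in all $np$ entries of $Z$. The resolution is precisely the two-level structure above: first invoke \eqref{eq largedev10} to certify that each row contribution $w_i=(Zv)_i$ is already a concentrated, $Q$-bounded linear form of variance $\|v\|^2$, and only then apply \eqref{eq largedev20} to the outer sum over the $n$ independent rows. The remaining ingredients---the truncation and the bookkeeping of the mean shift and the variance correction---are routine and parallel the proof of Corollary~\ref{fact_minv}.
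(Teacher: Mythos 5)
Your proposal is correct and follows essentially the same path as the paper's proof: truncate to obtain bounded support $Q=n^{2/\varphi}$, recenter and rescale, observe that the coordinates of $\wh Zv$ form an i.i.d.\ centered family of variance $\|v\|^2$ with bounded support $\OO(Q\|v\|)$ (certified via the linear-form estimate \eqref{eq largedev10}), and then apply the diagonal quadratic estimate \eqref{eq largedev20} to the sum over the $n$ independent rows. The only cosmetic difference is that you spell out the passage from $\wh Z$ back to $Z$ by expanding $\|\wt Zv\|^2=s^2\|\wh Zv\|^2+2s\langle\wh Zv,(\E\wt Z)v\rangle+\|(\E\wt Z)v\|^2$ term by term, whereas the paper simply cites the mean-shift and operator-norm bounds \eqref{meanshif}--\eqref{EZ norm} and leaves that bookkeeping implicit; both resolve the same way since $\|\E\wt Z\|=\OO(n^{-1/2})$ makes the cross and last terms negligible against $n^{1/2}Q\|v\|^2$.
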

\begin{proof}
Similar to the proof of Corollary \ref{fact_minv}, we truncate $Z$ as in \eqref{truncateZ} and define $\wh Z$ by \eqref{center_truncateZ}. By \eqref{meanshif} and \eqref{EZ norm}, we see that to conclude \eqref{Zv_cor}, it suffices to show that
\be\label{show_whZ}
 \left|\|\wh Zv\|^2- n\|v\|^2\right| \prec n^{1/2} Q\|v\|^2.
\ee 
To prove equation \eqref{show_whZ}, we first notice that %
$\wh{Z} v \in \R^n$ is a random vector with i.i.d. entries of mean zero and variance $\|v\|^2$. Furthermore, with equation \eqref{eq largedev10} we get that %
$$ |(\wh Zv)_i| \prec  Q\max_{j=1}^{p}|v_j| +  \|v\| \le 2 Q \|v\|,\quad i=1,2,\cdots,n. $$
Hence, $ {(\wh Zv)}/{\|v\|}$ consists of i.i.d.~random entries with zero mean, unit variance, and bounded support $Q$. Then, applying equation \eqref{eq largedev20}, we get that
$$ \left|\|\wh Zv\|^2 - n\|v\|^2\right|= \|v\|^2\left|\sum_{i=1}^n \left(\frac{|(\wh Zv)_i|^2}{\|v\|^2}-\E\frac{|(\wh Zv)_i|^2}{\|v\|^2}\right)\right|\prec Q n^{1/2}  \|v\|^2 .
 $$
This concludes the proof.
 \end{proof}
 
From Lemma \ref{largedeviation}, we can obtain the following concentration estimates for zero-mean noise vectors (which may also be found in standard texts such as \citet{wainwright2019high}).
 
\begin{corollary}\label{cor_calE}
Suppose $\ve^{(1)},\cdots,\ve^{(t)}\in \R^{n}$ are independent random vectors satisfying Assumption \ref{assm_big1} (ii).  
Then, we have that for any deterministic vector $v\in \R^n$,  
\be\label{vcalE}
	|v^\top \epsilon^{(i)}| \prec  \sigma \|v\| ,\quad i=1,\cdots, t, %
\ee
and for any deterministic matrix $B \in \real^{n\times n}$, %
\begin{align}\label{vcalA2}
	\left|(\epsilon^{(i)})^\top B \epsilon^{(j)} - \delta_{i,j}\cdot \sigma^2\tr[B]\right| \prec \sigma^2 \bignormFro{B}, ~\text{ for any }   i, j = 1, \dots, t,
\end{align}
where $\delta_{i,j} = 1$ if $i = j$, else $\delta_{i, j} = 0$ if $i \neq j$.
\end{corollary}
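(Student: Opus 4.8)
The plan is to reduce both estimates directly to the concentration bounds of Lemma \ref{largedeviation} after a trivial rescaling. First I would set $\tilde\ve^{(i)} := \ve^{(i)}/\sigma$ for each $i$, so that by Assumption \ref{assm_big1}(ii) together with \eqref{eq_highmoments} the entries $\tilde\ve^{(i)}_k$ are centered, have unit variance, and have finite moments up to any order; by Remark \ref{rem_stoch_add} they then satisfy the bounded support condition \eqref{eq_support} with $Q=1$, which is exactly the regime in which the stronger estimates \eqref{eq largedev0}--\eqref{eq largedev3} of Lemma \ref{largedeviation} apply. Moreover $\tilde\ve^{(1)},\dots,\tilde\ve^{(t)}$ are independent, each one a family of independent entries.

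For the linear form \eqref{vcalE} I would apply \eqref{eq largedev0} with $x_k = \tilde\ve^{(i)}_k$ and $A_k = v_k$, which gives $\bigl|\sum_k v_k\tilde\ve^{(i)}_k\bigr| \prec \bigl(\sum_k |v_k|^2\bigr)^{1/2} = \norm{v}$, and then multiply through by $\sigma$.

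For the quadratic form \eqref{vcalA2} I would treat the cases $i=j$ and $i\ne j$ separately. When $i=j$, write $(\ve^{(i)})^\top B\,\ve^{(i)} - \sigma^2\tr(B) = \sigma^2\sum_k B_{kk}\bigl(|\tilde\ve^{(i)}_k|^2-1\bigr) + \sigma^2\sum_{k\ne l}\tilde\ve^{(i)}_k B_{kl}\tilde\ve^{(i)}_l$, and bound the first sum by \eqref{eq largedev2} using $\bigl(\sum_k|B_{kk}|^2\bigr)^{1/2}\le\norm{B}_F$ and the second by \eqref{eq largedev3} using $\bigl(\sum_{k\ne l}|B_{kl}|^2\bigr)^{1/2}\le\norm{B}_F$; this yields $|(\ve^{(i)})^\top B\,\ve^{(i)} - \sigma^2\tr(B)| \prec \sigma^2\norm{B}_F$. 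When $i\ne j$, independence of $\ve^{(i)}$ and $\ve^{(j)}$ lets me apply \eqref{eq largedev1} to the bilinear form $\sum_{k,l}\tilde\ve^{(i)}_k B_{kl}\tilde\ve^{(j)}_l$ in two independent families, obtaining $\prec \bigl(\sum_{k,l}|B_{kl}|^2\bigr)^{1/2} = \norm{B}_F$; since $\delta_{ij}=0$ here, multiplying by $\sigma^2$ finishes this case. Combining the two cases gives \eqref{vcalA2}, and by Lemma \ref{lem_stodomin}(i) the finitely many statements over $i,j\in\{1,\dots,t\}$ hold on a single high-probability event.

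There is no real obstacle here; the only point worth noting is that Lemma \ref{largedeviation} is phrased for fixed deterministic coefficients, whereas \eqref{vcalE} and \eqref{vcalA2} are asserted for arbitrary $v$ and $B$. Since the claimed bounds are homogeneous in $v$ (respectively $B$) and only a per-vector (per-matrix) statement is needed in the applications, this is immediate; were uniformity over a bounded family of $v$ or $B$ required, it would follow from a routine $\epsilon$-net argument exploiting the $p^{\epsilon}$ slack built into the definition of $\prec$.
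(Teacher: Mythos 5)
Your proof is correct and follows essentially the same route as the paper: rescale by $\sigma$, observe that the entries have bounded moments of all orders so the stronger estimates \eqref{eq largedev0}--\eqref{eq largedev3} of Lemma \ref{largedeviation} apply, and handle the linear form, the diagonal quadratic case ($i=j$, split into diagonal and off-diagonal sums), and the cross case ($i\ne j$, bilinear form in two independent families) in turn. Your write-up is if anything slightly more explicit than the paper's (e.g.\ you correctly write the square root on $\bigl(\sum_k |B_{kk}|^2\bigr)^{1/2}$, where the paper's displayed line has a typo omitting it), and the closing remarks on per-vector statements versus uniformity are accurate but not needed for the corollary as stated.
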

\begin{proof}
Note that $\epsilon^{(i)}/\sigma$ is a random vector with independent entries of zero mean, unit variance, and bounded moments up to any order. Then, the result in equation \eqref{vcalE} implies that equation \eqref{eq largedev0} holds.
Using equation \eqref{eq largedev1}, we obtain that for $i\ne j$,
 \begin{align*}
\left|(\epsilon^{(i)})^\top B \epsilon^{(j)}\right| = \Big|\sum_{k,l=1}^n \epsilon^{(i)}_k\epsilon^{(j)}_l B_{kl} \Big| \prec \sigma^2 \Big(\sum_{k,l=1}^n|B_{kl}|^2\Big)^{1/2}=\sigma^2 \|B\|_F.
 \end{align*}
Using the two estimates \eqref{eq largedev2} and \eqref{eq largedev3}, we obtain that 
\begin{align*}
\left|(\epsilon^{(i)})^\top B \epsilon^{(i)} - \sigma^2 \tr(B)\right| &\le \Big| \sum_{k}  \left(|\epsilon^{(i)}_k|^2-\E|\epsilon^{(i)}_k|^2\right)B_{kk} \Big|+\Big| \sum_{ k\ne l } \epsilon^{(i)}_k \epsilon^{(i)}_l B_{kl} \Big| \\
&\prec \sigma^2 \Big( \sum_{k=1}^p |B_{kk}|^2\Big)^{\frac 1 2}+ \sigma^2 \Big( \sum_{k\ne l}|B_{kl}|^2\Big)^{\frac 1 2} \le \sqrt 2 \sigma^2 \|B\|_F.
\end{align*}
Hence, we have proved that the estimate \eqref{vcalA2} hold.
\end{proof}

\subsection{Proof of Lemma \ref{lem_HPS_loss}}\label{app_firstpf}

\begin{proof}
For the optimization objective $\ell(B)$ in equation \eqref{eq_hps}, using the local optimality condition $\nabla \ell(B) = 0$, we can solve that
\begin{align*}
	\hat{B} = \hat \Sigma^{-1} \left({X^{(1)}}^{\top}Y^{(1)} +  {X^{(2)}}^{\top}Y^{(2)}\right), %
\end{align*}
where $\hat \Sigma$ is the sum of the sample covariance matrices (see definition in equation \eqref{Sigma_a}). %
Then, the HPS estimator for task two is equal to
$\hat{\beta}_2^{\MTL} = \hat B.$
Plugging it into the excess risk $L(\hat{\beta}_2^{\MTL})$, we find that it is equal to
\begin{align*}
	\left\| {\Sigma^{(2)}}^{1/2}\hat \Sigma^{-1} {X^{(1)}}^\top X^{(1)} (\beta^{(1)}- \beta^{(2)}) +{\Sigma^{(2)}}^{1/2}\hat \Sigma^{-1} \left( {X^{(1)}}^\top \epsilon^{(1)}+ {X^{(2)}}^\top \epsilon^{(2)}  \right)\right\|^2, %
\end{align*}
which can be expanded as $L_{\bias}+2h_1 +2 h_2 +h_3  +h_4  +2h_5$, where
\begin{align*}
& h_1 := {\epsilon^{(1)}}^\top X^{(1)} \hat \Sigma^{-1} \Sigma^{(2)} \hat \Sigma^{-1} {X^{(1)}}^\top X^{(1)} ( \beta^{(1)}-  \beta^{(2)})  ,\\
& h_2 :=   {\epsilon^{(2)}}^\top X^{(2)} \hat \Sigma^{-1} \Sigma^{(2)} \hat \Sigma^{-1} {X^{(1)}}^\top X^{(1)} (  \beta^{(1)}- \beta^{(2)})  ,\\
& h_3:= {\epsilon^{(1)}}^\top X^{(1)} \hat \Sigma^{-1} \Sigma^{(2)} \hat \Sigma^{-1}{X^{(1)}}^\top \epsilon^{(1)}   , \\
& h_4:= {\epsilon^{(2)}}^\top X^{(2)} \hat \Sigma^{-1} \Sigma^{(2)} \hat \Sigma^{-1}  {X^{(2)}}^\top \epsilon^{(2)} , \\
&h_5:={\epsilon^{(1)}}^\top X^{(1)} \hat \Sigma^{-1} \Sigma^{(2)} \hat \Sigma^{-1}{X^{(2)}}^\top \epsilon^{(2)}.
\end{align*}
We estimate these terms one by one using Lemma \ref{SxxSyy} and Corollary \ref{cor_calE}. 

First, under assumption \eqref{assm3}, we can obtain the following estimates using Corollary \ref{fact_minv}.
There exists a high probability event on which all of the following estimates hold
\be\label{Op_norm}
    \bignorms{X^{(1)}} \lesssim \max(\sqrt{n_1}, \sqrt{p}), \quad \bignorms{X^{(2)}}  \lesssim  \sqrt{n_2},\quad \bignorms{\hat\Sigma ^{-1}}  \lesssim (n_1 + n_2)^{-1}.
\ee
As for the last bound, we can derive it in more detail as follows (recall that $\lambda_1, \lambda_p$ refer to the largest and smallest singular values, respectively): using estimate \eqref{op rough2} and the assumptions $\rho_2\ge 1+\tau$ and $\rho_2\ge \tau \rho_1$ in equation \eqref{assm2}, we obtain that with high probability, 
\begin{align}
&\lambda_1(\hat\Sigma) \le \lambda_1({X^{(1)}}^{\top} X^{(1)}) + \lambda_1({X^{(2)}}^{\top} X^{(2)})\lesssim n_1+n_2, \label{Sigmahat_upper}\\
&\lambda_p(\hat\Sigma)\ge  \lambda_p({X^{(2)}}^{\top} X^{(2)}) \gtrsim n_2,\label{Sigmahat_lower}
\end{align}
which is again at the order of $n_1 + n_2$ since they are within a constant factor of each other.
With equation \eqref{Sigmahat_lower}, we conclude the last estimate in equation \eqref{Op_norm}:
\begin{align}
\bignorms{\hat\Sigma ^{-1}} = \frac 1 {\lambda_p(\hat\Sigma)} \lesssim \frac 1 {n_1 + n_2}.\label{eq_l1_shat}
\end{align}
In addition, based on the estimates \eqref{Sigmahat_upper}--\eqref{eq_l1_shat}, we can obtain that $L_{\bias}$ and $L_{\vari}$ (cf. equations \eqref{Lbias} and \eqref{Lvar}) are of the following orders (up to constants that do not grow with $p$ from both above and below):
\be\label{Op_norm2}
    L_{\bias}  \sim \frac{1}{( n_1+n_2)^2} \left\|{X^{(1)}}^{\top} X^{(1)}\left(  \beta^{(1)}-  \beta^{(2)}\right)  \right\|^2, \quad L_{\vari} \sim  \frac{p\sigma^2}{ n_1 + n_2}.
\ee
We first look at $L_{\vari}$. Notice that
\begin{align*}
    L_{\vari} = \sigma^2 \tr[\Sigma^{(2)} \hat\Sigma^{-1}] 
    &\le \sigma^2 p \bignorms{\Sigma^{(2)} \hat \Sigma^{-1}} \\
    &\le \sigma^2 p \bignorms{\Sigma^{(2)}} \cdot \bignorms{\hat \Sigma^{-1}} \\
    &\le \sigma^2 p \tau^{-1} \cdot \bignorms{\hat \Sigma^{-1}} \tag{by Assumption \eqref{assm3} on $\Sigma^{(2)}$} \\
    &\lesssim \frac{\sigma^2 p} {n_1 + n_2} \tag{by estimate \eqref{eq_l1_shat}}.
\end{align*}
Likewise, using the assumption that $\lambda_p(\Sigma^{(2)}) \ge \tau$, one can show that
\begin{align*}
    L_{\vari} \ge \sigma^2 p \tau \cdot \lambda_p(\hat\Sigma^{-1}) \gtrsim \frac{\sigma^2 p} {n_1 +n _2},
\end{align*}
where we have used that $\lambda_p(\hat\Sigma^{-1})=\lambda_1(\hat\Sigma)^{-1} \gtrsim (n_1 + n_2)^{-1}$ by estimate \eqref{Sigmahat_upper}. 
Next, we look at $L_{\bias}$. We first examine its upper estimate:
\begin{align*}
    L_{\bias} &= \bignorm{{\Sigma^{(2)}}^{\frac 1 2} \hat\Sigma^{-1} \big({X^{(1)}}^{\top} X^{(1)}\big) (\beta^{(1)} - \beta^{(2)})}^2 \\
    &\le \bignorms{\Sigma^{(2)}} \bignorms{\hat\Sigma^{-1}}^2 \bignorm{{X^{(1)}}^{\top} X^{(1)}\left(\beta^{(1)} - \beta^{(2)}\right)}^2 \\
    &\lesssim \frac 1{(n_1 + n_2)^{2}} \bignorm{{X^{(1)}}^{\top} X^{(1)}\left(\beta^{(1)} - \beta^{(2)}\right)}^2
    \tag{by Eq. \eqref{assm3} and estimate \eqref{eq_l1_shat}}. 
\end{align*}
Then, we can derive the lower estimate with similar arguments as follows:
\begin{align*}
   L_{\bias} &\ge \left(\lambda_p({\Sigma^{(2)}}^{\frac 1 2}\hat\Sigma^{-1})\right)^2 \cdot \bignorm{{X^{(1)}}^{\top}X^{(1)} (\beta^{(1)} - \beta^{(2)})}^2 \\
    &\gtrsim \frac{1}{(n_1 + n_2)^2} \bignorm{{X^{(1)}}^{\top} X^{(1)} (\beta^{(1)} - \beta^{(2)})}^2 \tag{by Eq. \eqref{assm3} and estimate \eqref{Sigmahat_upper}} .
\end{align*}
In the first step, we use the fact that $\bignorm{X v} \ge (\lambda_{\min}(X))^2 \bignorm{v}^2$ for any matrix $X$ and vector $v$. In the second step, we use that $u^{\top}\hat\Sigma^{-1}\Sigma^{(2)}\hat\Sigma^{-1} u \ge \lambda_{\min} (\Sigma^{(2)})\cdot  u^{\top} \hat\Sigma^{-2} u$ for any vector $u$, so $\lambda_p({\Sigma^{(2)}}^{\frac 1 2} \hat\Sigma^{-1})\gtrsim (n_1 + n_2)^{-1}$.
Taken together, we have proved the estimates in \eqref{Op_norm2}.

Note that if we further assume that $\rho_1\ge 1+\tau$ as in Theorem \ref{prop_lb}, then by Assumption \eqref{assm3} on $\Sigma^{(1)}$ and the estimate \eqref{op rough2}, we have that with high probability,
$n_1\lesssim \lambda_p({X^{(1)}}^{\top} X^{(1)})\le \lambda_1({X^{(1)}}^{\top} X^{(1)})\lesssim n_1.$
Consequently, the estimation of $L_{\bias}$ in equation \eqref{Op_norm2} simplifies to
\be\label{Op_norm2_add}
    L_{\bias}  \sim \frac{n_1^2}{( n_1+n_2)^2} \left\|\beta^{(1)}-  \beta^{(2)} \right\|^2.
\ee
This equation has been used to establish the estimate \eqref{eq:showorder1}. 

The rest of the proof is always restricted to the high probability event $\Xi$, on which the estimates \eqref{Op_norm}--\eqref{Op_norm2} all hold true. 
Using equation \eqref{vcalE} in Corollary \eqref{cor_calE}, we can bound $h_1$ on $\Xi$ as
\begin{align}
    \left|h_1 \right| &\prec  \sigma  \left\|  X^{(1)} \hat \Sigma^{-1} \Sigma^{(2)} \hat \Sigma^{-1} {X^{(1)}}^\top X^{(1)} ( \beta^{(1)}- \beta^{(2)})\right\|  \nonumber\\
    &\le  \sigma  \left\| {X^{(1)}}^\top X^{(1)}\left( \beta^{(1)}- \beta^{(2)} \right)\right\|\cdot \bignorms{  X^{(1)} \hat \Sigma^{-1} \Sigma^{(2)} \hat \Sigma^{-1} } \nonumber \\
    &\lesssim {\sigma   \left\| {X^{(1)}}^\top X^{(1)}\left( \beta^{(1)}- \beta^{(2)} \right)\right\|} \cdot  \frac{ \max(\sqrt{n_1}, \sqrt{p})} {(n_1 + n_2)^2} \tag{by estimate \eqref{Op_norm}}\\
    &\le \frac{ p^{-\frac 1 4} \big\| {X^{(1)}}^\top X^{(1)}\left( \beta^{(1)}- \beta^{(2)} \right)\big\| }{n_1 + n_2 }\cdot \frac{p^{\frac 1 4} \sigma  }{(n_1 + n_2)^{\frac 1 2}} \tag{since $\max(\sqrt{n_1}, \sqrt p) \le \sqrt{n_1 + n_2}$} \\ %
    &\le \frac{ p^{-\frac 1 2} \big\| {X^{(1)}}^\top X^{(1)}\left( \beta^{(1)}- \beta^{(2)} \right)\big\|^2  }{ ( n_1 + n_2)^2 }+ \frac{\sigma^2 \sqrt p}{ n_1 + n_2} \tag{by the AM-GM inequality}\\
    &\lesssim \frac{L_{\bias} + L_{\vari}}{\sqrt p}. \tag{by estimate \eqref{Op_norm2}}
\end{align}
With a similar argument, we can show that $\left|h_2  \right|  \prec \frac{L_{\bias}  + L_{\vari}}{\sqrt p}.$

Next, using equation \eqref{vcalA2} in Corollary \ref{cor_calE}, we can estimate $h_3$ as follows
\begin{align*}
	\left|h_3  -\sigma^2   \bigtr{X^{(1)} \hat \Sigma^{-1} \Sigma^{(2)} \hat \Sigma^{-1}{X^{(1)}}^\top } \right| 
   & \prec  \sigma^2   \left\| X^{(1)} \hat \Sigma^{-1} \Sigma^{(2)} \hat \Sigma^{-1}{X^{(1)}}^\top \right\|_F \\ 
    &\lesssim  \frac{\sigma^2 }{( n_1+ n_2)^2}  \left\| X^{(1)}  {X^{(1)}}^\top \right\|_F \tag{by equations \eqref{assm3} and \eqref{Op_norm}}\\
	&\lesssim \frac{\sigma^2  p^{\frac 1 2} \max(n_1, p)}{(n_1+ n_2)^2} \tag{by estimate \eqref{Op_norm}}\\
    &\lesssim \frac{L_{\vari}}{\sqrt p}, \tag{by estimate \eqref{Op_norm2}} %
\end{align*}
where we use the following estimate in the third line
$$ \left\| X^{(1)}  {X^{(1)}}^\top \right\|_F \le \sqrt p \left\| X^{(1)}  {X^{(1)}}^\top \right\|_2 \lesssim \sqrt p \max(n_1, p).$$
With a similar argument, we can show that
\begin{align*}
	 &\left|h_4 -\sigma^2   \bigtr{X^{(2)} \hat \Sigma^{-1} \Sigma^{(2)} \hat \Sigma^{-1}  {X^{(2)}}^\top} \right| \prec   \frac {L_{\vari}}{\sqrt p},
\end{align*}
and $\left|h_5   \right| \prec  {L_{\vari}}/{\sqrt p}.$

Finally, combining the above estimates on $h_1, h_2, h_3, h_4, h_5$, and using that
\begin{align*}
& \sigma^2  \bigtr{X^{(1)} \hat \Sigma^{-1} \Sigma^{(2)} \hat \Sigma^{-1}{X^{(1)}}^\top }  + \sigma^2   \bigtr{X^{(2)} \hat \Sigma^{-1} \Sigma^{(2)} \hat \Sigma^{-1}  {X^{(2)}}^\top}  =  \sigma^2 \bigtr{ \Sigma^{(2)} \hat \Sigma^{-1}  }=L_{\vari},
\end{align*}
we conclude that conditioned on event $\Xi$, the following estimate holds true:
\begin{align}\label{large_devh1}
    \bigabs{L(\hat{\beta}_2^{\MTL}) - L_{\bias} - L_{\vari}} \prec \frac{L_{\bias} + L_{\vari}}{\sqrt p}.
\end{align}
This concludes the proof of equation \eqref{L_HPS_simple}.
\end{proof}

\section{Proofs for the Covariate Shift Setting}\label{proof_sec_cov}

Before presenting the proof of Theorem \ref{thm_main_RMT}, we first use its conclusion to complete the proofs of Propositions \ref{claim_dichotomy} and \ref{prop_covariate}. 

\subsection{Proof of Proposition \ref{claim_dichotomy}}\label{sec:claim_dich}

\begin{proof}
    By definition, we can expand $g(M)$ as
    $$g(M)=\frac{\sigma^2}{ n_1+n_2 }\sum_{i=1}^{\frac p 2}\left( \frac{1}{\lambda_i^{2} \alpha_1 + \alpha_2} + \frac1{\lambda_i^{-2} \alpha_1 + \alpha_2} \right).$$
    When $M=\id_{p\times p}$, we have
    $g(\id_{p\times p}) = \frac{\sigma^2 p}{n_1 + n_2 - p},$
		which can also be written as $\frac{\sigma^2}{ n_1+n_2 }\sum_{i=1}^{\frac p 2} \frac{2}{\alpha_1 + \alpha_2}$, since $\alpha_1 + \alpha_2 = 1 - \frac{p}{n_1 + n_2}$ by equation \eqref{eq_a12extra}.
    Then, we subtract $g(\id_{p\times p})$ from $g(M)$ to get
    \begin{align*}
        g(M) - g(\id_{p\times p})%
        &= \frac{\sigma^2 }{ n_1+n_2-p} \sum_{i=1}^{p/2} \frac{(\lambda_i^2-1)^2 \alpha_1 \left( \alpha_1 - \alpha_2\right) }{(\alpha_1 + \lambda_i^2 \alpha_2)(\lambda_i^2 \alpha_1 + \alpha_2)} .
    \end{align*}
    We claim that $\alpha_1 > \alpha_2$ if and only if $n_1 > n_2$, thus proving the dichotomy.
    When $\alpha_1 > \alpha_2$, the first part of equation \eqref{eq_a12extra} implies $\alpha_1 > \frac{1}{2}(1 - \frac{p}{n_1 + n_2})$.
    The second part of equation \eqref{eq_a12extra} gives
    \begin{align*}
			\frac{n_1}{n_1 + n_2} &= \alpha_1 + \frac{1}{n_1 + n_2} \sum_{i=1}^p \frac{\lambda_i^2 \alpha_1}{\lambda_i^2 \alpha_1 + \alpha_2} \\
      &> \frac1 2\Big(1 - \frac{p}{n_1 + n_2}\Big) + \frac{1}{n_1+n_2} \sum_{i=1}^{p/2}\left(\frac{\lambda_i^2}{\lambda_i^2+1}+\frac{\lambda_i^{-2}}{\lambda_i^{-2}+1}\right) \\
			&= \frac 1 2\Big(1 - \frac{p}{n_1 + n_2}\Big) +\frac{p}{2(n_1 + n_2)}=\frac{1}{2},
    \end{align*}
    implying $n_1 > n_2$.
		When $\alpha_1 < \alpha_2$, one can show $n_1 < n_2$ using similar steps.
	This completes the proof.
\end{proof}

\subsection{Proof of Proposition \ref{prop_covariate}}\label{appendix RMT0}

\begin{proof}%
From the second part of equation \eqref{eq_a12extra}, we get
    \begin{align*}
        \frac{n_1} {n_1 + n_2} = \alpha_1 + \frac 1 {n_1 + n_2}  {\sum_{i=1}^p \frac{\lambda_i^2\alpha_1}{\lambda_i^2\alpha_1 + \alpha_2}}
        < \alpha_1 + \frac p {n_1 + n_2},
    \end{align*}
    which gives that $\alpha_1 > \frac{n_1 - p}{n_1 + n_2}$. Combined with the first part of equation \eqref{eq_a12extra}, it yields that 
  $$\alpha_2 =1-\frac{p}{n_1+n_2} - \al_1< \frac{n_2}{n_1 + n_2}.$$
Hence, we have $\al_1/\al_2>(n_1-p)/n_2\ge c^{-1}$. 

Now, abbreviating $\bx_p=(x_1,\ldots, x_p)$ with $x_i=\lambda_i^2$, the trace of $(\alpha_1 M^{\top} M + \alpha_2)^{-1}$ can be written as  
    \begin{align*}
      f_p(\bx_p) := \bigtr{\frac{1}{\alpha_1 M^{\top} M + \alpha_2}} &= \sum_{i=1}^p \frac{1} {\alpha_1 x_i + \alpha_2} .
    \end{align*}
To conclude the proof, it suffices to prove the following claim: for any $k \in \{1,\ldots, p\}$, $c\le b\le 1$, and $\al_1, \al_2$ such that $\al_1/\al_2\ge c^{-1}$,
$f_k(\bx_k)$ is minimized at $\bx_k=(b,\ldots,b)\in \R^k$ subject to the constraints $c\le x_i \le c^{-1}$ and $\prod_{i=1}^k x_i=b^k$. Note that in contrast to the original setting, $\al_1$ and $\al_2$ are now taken as fixed parameters that do not depend on $\bx_k$, which will greatly simplify the optimization problem.

We prove the above claim by induction. First, the claim is trivial when $k=1$. For $k=2$, a similar argument as in the proof of Proposition \ref{claim_dichotomy} indeed gives that $f_2$ is minimized at $x_1=x_2=b$ as long as $b\al_1\ge \al_2$. Now, suppose the claim holds for $f_k$ for some $2\le k \le p-1$. Consider $\bx_{k+1}=(x_1\ldots, x_{k+1})$ with $c\le x_i \le c^{-1}$ and $\prod_{i=1}^{k+1}x_i=b^{k+1}$. Renaming the indices if necessary, we may assume that $x_{k+1}$ is the largest entry, i.e., $ x_{k+1}\ge \max_{i=1}^{k} x_i$. Then, we can write $x_{k+1}=ba$ for some $1\le a\le \min\{(bc)^{-1} , b^{k}c^{-k}\}$, where the condition $a\le b^{k}c^{-k}$ comes from the constraints $\prod_{i=1}^{k+1}x_i=b^{k+1}$ and $\prod_{i=1}^{k}x_i \ge c^k$. The remaining variables satisfy $c\le x_i \le c^{-1}$ and $\prod_{i=1}^k x_i=b^k/a$. Applying the induction hypothesis for $f_k$, we get that 
$$\sum_{i=1}^k \frac{1}{\al_1 x_i + \al_2}  \ge \frac{k}{\al_1 b a^{-1/k}  + \al_2}.$$
Hence, for $\bx_{k+1}=(x_1, \ldots, x_k, ba)$, we have
$$f_{k+1}(\bx_{k+1})\ge g_k(a):=\frac{k}{\al_1 b a^{-1/k}  + \al_2} + \frac{1}{\al_1 b a + \al_2}.$$
To conclude the proof, we only need to show that $g_k(a)$ achieves a minimum when $a=1$, i.e, 
$$g_k(a)\ge \frac{k+1}{\al_1 b +\al_2},$$
which is equivalent to 
\be\label{eq_wbk} \frac{k(a^{1/k}-1)}{w +a^{1/k}} \ge \frac{a-1}{w a +1} ,\ee
where we introduce the simplified notation $w=b \al_1/\al_2> bc^{-1} \ge 1$. We rewrite the inequality \eqref{eq_wbk} as
$ h_k(a) \ge 0$ with 
$$h_k(a):=k(a^{\frac{1}{k}}-1)(wa+1)-(w +a^{\frac{1}{k}})(a-1).$$
Its derivative satisfies
$$ h_k'(a)=w(k+1)(a^{\frac{1}{k}}-1)- \frac{k+1}{k} a^{\frac{1}{k}}(1-a^{-1})\ge 0,$$
where, in the second step, we apply that 
$k (1-a^{-\frac{1}{k}})\ge 1-a^{-1}.$ 
Hence, we have $h_k(a)\ge h_k(1)=0$ for $a\ge 1$, i.e., the inequality of \eqref{eq_wbk} indeed holds, which concludes the proof.
\end{proof}

\subsection{Proof of Theorem \ref{thm_main_RMT}}\label{appendix RMT}
For the rest of this section, we present the proof of Theorem \ref{thm_main_RMT}, which is one of the main results of this paper. The central quantity of interest is the matrix $\hat \Sigma^{-1}$ in equation \eqref{Sigma_a}. Assume that $M ={\Sigma^{(1)}}^{1/2}{\Sigma^{(2)}}^{-1/2}$ has a singular value decomposition
\begin{equation*}%
M = U\Lambda V^\top, \quad \text{where} \ \ \Lambda :=\text{diag}( \lambda_1 , \ldots, \lambda_p ).
\end{equation*}
Then, we can write equation \eqref{Lvar} as
\begin{align}\label{eigen2extra}
    L_{\vari} = \sigma^2 \tr[\Sigma^{(2)} \hat{\Sigma}^{-1}] = \frac{\sigma^2 }{n} \tr[W^{-1}],
\end{align}
where we denote $n:=n_1+n_2$ and
$$ W:=n^{-1}\left(  \Lambda  U^\top {Z^{(1)}}^\top Z^{(1)} U\Lambda   + V^\top {Z^{(2)}}^\top Z^{(2)}V\right).$$
The \emph{resolvent} or \emph{Green's function} of $W$ is defined as $(W - z\id_{p\times p})^{-1}$ for $z\in \C$.

In this section, we will prove a local convergence of this resolvent with a \emph{sharp convergence rate}, which is conventionally referred to as ``the local law''  \citep{isotropic,erdos2017dynamical,Anisotropic}.
In particular, the key technical challenge is to establish an (almost) sharp convergence rate of the bias and variance equations to their asymptotic limits.

\subsubsection{Resolvent and Local Law}\label{sec pf RMTlemma}

We can write $W$ as $W=\AF\AF^{\top}$ for a $p\times n$ matrix
	\be\label{defn AF} \AF := n^{-1/2} [ \Lambda U^\top {Z^{(1)}}^\top,V^\top {Z^{(2)}}^\top]. \ee
We introduce a convenient self-adjoint linearization trick to study $W$. It has been proved to be useful in studying random matrices of Gram type \cite{Anisotropic}. 

\begin{definition}[Self-adjoint linearization and resolvent]\label{defn_resolventH}
We define the following $(p+n)\times (p+n)$ symmetric block matrix %
 \begin{equation}\label{linearize_block}
    H \define \left( {\begin{array}{*{20}c}
   0 & \AF  \\
   \AF^{\top} & 0
   \end{array}} \right),
 \end{equation}
and its resolvent as
$$G(z) \equiv G(Z^{(1)}, Z^{(2)},z)\define \left[H - \begin{pmatrix}z\id_{p\times p}&0\\ 0 & \id_{(n_1+n_2)\times (n_1+n_2)} \end{pmatrix}\right]^{-1},\quad z\in \mathbb C , $$
as long as the inverse exists. Furthermore, we define the following (weighted) partial traces %
\be\label{defm}
\begin{split}
m(z) :=\frac1p\sum_{i\in \cal I_0} G_{ii}(z) ,\quad & m_0(z):=\frac{1}p\sum_{i\in \cal I_0} \lambda_i^2 G_{ii}(z),\\
 m_1(z):= \frac{1}{n_1}\sum_{\mu \in \cal I_1}G_{\mu\mu}(z) ,\quad & m_2(z):= \frac{1}{n_2}\sum_{\nu\in \cal I_2}G_{\nu\nu}(z),
\end{split}
\ee
where $\cal I_i$, $i=0,1,2$, are index sets defined as 
$$\cal I_0:=\llbracket 1,p\rrbracket, \quad  \cal I_1:=\llbracket p+1,p+n_1\rrbracket, \quad \cal I_2:=\llbracket p+n_1+1,p+n_1+n_2\rrbracket.$$
\end{definition}
We will consistently use latin letters $i,j\in\sI_{0}$ and greek letters $\mu,\nu\in\sI_{1}\cup \sI_{2}$.
Correspondingly, the indices $Z^{(1)}$ and $Z^{(2)}$ are labelled as
	\begin{equation*}%
 Z^{(1)}= \left[Z^{(1)}_{\mu i}:i\in \mathcal I_0, \mu \in \mathcal I_1\right], \quad Z^{(2)}= \left[Z^{(2)}_{\nu i}:i\in \mathcal I_0, \nu \in \mathcal I_2\right].\end{equation*}
Moreover, we define the set of all indices $\cal I:=\cal I_0\cup \cal I_1\cup \cal I_2$, and label the indices in $\cal I$ as $\fa, \ \fb,\ \mathfrak c$ and so on. 

Using the Schur complement formula for the inverse of a block matrix, we get that
	\begin{equation} \label{green2}
	  G(z) =  \left( {\begin{array}{*{20}c}
			(W- z\id)^{-1} & (W - z\id)^{-1} \AF  \\
      \AF^\top (W - z\id)^{-1} & z(\AF^\top \AF - z\id)^{-1}
		\end{array}} \right).%
  \end{equation}
In particular, the upper left block of $G$ is exactly the resolvent of $W$ we are interested in. Compared with $(W- z\id)^{-1}$, it turns out that $G(z)$ is more convenient to deal with because $H$ is a linear function of $Z^{(1)}$ and $Z^{(2)}$. 
This is why we have chosen to work with $G(z)$.

We define the matrix limit of $G(z)$ as 
\be \label{defn_piw}
	\Gi(z) \define \begin{pmatrix} [\al_{1}(z) \Lambda^2  +  (\al_{2}(z)- z)]^{-1} & 0 & 0 \\ 0 & - \frac{n}{n_1} \al_{1}(z)\id_{n_1\times n_1} & 0 \\ 0 & 0 & -\frac{n}{n_2}\al_{2}(z)\id_{n_2\times n_2}  \end{pmatrix},\ee
where $(\al_1(z),\al_2(z))$ is the unique solution to the following system of self-consistent equations:
\be\label{selfomega_a}
\begin{split}
	&\al_1(z) + \al_2(z) = 1 - \frac{1}{n} \sum_{i=1}^p \frac{\lambda_i^2 \al_1(z) + \al_2(z)}{\lambda_i^2  \al_1(z) + \al_2(z) - z}, \\ %
	&\al_1(z) + \frac{1}{n}\sum_{i=1}^p \frac{\lambda_i^2  \al_1(z)}{\lambda_i^2  \al_1(z) + \al_2(z) - z} = \frac{n_1}{n},
\end{split}
\ee
such that $\im \al_1(z)\le 0$ and $\im \al_2(z)\le 0$ whenever $\im z > 0$. The existence and uniqueness of solutions to the above system will be proved in Lemma \ref{lem_mbehaviorw}.

We now state the main result, Theorem \ref{LEM_SMALL}, of this section, which shows that for $z$ in a small neighborhood around $0$, $G(z)$ converges to the limit $\Gi(z)$ when $p$ goes to infinity. Moreover, it also gives an almost sharp convergence rate of $G(z)$. Such an estimate is conventionally called an {\it anisotropic local law} \citep{Anisotropic}. We define a domain of the spectral parameter $z$ as
\begin{equation*}
\mathbf D:= \left\{z=E+ \ii \eta \in \C_+: |z|\le (\log n)^{-1} \right\}. %
\end{equation*}

\begin{theorem} \label{LEM_SMALL} %
Suppose that $Z^{(1)}$, $Z^{(2)}$, $\rho_1$ and $\rho_2$ satisfy Assumption \ref{assm_big1}. Suppose that $Z^{(1)}$ and $Z^{(2)}$ satisfy the bounded support condition \eqref{eq_support} with $Q= n^{{2}/{\varphi}}$. Suppose that the singular values of $M$ satisfy that
\begin{equation}\label{assm3_app}
 \lambda_p \le\cdots\le\lambda_2 \le \lambda_1 \le \tau^{-1} .
\end{equation}
Then, the following local laws hold. 
\begin{itemize}
\item[(1)] {\bf Averaged local law}: We have that
\begin{align}
\sup_{z\in \mathbf D} \bigg\vert {p}^{-1}\sum_{i\in \cal I_0} [G_{ii}(z)- \Gi_{ii}(z)]\bigg\vert &\prec (np)^{-1/2} Q, \label{aver_in} \\ 
\sup_{z\in \mathbf D} \bigg\vert {p}^{-1}\sum_{i\in \cal I_0} \lambda_i^2 [G_{ii}(z)- \Gi_{ii}(z)]\bigg\vert &\prec (np)^{-1/2} Q. \label{aver_in1} %
\end{align}
\item[(2)] {\bf Anisotropic local law}: For any deterministic unit vectors $\mathbf u, \mathbf v \in \mathbb R^{p+n}$, we have that
\begin{equation}\label{aniso_law}
	\sup_{z\in \mathbf D}\left| \mathbf u^\top [G(z)-\Gi(z)] \mathbf v \right|  \prec  n^{-1/2}Q.
\end{equation}
\end{itemize}
\end{theorem}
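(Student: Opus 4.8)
The plan is to follow the now-standard resolvent/self-consistent-equation strategy for Gram-type random matrices, adapted to the two-population block structure. The starting point is the self-adjoint linearization $H$ in \eqref{linearize_block}. First I would derive the \emph{self-consistent equation} for $G$: using the Schur complement formula on $H - \diag(z,1)$ and expanding each diagonal block entry in terms of the minors $G^{(\mathbf{v})}$ obtained by removing row/column $\mathbf{v}$, I would isolate the randomness in each step via the large-deviation bounds of Lemma \ref{largedeviation}. Concretely, for $i\in\cal I_0$ one writes $G_{ii}^{-1}$ as $-z$ minus a quadratic form in the $i$-th columns of $Z^{(1)},Z^{(2)}$ sandwiched against the appropriate minor of $G$; the quadratic form concentrates around its partial trace, which by \eqref{eq largedev11} introduces an error $\OO_\prec(Q n^{-1/2} + Q^2 n^{-1})$ entrywise. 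Doing the same for $\mu\in\cal I_1$ and $\nu\in\cal I_2$ (where the diagonal entries satisfy $G_{\mu\mu}^{-1}=-1-(\text{quadratic form in }z^{(1)}_{\cdot\mu}$ against minors$)$, and similarly for $\nu$) produces a closed approximate system for the partial traces $m, m_0, m_1, m_2$ defined in \eqref{defm}. Matching this against the deterministic system \eqref{selfomega_a} identifies $\alpha_1(z),\alpha_2(z)$ as the limits of $-\tfrac{n_1}{n}m_1$ and $-\tfrac{n_2}{n}m_2$ respectively.

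Second, I would establish the \emph{stability} of the self-consistent equation \eqref{selfomega_a} near $z=0$. This is where the restriction to the domain $\mathbf D$ and the condition $\rho_2\ge 1+\tau$ enter: one shows the Jacobian of the map defining $(\alpha_1,\alpha_2)$ is invertible with $\OO(1)$-bounded inverse on $\mathbf D$, so that small perturbations of the equation (of the size controlled above) force $(\tfrac{n_1}{n}m_1,\tfrac{n_2}{n}m_2)$ to be close to $(-\alpha_1,-\alpha_2)$. The analysis of solutions to \eqref{selfomega_a} — existence, uniqueness in the lower half-plane, and boundedness of $\alpha_1,\alpha_2$ and of the denominators $a^2\lambda_i^2\alpha_1+\alpha_2-z$ away from $0$ — is exactly what Lemma \ref{lem_mbehaviorw} (referenced but not yet proved in this excerpt) supplies; I would invoke it here. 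The key structural point making $z=0$ accessible is that $W(a)$ is, with overwhelming probability, bounded below away from $0$ (this follows from $\lambda_p(W)\gtrsim (\sqrt{n_2}-\sqrt p)^2/n \gtrsim 1$ via Lemma \ref{SxxSyy}/Corollary \ref{fact_minv} applied to $(Z^{(2)})^\top Z^{(2)}$, since $a$ is bounded), so $z$ near $0$ stays safely outside the spectrum and no square-root edge behavior intervenes.

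Third would come the standard \emph{bootstrap}: one first proves a weak \emph{entrywise} law $|G_{\mathbf{ab}}-\Gi_{\mathbf{ab}}|\prec n^{-1/4}Q^{1/2}$ (say) on a fixed grid of $z$'s in $\mathbf D$ using a continuity/self-improving argument starting from large $|z|$ where the bound is trivial, then feeds this back into the self-consistent equation to upgrade it to the optimal entrywise bound $|G_{\mathbf{ab}}-\Gi_{\mathbf{ab}}|\prec n^{-1/2}Q$ — matching \eqref{aniso_law} for coordinate vectors $\mathbf u,\mathbf v$. Extending to arbitrary deterministic unit vectors $\mathbf u,\mathbf v$ (the genuinely anisotropic statement \eqref{aniso_law}) is done by the polynomialization/fluctuation-averaging technique of \citet{Anisotropic}: one controls high moments of $\mathbf u^\top(G-\Gi)\mathbf v$ by expanding in minors and exploiting cancellations, or alternatively by the generalized entrywise method treating $\mathbf u,\mathbf v$ as ``extra'' indices. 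The averaged laws \eqref{aver_in}, \eqref{aver_in1} then follow from a \emph{fluctuation averaging} estimate, which gains an extra factor $(np)^{-1/2}Q$ over the entrywise bound by averaging the weakly dependent errors $G_{ii}-\Gi_{ii}$ (weighted by $1$ or $\lambda_i^2$) over $i\in\cal I_0$. Finally, a uniformity-in-$z$ argument over $\mathbf D$ is obtained by a net plus Lipschitz bound on $G$ (Lipschitz constant $\OO(\eta^{-2})$ is polynomially bounded on $\mathbf D$), and uniformity in $a\in[-1,1]$ similarly by a net, noting all constants are uniform in $a$ since $|a|\le 1$ keeps the support and operator-norm bounds uniform.

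The \textbf{main obstacle} is the two-population coupling in the self-consistent equation: unlike the single-sample-covariance case, the limiting matrix $\Gi$ in \eqref{defn_piw} involves \emph{two} scalar parameters $\alpha_1,\alpha_2$ tied together through the nonlinear system \eqref{selfomega_a}, and the proof of stability of this $2\times 2$ (really infinite, through the $\lambda_i$-sum) system near $z=0$ — showing the relevant Jacobian stays uniformly invertible — is the delicate quantitative heart of the argument. A secondary difficulty is carrying the weaker moment assumption (finite $\varphi$-th moment, $\varphi>4$, hence only bounded support $Q=n^{2/\varphi}$ rather than $Q=1$) through every large-deviation and fluctuation-averaging step while still landing the almost-sharp rate $n^{-1/2}Q$; this is handled by the truncation argument of Corollary \ref{fact_minv} together with careful bookkeeping of the $Q$-factors, exactly as in \citet{DY}.
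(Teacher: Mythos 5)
Your proposal has the right toolkit (self-consistent equations, stability near $z=0$, bootstrap, fluctuation averaging, $\varepsilon$-net in $z$ and $a$, truncation to handle $Q=n^{2/\varphi}$), but it misses the structural obstacle that drives the paper's actual two-step architecture.

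The first step of your plan---expanding $G_{ii}^{-1}$ for $i\in\cal I_0$ via the minor $G^{(i)}$ and concentrating the resulting quadratic form with \eqref{eq largedev11}---silently requires that the $i$-th row of $\AF$ be independent of the remaining matrix $\AF^{(i)}$ and that its coordinates be independent of one another. From \eqref{defn AF}, the $i$-th row of $\AF$ is $n^{-1/2}\bigl[a(\Lambda U^\top)_{i\cdot}(Z^{(1)})^\top,\ (V^\top)_{i\cdot}(Z^{(2)})^\top\bigr]$: when $U$ and $V$ are not the identity (i.e.\ when $M$ is non-diagonal, the generic case here), each of these coordinates is a linear combination of an entire column of $Z^{(k)}$, so the $i$-th row of $\AF$ is neither independent of $\AF^{(i)}$ nor independent across rows. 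The Schur-complement minor expansion and the large-deviation concentration \eqref{eq largedev11} simply do not apply, and the self-consistent equation cannot be extracted this way. This is exactly the difficulty the paper flags at the start of Section \ref{sec entry}.

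The paper's proof therefore proceeds in two stages that your outline does not articulate. First it reduces to the \emph{Gaussian, diagonal} case: if $Z^{(1)},Z^{(2)}$ have i.i.d.\ Gaussian entries, then by rotational invariance $Z^{(1)}U\Lambda \stackrel{d}{=} Z^{(1)}\Lambda$ and $Z^{(2)}V\stackrel{d}{=}Z^{(2)}$ (equation \eqref{eq in Gauss}), so one may take $U=V=\id_{p\times p}$ and the $i$-th row of $\AF$ is $n^{-1/2}[a\lambda_i (Z^{(1)}_{\cdot i})^\top,\ (Z^{(2)}_{\cdot i})^\top]$, a function of the $i$-th columns of $Z^{(1)},Z^{(2)}$ only. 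In that model your Steps 1--3 (entrywise law via \eqref{resolvent2}--\eqref{resolvent8} and Lemma \ref{largedeviation}, stability via Lemma \ref{lem_stabw}, polynomialization to the anisotropic law, fluctuation averaging for \eqref{aver_in}) go through---this is Proposition \ref{prop_diagonal} and Lemma \ref{prop_entry}. Second, to remove the Gaussianity assumption (and simultaneously handle non-diagonal $U,V$ for genuinely distributed entries), the paper runs a continuous Lindeberg-type comparison between $G(Z,z)$ and $G(Z^{\mathrm{Gauss}},z)$ (Section \ref{sec_Gauss}, following Sections 7--8 of \citet{Anisotropic}), controlling high moments of $\mathbf u^\top(G-\Gi)\mathbf v$ by a Taylor expansion in single entries and a Gr\"onwall argument; the averaged laws get an additional $p^{-1/2}$ gain from the extra averaging over $j\in\cal I_0$ inside the moment expansion. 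Without this two-stage reduction, the direct entrywise argument you propose breaks at its first step. The rest of your proposal---stability of the system \eqref{selfomega_a} via an invertible Jacobian (Lemma \ref{lem_mbehaviorw}/\ref{lem_stabw}), the observation that $z\approx 0$ is safely outside $\operatorname{spec}(W(a))$ because $\lambda_p(W)\gtrsim (\sqrt{n_2}-\sqrt p)^2/n$, the truncation bookkeeping, and the net arguments---is all consistent with what the paper actually does.
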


With Theorem \ref{LEM_SMALL}, we can complete the proof of Theorem \ref{thm_main_RMT}  with a standard cutoff argument.

\begin{proof} %
Similar to equation \eqref{truncateZ}, we introduce the truncated matrices $\wt Z^{(1)}$ and $\wt Z^{(2)}$ with entries
\be\label{truncate1} 
\wt Z^{(1)}_{\mu i}:= \mathbf 1\left(  |Z^{(1)}_{\mu i}|\le Q \log n \right)\cdot Z^{(1)}_{\mu i}, \quad \wt Z^{(2)}_{\nu i}:= \mathbf 1\left(  |Z^{(2)}_{\nu i}|\le Q\log n \right)\cdot Z^{(2)}_{\nu i}, 
\ee
for $Q= n^{{2}/{\varphi}}$. From equation \eqref{Ptrunc},  we get that
\begin{equation*}%
\mathbb P(\wt Z^{(1)} = Z^{(1)},  \wt Z^{(2)} = Z^{(2)}) =1-\OO ( (\log n)^{-\varphi}).
\end{equation*}
By equations \eqref{meanshif} and \eqref{EZ norm}, we have that
\be \label{meanshif2}
\begin{split}
  |\mathbb E  \wt  Z^{(1)}_{\mu i}| =\OO(n^{-3/2}), \quad &|\mathbb E  \wt  Z^{(2)}_{\nu i}| =\OO(n^{-3/2}), \\ 
 \mathbb E |\wt  Z^{(1)}_{\mu i}|^2 =1+ \OO(n^{-1}),\quad &\mathbb E |\wt  Z^{(2)}_{\nu i}|^2 =1+ \OO(n^{-1}),
\end{split}
\ee
and %
\be  \label{EZ norm2}
\|\E \wt Z^{(1)}\|=\OO(n^{-1/2}),\quad \|\E \wt Z^{(2)}\|=\OO(n^{-1/2}) .
\ee
Then, we centralize and rescale $\wt Z^{(1)}$ and $\wt Z^{(2)}$ as
$$ \wh Z^{(1)} := (\E|\wt Z^{(1)}_{\mu i}|^2 )^{-1/2}(\wt Z^{(1)} - \E \wt Z^{(1)} ),\quad \wh Z^{(2)} := (\E|\wt Z^{(2)}_{\nu i}|^2)^{-1/2} (\wt Z^{(2)} - \E \wt Z^{(2)} ),$$
for arbitrary $\mu\in \cal I_1$, $\nu\in \cal I_2$ and $i\in \cal I_0$.  
Now, $\wh Z^{(1)}$ and $\wh Z^{(2)}$ satisfy the assumptions of Theorem \ref{LEM_SMALL}. %
Hence, $G(\wh Z^{(1)},\wh Z^{(2)},z)$ satisfies equation \eqref{aver_in}, where $G(\wh Z^{(1)},\wh Z^{(2)},z)$ is defined in the same way as $G(z)$ but with $(Z^{(1)}, Z^{(2)})$ replaced by $(\wh Z^{(1)},\wh Z^{(2)})$. 
By equations \eqref{meanshif2} and \eqref{EZ norm2}, we have that for $k=1,2,$
$$ \|\wh Z^{(k)} - \wt Z^{(k)}\|\lesssim n^{-1}\|\wh Z^{(k)}\| + \|\E \wt Z^{(k)}\|\prec n^{-1/2} ,$$
where we use Lemma \ref{SxxSyy} to bound the operator norm of $\wh Z^{(k)}$. Together with the estimate \eqref{priorim} below, this bound implies that 
$$\max_{i\in \cal I_0}\left|  G_{ii}(\wh Z^{(1)},\wh Z^{(2)},z)-G_{ii}(\wt Z^{(1)},\wt Z^{(2)},z)  \right|  \prec  n^{-1/2} \sum_{k=1}^2 \|\wh Z^{(k)} - \wt Z^{(k)}\| \prec n^{-1}.$$
Combining this estimate with the local law \eqref{aver_in} for $G(\wh Z^{(1)},\wh Z^{(2)},z)$, we obtain that \eqref{aver_in} also holds for $G(z)\equiv G( Z^{(1)},  Z^{(2)},z)$ on the event $\Xi_1:=\{\wt Z^{(1)} = Z^{(1)},  \wt Z^{(2)} = Z^{(2)}\}$. 

Now, we are ready to prove equation \eqref{lem_cov_shift_eq}. By estimate \eqref{eigen2extra}, we have that
$$L_{\vari}(a)=\frac{ \sigma^2 }{n}\sum_{i\in \cal I_0} G_{ii}(0).$$
We also notice that the equations in estimate \eqref{selfomega_a} reduce to the equations in equation \eqref{eq_a12extra} when $z=0$, which gives that $\al_1 = \al_1(0)$ and $\al_2 = \al_2(0)$. Hence, the partial trace of $\Gi$ in \eqref{defn_piw} over $\cal I_0$ is equal to
$$\sum_{i\in \cal I_0}\Gi_{ii}(0)= \bigtr{\frac1{\al_1    \Lambda^2 + \al_2}}=\bigtr{  \frac{1}{\al_1  M^\top M + \al_2  }  }.$$
Now, applying estimate \eqref{aver_in} to $G(z)$, we conclude that on the event $\Xi_1$,
$$ \bigabs{L_{\vari} - \frac{\sigma^2}{n_1+n_2}\bigtr{  \frac{1}{\al_1   M^\top M+ \al_2  }  }}
				\prec \frac{n^{2/\varphi}}{(np)^{1/2}}\cdot\frac{p \sigma^2}{n_1+ n_2}.  $$
This concludes the proof of Theorem \ref{thm_main_RMT}. 
 \end{proof}

\subsubsection{Self-consistent Equations}\label{sec contract}

\begin{proof}
The rest of this section is devoted to the proof of Theorem \ref{LEM_SMALL}. In this subsection, we show that the self-consistent equation \eqref{selfomega_a} has a unique solution $(\al_1(z), \al_2(z))$ for any $z\in \mathbf D$. Otherwise, Theorem \ref{LEM_SMALL} will be a vacuous result. For simplicity of notation, we define the following ratios 
\begin{equation*}%
 \gamma_n :=\frac{p}{n} ,\quad r_1 :=\frac{n_1}{n} ,\quad r_2 :=\frac{n_2}{n} .
\end{equation*}

When $z=0$, estimate \eqref{selfomega_a} reduces to the system of equations in \eqref{eq_a12extra}, from which we can derive an equation of $\al_1\equiv \al_1(0)$ only:
\be\label{fa1}f(\al_1)=r_1,\quad \text{with}\quad f(\al_1):=\al_1 +\frac{1}{n} \sum_{i=1}^p \frac{\lambda_i^2  \al_1}{ \lambda_i^2  \al_1+ (1- \gamma_n - \al_1) } .\ee
We can calculate that
$$f'(\al_1)=1+\frac{1}{n} \sum_{i=1}^p \frac{\lambda_i^2 (1-\gamma_n)}{ [\lambda_i^2  \al_1+ (1- \gamma_n - \al_1)]^2 }>0.$$
Hence, $f$ is strictly increasing on $[0,1-\gamma_n]$. Moreover, we have $f(0)=0<r_1$, $f(1-\gamma_n )=1>r_1$, and $f(r_1)>r_1$ if $r_1\le 1-\gamma_n$. Hence, there exists a unique solution $\al_1$ to \eqref{fa1} satisfying
$0< \al_1 <\min\left\{1-\gamma_n, r_1\right\}.$ Furthermore, using that $f'(x)=\OO(1)$ for any fixed $x\in (0, 1-\gamma_n)$, it is not hard to check that 
\be\label{a230}
r_1 \tau  \le   \al_1 \le \min\left\{ 1-\gamma_n,r_1\right\}  
\ee 
for a small constant $\tau>0$. From equation \eqref{eq_a12extra}, we can also derive a equation of $\al_2\equiv \al_2(0)$ only. With a similar argument as above, we get that 
\be\label{a231}
r_2 \tau   \le \al_2\le  \min\left\{ 1-\gamma_n,r_2\right\} 
\ee 
for a small constant $\tau>0$.

Next, we prove the existence and uniqueness of the solution to the self-consistent equation \eqref{selfomega_a} for a general $z\in \mathbf D$. For the proof of Theorem \ref{LEM_SMALL}, it is more convenient to use the following rescaled functions of $\al_1(z)$ and $\al_2(z)$:
\be\label{M1M2a1a2}
m_{1c}(z):= - r_1^{-1}\al_1(z),\quad m_{2c}(z):= - r_2^{-1}\al_2(z),
\ee
which, as we will see later, are the classical values (i.e., asymptotic limits) of $m_1(z)$ and $m_2(z)$, respectively. 
Moreover, it is not hard to check that equation \eqref{selfomega_a} is equivalent to the following system of self-consistent equations of $m_{1c}(z),m_{2c}(z)$:
\begin{equation}\label{selfomega}
\begin{split}
& \frac1{m_{1c}} = \frac{\gamma_n}p\sum_{i=1}^p \frac{\lambda_i^2}{  z+\lambda_i^2  r_1 m_{1c} +r_2 m_{2c}  } - 1 ,\\
& \frac1{m_{2c}} = \frac{\gamma_n}p\sum_{i=1}^p \frac{1 }{  z+\lambda_i^2 r_1 m_{1c} +  r_2 m_{2c}  }- 1 .
\end{split}
\end{equation}
When $z=0$, using equations \eqref{eq_a12extra}, \eqref{a230}, and \eqref{a231}, we get that
\be\label{a23}
\begin{split}
\tau  \le  - m_{1c}(0) \le 1,\quad \tau  \le  - m_{2c}(0) \le 1,\quad -r_1 m_{1c}(0)-r_2 m_{2c}(0)=1-\gamma_n.
\end{split}
\ee 

Now, we claim the following lemma, which gives the existence and uniqueness of the solutions $m_{1c}(z)$ and $m_{2c}(z)$ to the system of equations \eqref{selfomega}.

\begin{lemma} \label{lem_mbehaviorw}
There exist constants $c_0, C_0>0$ depending only on $\tau$ in Assumption \ref{assm_big1} and equation \eqref{a23} such that the following statements hold.
There exists a unique solution $m_{1c}(z), m_{2c}(z)$ to equation \eqref{selfomega} under the conditions
\be\label{prior1}
|z|\le c_0, \quad  |m_{1c}(z) - m_{1c}(0)| + |m_{2c}(z) - m_{2c}(0)|\le c_0.
\ee
Moreover, the solution satisfies
\be\label{Lipomega}
 |m_{1c}(z) - m_{1c}(0)| + |m_{2c}(z) - m_{2c}(0)| \le C_0|z|.
\ee
\end{lemma}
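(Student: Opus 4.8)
The argument is a quantitative implicit function theorem anchored at $z=0$. Recast the system \eqref{selfomega_a} --- which is equivalent to \eqref{selfomega} via the substitution \eqref{M1M2a1a2} --- as $\mathbf F(z,\al_1,\al_2)=\mathbf 0$, where, abbreviating $D_i := a^2\lambda_i^2\al_1 + \al_2 - z$ and using the identity $\frac{a^2\lambda_i^2\al_1+\al_2}{D_i}=1+\frac{z}{D_i}$,
\[
\mathbf F(z,\al_1,\al_2):=\Bigl(\ \al_1+\al_2-(1-\gamma_n)+\frac{z}{n}\textstyle\sum_{i=1}^p D_i^{-1}\ ,\ \ \al_1+\frac1n\textstyle\sum_{i=1}^p\frac{a^2\lambda_i^2\al_1}{D_i}-r_1\ \Bigr).
\]
Take as the base point the solution $(\al_1(0),\al_2(0))$ at $z=0$, whose existence, uniqueness and two-sided bounds are already furnished by \eqref{a230}, \eqref{a231}, \eqref{a23}: in particular $\al_1(0),\al_2(0)$ and the ratios $r_1,r_2$ all lie in some interval $[\kappa,1]$ with $\kappa=\kappa(\tau)>0$, and since $\lambda_i\le\tau^{-1}$ (cf.\ \eqref{assm3_app}) and $|a|\le 1$ we have $D_i(0)=a^2\lambda_i^2\al_1(0)+\al_2(0)\in[\kappa,\tau^{-2}+1]$. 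Consequently $\mathbf F$ is holomorphic in $z$ and real-analytic in $(\al_1,\al_2)$, with all first and second order derivatives bounded by a $\tau$-dependent constant, on the polydisc $\{|z|\le c_0\}\times\{|(\al_1,\al_2)-(\al_1(0),\al_2(0))|\le c_0\}$ once $c_0=c_0(\tau)$ is small enough that $|D_i|\ge\kappa/2$ there.

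The only substantive computation is the Jacobian $J:=D_{(\al_1,\al_2)}\mathbf F(0,\al_1(0),\al_2(0))$. From the displayed form of $\mathbf F$ one reads off at $z=0$ that $\partial_{\al_1}F_1=\partial_{\al_2}F_1=1$, $\partial_{\al_1}F_2=1+s_2$, $\partial_{\al_2}F_2=-s_1$, where $s_1:=\frac1n\sum_i\frac{a^2\lambda_i^2\al_1(0)}{D_i(0)^2}\ge 0$ and $s_2:=\frac1n\sum_i\frac{a^2\lambda_i^2\al_2(0)}{D_i(0)^2}\ge 0$. Hence $\det J=-s_1-(1+s_2)=-(1+s_1+s_2)$, so $|\det J|\ge 1$, and since every entry of $J$ is $\OO(1)$ we get $\|J^{-1}\|\le C(\tau)$. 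Now run the standard Banach fixed point argument: the map $\Psi_z(\al_1,\al_2):=(\al_1,\al_2)-J^{-1}\mathbf F(z,\al_1,\al_2)$ is, after shrinking $c_0$, a contraction sending the ball $\{|(\al_1,\al_2)-(\al_1(0),\al_2(0))|\le c_0\}$ into itself for every $|z|\le c_0$ --- here one uses $\mathbf F(0,\al_1(0),\al_2(0))=\mathbf 0$, $D_{(\al_1,\al_2)}\mathbf F(0,\al_1(0),\al_2(0))=J$, the uniform $C^2$ bound, and $|\mathbf F(z,\al_1(0),\al_2(0))|\lesssim|z|$ --- and its unique fixed point is the solution $(\al_1(z),\al_2(z))$. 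Since $J^{-1}$ is invertible, zeros of $\mathbf F(z,\cdot)$ in the ball coincide with fixed points of $\Psi_z$, so this is precisely the uniqueness asserted in \eqref{prior1} (rescaling $c_0$ to pass between the $(\al_1,\al_2)$-ball and the $(m_{1c},m_{2c})$-ball); the branch constraint $\im\al_j(z)\le 0$ for $\im z>0$ follows by continuity from $z=0$ together with the formula for the derivative below (equivalently, the real-analytic family agrees with the $\im z\downarrow 0$ boundary value of the Stieltjes-transform-valued solution of the $\im z>0$ equation).

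For \eqref{Lipomega}, differentiate $\mathbf F(z,\al_1(z),\al_2(z))=\mathbf 0$ in $z$: $(\al_1,\al_2)'(z)=-[D_{(\al_1,\al_2)}\mathbf F(z,\al_1(z),\al_2(z))]^{-1}\,\partial_z\mathbf F(z,\al_1(z),\al_2(z))$. Shrinking $c_0$ keeps $D_{(\al_1,\al_2)}\mathbf F$ within $C(\tau)$ of $J$, hence its inverse bounded by a $\tau$-constant, while the two components of $\partial_z\mathbf F$ --- namely $\frac1n\sum_i D_i^{-1}+\frac{z}{n}\sum_i D_i^{-2}$ and $\frac1n\sum_i\frac{a^2\lambda_i^2\al_1}{D_i^2}$ --- are $\OO(1)$ since $p\le n$ and $|D_i|\ge\kappa/2$; thus $|(\al_1,\al_2)'(z)|\le C_0/2$ on $\{|z|\le c_0\}$, and integrating along the segment $[0,z]$ gives $|(\al_1(z),\al_2(z))-(\al_1(0),\al_2(0))|\le(C_0/2)|z|$. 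Finally $m_{jc}(z)-m_{jc}(0)=-r_j^{-1}(\al_j(z)-\al_j(0))$ with $r_j^{-1}\le C(\tau)$ (from $\tau\le\rho_1$, $1+\tau\le\rho_2$, $\tau\le\rho_1/\rho_2\le\tau^{-1}$ in \eqref{assm2}) delivers \eqref{Lipomega}. The point that demands care is purely bookkeeping: every constant must be traced back to $\tau$ alone, not to $p$, $n_1$, $n_2$, the $\lambda_i$, or $a$, which is exactly what the a priori bounds \eqref{a230}--\eqref{a23} together with the ratio constraints guarantee. Once those are in hand, the non-degeneracy of $J$ --- the crux of the argument --- is immediate from the structure of $\mathbf F$, so there is no genuine obstacle beyond this.
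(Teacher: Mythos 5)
Your proof is correct and runs essentially the same engine the paper uses --- a quantitative contraction / Newton argument anchored at $z=0$, with the a priori bounds \eqref{a230}--\eqref{a23} supplying the base point and the $\tau$-uniform constants. The difference is one of presentation and bookkeeping, but it is a genuine and clean one: the paper first eliminates $m_{1c}$ via the first equation in \eqref{selfalter}, reduces to the scalar equation $g_z(m_{2c})=1$, and then verifies non-degeneracy by inspecting $g_z'(m_{2c}(0))$ (equation \eqref{dust}); you instead keep the full $2\times 2$ system in the $(\al_1,\al_2)$ coordinates of \eqref{selfomega_a} and compute the Jacobian
\[
J=\begin{pmatrix}1&1\\ 1+s_2 & -s_1\end{pmatrix},\qquad s_1=\frac1n\sum_i\frac{a^2\lambda_i^2\al_1(0)}{D_i(0)^2}\ge 0,\quad s_2=\frac1n\sum_i\frac{a^2\lambda_i^2\al_2(0)}{D_i(0)^2}\ge 0,
\]
finding $\det J=-(1+s_1+s_2)$, hence $|\det J|\ge 1$ \emph{identically}. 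This is the exact same non-degeneracy as the paper's $|1/g_z'|\le\wt C_\tau$, but your version exhibits the structural reason for it (the Jacobian determinant is manifestly at least one in absolute value) rather than asking the reader to ``check'' a derivative bound. The Lipschitz estimate \eqref{Lipomega} via implicit differentiation of $\mathbf F(z,\al_1(z),\al_2(z))=\mathbf 0$ and the final change of variable $m_{jc}=-r_j^{-1}\al_j$ are both correct. Two small points worth tightening if you were to write this up: when you shrink $c_0$ so that $\Psi_z$ maps the ball into itself, the $z$-radius and the $\al$-radius get coupled ($\frac12 c_0 + C|z|\le c_0$ forces the $z$-ball to be a constant fraction smaller), which is fine but worth saying explicitly; and the remark about the branch constraint $\im\al_j\le 0$ on $\im z>0$ is tangential to what the lemma actually asserts (uniqueness is only claimed under the proximity constraint \eqref{prior1}), so you could drop it.
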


\begin{proof} %
The proof is based on a standard application of the contraction principle. 
First, it is easy to check that the system of equations in equation \eqref{selfomega} is equivalent to  
\begin{equation}\label{selfalter}
r_1m_{1c}=-(1-\gamma_n) - r_2m_{2c} - z\left( m_{2c}^{-1}+1\right),\quad g_z(m_{2c}(z))=1, 
\end{equation}
where
$$g_z(m_{2c}):= - m_{2c} +\frac{\gamma_n}p\sum_{i=1}^p \frac{m_{2c} }{  z -\lambda_i^2(1-\gamma_n)+ (1 - \lambda_i^2) r_2m_{2c} - \lambda_i^2 z\left(  m_{2c}^{-1}+1\right) }.$$
We first show that there exists a unique solution $m_{2c}(z)$ to the equation $g_z(m_{2c}(z))=1$ under the conditions in equation \eqref{prior1}.
We abbreviate $\delta(z):= m_{2c}(z) - m_{2c}(0)$. From equation \eqref{selfalter}, we obtain that 
\begin{equation} \nonumber
0=\left[g_z(m_{2c}(z)) -  g_0(m_{2c}(0)) -g_z'(m_{2c}(0))\delta(z)\right] + g_z'(m_{2c}(0))\delta(z),
\end{equation}
which gives that
\be\nonumber%
 \delta(z) =- \frac{ g_z(m_{2c}(0)) - g_0(m_{2c}(0)) }{g_z'(m_{2c}(0))}- \frac{ g_z(m_{2c}(0)+\delta(z)) -  g_z(m_{2c}(0))-g_z'(m_{2c}(0))\delta(z)}{g_z'(m_{2c}(0))}.
 \ee
Inspired by this equation, we define iteratively a sequence ${\delta}^{(k)}(z) \in \C$ such that ${\delta}^{(0)}=0$ and 
\begin{align}\label{selfomega2}
 \delta^{(k+1)} = - \frac{g_z(m_{2c}(0)) - g_0(m_{2c}(0))}{g_z'(m_{2c}(0))} 
 -\frac{g_z(m_{2c}(0)+ \delta^{(k)}) -  g_z(m_{2c}(0))-g_z'(m_{2c}(0))\delta^{(k)} }{g_z'(m_{2c}(0))}.
\end{align}
Then, equation \eqref{selfomega2} defines a mapping $h_z:\C\to \C$, which maps $\delta^{(k)}$ to $\delta^{(k+1)}=h_z(\delta^{(k)})$.
 
Through a straightforward calculation, we get that
$$g_z'(m_{2c}(0)) = -1 - \frac{\gamma_n}p\sum_{i=1}^p \frac{ \lambda_i^2(1-\gamma_n) - z\left[1- \lambda_i^2 \left(  2m_{2c}^{-1}(0)+1\right)\right]  }{  \left[z -\lambda_i^2(1-\gamma_n)+ (1 - \lambda_i^2) r_2m_{2c}(0) - \lambda_i^2 z\left( m_{2c}^{-1}(0)+1\right)\right]^2 }.$$
Using equations \eqref{assm3_app} and \eqref{a23}, it is easy to check that
\begin{align*}
&\left|z -\lambda_i^2(1-\gamma_n)+ (1 - \lambda_i^2) r_2m_{2c}(0) - \lambda_i^2 z\left( m_{2c}^{-1}(0)+1\right)\right| \ge c_\tau - c_\tau^{-1} |z| ,
\end{align*}
for a constant $c_\tau>0$ depending only on $\tau$. Hence, as long as we choose $c_0\le c_\tau^2/2$, we have 
\begin{align*}%
&\left|z -\lambda_i^2(1-\gamma_n)+ (1 -\lambda_i^2) r_2m_{2c}(0) - \lambda_i^2 z\left( m_{2c}^{-1}(0)+1\right)\right| \ge c_\tau /2 .
\end{align*}
With this estimate, we can check that there exist constants $\wt c_\tau, \wt C_\tau>0$ depending only on $\tau$ such that the following estimates hold: for all $z$, $\delta_1$, $\delta_2$ such that $\min\{|z|,|\delta_1|,|\delta_2| \}\le \wt c_\tau$,
\be\label{dust}
\left|\frac{1}{g_z'(m_{2c}(0))} \right|\le \wt C_\tau, \quad   \left|\frac{g_z(m_{2c}(0)) - g_0(m_{2c}(0))}{g_z'(m_{2c}(0))}\right|  \le \wt C_\tau |z| ,
\ee
and 
\be\label{dust222}
\left|\frac{g_z(m_{2c}(0)+ \delta_1) -  g_z(m_{2c}(0)+\delta_2)-g_z'(m_{2c}(0))(\delta_1-\delta_2) }{g_z'(m_{2c}(0))}\right|  \le \wt C_\tau |\delta_1-\delta_2|^2 .
\ee
Using equations \eqref{dust} and \eqref{dust222}, we find that there exists a sufficiently small constant $c_1>0$ depending only on $\wt C_\tau$ such that 
$h_z: B_{d}  \to B_{d}$ is a self-mapping on the ball $B_d:=\{\delta \in \C: |\delta| \le d \}$, as long as $d\le c_1$ and $|z| \le c_1$. 
Now, it suffices to prove that $h_z$ restricted to $B_d $ is a contraction, which then implies that ${\delta}:=\lim_{k\to\infty} { \delta}^{(k)}$ exists and $m_{2c}(0)+\delta(z)$ is a unique solution to the equation $g_z(m_{2c}(z))=1$ subject to  $|{\delta}| \le d$.

From the iteration relation \eqref{selfomega2}, using equation \eqref{dust222} we readily get that
\begin{equation*}%
{ \delta}^{(k+1)} - { \delta}^{(k)}= h_z({\delta}^{(k)}) - h_z({\delta}^{(k-1)}) \le \wt C_\tau | { \delta}^{(k)}-{ \delta}^{(k-1)}|^2.
\end{equation*}
Hence, $h$ is indeed a contraction mapping on $ B_d$ as long as $d$ is chosen sufficiently small such that $2d\wt C_\tau \le 1/2$.  
This proves both the existence and uniqueness of the solution $m_{2c}(z)=m_{2c}(0)+\delta(z)$ if we choose $c_0$ in equation \eqref{prior1} to be \smash{$c_0=\min\{c_\tau^2/2, c_1, (4\wt C_\tau)^{-1}\}$}. After obtaining $m_{2c}(z)$, we then solve $m_{1c}(z)$ using the first equation in \eqref{selfalter}. 

It remains to prove the estimate \eqref{Lipomega}. Using equation \eqref{dust} and ${\delta}^{(0)}= 0$, we can obtain from equation \eqref{selfomega2} that $ |{ \delta}^{(1)}(z)| \le \wt C_\tau |z| .$
Then, by the contraction mapping, we have the bound 
\begin{equation*}%
|{ \delta}| \le \sum_{k=0}^\infty |{  \delta}^{(k+1)}-{ \delta}^{(k)}| \le 2\wt C_\tau |z| \ \Rightarrow \ |m_{2c}(z)-m_{2c}(0)|\le 2\wt C_\tau |z| .\end{equation*}
Finally, using the first equation in \eqref{selfalter}, we also obtain the bound $|m_{1c}(z)-m_{1c}(0)| \lesssim |z|$. 
\end{proof}

As a byproduct of the above contraction mapping argument, we also obtain the following stability result that will be used in the proof of Theorem \ref{LEM_SMALL}. Roughly speaking, it states that if two analytic functions $m_1(z)$ and $m_2(z)$ satisfy the self-consistent equation \eqref{selfomega} approximately up to some small errors, then $m_1(z)$ and $m_2(z)$ will be close to the solutions $m_{1c}(z)$ and $m_{2c}(z)$.

\begin{lemma} \label{lem_stabw}
There exist constants $c_0, C_0>0$ depending only on $\tau$ in Assumption \ref{assm_big1} and \eqref{a23} such that the system of self-consistent equations \eqref{selfomega} is stable in the following sense. Suppose $|z|\le c_0$, and $m_{1}(z) $ and $m_{2}(z)$ are analytic functions of $z$ such that
\be  \label{prior12}
|m_{1}(z) - m_{1c}(0)| + |m_{2}(z) - m_{2c}(0)|\le c_0.
\ee
Moreover, assume that $m_1, m_2$ satisfies the system of equations
\begin{equation}\label{selfomegaerror}
\begin{split}
&\frac{1}{m_{1}} + 1 -\frac{\gamma_n}p\sum_{i=1}^p \frac{\lambda_i^2}{  z+\lambda_i^2  r_1m_{1} +r_2 m_{2}  } =\cal E_1,\\ &\frac{1}{m_{2}} + 1 -\frac{\gamma_n}p\sum_{i=1}^p \frac{1 }{  z+\lambda_i^2 r_1m_{1} +  r_2m_{2}  }=\cal E_2,
\end{split}
\end{equation}
for some (deterministic or random) errors such that $  |\mathcal E_1| +  |\mathcal E_2| \le (\log n)^{-1/2}$. 
Then, we have 
 \begin{equation*}
  \left|m_1(z)-m_{1c}(z)\right| +  \left|m_2(z)-m_{2c}(z)\right|\le C_0\left( |\mathcal E_1| +  |\mathcal E_2|\right). %
\end{equation*}
\end{lemma}

\begin{proof}%
Under condition \eqref{prior12}, we can obtain equation \eqref{selfalter} approximately: %
\be\label{selfalter2}r_1 m_{1}=-(1-\gamma_n) - r_2m_{2} - z\left(  {m_{2}^{-1}}+1\right) + \wt{\cal E}_1(z),\quad g_z(m_{2}(z))=1+ \wt{\cal E}_2(z),\ee
where the errors satisfy that $|\wt{\cal E}_1(z)|+ |\wt{\cal E}_2(z)|\lesssim |\mathcal E_1| +  |\mathcal E_2|$. Then, we subtract equation \eqref{selfalter} from equation \eqref{selfalter2}, and consider the contraction principle for the function $\delta (z):= m_{2}(z) - m_{2c}(z)$.  The rest of the proof is exactly the same as the one for Lemma \ref{lem_mbehaviorw}, so we omit the details.
\end{proof}

\subsubsection{Multivariate Gaussian Case}\label{sec entry}

One main difficulty for the proof is that the entries of $Z^{(1)} U\Lambda$ and ${Z^{(2)}V}$ are not independent. However, if the entries of $Z^{(1)}$ and $Z^{(2)}$ are i.i.d.~Gaussian, then by the rotational invariance of the multivariate Gaussian distribution, we have that
\begin{equation*}%
Z^{(1)}  U\Lambda \stackrel{d}{=}  Z^{(1)} \Lambda, \quad  Z^{(2)}  V \stackrel{d}{=} Z^{(2)}  .\end{equation*}
In this case, the problem is reduced to proving the local laws for $G(z)$ with $U=\id_{p\times p}$ and $V=\id_{p\times p}$, such that the entries of $ Z^{(1)} \Lambda  $ and ${Z^{(2)}}$ are independent.
In this case, we use the standard resolvent method, as in e.g., \citet{isotropic}, to prove the following proposition. Note that if the entries of $ Z^{(1)}$ and $ Z^{(2)}$ are i.i.d.~Gaussian, then %
$ Z^{(1)}$ and $ Z^{(2)}$ have bounded support $Q=1$ by Remark \ref{rem_stoch_add}.

\begin{proposition}\label{prop_diagonal}
    Under the setting of Theorem \ref{LEM_SMALL}, assume further that the entries of $ Z^{(1)}$ and $ Z^{(2)}$ are i.i.d.~Gaussian random variables, and $U=V=\id_{p\times p}$. Then, the estimates \eqref{aver_in}, \eqref{aver_in1}, and \eqref{aniso_law} hold  with $Q= 1$.
\end{proposition}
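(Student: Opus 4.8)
The plan is to establish the anisotropic local law for $G(z)$ in the special case where $Z^{(1)},Z^{(2)}$ have i.i.d.\ Gaussian entries and $U=V=\id$, so that the rows of $Z^{(1)}\Lambda$ and the rows of $Z^{(2)}$ are genuinely independent (with diagonal covariance structure). This is the ``base case'' from which Theorem~\ref{LEM_SMALL} follows by the rotational-invariance reduction \eqref{eq in Gauss} together with the cutoff argument already set up. First I would set $W=\AF\AF^\top$ with $\AF$ as in \eqref{defn AF}, form the linearization $H$ in \eqref{linearize_block}, and work with $G(z)=(H-z\Pi_0)^{-1}$ where $\Pi_0=\operatorname{diag}(z\id_p,\id_n)$. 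The core of the argument is the \emph{self-consistent equation} for the diagonal entries of $G$: using the Schur complement formula for $G_{\fa\fa}$ (removing the row/column $\fa$) together with the large deviation bounds of Lemma~\ref{largedeviation} (here in their stronger, finite-all-moments form \eqref{eq largedev0}--\eqref{eq largedev3}, since Gaussians have $Q=1$), I would show that the averaged quantities $m,m_0,m_1,m_2$ defined in \eqref{defm} approximately satisfy the system \eqref{selfomega} (equivalently \eqref{selfomegaerror}) with error $\mathcal E_1,\mathcal E_2 \prec (np)^{-1/2}$.

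The technical engine is a standard \emph{resolvent/fluctuation-averaging} bootstrap. I would first prove a weak \emph{entrywise} local law: on a high-probability a priori event, $|G_{\fa\fb}(z)-\Gi_{\fa\fb}(z)| \prec \Psi$ for a control parameter $\Psi$ of size roughly $n^{-1/4}$, obtained by iterating the self-consistent equation from a crude bound. Here I must first establish a rough operator-norm bound like the one referenced as \eqref{priorim} (i.e.\ $\|G(z)\|=\OO(1)$ for $z\in\mathbf D$), which follows from Lemma~\ref{SxxSyy}/Corollary~\ref{fact_minv}: the eigenvalues of $(Z^{(1)})^\top Z^{(1)}$ and $(Z^{(2)})^\top Z^{(2)}$ concentrate near $(\sqrt{n_i}-\sqrt p)^2$, so $W(a)$ stays bounded away from $0$ on $\mathbf D$ (using $|z|\le(\log n)^{-1}$ and $\rho_2\ge 1+\tau$). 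Feeding this into the large-deviation estimates and the stability Lemma~\ref{lem_stabw} upgrades the weak bound; then a fluctuation-averaging step (bounding $p^{-1}\sum_i (G_{ii}-\Gi_{ii})$ and its $\lambda_i^2$-weighted version by a second moment / high-moment computation exploiting the independence of rows) improves the averaged error from $\Psi$ down to the optimal $(np)^{-1/2}$, giving \eqref{aver_in} and \eqref{aver_in1}. For the anisotropic law \eqref{aniso_law} I would expand $\mathbf u^\top (G-\Gi)\mathbf v$ using the resolvent identity and polynomialization in the entries of $Z^{(1)},Z^{(2)}$, reducing generic bilinear forms to the diagonal case via the now-established averaged control, again as in \citet{Anisotropic,yang2019spiked}.

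Throughout, uniformity in $z\in\mathbf D$ is obtained by a union bound over a fine $n^{-C}$-net of $\mathbf D$ combined with the deterministic Lipschitz bound $\|\partial_z G(z)\|\le \|G(z)\|^2\cdot\|\Pi_0'\|=\OO(1)$; uniformity of the final statements in $a\in[-1,1]$ comes from the same $\epsilon$-net argument used in Section~\ref{app_firstpf}, and the $|a|\ge 1$ case is handled by the conjugation trick in Remark~\ref{rem_ale1} (applying the result to $a^{-2}\Lambda^{-1}W(a)\Lambda^{-1}$). The main obstacle I anticipate is the \emph{fluctuation-averaging} step: because the covariance of the rows of $Z^{(1)}\Lambda$ is the non-trivial diagonal matrix $\Lambda^2$ (rather than identity), the self-consistent system is the genuinely two-component system \eqref{selfomega}, and one must track \emph{two} coupled averaged quantities $m_1,m_2$ (plus the $\lambda_i^2$-weighted trace $m_0$) simultaneously; getting the cancellations in the high-moment estimate to close with the optimal rate $(np)^{-1/2}$ for both components, and verifying the stability input \eqref{Stability1} has the right form near $z=0$ where the equation is only ``marginally'' stable, is where the real work lies. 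The reduction itself (Gaussian rotational invariance $\Rightarrow$ diagonal case) is immediate once Proposition~\ref{prop_diagonal} is in hand, so essentially all the difficulty is concentrated in this proposition's proof.
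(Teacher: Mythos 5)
Your proposal follows essentially the same route as the paper's proof: large-deviation estimates on $\mathcal Z$-variables and off-diagonal entries (Lemma~\ref{Z_lemma}), a derivation of the approximate two-component self-consistent system \eqref{selfomegaerror}, stability via Lemma~\ref{lem_stabw}, fluctuation averaging (Lemma~\ref{abstractdecoupling}) to sharpen the averaged error to $(np)^{-1/2}$, and polynomialization in the style of \citet{isotropic,Anisotropic} to pass from the entrywise to the anisotropic bound. The one place where the paper is simpler than what you describe: no intermediate bootstrap with a control parameter $\Psi\approx n^{-1/4}$ is needed. Because the domain $\mathbf D$ stays at a fixed distance below the spectrum of $W$ (thanks to $\rho_2\ge 1+\tau$ and Lemma~\ref{SxxSyy}), the deterministic a priori bound $\|G(z)\|=\OO(1)$ in \eqref{priorim}, together with the Gaussian large-deviation estimates, already yields the entrywise rate $n^{-1/2}$ in one shot; the ``marginal stability at $z=0$'' you worry about is sidestepped by first establishing \eqref{Xiz0} at $z=0$ directly from the spectral decomposition of $W$ (where all quantities are bounded away from zero) and then propagating by the Lipschitz estimate \eqref{priordiff}. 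A small misstatement: the $|a|\ge 1$ case is not part of this proposition; Theorem~\ref{LEM_SMALL} and Proposition~\ref{prop_diagonal} are stated for $a\in[-1,1]$ only, and the conjugation trick of Remark~\ref{rem_ale1} is applied downstream in the proofs of Theorem~\ref{thm_main_RMT} and Theorem~\ref{prop_main_RMT}.
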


The proof of Proposition \ref{prop_diagonal} is based on the following entrywise local law. %
\begin{lemma}[Entrywise local law]\label{prop_entry}
Under the setting of Proposition \ref{prop_diagonal}, %
the averaged local laws \eqref{aver_in} and \eqref{aver_in1} and the following entrywise local law hold %
with $Q= 1$: %
\begin{equation}\label{entry_diagonal}
\sup_{z\in \mathbf D} \max_{\fa,\fb\in \cal I}\left| G_{\fa\fb}(z)  - \Gi_{\fa\fb} (z) \right| \prec n^{-1/2}.
\end{equation} 
\end{lemma}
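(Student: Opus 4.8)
The plan is to run the standard resolvent (Schur-complement) argument for the linearized matrix $G(z)$ on the domain $\mathbf D$. Since $U=V=\id$, the off-diagonal block of $H$ is $\AF = n^{-1/2}[a\Lambda (Z^{(1)})^\top, (Z^{(2)})^\top]$, whose entries are \emph{independent}: $\AF_{i\mu}$ has variance $n^{-1}a^2\lambda_i^2$ for $\mu\in\cal I_1$, and $\AF_{i\nu}$ has variance $n^{-1}$ for $\nu\in\cal I_2$. As the entries of $Z^{(1)},Z^{(2)}$ are Gaussian they have bounded support $Q=1$ (Remark \ref{rem_stoch_add}), so one may work with the strong large-deviation bounds \eqref{eq largedev0}--\eqref{eq largedev3}. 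The first ingredient is an a priori bound: because $\rho_2\ge 1+\tau$ and $\rho_1/\rho_2\le\tau^{-1}$, Lemma \ref{SxxSyy} gives $\lambda_{\min}(W(a))\ge \lambda_{\min}\!\big(n^{-1}(Z^{(2)})^\top Z^{(2)}\big)\ge c$ w.o.p.\ for a constant $c>0$ (the $Z^{(1)}$-summand is positive semidefinite), and likewise for every relevant minor $W^{(\mathbb T)}(a)$. Hence $0$ sits at distance $\gtrsim 1$ from the spectrum of $W(a)$, so for $z\in\mathbf D$ one has $\|(W(a)-z)^{-1}\|\lesssim 1$, and the block representation \eqref{green2} together with $\|\AF\|^2=\|W(a)\|\lesssim 1$ yields $\|G(z)\|\lesssim 1$ and $\|G^{(\mathbb T)}(z)\|\lesssim 1$ w.o.p., uniformly on $\mathbf D$.

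Granted these bounds, I would first handle the off-diagonal entries: for $\fa\neq\fb$ the identity $G_{\fa\fb}=-G_{\fa\fa}\sum_{\mathfrak c\neq\fa}H_{\fa\mathfrak c}G^{(\fa)}_{\mathfrak c\fb}$ exhibits $G_{\fa\fb}$ as $-G_{\fa\fa}$ times a linear form in one independent row of $\AF$ (or of $\AF^\top$) with coefficients from $G^{(\fa)}$; by \eqref{eq largedev0}, $\sum_{\mathfrak c}|G^{(\fa)}_{\mathfrak c\fb}|^2\le\|G^{(\fa)}\|^2\lesssim 1$, and the $\OO(n^{-1/2})$ size of the entries of $\AF$, this gives $\Lambda_o:=\max_{\fa\neq\fb}|G_{\fa\fb}|\prec n^{-1/2}$ on $\mathbf D$. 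Next, the Schur complement for the diagonal entries, followed by \eqref{eq largedev2}--\eqref{eq largedev3} to replace each resulting quadratic form by its conditional expectation (the off-diagonal contribution being controlled by $\Lambda_o\prec n^{-1/2}$) and the standard minor-removal bound $|m_j^{(\fa)}-m_j|\prec n^{-1}$, would produce the approximate self-consistent system
\begin{align*}
\tfrac{1}{G_{ii}} &= -z - r_1 a^2\lambda_i^2 m_1 - r_2 m_2 + \OO_\prec(n^{-1/2}), \quad i\in\cal I_0,\\
\tfrac{1}{G_{\mu\mu}} &= -1 - \gamma_n m_0 + \OO_\prec(n^{-1/2}), \quad \mu\in\cal I_1,\\
\tfrac{1}{G_{\nu\nu}} &= -1 - \gamma_n m + \OO_\prec(n^{-1/2}), \quad \nu\in\cal I_2,
\end{align*}
with $m,m_0,m_1,m_2$ as in \eqref{defm} and $\gamma_n,r_1,r_2$ as in \eqref{ratios}. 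Averaging over the three blocks and eliminating $m,m_0$, the pair $(m_1,m_2)$ satisfies \eqref{selfomegaerror} with errors $\prec n^{-1/2}$. The a priori closeness $|m_1-m_{1c}(0)|+|m_2-m_{2c}(0)|\le c_0$ required by the stability Lemma \ref{lem_stabw} (and the lower bound $|G_{\fa\fa}|\gtrsim 1$ used in the minor removal) is supplied by the usual continuity argument: interpolate the spectral parameter along $z\mapsto z+\ii t$ from $t$ of order one, where $\|G\|,\|\Gi\|=\OO(1/t)$ trivialize the bound, down to $z$, staying at distance $\gtrsim 1$ from the spectrum of $W(a)$ throughout, and fill a polynomially fine grid using the deterministic $\OO(1)$-Lipschitz continuity of $G$ and of $\Gi$ (the latter via \eqref{Lipomega}). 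Lemma \ref{lem_stabw} then gives $|m_1-m_{1c}|+|m_2-m_{2c}|\prec n^{-1/2}$, hence $|m-m_c|+|m_0-m_{0c}|\prec n^{-1/2}$ on $\mathbf D$.

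Substituting $m_j=m_{jc}+\OO_\prec(n^{-1/2})$ back into the displayed equations, and using $|\Gi_{ii}^{-1}|=|\alpha_1 a^2\lambda_i^2+\alpha_2-z|\gtrsim 1$ and $|1+\gamma_n m_{0c}|=|m_{1c}|^{-1}\gtrsim 1$ (from \eqref{a23} together with the Lipschitz bound), would yield $\max_{\fa}|G_{\fa\fa}-\Gi_{\fa\fa}|\prec n^{-1/2}$; with $\Lambda_o\prec n^{-1/2}$ this is the entrywise law \eqref{entry_diagonal}. Finally, \eqref{aver_in}--\eqref{aver_in1} ask for one more factor $p^{-1/2}$ beyond the naive summation of the entrywise bound; this is recovered by the standard fluctuation-averaging estimate---bounding high moments of $p^{-1}\sum_{i\in\cal I_0}\Gi_{ii}^2\cal Z_i$, where $\cal Z_i$ denotes the centered quadratic-form error in the equation for $G_{ii}$, and exploiting the near-independence of the summands, exactly as in \citet{isotropic,Anisotropic}. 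I expect the two genuinely laborious points to be (i) the fluctuation-averaging bookkeeping needed for the sharp averaged rate $(np)^{-1/2}$, and (ii) carefully propagating the heteroscedastic block structure (the $a^2\lambda_i^2$ weights on the $\cal I_1$-columns) through the Schur-complement step; by contrast, the off-spectrum location of $\mathbf D$ makes the a priori bounds and the off-diagonal estimates essentially routine.
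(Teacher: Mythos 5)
Your overall architecture---a priori resolvent bounds from Lemma \ref{SxxSyy}, off-diagonal estimates via Lemma \ref{largedeviation}, Schur-complement reduction to the approximate self-consistent system \eqref{selfomegaerror}, stability via Lemma \ref{lem_stabw}, and fluctuation averaging to upgrade the entrywise to the averaged rate---is exactly the paper's proof. The one place you diverge, and where there is a real gap, is the mechanism you propose for the a priori closeness $|m_1(z)-m_{1c}(0)|+|m_2(z)-m_{2c}(0)|\le c_0$ that is needed both to feed Lemma \ref{lem_stabw} and to ensure $|G_{\fa\fa}|\gtrsim 1$ in the minor-removal step. You propose interpolating $z\mapsto z+\ii t$ up to $t$ of order one and bootstrapping back down. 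This does not close as written: Lemma \ref{lem_stabw} and the Lipschitz estimate \eqref{Lipomega} on $(m_{1c},m_{2c})$ are established only for $|z|\le c_0$, and for $t\sim 1$ the deterministic pair $(m_{1c}(z+\ii t),m_{2c}(z+\ii t))$ is not even defined by Lemma \ref{lem_mbehaviorw}. Worse, the hypothesis \eqref{prior12} of the stability lemma measures distance to the \emph{fixed} point $(m_{1c}(0),m_{2c}(0))$, and at $t\sim 1$ one has $m_j(z+\ii t)=\OO(1/t)$ while $|m_{jc}(0)|\sim 1$ by \eqref{a23}, so $|m_j(z+\ii t)-m_{jc}(0)|\sim 1$ is \emph{not} small---the large-$t$ endpoint does not actually trivialize the hypothesis you need. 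Repairing this route would require a separate large-$\eta$ analysis of the Dyson-type equation and a reformulated stability statement valid along the whole path; the paper provides neither.

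The paper instead seeds the argument directly at $z=0$, avoiding the interpolation. Since the spectrum of $W(a)$ lies in a fixed compact subinterval of $(0,\infty)$ with overwhelming probability (Lemma \ref{SxxSyy}), the spectral representation \eqref{spectral} makes $G_{ii}(0)>0$ for every $i$, hence $m(0)>0$ and $m_0(0)>0$, both of order one. This yields $|1+\gamma_n m(0)|\sim 1$ and $|1+\gamma_n m_0(0)|\sim 1$ as in \eqref{add_1+m} with no a priori closeness assumption, so the approximate self-consistent equations \eqref{selfcons_lemma222} hold unconditionally at $z=0$. Setting $y_j=-m_j(0)$, positivity, the constraint $r_1y_1+r_2y_2=1-\gamma_n+\OO_\prec(n^{-1/2})$, and the strict monotonicity of $f$ in \eqref{fa1} then force $|m_j(0)-m_{jc}(0)|\prec n^{-1/2}$, which is precisely the sought bound \eqref{Xiz0}; the deterministic Lipschitz estimate \eqref{priordiff} on $G$---valid because all of $\mathbf D$ stays at distance $\gtrsim 1$ from the spectrum---then extends it to every $z\in\mathbf D$. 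In short, the same off-spectrum geometry that you rightly observe makes the a priori resolvent bounds routine also makes the branch-selection at $z=0$ routine, and that is the move the paper makes in place of your large-$\eta$ bootstrap.
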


With Lemma \ref{prop_entry}, we can complete the proof of Proposition \ref{prop_diagonal}. %

\begin{proof}%
With estimate \eqref{entry_diagonal}, we can use the polynomial method in Section 5 of \citet{isotropic} to get the anisotropic local law \eqref{aniso_law} with $Q=1$. The proof is exactly the same, except for some minor differences in notations. Hence, we omit the details.
\end{proof}

The rest of this subsection is devoted to the proof of Lemma \ref{prop_entry}, where the resolvent $G$ in Definition \ref{defn_resolventH} becomes
 \begin{equation} \label{resolv Gauss1}
   G(z)= \left( {\begin{array}{*{20}c}
   { -z\id_{p\times p} } & n^{-1/2}\Lambda {Z^{(1)}}^\top & n^{-1/2} {Z^{(2)}}^\top  \\
   {n^{-1/2} Z^{(1)} \Lambda  } & {-\id_{n_1\times n_1}} & 0 \\
   {n^{-1/2} Z^{(2)}} & 0 & {-\id_{n_2\times n_2}}
   \end{array}} \right)^{-1}.
 \end{equation}
To deal with the matrix inverse, we introduce resolvent minors.
\begin{definition}[Resolvent minors]\label{defn_Minor}
    Given a $(p+n)\times (p+n)$ matrix $\cal A$ and $\mathfrak c \in \cal I$, the minor of $\cal A$ after removing the $\mathfrak c$-th row and column is a $(p + n - 1)\times (p + n - 1)$ matrix denoted by $\cal A^{(\mathfrak c)} := [\cal A_{\fa \mathfrak b }:  \fa , \mathfrak b\in \mathcal I\setminus \{\mathfrak c\}]$. 
    We keep the names of indices when defining $\cal A^{(\mathfrak c)}$, i.e., $\cal A^{(\mathfrak c)}_{\fa\fb }= \cal A_{\fa\fb }$ for $ {\fa, \fb } \ne \mathfrak c$. Correspondingly, we define the resolvent minor of $G(z)$ by
    \begin{align*}
		G^{(\mathfrak c)}(z) := \left[ \left( {\begin{array}{*{20}c}
		  { -z\id_{p\times p} } & n^{-1/2}\Lambda {Z^{(1)}}^\top & n^{-1/2} {Z^{(2)}}^\top  \\
      {n^{-1/2} Z^{(1)} \Lambda  } & {-\id_{n_1\times n_1}} & 0 \\
			{n^{-1/2} Z^{(2)}} & 0 & {-\id_{n_2\times n_2}}
    \end{array}} \right)^{(\mathfrak c)}\right]^{-1}.
    \end{align*}
    We define the partial traces $m^{(\mathfrak c)}(z)$, $m_0^{(\mathfrak c)}(z)$, $m_1^{(\mathfrak c)}(z)$ and $m_2^{(\mathfrak c)}(z)$ by replacing $G(z)$ with $G^{(\mathfrak c)}(z)$ in equation \eqref{defm}. For convenience, we will adopt the convention that $G^{(\mathfrak c)}_{\fa\fb} = 0$ if $\fa = \mathfrak c$ or $\mathfrak b = \mathfrak c$. %
\end{definition}

The following resolvent identities are important tools for our proof. All of them can be proved directly using Schur's complement formula, cf. Lemma 4.4 of \citet{Anisotropic}. Recall that the matrix $\AF$ is defined in equation \eqref{defn AF}.
We do not assume $U$ and $V$ are identity matrices for Lemma \ref{lemm_resolvent} and Lemma \ref{lemm apri} below.
\begin{lemma}\label{lemm_resolvent}
We have the following resolvent identities.
\begin{itemize}
	\item[(i)] For $i\in \mathcal I_0$ and $\mu\in \mathcal I_1\cup \cal I_2$, we have
		\begin{equation}
			\frac{1}{G_{ii}} =  - z - \left( {\AF G^{\left( i \right)} \AF^\top} \right)_{ii} ,\quad  \frac{1}{{G_{\mu \mu } }} =  - 1  - \left( {\AF^\top  G^{\left( \mu  \right)} \AF} \right)_{\mu \mu }.\label{resolvent2}
		\end{equation}
	\item[(ii)] For $i\in \mathcal I_0$, $\mu \in \mathcal I_1\cup \cal I_2$, $\fa\in \cal I\setminus \{i\}$, and $\fb\in \cal I\setminus \{ \mu\}$, we have
		\begin{equation}
			G_{i\fa}   = -G_{ii}  \left( \AF G^{\left( {i} \right)} \right)_{i\fa},\quad  G_{\mu \fb }  = - G_{\mu \mu }  \left( \AF^\top  G^{\left( {\mu } \right)}  \right)_{\mu \fb }. \label{resolvent3}
		\end{equation}
 \item[(iii)] For $\mathfrak c \in \mathcal I$ and $\fa,\fb \in \mathcal I \setminus \{\mathfrak c\}$, we have
		\begin{equation}
			G_{\fa\fb}^{\left( \mathfrak c \right)}  = G_{\fa\fb}  - \frac{G_{\fa\mathfrak c} G_{\mathfrak c\fb}}{G_{\mathfrak c\mathfrak c}}.
			\label{resolvent8}
		\end{equation}
\end{itemize}
\end{lemma}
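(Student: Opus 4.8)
The statement to be proven is Lemma~\ref{lemm_resolvent}, the collection of resolvent identities for $G(z)$ (and its limit $\Gi$) built from the linearization $H$ in Definition~\ref{defn_resolventH}. The plan is to derive all three identities \eqref{resolvent2}, \eqref{resolvent3}, \eqref{resolvent8} as consequences of the Schur complement formula for block inverses, exactly as in Lemma~4.4 of \citet{Anisotropic}.

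\textbf{Setup.} Write $\HF(z) := H - \mathrm{diag}(z\id_{p\times p}, \id_{n_1\times n_1}, \id_{n_2\times n_2})$, so that $G(z) = \HF(z)^{-1}$. Recalling the block form \eqref{linearize_block}, for any index $\mathfrak c \in \cal I$ we split $\cal I = \{\mathfrak c\} \cup (\cal I \setminus \{\mathfrak c\})$ and write $\HF$ in the corresponding $2\times 2$ block form
\[
\HF \;=\; \begin{pmatrix} \HF_{\mathfrak c\mathfrak c} & \HF_{\mathfrak c \bar{\mathfrak c}} \\ \HF_{\bar{\mathfrak c}\mathfrak c} & \HF_{\bar{\mathfrak c}\bar{\mathfrak c}} \end{pmatrix},
\]
where $\bar{\mathfrak c} := \cal I \setminus \{\mathfrak c\}$. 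Note that $\HF_{\bar{\mathfrak c}\bar{\mathfrak c}}$ is exactly the matrix whose inverse is $G^{(\mathfrak c)}$ (up to the diagonal shift), and that the off-diagonal blocks of $H$ are determined by $\AF$ as in \eqref{defn AF}: row $i\in\cal I_0$ of $H$ restricted to columns in $\cal I_1\cup\cal I_2$ equals $\AF_{i\cdot}$, and similarly row $\mu\in\cal I_1\cup\cal I_2$ restricted to columns in $\cal I_0$ equals $(\AF^\top)_{\mu\cdot}$, while $H_{ij}=0=H_{\mu\nu}$ for $i,j\in\cal I_0$ and for $\mu,\nu$ in the same block among $\cal I_1,\cal I_2$.

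\textbf{Proof of the three identities.} First, Schur's formula gives $(\HF^{-1})_{\mathfrak c\mathfrak c} = \big(\HF_{\mathfrak c\mathfrak c} - \HF_{\mathfrak c\bar{\mathfrak c}}(\HF_{\bar{\mathfrak c}\bar{\mathfrak c}})^{-1}\HF_{\bar{\mathfrak c}\mathfrak c}\big)^{-1}$. For $i\in\cal I_0$ we have $\HF_{ii} = -z$ and $\HF_{i\bar i}(\HF_{\bar i\bar i})^{-1}\HF_{\bar i i} = (\AF G^{(i)}\AF^\top)_{ii}$ because the $i$-th row of $H$ has nonzero entries only in the $\cal I_1\cup\cal I_2$ columns, where it coincides with $\AF_{i\cdot}$, and $(\HF_{\bar i\bar i})^{-1}$ agrees with $G^{(i)}$ on those indices; this yields the first equation of \eqref{resolvent2}, and the case $\mu\in\cal I_1\cup\cal I_2$ (where $\HF_{\mu\mu}=-1$) is identical with $\AF$ and $\AF^\top$ interchanged. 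Second, for \eqref{resolvent3}, Schur's formula for the off-diagonal entry $(\HF^{-1})_{i\fa}$ with $\fa\ne i$ gives $G_{i\fa} = -(\HF^{-1})_{ii}\,\big(\HF_{i\bar i}(\HF_{\bar i\bar i})^{-1}\big)_{\fa} = -G_{ii}(\AF G^{(i)})_{i\fa}$, using the same identification of the $i$-th row of $H$ with $\AF$; the $G_{\mu\fb}$ case is symmetric. Third, \eqref{resolvent8} is the standard minor-expansion identity for resolvents: applying the $2\times 2$ block inverse formula with the splitting $\{\mathfrak c\}\cup\bar{\mathfrak c}$ and reading off the $(\fa,\fb)$ entry of $(\HF_{\bar{\mathfrak c}\bar{\mathfrak c}})^{-1}$ in terms of the full inverse gives $G^{(\mathfrak c)}_{\fa\fb} = G_{\fa\fb} - G_{\fa\mathfrak c}G_{\mathfrak c\mathfrak c}^{-1}G_{\mathfrak c\fb}$ for $\fa,\fb\in\bar{\mathfrak c}$.

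\textbf{Main obstacle.} There is no serious obstacle here: every identity is a direct instance of Schur complement/block-inverse algebra, and the only thing requiring a little care is bookkeeping—keeping the index labels fixed when passing to minors (the convention in Definition~\ref{defn_Minor} that $\cal A^{(\mathfrak c)}_{\fa\fb}=\cal A_{\fa\fb}$), and correctly identifying which block of $H$ equals $\AF$ versus $\AF^\top$ depending on whether the removed index lies in $\cal I_0$ or in $\cal I_1\cup\cal I_2$. Since these are exactly the manipulations carried out in Lemma~4.4 of \citet{Anisotropic}, the cleanest exposition is to state the $2\times 2$ block-inverse formula once and then specialize it three times; I would present that and refer to \citet{Anisotropic} for the fully spelled-out computation.
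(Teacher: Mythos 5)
Your proof is correct and follows exactly the approach the paper invokes: the paper simply states that all three identities are direct consequences of the Schur complement formula and defers to Lemma~4.4 of \citet{Anisotropic}, and you have fleshed out precisely that computation (one application of the $2\times 2$ block-inverse formula, specialized three times, with the bookkeeping that off-diagonal rows/columns of $\HF$ within $\cal I_0$ or within $\cal I_1\cup\cal I_2$ vanish so only the $\AF$ blocks contribute).
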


We claim the following a priori estimate on the resolvent $G(z)$ for $z\in \mathbf D$.

\begin{lemma}\label{lemm apri}
Under the setting of Theorem \ref{LEM_SMALL}, there exists a constant $C>0$ such that with overwhelming probability the following estimates hold uniformly in $z,z'\in \mathbf D$:
\begin{align}\label{priorim}
    &\|G(z)\| \le C, \text{ and } \\
    &\left\|G  (z) - G(z')\right\| \le C|z-z'|. \label{priordiff} 
\end{align}
\end{lemma}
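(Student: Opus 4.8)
The plan is to prove \eqref{priorim} first, then deduce \eqref{priordiff} as an easy consequence. For \eqref{priorim}, recall from \eqref{green2} that the upper-left block of $G(z)$ is $(W(a)-z)^{-1}$, the lower-right block is $z(\AF^\top\AF-z)^{-1}$, and the off-diagonal blocks are $(W-z)^{-1}\AF$ and $\AF^\top(W-z)^{-1}$. Since $\|G(z)\|$ is comparable to the maximum of the operator norms of these four blocks, it suffices to (i) show $\|(W(a)-z)^{-1}\|\le C$, i.e. that $W(a)$ has no eigenvalue within distance $c$ of $0$, and (ii) bound $\|\AF\|$, which together control the off-diagonal and lower-right blocks. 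For (ii), note $\AF^\top\AF$ has the same nonzero eigenvalues as $\AF\AF^\top=W(a)$, plus possibly a zero eigenvalue, so $\|\AF\|^2=\|W(a)\|$, and it is enough to bound $\|W(a)\|$ from above; and for $z(\AF^\top\AF-z)^{-1}$ we use $\|z(\AF^\top\AF-z)^{-1}\|\le 1+|z|\,\|(\AF^\top\AF-z)^{-1}\|$ together with the fact that $\AF^\top\AF$ either has the same spectral gap near $0$ as $W(a)$ (on the complement of its kernel) or is handled by the $z$ prefactor on the kernel.

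So everything reduces to controlling the spectrum of $W(a)=n^{-1}\big(a^2\Lambda U^\top (Z^{(1)})^\top Z^{(1)} U\Lambda + V^\top (Z^{(2)})^\top Z^{(2)} V\big)$. For the upper bound $\|W(a)\|\le C$, I would use $\|\Lambda\|=\lambda_1\le\tau^{-1}$ from \eqref{assm3_app}, $|a|\le 1$, orthogonality of $U,V$, and the standard operator-norm bound for sample covariance matrices with bounded support: Corollary \ref{fact_minv} (equivalently Lemma \ref{SxxSyy}) gives $\|(Z^{(k)})^\top Z^{(k)}\|\le (\sqrt{n_k}+\sqrt p)^2 + \OO_\prec(\sqrt{n}\,Q)\lesssim n$ with overwhelming probability, since $Q=n^{2/\varphi}=o(\sqrt n)$ because $\varphi>4$. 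Hence $\|W(a)\|\lesssim 1$ w.o.p. For the lower bound — the key point — I would lower-bound $\lambda_{\min}(W(a))$. Since $W(a)$ is a sum of two positive semidefinite matrices, $\lambda_{\min}(W(a))\ge n^{-1}\lambda_{\min}\big(V^\top (Z^{(2)})^\top Z^{(2)} V\big)=n^{-1}\lambda_p\big((Z^{(2)})^\top Z^{(2)}\big)$ because $V$ is orthogonal. Now Corollary \ref{fact_minv} gives $\lambda_p\big((Z^{(2)})^\top Z^{(2)}\big)\ge(\sqrt{n_2}-\sqrt p)^2-\OO_\prec(\sqrt n\,Q)$ w.o.p., and since $n_2/p\ge 1+\tau$ we have $(\sqrt{n_2}-\sqrt p)^2\gtrsim n$, while $\sqrt n\,Q=o(n)$; therefore $\lambda_{\min}(W(a))\ge c>0$ w.o.p. for some constant $c$ depending only on $\tau$. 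Combined with $|z|\le(\log n)^{-1}$ on $\mathbf D$ (so $|z|<c/2$ for large $n$), this yields $\|(W(a)-z)^{-1}\|\le 2/c$ w.o.p., and assembling the four blocks gives \eqref{priorim} with overwhelming probability, uniformly in $z\in\mathbf D$ (the uniformity is immediate since the bound is deterministic once the spectral estimates for the $z$-independent matrices $W(a)$ and $\AF$ hold).

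For \eqref{priordiff}, use the resolvent identity $G(z)-G(z')=(z-z')\,G(z)\,E\,G(z')$, where $E=\operatorname{diag}(\id_{p\times p},0_{n\times n})$ is the derivative of the shift matrix in Definition \ref{defn_resolventH} with respect to $z$; then $\|G(z)-G(z')\|\le|z-z'|\,\|G(z)\|\,\|G(z')\|\le C|z-z'|$ on the overwhelming-probability event from \eqref{priorim}, with $C$ the square of the constant there. The main obstacle is really just step (i): getting a hard, constant lower bound on $\lambda_{\min}(W(a))$ that is uniform in $a\in[-1,1]$ — and this is handled cleanly by discarding the first (possibly ill-conditioned or, when $n_1<p$, rank-deficient) summand and invoking the sharp smallest-singular-value estimate for the well-conditioned second sample covariance matrix via Corollary \ref{fact_minv}. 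Everything else is bookkeeping with Schur complements and the submultiplicativity of the operator norm.
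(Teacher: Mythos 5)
Your proof is correct and matches the paper's approach: both hinge on the lower bound $\lambda_{\min}(W(a)) \ge n^{-1}\lambda_p\big((Z^{(2)})^\top Z^{(2)}\big) \ge c_\tau > 0$ w.o.p., obtained by discarding the first positive semidefinite summand, though the paper packages the block estimates via the singular value decomposition of $\AF$ in \eqref{SVDW}--\eqref{spectral2} while you argue block-by-block from \eqref{green2} and then get the Lipschitz bound from the resolvent identity $G(z)-G(z')=(z-z')G(z)\,\mathrm{diag}(\id_{p},0)\,G(z')$. One small correction: since Theorem \ref{LEM_SMALL} already assumes the bounded support condition and Lemma \ref{lemm apri} asserts an overwhelming-probability conclusion, you should invoke Lemma \ref{SxxSyy} directly rather than Corollary \ref{fact_minv}, which only delivers a high-probability event.
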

\begin{proof}
 Our proof is a simple application of the spectral decomposition of $G$. %
 Let
\begin{equation*}%
\AF= \sum_{k = 1}^{p} {\sqrt {\mu_k} \xi_k } \zeta _{k}^\top ,\quad \mu_1\ge \mu_2 \ge \cdots \ge \mu_{p} \ge 0 =\mu_{p+1} = \ldots = \mu_{n},\end{equation*}
be a singular value decomposition of $\AF$, where
$\{\xi_{k}\}_{k=1}^{p}$ are the left-singular vectors and $\{\zeta_{k}\}_{k=1}^{n}$ are the right-singular vectors.
Then, using equation \eqref{green2}, we get that for $i,j\in \mathcal I_1$ and $\mu,\nu\in \mathcal I_1\cup \cal I_2$,
\be\label{spectral}
G_{ij} = \sum_{k = 1}^{p} \frac{\xi_k(i) \xi_k^\top(j)}{\mu_k-z}, \ \quad \ G_{\mu\nu} =
z\sum_{k = 1}^{n} \frac{\zeta_k(\mu) \zeta_k^\top(\nu)}{\mu_k-z} , \ee
\be \label{spectral2}
G_{i\mu} = G_{\mu i} = \sum_{k = 1}^{p} \frac{\sqrt{\mu_k}\xi_k(i) \zeta_k^\top(\mu)}{\mu_k-z}.
\ee
Using the fact $n^{-1}V^\top {Z^{(2)}}^\top Z^{(2)}V \preceq FF^\top$ and Lemma \ref{SxxSyy}, we obtain that 
$$\mu_p \ge \lambda_p\left(n^{-1}{Z^{(2)}}^\top Z^{(2)}\right) \ge c_\tau \quad \text{w.o.p.}, $$ 
for a constant $c_\tau>0$ depending only on $\tau$.  This further implies that
$$ \inf_{z\in \mathbf D}\min_{1\le k \le p}|\mu_k-z| \ge c_\tau - (\log n)^{-1}.$$
Combining this bound with estimates \eqref{spectral} and \eqref{spectral2}, we can conclude that estimates \eqref{priorim} and \eqref{priordiff} are both true.
\end{proof}

Now, we are ready to give the proof of Lemma \ref{prop_entry}.

\begin{proof}%
In the setting of Lemma \ref{prop_entry}, we can write equation \eqref{defn AF} as %
\begin{equation*}%
\AF \stackrel{d}{=} n^{-1/2}[\Lambda {Z^{(1)}}^{\top}, {Z^{(2)}}^\top].\end{equation*}
We use the resolvent in equation \eqref{resolv Gauss1} throughout the following proof. 
Our proof is divided into four steps. 

\medskip
\textit{Step 1: Large deviation estimates.} 
In this step, we prove some large deviation estimates on the off-diagonal $G$ entries and the following $\cal Z$ variables. In analogy to Section 3 of \citet{EKYY1} and Section 5 of \citet{Anisotropic}, we introduce the $\cal Z$ variables  
\begin{equation*}
 \cal  Z_{{\fa}} :=(1-\mathbb E_{{\fa}})\big[\big(G_{{\fa}{\fa}}\big)^{-1}\big], \quad \fa\in \cal I, %
\end{equation*}
where $\mathbb E_{{\fa}}[\cdot]:=\mathbb E[\cdot\mid H^{({\fa})}]$ denotes the partial expectation over the entries in the ${\fa}$-th row and column of $H$. Using equation (\ref{resolvent2}), we get that for $i \in \cal I_0$, 
\begin{align}
\cal Z_i  = &\  \frac{\lambda_i^2}{n} \sum_{\mu ,\nu\in \mathcal I_1}  G^{(i)}_{\mu\nu} \left(\delta_{\mu\nu} - Z^{(1)}_{\mu i}Z^{(1)}_{\nu i}\right)+\frac1n \sum_{\mu ,\nu\in \mathcal I_2}  G^{(i)}_{\mu\nu} \left( \delta_{\mu\nu} - Z^{(2)}_{\mu i}Z^{(2)}_{\nu i}\right) \nonumber\\
&\ - 2 \frac{\lambda_i}{n} \sum_{\mu\in \cal I_1,\nu\in \mathcal I_2} Z^{(1)}_{\mu i}Z^{(2)}_{\nu i}G^{\left( i \right)}_{\mu\nu},  \label{Zi}
\end{align}
and for $\mu\in \cal I_1$ and $\nu\in \cal I_2$, 
\begin{align}
&\cal  Z_\mu= \frac{1}{n} \sum_{i,j \in \mathcal I_0}  {\lambda_i \lambda_j}G^{(\mu)}_{ij} \left(\delta_{ij} - Z^{(1)}_{\mu i}Z^{(1)}_{\mu j}\right), \ \ \cal Z_\nu = \frac{1}{n} \sum_{i,j \in \mathcal I_0} G^{(\nu)}_{ij} \left( \delta_{ij} - Z^{(2)}_{\nu i}Z^{(2)}_{\nu j}\right).\label{Zmu} 
\end{align}
Moreover, we introduce the random error
\begin{equation}  \label{eqn_randomerror}
 \Lambda _o : = %
 \max_{{\fa} \ne {\fb} } \left|  G_{{\fa}{\fa}}^{-1}G_{{\fa}{\fb}}   \right| ,
\end{equation}
which controls the size of off-diagonal entries. %

\begin{lemma}\label{Z_lemma}
Under the assumptions of Proposition \ref{prop_diagonal}, the following estimate holds uniformly in $z\in \mathbf D$:
\begin{align}
\Lambda_o + \max_{{\fa}\in \cal I} |\cal Z_{{\fa}}|  \prec n^{-1/2}. \label{Zestimate1}
\end{align}
\end{lemma}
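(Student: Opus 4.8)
The plan is a standard ``a priori bound $+$ large deviation'' argument, carried out within the Gaussian sub-case of Proposition~\ref{prop_diagonal}, so that all entries of $Z^{(1)},Z^{(2)}$ have finite moments of every order and we may use the stronger estimates \eqref{eq largedev0}--\eqref{eq largedev3} of Lemma~\ref{largedeviation} with $Q=1$. First I would record the a priori bounds. Lemma~\ref{lemm apri} gives $\|G(z)\|\le C$ with overwhelming probability, uniformly in $z\in\mathbf D$; the same proof (spectral decomposition together with Lemma~\ref{SxxSyy}) applies verbatim to every resolvent minor, since deleting one row or one column of the block matrix $H$ only removes a single feature coordinate or a single sample, and hence keeps $\lambda_p\big(n^{-1}(Z^{(2)})^\top Z^{(2)}\big)\ge c_\tau$ (by Cauchy interlacing for a deleted feature, and by Lemma~\ref{SxxSyy} with sample size $n_2-1>(1+\tau/2)p$ for a deleted task-two sample). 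Thus $\|G^{(\mathfrak c)}(z)\|\le C$ w.o.p.\ for every $\mathfrak c\in\mathcal I$, so in particular $|G^{(\mathfrak c)}_{\fa\fb}|\le C$ for all indices, and for any fixed index $\mu$ the partial row sums of squares obey $\sum_{\nu\in\mathcal I_j}|G^{(\mathfrak c)}_{\mu\nu}|^2\le\|G^{(\mathfrak c)}\|^2\le C^2$ (and similarly for the $\mathcal I_0$ block). Summing such a bound over the $\OO(n)$ choices of the other index shows that all the Hilbert--Schmidt norms that will appear below are of order $\OO(\sqrt n)$ --- crucially, it is the operator-norm a priori bound that is doing the work here, since the Ward identity is useless on $\mathbf D$ (the imaginary part of $z$ is not bounded below).

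Next I would bound $\max_{\fa\in\mathcal I}|\mathcal Z_{\fa}|$. Fix $\fa$ and condition on $H^{(\fa)}$: then $G^{(\fa)}$ is deterministic, while the entries $\{Z^{(1)}_{\mu i},Z^{(2)}_{\nu i}\}$ (resp.\ $\{Z^{(1)}_{\mu i}\}_i$ or $\{Z^{(2)}_{\nu i}\}_i$) appearing in the formulas \eqref{Zi}--\eqref{Zmu} are still i.i.d.\ centered Gaussians. Each $\mathcal Z_{\fa}$ is then $n^{-1}$ times a centered quadratic or bilinear form in these entries, with coefficient matrix built from blocks of $G^{(\fa)}$ (for $\mathcal Z_i$ the relevant matrices are $a^2\lambda_i^2\, G^{(i)}|_{\mathcal I_1\times\mathcal I_1}$, $G^{(i)}|_{\mathcal I_2\times\mathcal I_2}$ and $a\lambda_i\, G^{(i)}|_{\mathcal I_1\times\mathcal I_2}$, where the boundedness of $\lambda_i$ from \eqref{assm3_app} is used; for $\mathcal Z_\mu$ it is $a^2\Lambda G^{(\mu)}\Lambda|_{\mathcal I_0\times\mathcal I_0}$, etc.). Applying \eqref{eq largedev1}, \eqref{eq largedev2}, \eqref{eq largedev3} with $Q=1$, each such form is $\prec$ the Hilbert--Schmidt norm of its coefficient matrix, which by the previous paragraph is $\OO(\sqrt n)$; hence $|\mathcal Z_{\fa}|\prec n^{-1}\cdot\sqrt n=n^{-1/2}$.

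For $\Lambda_o$ I would use the resolvent identities \eqref{resolvent3}: for $i\in\mathcal I_0$ and $\fb\neq i$ one has $G_{ii}^{-1}G_{i\fb}=-\big(\AF G^{(i)}\big)_{i\fb}$, which is a linear form in the Gaussian entries of the $i$-th columns of $Z^{(1)},Z^{(2)}$ (independent of $G^{(i)}$) with coefficients of size $\OO(n^{-1/2})\cdot|G^{(i)}_{\cdot\fb}|$; by \eqref{eq largedev0} together with $\sum_\mu|G^{(i)}_{\mu\fb}|^2\le\|G^{(i)}\|^2\le C^2$ this is $\prec n^{-1/2}$. The analogous identity $G_{\mu\mu}^{-1}G_{\mu\fb}=-\big(\AF^\top G^{(\mu)}\big)_{\mu\fb}$ handles $\mu\in\mathcal I_1\cup\mathcal I_2$. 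Since $\Lambda_o=\max_{\fa\neq\fb}|G_{\fa\fa}^{-1}G_{\fa\fb}|$ is precisely the maximum of exactly these quantities, we obtain $\Lambda_o\prec n^{-1/2}$ directly, with no lower bound on the diagonal entries needed at this stage. Finally, since $\mathcal I$ has only $p+n=\OO(n)$ elements, a union bound over $\fa$ keeps all the above estimates $\prec n^{-1/2}$, and the uniformity over $z\in\mathbf D$ is obtained in the usual way, covering $\mathbf D$ by $n^{\OO(1)}$ points and using the Lipschitz bound \eqref{priordiff} between grid points; this yields \eqref{Zestimate1}.

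The main obstacle is not any single step but keeping the bookkeeping of the a priori bounds airtight: one must establish $\|G^{(\mathfrak c)}(z)\|\le C$ uniformly in $\mathfrak c\in\mathcal I$ and $z\in\mathbf D$ (the minor extension of Lemma~\ref{lemm apri}, relying on the strict inequality $n_2>(1+\tau)p$ surviving a row/column deletion), and one must verify that the coefficient matrices entering each $\mathcal Z_{\fa}$ have Hilbert--Schmidt norm of order exactly $\sqrt n$ --- not larger --- so that the $n^{-1}$ prefactor produces the claimed $n^{-1/2}$ rate. Everything else is a routine application of Lemma~\ref{largedeviation} in the all-moment (Gaussian) regime.
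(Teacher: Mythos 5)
Your proposal is correct and follows essentially the same route as the paper: apply Lemma~\ref{lemm apri} to the minor resolvents to get the a priori operator-norm bound, use that bound to control Hilbert--Schmidt norms of the coefficient blocks (the estimate \eqref{GG*}), then invoke the all-moment large-deviation estimates of Lemma~\ref{largedeviation} with $Q=1$ for the quadratic forms in \eqref{Zi}--\eqref{Zmu} and the linear forms coming from \eqref{resolvent3}. The only differences are that you spell out details the paper elides --- the survival of Lemma~\ref{lemm apri}'s hypotheses under row/column deletion, and the net-plus-Lipschitz argument for uniformity in $z$ --- and you cite \eqref{eq largedev0} for the $\Lambda_o$ bound where the paper's text cites \eqref{eq largedev1}; your citation is the right one for a linear form, so this merely corrects a slip in the paper.
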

\begin{proof}
Note that for ${\fa}\in \cal I$, $H^{({\fa})}$ and $G^{({\fa})}$ also satisfy the assumptions of Lemma \ref{lemm apri}. Hence, the estimates \eqref{priorim} and \eqref{priordiff} hold for $G^{({\fa})}$ with overwhelming probability. For any $i\in \cal I_0$, since $G^{(i)}$ is independent of the entries in the $i$-th row and column of $H$, we can apply equations \eqref{eq largedev1}, \eqref{eq largedev2}, and \eqref{eq largedev3} to equation \eqref{Zi} to obtain that%
\begin{equation}\nonumber%
\begin{split}
\left| \cal Z_{i}\right|&\lesssim \frac{1}{n} \sum_{k=1}^2 \Big|\sum_{\mu ,\nu\in \mathcal I_k}  G^{(i)}_{\mu\nu} \left(\delta_{\mu\nu} - Z^{(k)}_{\mu i}Z^{(k)}_{\nu i}\right)\Big|+ \frac{1}{n} \Big|\sum_{\mu\in \cal I_1,\nu\in \mathcal I_2} Z^{(1)}_{\mu i}Z^{(2)}_{\nu i}G^{\left( i \right)}_{\mu\nu}\Big| \\
&\prec  \frac{1}{n} \Big( \sum_{\mu,\nu \in \cal I_1\cup \cal I_2 }  {| G_{\mu\nu}^{(i)}|^2 } \Big)^{1/2} \prec n^{-1/2} .
\end{split}
\end{equation}
Here, in the last step we used equation \eqref{priorim} to get that for $\mu\in \cal I_1\cup \cal I_2$,
\be\label{GG*}\sum_{\nu \in \cal I_1\cup \cal I_2 }  | G_{\mu\nu}^{(i)} |^2\le \sum_{{\fa} \in \cal I } | G_{\mu{\fa}}^{(i)} |^2 =\left(G^{(i)}{G^{(i)}}^* \right)_{\mu\mu} =\OO(1),\ee
 with overwhelming probability, where ${G^{(i)}}^*$ denotes the conjugate transpose of $G^{(i)}$. Similarly, applying equations \eqref{eq largedev1}, \eqref{eq largedev2}, and \eqref{eq largedev3} to $\cal Z_{\mu}$ and $\cal Z_\nu$ in equation \eqref{Zmu} and using equation \eqref{priorim}, we can obtain the same bound. This gives that $\max_{{\fa}\in \cal I} |\cal Z_{{\fa}}|  \prec n^{-1/2}$.

Next, we prove the off-diagonal estimate on $\Lambda_o$. For $i\in \mathcal I_1$ and ${\fa}\in \cal I\setminus \{i\}$, using equations \eqref{resolvent3}, \eqref{eq largedev1} and \eqref{priorim}, we can obtain that 
\begin{align*}
  \left|G_{ii}^{-1}G_{i{\fa}}\right| &\lesssim {n^{-1/2}}\Big|  \sum_{\mu \in \cal I_1} Z^{(1)}_{\mu i} G^{(i)}_{\mu \fa} \Big| + n^{-1/2}\Big|\sum_{\mu \in \cal I_2} Z^{(2)}_{\mu i} G^{(i)}_{\mu \fa}\Big| \\
& \prec  n^{-1/2}\Big( \sum_{\mu \in \cal I_1\cup \cal I_2}  {| G_{\mu {\fa}}^{(i)} |^2 } \Big)^{1/2} \prec n^{-1/2}. 
 \end{align*}
Using exactly the same argument, we can get a similar estimate on $\left|G_{\mu\mu}^{-1} G_{\mu{\fb}} \right|$ for $\mu \in \mathcal I_1\cup \cal I_2$ and ${\fb}\in \cal I\setminus \{ \mu\}$.  
This gives that $\Lambda_o\prec n^{-1/2}$. 
\end{proof}

Note that combining equation \eqref{Zestimate1} with the %
bound in equation \eqref{priorim}, we immediately conclude equation \eqref{entry_diagonal} for off-diagonal resolvent entries $G_{\fa\fb}$ with ${\fa}\ne {\fb}$.

\medskip
\textit{Step 2: Self-consistent equations.} 
In this step, we show that $(m_1(z),m_2(z))$ satisfies the system of approximate self-consistent equations in equation \eqref{selfomegaerror} for some small errors $\cal E_{1}$ and $\cal E_{2}$. Later in Step 3, we will apply Lemma \ref{lem_stabw} to show that $(m_1(z),m_2(z))$ is close to $(m_{1c}(z),m_{2c}(z))$.  

By equation \eqref{Lipomega}, the following estimates hold for $z\in \mathbf D$:
$$|m_{1c}(z)-m_{1c}(0)| \lesssim (\log n)^{-1},\quad |m_{2c}(z)-m_{2c}(0)| \lesssim (\log n)^{-1}.$$
Combining them with the estimates in \eqref{a23}, %
we obtain that uniformly in $z\in \mathbf D$, %
\be\label{Gsim1}
 |m_{1c}(z)| \sim |m_{2c}(z)| \sim 1, \quad |z+\lambda_i^2  r_1m_{1c}(z) + r_2 m_{2c}(z)|\sim 1. \ee
 Moreover, using equation \eqref{selfomega}, we get that uniformly in $z\in \mathbf D$,
 \be\label{Gsim0}
\left|1 + \gamma_n m_c (z)\right| = |m_{2c}^{-1}(z)| \sim 1, \quad  |1+\gamma_n m_{0c}(z)| = |m_{1c}^{-1}(z)| \sim 1  ,
\ee
 where we introduce the notations
 \begin{align}
 m_c(z)&:=-\frac1p\sum_{i=1}^p\frac{1}{z+\lambda_i^2 r_1m_{1c}(z) +r_2m_{2c}(z)},\label{defn mc1c}\\
  m_{0c}(z)&:=-\frac1p\sum_{i=1}^p\frac{ \lambda_i^2}{z+\lambda_i^2 r_1m_{1c}(z) +r_2m_{2c}(z)}. \label{defn mc0c}
 \end{align}
Later, we will see that $m_c(z)$ and $m_{0c}(z)$ are respectively the asymptotic limits of $m(z)$ and $m_0(z)$ defined in equation \eqref{defm}. Applying equation \eqref{Gsim1} to  equation \eqref{defn_piw} and using equation \eqref{M1M2a1a2}, we get that
\be\nonumber
|\Gi_{{\fa}{\fa}}(z)| \sim 1 \ \ \text{uniformly in } z\in \mathbf D \ \text{ and } \ {\fa}\in \cal I.
\ee 

Now, we define the following $z$-dependent event 
\be\label{Xiz}\Xi(z):=\left\{ |m_{1}(z)-m_{1c}(0)| + |m_{2}(z)-m_{2c}(0)| \le (\log n)^{-1/2}\right\}.\ee
With equation \eqref{Gsim1}, we get that on $\Xi(z)$,
\be\label{Gsim012} |m_{1}(z)| \sim |m_{2}(z)| \sim 1, \quad |z +  \lambda_i^2  r_1m_1(z)+r_2m_2(z)|\sim1.\ee 
We claim the following key lemma, which shows that $(m_1(z),m_2(z))$ satisfies equation \eqref{selfomegaerror} on $\Xi(z)$ for some small (random) errors $\cal E_{1}$ and $\cal E_{2}$.

\begin{lemma}\label{lemm_selfcons_weak}
Under the setting of Lemma \ref{prop_entry}, the following estimates hold uniformly in $z \in \mathbf D$: 
\begin{equation} \label{selfcons_lemm}
\mathbf 1(\Xi) \left|\frac{1}{m_{1}} + 1 -\frac{\gamma_n}p\sum_{i=1}^p \frac{ \lambda_i^2}{  z+\lambda_i^2 r_1m_{1} + r_2m_{2}  } \right|\prec n^{-1}+n^{-\frac{1}2}\Theta +|[\cal Z]_0|+  |[\cal Z]_1|,\end{equation}
and
\begin{equation} \label{selfcons_lemm2}
\mathbf 1(\Xi) \left|\frac{1}{m_{2}} + 1 -\frac{\gamma_n}p\sum_{i=1}^p \frac{1 }{  z+\lambda_i^2  r_1m_{1} +  r_2m_{2}  }\right|\prec n^{-1}+n^{-\frac12}\Theta +|[\cal Z]|+  |[\cal Z]_2|,
\end{equation}
where we introduce the notations
\be\label{defn_Theta}\Theta:= |m_1(z)-m_{1c}(z)|+|m_2(z)-m_{2c}(z)|,\ee
and
\begin{equation} \label{def_Zaver}
	\begin{split}
 &[\cal Z]:= \frac1p\sum_{i\in \cal I_0} \frac{\cal Z_i}{(z+ \lambda_i^2  r_1m_{1c}+ r_2m_{2c})^2} ,\\ 
 &[\cal Z]_0:= \frac1p\sum_{i\in \cal I_0} \frac{\lambda_i^2 \cal Z_i}{(z+ \lambda_i^2 r_1m_{1c}+ r_2m_{2c})^2} ,\\
 &[\cal Z]_1:=\frac{1}{n_1}\sum_{\mu\in \mathcal I_1}\cal Z_\mu, \quad [\cal Z]_2:=\frac{1}{n_2}\sum_{\nu \in \mathcal I_2}  \cal Z_\nu.
 \end{split}
\end{equation}
\end{lemma}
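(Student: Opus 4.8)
The plan is to run the standard resolvent argument: feed the large deviation bounds of Lemma \ref{Z_lemma}, i.e.\ $\Lambda_o + \max_{\fa\in\cal I}|\cal Z_{\fa}| \prec n^{-1/2}$, together with the a priori bound $\|G(z)\|\le C$ of Lemma \ref{lemm apri}, into the Schur complement identities of Lemma \ref{lemm_resolvent}(i), so as to turn the exact identities for the diagonal resolvent entries into the approximate self-consistent system. First I would take partial expectations in \eqref{resolvent2}. In the Gaussian model \eqref{diagW} one has $\mathbb E_i[(\AF G^{(i)}\AF^\top)_{ii}] = a^2\lambda_i^2 r_1 m_1^{(i)} + r_2 m_2^{(i)}$, $\mathbb E_\mu[(\AF^\top G^{(\mu)}\AF)_{\mu\mu}] = \gamma_n m_0^{(\mu)}$ and $\mathbb E_\nu[(\AF^\top G^{(\nu)}\AF)_{\nu\nu}] = \gamma_n m^{(\nu)}$, so that $G_{ii}^{-1} = -z - a^2\lambda_i^2 r_1 m_1^{(i)} - r_2 m_2^{(i)} + \cal Z_i$ for $i\in\cal I_0$, while $G_{\mu\mu}^{-1} = -1-\gamma_n m_0^{(\mu)} + \cal Z_\mu$ and $G_{\nu\nu}^{-1} = -1-\gamma_n m^{(\nu)} + \cal Z_\nu$ for $\mu\in\cal I_1$, $\nu\in\cal I_2$. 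Next I would replace each minor partial trace by the full one using \eqref{resolvent8}: on $\{\|G\|\le C\}$, $|m_0^{(\mu)}-m_0| \le p^{-1}\sum_{i\in\cal I_0}a^2\lambda_i^2\,|G_{i\mu}G_{\mu i}/G_{\mu\mu}| \lesssim \Lambda_o^2 \prec n^{-1}$, and the same holds for $m^{(\nu)}$, $m_1^{(i)}$, $m_2^{(i)}$. On the event $\Xi$ of \eqref{Xiz}, the estimates \eqref{Gsim1}--\eqref{Gsim0} together with the Lipschitz bound \eqref{Lipomega} keep every relevant quantity within $o(1)$ of its value at $z=0$, so that $|m_1|\sim|m_2|\sim1$, $|z+a^2\lambda_i^2 r_1 m_1 + r_2 m_2|\sim1$ and $|1+\gamma_n m_0|\sim|1+\gamma_n m|\sim1$; inverting the three identities then gives $G_{ii} = -(z+a^2\lambda_i^2 r_1 m_1 + r_2 m_2 - \cal Z_i + \OO_\prec(n^{-1}))^{-1}$, $G_{\mu\mu} = -(1+\gamma_n m_0 - \cal Z_\mu + \OO_\prec(n^{-1}))^{-1}$, and similarly for $G_{\nu\nu}$, all with denominators of size $\sim1$.

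The second half of the plan is to average and Taylor expand. Averaging $G_{\mu\mu}$ over $\cal I_1$ and expanding the denominator to second order in the $\OO_\prec(n^{-1/2})$ quantity $\cal Z_\mu+\OO_\prec(n^{-1})$ yields $m_1 = -(1+\gamma_n m_0)^{-1} + \OO_\prec(|[\cal Z]_1| + n^{-1})$ --- the linear term contributes $[\cal Z]_1 = n_1^{-1}\sum_{\mu\in\cal I_1}\cal Z_\mu$ up to the bounded prefactor $(1+\gamma_n m_0)^{-2}$, and the quadratic term is $\OO_\prec(n^{-1})$ --- and hence $m_1^{-1}+1 = -\gamma_n m_0 + \OO_\prec(|[\cal Z]_1|+n^{-1})$. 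Then I would expand $m_0 = p^{-1}\sum_{i\in\cal I_0}a^2\lambda_i^2 G_{ii}$: writing $\widetilde d_i := z+a^2\lambda_i^2 r_1 m_1 + r_2 m_2$ for the random denominator and $d_i := z+a^2\lambda_i^2 r_1 m_{1c} + r_2 m_{2c}$ for the deterministic one, $G_{ii} = -\widetilde d_i^{-1} + \cal Z_i\widetilde d_i^{-2} + \OO_\prec(n^{-1})$, and the key bookkeeping point is that swapping $\widetilde d_i$ for $d_i$ inside the $\cal Z_i$-term costs only $|\cal Z_i|\,|\widetilde d_i^{-2}-d_i^{-2}| \prec n^{-1/2}\Theta$, since $\widetilde d_i - d_i = a^2\lambda_i^2 r_1(m_1-m_{1c}) + r_2(m_2-m_{2c}) = \OO(\Theta)$ with $\Theta$ as in \eqref{defn_Theta}. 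This gives $m_0 = -p^{-1}\sum_i a^2\lambda_i^2/\widetilde d_i + [\cal Z]_0 + \OO_\prec(n^{-1/2}\Theta+n^{-1})$, and combining with $m_1^{-1}+1 = -\gamma_n m_0 + \OO_\prec(|[\cal Z]_1|+n^{-1})$ (and $\gamma_n = \OO(1)$) produces \eqref{selfcons_lemm}. The bound \eqref{selfcons_lemm2} is proved identically, averaging $G_{\nu\nu}$ over $\cal I_2$ and expanding $m = p^{-1}\sum_i G_{ii}$, which brings in $[\cal Z]_2$ and $[\cal Z]$ with the deterministic denominators; throughout, the signs attached to the $\cal Z$-terms are immaterial since the conclusion is an absolute-value bound.

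I expect the main obstacle to be bookkeeping rather than anything conceptual. Two points need care: (i) justifying that the minor-to-full trace replacement via \eqref{resolvent8} contributes only $\OO_\prec(\Lambda_o^2)=\OO_\prec(n^{-1})$, which relies on the deterministic a priori bound $\|G\|\le C$ (Lemma \ref{lemm apri}, valid w.o.p.\ uniformly on $\mathbf D$) together with $\Lambda_o\prec n^{-1/2}$ (Lemma \ref{Z_lemma}); and (ii) extracting the averages $[\cal Z],[\cal Z]_0,[\cal Z]_1,[\cal Z]_2$ with the \emph{deterministic} denominators $d_i$ rather than the random $\widetilde d_i$ --- this is exactly what forces the $n^{-1/2}\Theta$ error into the bound, and it is essential so that the subsequent fluctuation-averaging step (which needs deterministic denominators to exploit the near-independence of the $\cal Z_\fa$'s) can be applied. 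Finally, one must verify that all relevant denominators have size $\sim1$ uniformly on $\Xi$ and $z\in\mathbf D$, which uses only \eqref{Gsim1}--\eqref{Gsim0} and \eqref{Lipomega}; no new ideas beyond routine resolvent calculus are required.
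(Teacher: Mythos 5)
Your proposal is correct and follows essentially the same route as the paper's proof: inverting the Schur complement identities from Lemma \ref{lemm_resolvent}(i), bounding the minor-to-full trace replacement by $\Lambda_o^2\prec n^{-1}$ via \eqref{resolvent8}, Taylor-expanding the diagonal entries to second order in the $\cal Z$-variables, and averaging --- with the key bookkeeping observation that replacing the random denominators $\widetilde d_i$ by the deterministic $d_i$ in the $\cal Z_i$-terms costs $\OO_\prec(n^{-1/2}\Theta)$. The one point to make precise is that $|1+\gamma_n m_0|\sim|1+\gamma_n m|\sim 1$ on $\Xi$ does not follow directly from \eqref{Gsim0} (which concerns the deterministic $m_c,m_{0c}$); you first need the $G_{ii}$ expansion to show $m_0\approx m_{0c}$ and $m\approx m_c$ on $\Xi$ (as the paper does in \eqref{Gsim11}--\eqref{Gsim2}), so the $G_{ii}$ step should logically precede the $G_{\mu\mu}$, $G_{\nu\nu}$ expansions --- but this is a reordering, not a gap, and your plan otherwise captures all the relevant estimates.
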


\begin{proof}
 Using equations (\ref{resolvent2}), (\ref{Zi}) and (\ref{Zmu}), we can write that 
\begin{align}
\frac{1}{{G_{ii} }}&=  - z - \frac{\lambda_i^2}{n} \sum_{\mu\in \mathcal I_1} G^{\left( i \right)}_{\mu\mu}- \frac{1}{n} \sum_{\mu\in \mathcal I_2} G^{\left( i \right)}_{\mu\mu} + \cal Z_i =  - z - \lambda_i^2  r_1m_1 - r_2m_2 + \cal E_i, \forall i \in \cal I_0, \label{self_Gii}\\
\frac{1}{{G_{\mu\mu} }}&=  - 1 - \frac{1}{n} \sum_{i\in \mathcal I_0}\lambda_i^2 G^{\left(\mu\right)}_{ii}+ \cal Z_{\mu} =  - 1  -  \gamma_n m_0 + \cal E_\mu,  \forall \mu \in \cal I_1,  \label{self_Gmu1}\\
\frac{1}{{G_{\nu\nu} }}&=  - 1 - \frac{1}{n} \sum_{i\in \mathcal I_0} G^{\left(\nu\right)}_{ii}+\cal Z_{\nu} =   - 1 - \gamma_n m + \cal E_\nu, \forall\nu \in \cal I_2, \label{self_Gmu2}
\end{align}
where we denote (recall the notations in equation \eqref{defm} and Definition \ref{defn_Minor}) %
\begin{align*}
    \cal E_i &:= \cal Z_i + \lambda_i^2 r_1(m_1 - m_1^{(i)}) + r_2(m_2-m_2^{(i)}) , \text{ and } \\
    \cal E_\mu &:= \cal Z_{\mu} +    \gamma_n(m_0-m_0^{(\mu)}),\quad \cal E_\nu:=\cal Z_{\nu} +\gamma_n(m-m^{(\nu)}).
\end{align*}
Using equations \eqref{resolvent8}, \eqref{eqn_randomerror} and \eqref{Zestimate1}, we can bound that  
\begin{equation}\label{m1i}
  |m_1 - m_1^{(i)}| \le   \frac1{n_1}\sum_{\mu\in \mathcal I_1}  \left|\frac{G_{\mu i} G_{i\mu}}{G_{ii}}\right| \le |\Lambda_o|^2|G_{ii}| \prec n^{-1}.
\end{equation}
Similarly, we also have that 
\be \label{higherr}  
 |m_2 - m_2^{(i)}| \prec n^{-1} , \quad |m_0 - m_0^{(\mu)}| \prec n^{-1},\quad  |m-m^{(\nu)}| \prec n^{-1},  \ee
for $i\in \cal I_0$, $\mu \in \cal I_{1}$ and $\nu\in \cal I_2$. Combining the above estimates with equation \eqref{Zestimate1}, we obtain that %
\begin{equation}\label{erri}
\max_{i\in \cal I_0} |\cal E_i | +\max_{\mu \in \cal I_1\cup \cal I_2} |\cal E_\mu|  \prec n^{-1/2}.
\end{equation}

From equation \eqref{self_Gii}, we obtain that on $\Xi$,
\begin{align}
  G_{ii}&= -\frac{1}{z + \lambda_i^2 r_1 m_1+r_2 m_2} - \frac{\cal E_i}{(z + \lambda_i^2  r_1 m_1+r_2 m_2)^2} +\OO_\prec\left(n^{-1}\right) \nonumber\\
&= -\frac{1}{z + \lambda_i^2  r_1 m_1+r_2 m_2} - \frac{\cal Z_i}{(z + \lambda_i^2 r_1 m_{1c}+r_2 m_{2c})^2} +\OO_\prec\left(n^{-1} + n^{-\frac12}\Theta\right) ,\label{Gmumu0}
\end{align}
where in the first step we use estimates \eqref{erri} and \eqref{Gsim012} on $\Xi$, and in the second step we use estimates \eqref{defn_Theta}, \eqref{m1i} and \eqref{erri}. Plugging equation \eqref{Gmumu0} into the definitions of $m$ and $m_0$ in equation \eqref{defm} and using equation \eqref{def_Zaver}, we get that on $\Xi$,
\begin{align}
 m&= -\frac1p\sum_{i\in \cal I_0}\frac{1}{z + \lambda_i^2 r_1 m_1+r_2 m_2} -[\cal Z] +\OO_\prec\left(n^{-1} + n^{-1/2}\Theta\right), \label{Gmumu} \\
 m_0&= -\frac1p\sum_{i\in \cal I_0}\frac{\lambda_i^2}{z + \lambda_i^2 r_1 m_1+r_2 m_2}  -[\cal Z]_0 +\OO_\prec\left(n^{-1} + n^{-1/2}\Theta\right). \label{Gmumu2}
\end{align}
Comparing these two equations with estimates \eqref{defn mc1c} and \eqref{defn mc0c}, we obtain that  
\begin{equation*}%
 |m(z)-m_c(z)| +|m_0(z)-m_{0c}(z)|  \lesssim (\log n)^{-1/2}, \quad \text{w.o.p.}\ \ \text{ on } \ \  \Xi. 
\end{equation*}
Together with equation \eqref{Gsim0}, this estimate implies that %
\be\label{Gsim2}
|1+\gamma_nm (z)|\sim 1, \quad |1+\gamma_nm_0(z)|\sim 1, \quad \text{w.o.p.}\ \ \text{ on } \ \  \Xi. 
\ee
With a similar argument as above, from equations \eqref{self_Gmu1} and \eqref{self_Gmu2} we obtain that on $\Xi$,%
\begin{align}
&G_{\mu\mu}=-\frac{1}{1 + \gamma_nm_0}  - \frac{\cal Z_\mu}{(1 + \gamma_nm_{0})^2}+\OO_\prec\left(n^{-1} + n^{-\frac 1 2}\Theta\right)   ,\quad \mu \in \cal I_1,\label{Gii0} \\
& G_{\nu\nu}=-\frac{1}{1 + \gamma_nm} - \frac{\cal Z_\nu}{(1 + \gamma_nm)^2}+\OO_\prec\left(n^{-1} + n^{-\frac 1 2}\Theta\right) ,\quad \nu \in \cal I_2,\label{Gii00} 
\end{align}
 where we used estimates \eqref{defn_Theta}, \eqref{higherr}, \eqref{erri} and  \eqref{Gsim2} in the derivation.
Taking averages of equations \eqref{Gii0} and \eqref{Gii00} over $\mu\in \cal I_1$ and $\nu\in \cal I_2$, we get that  on $\Xi$, %
\begin{align}
& m_1=-\frac{1}{1 + \gamma_n m_0} - \frac{[\cal Z]_1}{(1 + \gamma_nm_0)^2}+\OO_\prec\left(n^{-1} + n^{-\frac 1 2}\Theta\right)    ,\label{Gii000}\\
&m_2=-\frac{1}{1 +\gamma_n  m}- \frac{[\cal Z]_2}{(1 + \gamma_nm)^2}+\OO_\prec\left(n^{-1} + n^{-\frac 1 2}\Theta\right) ,\label{Gii001}
\end{align}
which further implies that  on $\Xi$,
\begin{align}
 &\frac{1}{m_1} + 1 + \gamma_nm_0  \prec  n^{-1} + n^{-\frac 1 2}\Theta + |[\cal Z]_1|,\label{Gii}\\
 & \frac{1}{m_2} + 1 + \gamma_nm \prec   n^{-1} + n^{-\frac 1 2}\Theta + |[\cal Z]_2|.\label{Gii111}
\end{align}
Finally, plugging estimates \eqref{Gmumu} and \eqref{Gmumu2} into equations \eqref{Gii} and \eqref{Gii111}, we get that equations \eqref{selfcons_lemm} and \eqref{selfcons_lemm2} are true.
\end{proof}

\medskip
\textit{Step 3: Entrywise local law.} In this step, we show that the event $\Xi(z)$ in \eqref{Xiz} actually holds with overwhelming probability for all $z\in \mathbf D$. Once we have proved this fact, applying Lemma \ref{lem_stabw} to equations \eqref{selfcons_lemm} and  \eqref{selfcons_lemm2} gives that $(m_1(z),m_2(z))$ is close to $(m_{1c}(z),m_{2c}(z))$ up to an error of order $\OO_\prec(n^{-1/2})$, with which we can conclude the entrywise local law \eqref{entry_diagonal}. 

First, we claim that it suffices to show 
\be\label{Xiz0}
|m_{1}(0)-m_{1c}(0)| + |m_{2}(0)-m_{2c}(0)| \prec n^{-\frac 1 2}.
\ee
In fact, by equation \eqref{priordiff} we have that uniformly in $z\in \mathbf D$,
$$  |m_{1}(z)-m_{1}(0)|+ |m_{2}(z)-m_{2}(0)|\lesssim (\log n)^{-1}  \quad \text{w.o.p.}$$
Thus, if equation \eqref{Xiz0} holds, using the triangle inequality, we obtain from the above estimate that 
\be\label{roughh2} 
\sup_{z\in \mathbf D} \left( |m_{1}(z)-m_{1c}(0)|+ |m_{2}(z)-m_{2c}(0)|\right) \lesssim (\log n)^{-1} \quad \text{w.o.p.}\ee
The equation \eqref{roughh2} shows that the event $\Xi(z)$ holds w.o.p., %
and it also verifies the condition \eqref{prior12} of Lemma \ref{lem_stabw}. Now, applying Lemma \ref{lem_stabw} to equations \eqref{selfcons_lemm} and \eqref{selfcons_lemm2}, we obtain
\begin{align*}
\Theta(z)&=|m_1(z)-m_{1c}(z)|+|m_2(z)-m_{2c}(z)| \\
&\prec n^{-1}+n^{-\frac12}\Theta(z) +|[\cal Z]|+  |[\cal Z]_0| +|[\cal Z]_1|+  |[\cal Z]_2| ,
\end{align*}
which implies that 
\be\label{Xizz}
\Theta(z) \prec n^{-1} +|[\cal Z]|+  |[\cal Z]_0| +|[\cal Z]_1|+  |[\cal Z]_2| \prec n^{-\frac 1 2},
\ee
uniformly in all $z\in \mathbf D$, where we used equation \eqref{Zestimate1} in the second step. On the other hand, with equations \eqref{Gii0} to \eqref{Gii001}, we obtain that 
$$\max_{\mu\in \cal I_1} |G_{\mu\mu}(z)-m_{1}(z)|+ \max_{\nu\in \cal I_2} |G_{\nu\nu}(z)-m_{2}(z)|\prec n^{-\frac 1 2}.
$$
Combining this estimate with equation \eqref{Xizz}, we get that
\be\label{Xizz2} \max_{\mu\in \cal I_1} |G_{\mu\mu}(z)-m_{1c}(z)|+ \max_{\nu\in \cal I_2} |G_{\nu\nu}(z)-m_{2c}(z)|\prec n^{-\frac 1 2}.
\ee
Next, plugging equation \eqref{Xizz} into equation \eqref{Gmumu0} and recalling equations \eqref{defn_piw} and \eqref{M1M2a1a2}, we obtain that 
$$\max_{{i}\in \cal I_1}|G_{ii}(z)-\Gi_{ii}(z)| \prec n^{-\frac 1 2}. 
$$
Together with equation \eqref{Xizz2}, it gives the diagonal estimate
\be\label{diagest}
\max_{{\fa}\in \cal I}|G_{{\fa}{\fa}}(z)-\Pi_{{\fa}{\fa}}(z)| \prec n^{-\frac 1 2}. 
\ee
Combining equation \eqref{diagest} with the off-diagonal estimate on $\Lambda_o$ in equation \eqref{Zestimate1}, we conclude the entrywise local law \eqref{entry_diagonal}.

It remains to prove equation \eqref{Xiz0}. Using equations \eqref{priorim} and \eqref{spectral}, w.o.p.,
$$1\gtrsim m(0)=\frac1p\sum_{i\in \cal I_0}G_{ii}(0) = \frac1p\sum_{i\in \cal I_0}\sum_{k = 1}^{p} \frac{|\xi_k(i)|^2 }{\mu_k} \ge \mu_1^{-1} \gtrsim 1,$$
where we used Lemma \ref{SxxSyy} to bound  $\mu_1$. Similarly, we can get that $m_0(0)>0$ and $m_0(0)\sim 1$. Hence, we have the estimates
\be\label{add_1+m}1+\gamma_n m(0)\sim 1,\quad 1+\gamma_n m_0(0)\sim 1.\ee
Combining these estimates with equations \eqref{self_Gmu1}, \eqref{self_Gmu2} and  \eqref{erri}, we obtain that equations \eqref{Gii000} and \eqref{Gii001} hold at $z=0$ without requiring the event $\Xi(0)$ to hold. This further gives that w.o.p.,
$$  \left|\lambda_i^2 r_1m_1(0)+r_2m_2(0)\right|=\left|\frac{ \lambda_i^2 r_1}{ 1+\gamma_n m_0(0)} +\frac{r_2}{1+\gamma_n m(0)}+ \OO_\prec (n^{-1/2})\right| \sim 1 .$$
Then, combining this estimate with (\ref{self_Gii}) and \eqref{erri}, we obtain that equations \eqref{Gmumu} and \eqref{Gmumu2} also hold at $z=0$ without requiring the event $\Xi(0)$ to hold. Finally, plugging equations \eqref{Gmumu} and \eqref{Gmumu2} into equations \eqref{Gii} and \eqref{Gii111}, we conclude that equations \eqref{selfcons_lemm} and  \eqref{selfcons_lemm2}  hold at $z=0$, that is,
\begin{equation} \label{selfcons_lemma222}
\begin{split}
& \left|\frac{1}{m_{1}(0)} + 1 -\frac1n\sum_{i=1}^p \frac{\lambda_i^2}{ \lambda_i^2 r_1m_{1}(0) + r_2m_{2}(0)  } \right|\prec n^{-\frac 1 2},\\ 
&\left|\frac{1}{m_{2}(0)} + 1 -\frac1n\sum_{i=1}^p \frac{1 }{ \lambda_i^2 r_1m_{1}(0) + r_2 m_{2}(0)  }\right|\prec n^{-\frac 1 2}.
\end{split}
\end{equation}

Denoting $y_{1}=-m_{1}(0)$ and $y_{2}=-m_{2}(0)$, by estimates \eqref{Gii000} and \eqref{Gii001} at $z=0$, we have that
$$y_1= \frac{1}{1+\gamma_n m_{0}(0)} +\OO_\prec(n^{-1/2}),\quad y_2= \frac{1}{1+\gamma_n m(0)}+\OO_\prec(n^{-\frac 1 2}).$$ 
Together with equation \eqref{add_1+m}, it implies that 
\be\label{omega12} c \le y_1 \le 1, \quad  c\le y_2\le 1, \quad \text{w.o.p.},\ee
for a small constant $c>0$. 
Also with equation \eqref{selfcons_lemma222}, we can check that $(r_1y_1,r_2y_2)$ satisfies approximately the same system of equations as equations \eqref{eq_a12extra} and \eqref{fa1}:
\be\label{selfcons_lemm000}
r_1y_1+r_2 y_2 = 1-\gamma_n + \OO_\prec (n^{-1/2}),\quad  f(r_1y_1)=r_1 + \OO_\prec (n^{-1/2}).
\ee
The first equation of \eqref{selfcons_lemm000} and equation \eqref{omega12} together imply that $r_1 y_1 \in [0,1-\gamma_n]$ with overwhelming probability. We also know that $r_1y_1=\al_1$ is a solution to the second equation of \eqref{selfcons_lemm000}. Moreover, we have checked that the function $f$ is strictly increasing and has bounded derivative on $[0,r_1^{-1}(1-\gamma_n)]$. 

With basic calculus, we can get that 
$$|m_1(0)-m_{1c}(0)|=|y_1-r_1^{-1}\al_1|\prec n^{-1/2}.$$ 
Plugging it into the first equation of \eqref{selfcons_lemm000}, we get 
$$|m_2(0)-m_{2c}(0)|=|y_2-r_2^{-1}\al_2|\prec n^{-1/2}.$$ 
The above two estimates conclude that equation \eqref{Xiz0} is true.
This concludes the proof of Lemma \ref{entry_diagonal}.
\end{proof}

\medskip
\textit{Step 4: Averaged local law.} Finally, we prove the averaged local laws \eqref{aver_in} and \eqref{aver_in1}. For this purpose, we need to use the following {\it{fluctuation averaging}} lemma, whose proof is omitted as it is a standard result in the literature.

\begin{lemma}[Fluctuation averaging] \label{abstractdecoupling}
Under the setting of Proposition \ref{prop_diagonal}, suppose the entrywise local law \eqref{entry_diagonal} holds uniformly in $z\in \mathbf D$. Then, we have that 
\begin{equation}\label{flucaver_ZZ}
|[\cal Z]|+|[\cal Z]_0|+|[\cal Z]_1|+|[\cal Z]_2| \prec (np)^{-1/2},
\end{equation}
uniformly in $z\in \mathbf D$.
\end{lemma}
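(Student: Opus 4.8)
\noindent\textit{Proof proposal.}\ The plan is to establish all four estimates in \eqref{flucaver_ZZ} by the moment method, exploiting the extra cancellation produced by the projections $(1-\E_\fa)$ in the definition of the $\cal Z$ variables (the \emph{fluctuation averaging} mechanism of \citet{EKYY1}; see also \citet{isotropic,Anisotropic}). Since $[\cal Z],[\cal Z]_0,[\cal Z]_1,[\cal Z]_2$ have the same structure, I would treat $[\cal Z]$ in detail and indicate the (identical) changes for the others. First, by \eqref{Gsim1} the deterministic denominators $\left(z+a^2\lambda_i^2 r_1 m_{1c}+r_2 m_{2c}\right)^2$ are of order $1$ and bounded away from $0$, uniformly in $i\in\cal I_0$ and $z\in\mathbf D$, so it suffices to bound $p^{-1}\sum_{i\in\cal I_0} w_i\,\cal Z_i$ for an arbitrary family of deterministic weights with $|w_i|\lesssim 1$. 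The target is then the high-moment bound
\[
\E\Bigl| \frac1p\sum_{i\in\cal I_0} w_i\,\cal Z_i \Bigr|^{2k}\prec (np)^{-k}\qquad\text{for every fixed }k\in\mathbb N,
\]
which by Markov's inequality and the arbitrariness of $k$ gives $p^{-1}\sum_i w_i\cal Z_i\prec (np)^{-1/2}$ for each fixed $z$. A standard $p^{-C}$-net argument over $\mathbf D$, combined with the Lipschitz estimate \eqref{priordiff} and the deterministic continuity of the denominators, then upgrades this to the claimed uniformity in $z\in\mathbf D$.

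For the moment bound I would expand $\E\bigl|p^{-1}\sum_i w_i\cal Z_i\bigr|^{2k}$ as $p^{-2k}\sum_{i_1,\dots,i_{2k}\in\cal I_0}\bigl(\prod_{l=1}^{2k} w_{i_l}\bigr)\,\E\bigl[\cal Z_{i_1}\cdots\cal Z_{i_k}\,\overline{\cal Z_{i_{k+1}}}\cdots\overline{\cal Z_{i_{2k}}}\bigr]$, with half the weights understood as conjugated. Each $\cal Z_{i_l}$ is, by \eqref{Zi}, a sum of centered linear and quadratic forms in the $i_l$-th columns of $Z^{(1)}$ and $Z^{(2)}$ with coefficients given by entries of the minor resolvent $G^{(i_l)}$, and satisfies $\E_{i_l}\cal Z_{i_l}=0$ together with $|\cal Z_{i_l}|\prec n^{-1/2}$ by \eqref{Zestimate1}. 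The key step is the ``maximal expansion'': using the minor formula \eqref{resolvent8} I would iteratively replace every resolvent minor $G^{(i_m)}$ occurring inside a factor $\cal Z_{i_m}$ by $G$ plus lower-order corrections, until all factors are expressed through entries of $G$ and of resolvent minors with respect to index sets not containing any of the $i_l$. Then, whenever some index $i_l$ appears in only one factor of the resulting product, applying $\E_{i_l}$ annihilates that term; a counting over partitions of $\{1,\dots,2k\}$ shows that the surviving terms involve at most $k$ distinct indices and carry a factor $\E|\cal Z_i|^2\prec n^{-1}$ per pair, so their total contribution is $p^{-2k}\cdot p^{k}\cdot n^{-k}=(np)^{-k}$, while configurations with fewer distinct indices are smaller (using moment bounds for the quadratic forms). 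For $[\cal Z]_1$ and $[\cal Z]_2$ one runs the same argument with the $\cal Z_\mu,\cal Z_\nu$ of \eqref{Zmu}, which are quadratic forms in a single row of $Z^{(1)}$ or $Z^{(2)}$ with coefficients from $G^{(\mu)}$ resp.\ $G^{(\nu)}$, and with minors taken with respect to $\cal I_1\cup\cal I_2$-indices.

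I expect the combinatorial bookkeeping in the maximal expansion to be the main obstacle. Unlike the classical sample-covariance setting there are three index families $\cal I_0,\cal I_1,\cal I_2$, and the $\cal Z$ variables come in the two structurally different forms \eqref{Zi} and \eqref{Zmu}---the former containing a genuine cross term $\sum_{\mu\in\cal I_1,\nu\in\cal I_2}Z^{(1)}_{\mu i}Z^{(2)}_{\nu i}G^{(i)}_{\mu\nu}$ coupling the two sample covariance matrices---so the expansion and the partition counting must be carried out separately for the $\cal I_0$-averages $[\cal Z],[\cal Z]_0$ and for the $\cal I_1,\cal I_2$-averages $[\cal Z]_1,[\cal Z]_2$, tracking carefully which minors are taken with respect to which index set. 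Nonetheless, since the linearization $H$ in \eqref{linearize_block} is linear in $Z^{(1)},Z^{(2)}$ and all the required a priori inputs---the boundedness \eqref{priorim}, the off-diagonal and $\cal Z$ bounds \eqref{Zestimate1}, and the entrywise local law \eqref{entry_diagonal}---are already available, no genuinely new phenomenon arises, and the proof reduces to verifying the hypotheses of, and then invoking, the abstract fluctuation-averaging lemma of \citet{EKYY1} in the form used in \citet{Anisotropic}.
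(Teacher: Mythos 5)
Your proposal is correct and follows essentially the same route as the paper, which simply defers to the standard fluctuation-averaging argument (the paper cites Theorem 4.7 of \citet{Semicircle}; you invoke the equivalent machinery of \citet{EKYY1,Anisotropic}). The steps you sketch---reducing to a bounded-weight average, proving the high-moment bound $\E|p^{-1}\sum_i w_i\cal Z_i|^{2k}\prec(np)^{-k}$ via the maximal expansion using the minor identity \eqref{resolvent8}, exploiting $\E_{i_l}\cal Z_{i_l}=0$, and counting partitions---are precisely what that cited argument carries out, adapted here only through notational bookkeeping for the three index families $\cal I_0,\cal I_1,\cal I_2$.
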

\begin{proof}
The proof can be found in the works of \citet{EKYY1}.
\end{proof}

Now, plugging equations \eqref{Xizz} and \eqref{flucaver_ZZ} into equations \eqref{selfcons_lemm} and \eqref{selfcons_lemm2} and applying Lemma \ref{lem_stabw}, we get  
\be\label{eq_finerm1m2}
|m_1(z)-m_{1c}(z)|+|m_2(z)-m_{2c}(z)|  \prec (np)^{-1/2}.
\ee
Then, subtracting equation \eqref{defn mc1c} from equation \eqref{Gmumu}  and using equations \eqref{flucaver_ZZ} and \eqref{eq_finerm1m2}, we obtain that 
$$\left|m(z) - m_c(z)\right| \prec (np)^{-1/2}. $$
This is the averaged local law \eqref{aver_in} with $Q=1$. The proof of law \eqref{aver_in1} is similar.
\end{proof}

\subsubsection{Anisotropic Local Law}\label{sec_Gauss}

\begin{proof}
In this section, we prove the anisotropic local law in Theorem \ref{LEM_SMALL}  by extending from Gaussian random matrices to generally distributed random matrices.
With Proposition \ref{prop_diagonal}, it suffices to prove that for $Z^{(1)}$ and $Z^{(2)}$ satisfying the assumptions of Theorem \ref{LEM_SMALL}, we have
\begin{equation*}%
 \left|\mathbf u^\top  \left( G(Z,z) -  G(Z^{\text{Gauss}}, z)\right) \mathbf v \right| \prec n^{-1/2}Q,
\end{equation*}
for any deterministic unit vectors $\mathbf u,\mathbf v\in{\mathbb R}^{p+n}$ and $z\in \mathbf D$, where we abbreviate that 
$$Z:=\begin{pmatrix}Z^{(1)} \\ Z^{(2)}\end{pmatrix}\quad \text{and} \quad Z^{\text{Gauss}}:=\begin{pmatrix}(Z^{(1)})^{\text{Gauss}}\\ (Z^{(2)})^{\text{Gauss}}\end{pmatrix}.$$
Here, $(Z^{(1)})^{\text{Gauss}}$ and $(Z^{(2)})^{\text{Gauss}}$ are Gaussian random matrices satisfying the assumptions of Proposition \ref{prop_diagonal}.
We will prove the above statement using a continuous comparison argument developed in \citet{Anisotropic}. Since the arguments are similar to the ones in Sections 7 and 8 of \citet{Anisotropic}, we will not write down all the details. %

We define a continuous sequence of interpolating matrices between $Z^{\text{Gauss}}$ and $Z$. 

\begin{definition}[Interpolation]\label{defn_interp}
We denote $Z^0:=Z^{\text{Gauss}}$ and $Z^1:=Z$. Let $\rho_{\mu i}^0$ and $\rho_{\mu i}^1$ be respectively the laws of $Z_{\mu i}^0$ and $Z_{\mu i}^1$ for $i\in \cal I_0$ and $\mu \in \cal I_1\cup \cal I_2$. For any $\theta\in [0,1]$, we define the interpolated law
$\rho_{\mu i}^\theta := (1-\theta)\rho_{\mu i}^0+\theta\rho_{\mu i}^1.$ We work on the probability space consisting of triples $(Z^0,Z^\theta, Z^1)$ of independent $n\times p$ random matrices, where the matrix $Z^\theta=(Z_{\mu i}^\theta)$ has law
\begin{equation}\label{law_interpol}
\prod_{i\in \mathcal I_0}\prod_{\mu\in \mathcal I_1\cup \cal I_2} \rho_{\mu i}^\theta(\dd Z_{\mu i}^\theta).
\end{equation}
For $\lambda \in \mathbb R$, $i\in \mathcal I_0$ and $\mu\in \mathcal I_1\cup \cal I_2$, we define the matrix $Z_{(\mu i)}^{\theta,\lambda}$ through
\[\left(Z_{(\mu i)}^{\theta,\lambda}\right)_{\nu j}:=\begin{cases}Z_{\mu i}^{\theta}, &\text{ if } \ \ (j,\nu)\ne (i,\mu)\\ \lambda, &\text{ if }\ \ (j,\nu)=(i,\mu)\end{cases},\]
that is, it is obtained by replacing the $(\mu,i)$-th entry of $Z^\theta$ with $\lambda$. We abbreviate that
$$G^{\theta}(z):=G (Z^{\theta},z ),\quad G^{\theta, \lambda}_{(\mu i)}(z):=G (Z_{(\mu i)}^{\theta,\lambda},z ).$$
\end{definition}

We will prove the anisotropic local law \eqref{aniso_law} through interpolating matrices $Z^\theta$ between $Z^0$ and $Z^1$. We have seen that equation \eqref{aniso_law} holds for $G(Z^0,z)$ by Proposition \ref{prop_diagonal}. Using equation \eqref{law_interpol} and basic calculus, we readily get the following interpolation formula:
for any differentiable function $F:\mathbb R^{n \times p}\rightarrow \mathbb C$,
\begin{equation}\label{basic_interp}
    \frac{\dd}{\dd\theta}\mathbb E F(Z^\theta)=\sum_{i\in\mathcal I_0}\sum_{\mu\in\mathcal I_1\cup \cal I_2}\left[\mathbb E F\left(Z^{\theta,Z_{\mu i}^1}_{(\mu i)}\right)-\mathbb E F\left(Z^{\theta,Z_{\mu i}^0}_{(\mu i)}\right)\right],
\end{equation}
provided all expectations exist.
We will apply equation \eqref{basic_interp} to the function $F(Z):=F_{\mathbf u\mathbf v}^s(Z,z)$ for any fixed $s\in 2\N$, where %
\begin{equation}\label{eq_comp_F(X)}
    F_{\mathbf u\mathbf v}(Z,z):=\left|\mathbf u^\top \left[G (Z,z)-\Gi(z)\right]\mathbf v\right|.
\end{equation}
The main part of the proof is to show the following self-consistent estimate for the right-hand side of equation \eqref{basic_interp}: for any fixed $s\in 2\N$, constant $\e>0$, and $\theta\in[0,1]$,
 \begin{equation}\label{lemm_comp_4}
  \bigg|\sum_{i\in\mathcal I_0}\sum_{\mu\in\mathcal I_1\cup \cal I_2}\left[\mathbb EF_{\mathbf u\mathbf v}^s\left(Z^{\theta,Z_{\mu i}^1}_{(\mu i)},z\right)-\mathbb EF_{\mathbf u\mathbf v}^s\left(Z^{\theta,Z_{\mu i}^0}_{(\mu i)},z\right)\right]\bigg|\lesssim (n^\e q)^{s}+ \E F_{\mathbf u\mathbf v}^s (Z^{\theta},z ) ,
 \end{equation}
where we abbreviate that $q:=n^{-1/2}Q.$ 
If equation \eqref{lemm_comp_4} holds, then combining equation \eqref{basic_interp} with  Gr\"onwall's inequality, we obtain that for any fixed $s\in 2\N$ and constant $\e>0$, 
 \be\label{lemm_comp_4.4}\E\left|\bu^\top \left[G(Z^1,z)-\Pi(z)\right]\bv\right|^s  \lesssim (n^\e q)^{s}.\ee
Then, applying Markov's inequality and noticing that $\e$ can be arbitrarily small, we conclude that equation \eqref{aniso_law} is true.

In order to prove equation \eqref{lemm_comp_4}, we compare $Z^{\theta,Z_{\mu i}^0}_{(\mu i)}$ and $Z^{\theta,Z_{\mu i}^1}_{(\mu i)}$ via a common $Z^{\theta,0}_{(\mu i)}$, that is, we will prove that for any constant $c>0$, $\al\in \{0,1\}$, and $\theta\in[0,1]$,
\begin{align}\label{lemm_comp_5}
    \bigg|\sum_{i\in\mathcal I_0}\sum_{\mu\in\mathcal I_1\cup \cal I_2}\left[\mathbb EF_{\mathbf u\mathbf v}^s\left(Z^{\theta,Z_{\mu i}^\al}_{(\mu i)},z\right)-\mathbb EF_{\mathbf u\mathbf v}^s\left(Z^{\theta,0}_{(\mu i)},z\right)\right] - \cal A \bigg| 
    \lesssim (n^\e q)^{s}+\E F_{\mathbf u\mathbf v}^s(Z^{\theta},z),
\end{align}
for a quantity $\cal A$ that does not depend on $\al\in \{0,1\}$. Underlying the proof of equation \eqref{lemm_comp_5} is an expansion approach which we describe now. 
We define the $\mathcal I \times \mathcal I$ matrix $\Delta_{(\mu i)}^\lambda$ as
\begin{equation*}%
\Delta_{(\mu i)}^{\lambda} :=\lambda \left( {\begin{array}{*{20}c}
   { 0 } &   \mathbf u_i^{(\mu)} \mathbf e_\mu^\top     \\
   {\mathbf e_\mu {\mathbf u_i^{(\mu)}}^\top } & {0}  \\
   \end{array}} \right),%
\end{equation*}
where $\bu_i^{(\mu)}:=\Lambda U^\top \mathbf e_i$ if $\mu \in \cal I_1$, $\bu_i^{(\mu)}:=V^\top \mathbf e_i$ if $\mu \in \cal I_2$, and $\mathbf e_i$ and $\mathbf e_\mu$ denote the standard basis vectors along the $i$-th and $\mu$-th directions. Then, by the definition of $H$ in equation \eqref{linearize_block}, applying a Taylor expansion, we get that for any $\lambda,\lambda'\in \mathbb R$ and $K\in \mathbb N$,
\begin{equation}\label{eq_comp_expansion}
G_{(\mu i)}^{\theta,\lambda'} = G_{(\mu i)}^{\theta,\lambda}+n^{-\frac{k}2}\sum_{k=1}^{K}  G_{(\mu i)}^{\theta,\lambda}\left( \Delta_{(\mu i)}^{\lambda-\lambda'} G_{(\mu i)}^{\theta,\lambda}\right)^k+ n^{-\frac{K+1}2}G_{(\mu i)}^{\theta,\lambda'}\left(\Delta_{(\mu i)}^{\lambda-\lambda'} G_{(\mu i)}^{\theta,\lambda}\right)^{K+1}.
\end{equation}
Using this expansion and the bound in equation \eqref{priorim}, it is easy to prove the following estimate: if $y$ is a random variable satisfying $|y|\prec Q$ (note that the entries of any interpolating matrix $Z^\theta$ satisfy this bound), then {w.o.p.},
\begin{equation}\label{comp_eq_apriori}
    \max_{i\in\sI_0}\max_{\mu\in\sI_1 \cup \cal I_2}  G_{(\mu i)}^{\theta,y}=\OO (1). 
\end{equation}

For simplicity of notations, in the following proof, we denote
$$f_{(\mu i)}(\lambda):=F_{\mathbf u\mathbf v}^s\left(Z_{(\mu i)}^{\theta, \lambda}\right)=\left|\mathbf u^\top \left(G \left(Z_{(\mu i)}^{\theta, \lambda},z\right)-\Gi(z)\right)\mathbf v\right|^s.$$
We use \smash{$f_{(\mu i)}^{(r)}$} to denote the $r$-th order derivative of $f_{(\mu i)}$. By equation \eqref{comp_eq_apriori}, it is easy to see that for any fixed $r\in\bbN$, {$ f_{(\mu i)}^{(r)}(y) =\OO(1)$} w.o.p. for any random variable $y$ satisfying $|y|\prec Q$. 
Then, the Taylor expansion of $f_{(\mu i)}$ gives
\begin{equation*}%
f_{(\mu i)}(y)=\frac{1}{n^{r/2}}\sum_{r=0}^{s+4}\frac{y^r}{r!}f^{(r)}_{(\mu i)}(0)+\OO_\prec\left( q^{s+4}\right).
\end{equation*}
Therefore, we have that for $\al\in\{0,1\}$,
\begin{align}
&\mathbb EF_{\mathbf u\mathbf v}^s\left(Z^{\theta,Z_{\mu i}^\al}_{(\mu i)}\right)-\mathbb EF_{\mathbf u\mathbf v}^s\left(Z^{\theta,0}_{(\mu i)}\right)=\bbE\left[f_{(\mu i)}\left(Z_{\mu i}^\al\right)-f_{(\mu i)}(0)\right]\nonumber\\
=&\bbE f_{(\mu i)}(0)+\frac{1}{2n}\bbE f_{(\mu i)}^{(2)}(0)+\sum_{r=3}^{s+4}\frac{n^{-r/2}}{r!}\bbE f^{(r)}_{(\mu i)}(0)\cdot \bbE\left(Z_{\mu i}^\al\right)^r+\OO_\prec(q^{s+4}). \label{taylor1}
\end{align}
To illustrate the idea in a more concise way, we assume the extra condition 
\be\label{3moment}
\mathbb E (Z^1_{\mu i})^3=0, \quad 1\le \mu \le n,\ \  1\le i \le p.
\ee
Hence, the $r=3$ term in the Taylor expansion \eqref{taylor1} vanishes. However, this condition is not necessary as we will explain at the end of the proof.

Recall that the entries of $Z_{\mu i}^1$ have finite fourth moments by equation \eqref{conditionA2}. Together with the bounded support condition, it gives that
\begin{equation*}%
\left|\bbE\left(Z_{\mu i}^a\right)^r\right| \prec  Q^{r-4} , \quad r \ge 4.
\end{equation*}
We take
$$\cal A=\bbE f_{(\mu i)}(0)+\frac{1}{2n}\bbE f_{(\mu i)}^{(2)}(0).$$
Thus, to show equation \eqref{lemm_comp_5}) under equation \eqref{3moment}, we only need to prove that for $r=4, \cdots,s+4$,
\begin{equation}\label{eq_comp_est}
n^{-2}q^{r-4}\sum_{i\in\mathcal I_0}\sum_{\mu\in\mathcal I_1\cup \cal I_2}\left|\bbE f^{(r)}_{(\mu i)}(0)\right|\lesssim \left(n^\e q\right)^s+\mathbb EF_{\mathbf u\mathbf v}^s(Z^\theta,z) .\end{equation}
In order to get a self-consistent estimate in terms of the matrix $Z^\theta$ only on the right-hand side of equation \eqref{eq_comp_est}, we want to replace $Z^{\theta,0}_{(\mu i)}$ in $f_{(\mu i)}(0)=F_{\mathbf u\mathbf v}^s(Z_{(\mu i)}^{\theta, 0})$ with $Z^\theta \equiv Z_{(\mu i)}^{\theta, Z_{\mu i}^\theta}$. %
\begin{lemma}
Suppose that
\begin{equation}\label{eq_comp_selfest}
n^{-2}q^{r-4}\sum_{i\in\mathcal I_0}\sum_{\mu\in\mathcal I_1 \cup \cal I_2}\left|\bbE f^{(r)}_{(\mu i)}(Z_{\mu i}^\theta)\right|\lesssim \left(n^\e q\right)^s+\mathbb EF_{\mathbf u\mathbf v}^s(Z^\theta) 
\end{equation}
holds for $r=4,\cdots,s+4$. Then, equation \eqref{eq_comp_est} holds for $r=4,\cdots,s+4$.
\end{lemma}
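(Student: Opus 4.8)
The plan is to show that replacing the argument $0$ by $Z_{\mu i}^\theta$ inside $f_{(\mu i)}^{(r)}$ costs only an affordable error, so that the hypothesis \eqref{eq_comp_selfest} implies the desired bound \eqref{eq_comp_est}. First I would Taylor-expand $f_{(\mu i)}^{(r)}(0)$ around $Z_{\mu i}^\theta$, writing
\[
f_{(\mu i)}^{(r)}(0) = \sum_{k=0}^{K} \frac{(-Z_{\mu i}^\theta)^k}{k!} f_{(\mu i)}^{(r+k)}(Z_{\mu i}^\theta) + \OO_\prec\!\big(q^{\,s+4}\big),
\]
for $K$ large enough (say $K=s+4$), which is legitimate because $|Z_{\mu i}^\theta|\prec Q$ and, by \eqref{comp_eq_apriori}, all derivatives $f_{(\mu i)}^{(r)}(y)$ are $\OO(1)$ w.o.p. for any $y$ with $|y|\prec Q$ (the entries of any interpolating matrix satisfy this bound). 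Taking expectations and using that $Z_{\mu i}^\theta$ is independent of $f_{(\mu i)}^{(r+k)}(Z_{\mu i}^\theta)$ is not quite available — the derivative still depends on the $(\mu,i)$ entry — so instead I would bound each term crudely: $|\bbE[(Z_{\mu i}^\theta)^k f_{(\mu i)}^{(r+k)}(Z_{\mu i}^\theta)]| \lesssim \bbE|f_{(\mu i)}^{(r+k)}(Z_{\mu i}^\theta)|\cdot \OO_\prec(Q^k)$ after separating the moment factor via Hölder together with the a priori boundedness, noting $\bbE|Z_{\mu i}^\theta|^k\prec Q^k$ for the interpolated law. The net effect is that $n^{-2}q^{r-4}\sum_{i,\mu}|\bbE f_{(\mu i)}^{(r)}(0)|$ is controlled, up to the harmless additive $\OO_\prec(q^{s+4})\cdot n^2 \cdot n^{-2} = \OO_\prec(q^{s+4})\lesssim (n^\e q)^s$ coming from $p\cdot n$ terms each of size $n^{-2}q^{r-4}q^{s+4}$, by a finite sum of the quantities $n^{-2}q^{(r+k)-4}\sum_{i,\mu}|\bbE f_{(\mu i)}^{(r+k)}(Z_{\mu i}^\theta)|$ for $r+k$ ranging over $4,\dots,s+4$ (and, for $r+k>s+4$, handled directly by the a priori $\OO(1)$ bound on the derivative, which contributes at most $n^{-2}q^{(r+k)-4}\cdot np\cdot\OO(1)\lesssim q^s$ since $q^{(r+k)-4}\le q^{s}$ and $np/n^2\lesssim 1$).

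So the key steps, in order, are: (i) Taylor-expand $f_{(\mu i)}^{(r)}$ at $0$ in powers of $Z_{\mu i}^\theta$ with a polynomial remainder, using \eqref{comp_eq_apriori} for the uniform derivative bound; (ii) bound the remainder term summed over all $(\mu,i)$ by $\OO_\prec(q^{s+4}) \cdot \OO(np) \cdot n^{-2}q^{r-4} \lesssim (n^\e q)^s$; (iii) bound each polynomial term $n^{-2}q^{r-4}\sum_{i,\mu}\bbE[|Z_{\mu i}^\theta|^k\,|f_{(\mu i)}^{(r+k)}(Z_{\mu i}^\theta)|]$ using $|Z_{\mu i}^\theta|\prec Q$ and $q=n^{-1/2}Q$ so that $q^{r-4}\cdot Q^k = q^{r-4}\cdot (n^{1/2}q)^k \ge q^{(r+k)-4}$ only after checking the exponent bookkeeping — actually one gets exactly $n^{-2}q^{(r+k)-4}\sum_{i,\mu}\bbE|f_{(\mu i)}^{(r+k)}(Z_{\mu i}^\theta)|$ multiplied by an $n^{k/2}$ factor that must be absorbed; this forces restricting $k$ so that $r+k\le s+4$ and treating larger $k$ via step (ii)'s remainder mechanism; (iv) apply the hypothesis \eqref{eq_comp_selfest} to each surviving term to conclude \eqref{eq_comp_est}.

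The main obstacle I expect is the exponent bookkeeping in step (iii): the factor $Q^k$ produced by the moment bound on $(Z_{\mu i}^\theta)^k$ scales like $(n^{1/2}q)^k$, which is much larger than $q^k$, so naively the "self-replacement" would lose powers of $n$. The resolution is that the Taylor expansion of $f^{(r)}_{(\mu i)}(0)$ around $Z_{\mu i}^\theta$ may be truncated at order $K$ with $r+K = s+4$, and the truncation error at order $K+1$ is handled not by a moment bound but by the deterministic a priori estimate $f_{(\mu i)}^{(K+1)}(y)=\OO(1)$ w.o.p.\ together with $|Z_{\mu i}^\theta|\prec Q$, giving a remainder $\OO_\prec(n^{-(K+1)/2}Q^{K+1}) = \OO_\prec(q^{K+1})$ per entry — which after summing $np$ entries and multiplying by $n^{-2}q^{r-4}$ is still $\lesssim (n^\e q)^s$ because $q^{r-4+K+1} = q^{s+1} \le (n^\e q)^s$ for $q$ small. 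This is precisely the same truncation philosophy already used to derive \eqref{eq_comp_taylor} from \eqref{eq_comp_expansion}, so the argument is a routine-but-careful repetition of that scheme; I would write it out in that parallel form and refer back to \eqref{comp_eq_apriori} and \eqref{moment-4} for the two ingredients.
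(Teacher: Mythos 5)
Your diagnosis of the exponent mismatch is correct, but your proposed resolution does not resolve it, and there is a second, structural obstruction. You correctly observe that the crude bound $|Z_{\mu i}^\theta|^k \prec Q^k=(n^{1/2}q)^k$ leaves an uncompensated $n^{k/2}$, and you claim this is handled by truncating at $K=s+4-r$ and ``treating larger $k$ via the remainder mechanism.'' But the remainder only controls the single term at order $K+1$; every intermediate term $1\le k\le K$ still carries its $n^{k/2}$. In your remainder bound you write an explicit $n^{-(K+1)/2}$, which is right, but you do not write the matching $n^{-k/2}$ in the main Taylor terms. Those factors must be present at every order: each $\partial/\partial Z_{\mu i}$ produces $n^{-1/2}$ because $G$ depends on $Z$ only through $n^{-1/2}\AF$, which is also why \eqref{eq_comp_taylor} carries an explicit $n^{-r/2}$. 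Had you included the $n^{-k/2}$ consistently, $Q^k/n^{k/2}=q^k$ and the exponents close. Even so, a second problem remains: you expand around the moving point $Z_{\mu i}^\theta$, so the Taylor coefficient $f^{(r+k)}_{(\mu i)}(Z_{\mu i}^\theta)$ is \emph{not} independent of $Z_{\mu i}^\theta$, the expectation does not factorize, and the crude bound forces you to work with $\bbE\bigl|f^{(r+k)}_{(\mu i)}(Z_{\mu i}^\theta)\bigr|$. The hypothesis \eqref{eq_comp_selfest} controls only $\bigl|\bbE f^{(r+k)}_{(\mu i)}(Z_{\mu i}^\theta)\bigr|$, the smaller quantity, so you cannot invoke it as stated.

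The cited argument (Lemma 7.16 of \citet{Anisotropic}) avoids both difficulties by expanding in the opposite direction: Taylor-expand $f^{(r)}_{(\mu i)}(Z_{\mu i}^\theta)$ around $0$, so the coefficients $f^{(r+k)}_{(\mu i)}(0)$ are functions of $Z^{\theta,0}_{(\mu i)}$ and hence \emph{independent} of the removed entry $Z_{\mu i}^\theta$ under the product law \eqref{law_interpol}. Taking expectations then factorizes into $\bbE\bigl[(Z_{\mu i}^\theta)^k\bigr]\cdot\bbE\bigl[f^{(r+k)}_{(\mu i)}(0)\bigr]$, so only absolute values of expectations appear, matching the form of the hypothesis. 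The factored moments $\bbE Z_{\mu i}^\theta=0$, $\bbE(Z_{\mu i}^\theta)^2=1$, and $|\bbE(Z_{\mu i}^\theta)^k|\prec Q^{\max\{0,k-4\}}$ for $k\ge 3$ (cf.\ \eqref{moment-4}), combined with the $n^{-k/2}$ from the expansion, produce exactly the weight $q^k$. Isolating the $k=0$ term expresses $n^{-2}q^{r-4}\sum_{i,\mu}|\bbE f^{(r)}_{(\mu i)}(0)|$ in terms of the hypothesis quantity at $r$ plus a finite sum of quantities of the same form as \eqref{eq_comp_est} with larger index $r+k$. The lemma then follows by backward induction on $r$ from $r=s+4$ down to $r=4$, the base case $r>s+4$ being trivial: by \eqref{comp_eq_apriori}, $n^{-2}q^{r-4}\sum_{i,\mu}|\bbE f^{(r)}_{(\mu i)}(0)|\lesssim q^{r-4}\le q^{s+1}\lesssim (n^\e q)^s$.
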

\begin{proof}
The proof is the same as that for Lemma 7.16 of \citet{Anisotropic}.
\end{proof}

What remains now is the proof of equation \eqref{eq_comp_selfest}. For simplicity of notations, we will abbreviate $Z^\theta \equiv Z$ in the following proof. 
For any $k\in \N$, we denote
\be\label{Amui}
A_{\mu i}(k):= \left(\frac{\partial}{\partial Z_{\mu i}}\right)^k \mathbf u^\top\left( G-\Gi\right)\mathbf v.\ee
The derivative on the right-hand side can be calculated using the expansion \eqref{eq_comp_expansion}. In particular, it is easy to check the following bound 
 \begin{equation}\label{eq_comp_A2}
 |A_{\mu i}(k)|\prec \begin{cases}(\mathcal R_i^{(\mu)})^2+\mathcal R_\mu^2, \ & \text{if } k \ge 2, \\
 \mathcal R_i^{(\mu)}\mathcal R_\mu , \ & \text{if } k = 1, \end{cases}
  \end{equation}
where for $i\in \cal I_1$ and $\mu \in \cal I_1\cup \cal I_2$, we denote
\begin{equation}\label{eq_comp_Rs}
\mathcal R_i^{(\mu)}:=|\mathbf u^\top G{ \bu_i^{(\mu)}}|+| \mathbf v^\top G{ \bu_i^{(\mu)}}|,\quad \mathcal R_\mu:=|\mathbf u^\top G\mathbf e_{ \mu}|+|\mathbf v^\top G \mathbf e_{ \mu}|.
\end{equation}
Now, we can calculate the derivative
\begin{align*}
f^{(r)}_{(\mu i)}(Z_{\mu i})=\left(\frac{\partial}{\partial Z_{\mu i}}\right)^r F_{\mathbf u\mathbf v}^s(Z)= \sum_{k_1+\cdots+k_s=r}\prod_{t=1}^{s/2} \left(A_{\mu i}(k_t)\overline{A_{\mu i}(k_{t+s/2})}\right).
\end{align*}
Then, to prove equation \eqref{eq_comp_selfest}, it suffices to show that
\begin{equation}
n^{-2}q^{r-4}\sum_{i\in\mathcal I_0}\sum_{\mu\in\mathcal I_1\cup \cal I_2}\bigg|\bbE\prod_{t=1}^{s/2}A_{\mu i}(k_t)\overline{A_{\mu i}(k_{t+s/2})}\bigg|\lesssim \left(n^\e q\right)^s+\mathbb EF_{\mathbf u\mathbf v}^s(Z,z),\label{eq_comp_goal1}
\end{equation}
for $r=4,\cdots,s+4$ and any $(k_1,\cdots,k_s)\in \N^s$ satisfying $k_1 +\cdots+k_s=r$. 
Treating zero $k_t$'s separately (note that $A_{\mu i}(0)=G_{\mathbf u\mathbf v}-\Gi_{\mathbf u\mathbf v}$ by definition), we find that it suffices to prove
\begin{equation}
\label{eq_comp_goal3}n^{-2}q^{r-4}\sum_{i\in\mathcal I_0}\sum_{\mu\in\mathcal I_1\cup \cal I_2}\bbE|A_{\mu i}(0)|^{s-l}\prod_{t=1}^{l}\left|A_{\mu i}(k_t)\right|\lesssim \left(n^\e q\right)^s+\mathbb EF_{\mathbf u\mathbf v}^s(Z,z),
\end{equation}
for $r=4,\cdots,s+4$ and $ l =1,\cdots, s$. Here, without loss of generality, we have assumed that $k_t=0$ for $l+1\le t \le s$, $k_t \ge 1$ for $1\le t\le l$, and $\sum_{t=1}^l k_t=r$.

There is at least one non-zero $k_t$ in all cases, while in the case with $r\le 2l-2$, there exist at least two $k_t$'s with $k_t=1$ by the pigeonhole principle. Therefore, with equation \eqref{eq_comp_A2}, we get that
\begin{equation}\label{eq_comp_r1}
\prod_{t=1}^{l}\left|A_{\mu i}(k_t)\right| \prec  \one(r\ge 2l-1)\left[(\mathcal R_i^{(\mu)})^2+\mathcal R_\mu^2\right]+\one(r\le 2l-2)(\mathcal R_i^{(\mu)})^2\mathcal R_\mu^2 .
\end{equation}
Using equation \eqref{priorim} and a similar argument as in equation \eqref{GG*}, we can obtain that
\begin{align}
\sum_{i\in\sI_0}(\mathcal R_i^{(\mu)})^2 =\OO(1),\quad  \sum_{\mu\in\sI_1 \cup \cal I_2}\mathcal R_{\mu}^2 =\OO(1),\label{eq_comp_r2}
\end{align}
with overwhelming probability.
Using equations \eqref{eq_comp_r1} and \eqref{eq_comp_r2}, and $n^{-1/2}\le n^{-1/2}Q=q$, we get that %
\begin{align}
&\, n^{-2}q^{r-4}\sum_{i\in\mathcal I_0}\sum_{\mu\in\mathcal I_1\cup \cal I_2}|A_{\mu i}(0)|^{s-l}\prod_{t=1}^{l}\left|A_{\mu i}(k_t)\right| \nonumber\\
\prec &\, q^{r-4} F_{\mathbf u\mathbf v}^{s-l}(Z)\left[\one(r\ge 2l-1) n^{-1} +\one(r\le 2l-2)n^{-2}\right] \nonumber\\
\le &\, F_{\mathbf u\mathbf v}^{s-l}(Z)\left[\one(r\ge 2l-1)q^{r-2}+\one(r\le 2l-2)q^r\right]. \label{add_add}
\end{align}
If $r\le 2l-2$, then we have $q^r\le q^l$ by the trivial inequality $r\ge l$. On the other hand, if $r\ge 4$ and $r\ge 2l-1$, then $r\ge l+2$ and we have $q^{r-2}\le q^{l}$. Thus, with equation \eqref{add_add}, we conclude that  
\begin{align*} 
n^{-2}q^{r-4}\sum_{i\in\mathcal I_0}\sum_{\mu\in\mathcal I_1\cup \cal I_2} \E |A_{\mu i}(0)|^{s-l}\prod_{t=1}^{l}\left|A_{\mu i}(k_t)\right|  
\prec \E F_{\mathbf u\mathbf v}^{s-l}(Z) q^l &\le \left[\E F_{\mathbf u\mathbf v}^{s}(Z)\right]^{\frac{s-l}{s}} q^l \\
&\lesssim  \E F_{\mathbf u\mathbf v}^{s}(Z) + q^s,
\end{align*}
where we used H\"older's inequality in the second step and Young's inequality in the last step. This gives equation \eqref{eq_comp_goal3}, which concludes the proof of equation \eqref{eq_comp_selfest}, and hence of equation \eqref{lemm_comp_5}, and hence of equation \eqref{lemm_comp_4}, which concludes equation \eqref{lemm_comp_4.4} and completes the proof of the anisotropic local law of equation \eqref{aniso_law} under the condition of equation \eqref{3moment}.

Finally, if the condition \eqref{3moment} does not hold, then there is also an $r=3$ term in the Taylor expansion \eqref{taylor1}:
$$\frac{1}{6n^{3/2}}\bbE f^{(3)}_{(\mu i)}(0)\cdot \bbE\left(Z_{i\mu}^\al\right)^3.$$
But the sum over $i$ and $\mu$ in equation \eqref{lemm_comp_5} gives a factor $n^2$, which cannot be canceled by the $n^{-3/2}$ factor in the above equation. In fact, \smash{$\bbE f^{(3)}_{(\mu i)}(0)$} will provide an extra $n^{-1/2}$ factor to compensate the remaining $n^{1/2}$ factor. This follows from an improved self-consistent comparison argument for sample covariance matrices in Section 8 of \citet{Anisotropic}. The argument for our setting is almost the same except for some notational differences, so we omit the details. This completes the proof of equation \eqref{aniso_law} without the condition \eqref{3moment}.
\end{proof}

\subsubsection{Averaged Local Law}\label{section_averageTX}

\begin{proof}
Finally, in this subsection, we focus on the proof of the averaged local law \eqref{aver_in} in the setting of Theorem \ref{LEM_SMALL}, while the proof of the law \eqref{aver_in1} is exactly the same. Our proof is similar to that for the anisotropic local law \eqref{aniso_law} in the previous subsection, and we only explain the main differences. 
In analogy to equation \eqref{eq_comp_F(X)}), we define
\begin{align*}
\wt F(Z,z)  := \bigg|\frac{1}{p}\sum_{i\in\sI_0} \left[G_{ii}(Z,z)- \Gi_{ii}(z)\right]\bigg|.
\end{align*}
Under the notations of Definition \ref{defn_interp}, we have proved that $ \wt F(Z^0,z)\prec (np)^{-1/2}$ in Lemma \ref{prop_entry}. 
To illustrate the idea, we again assume that the condition \eqref{3moment} holds in the following proof. 
Using the arguments in Section \ref{sec_Gauss}, analogous to equation \eqref{eq_comp_selfest} we only need to prove that for $Z=Z^\theta$, $q=n^{-1/2}Q$, any small constant $c>0$, and fixed $s\in 2\N$,
\begin{equation}\label{eq_comp_selfestAvg}
n^{-2}q^{r-4}\sum_{i\in\mathcal I_0}\sum_{\mu\in\mathcal I_1\cup \cal I_2}\left|\bbE \left(\frac{\partial}{\partial Z_{\mu i}}\right)^r\wt F^s(Z)\right|\lesssim (p^{-1/2+c}q)^s+\mathbb E\wt F^s(Z) ,
\end{equation}
for $r=4,...,K$, where $K\in \N$ is a large enough constant. Similar to equation \eqref{Amui}, let
$$A_{j, \mu i}(k):= \left(\frac{\partial}{\partial Z_{\mu i}}\right)^k \left( G_{jj}-\Gi_{jj}\right).$$
Analogous to estimate \eqref{eq_comp_goal1}, it suffices to prove that 
\begin{equation}\nonumber
\begin{split}
& n^{-2}q^{r-4}\sum_{i\in\mathcal I_0}\sum_{\mu\in \cal I_1\cup \mathcal I_2}\bigg|\bbE\prod_{t=1}^{s/2}\bigg(\frac{1}{p}\sum_{j\in\sI_0}A_{j, \mu i}(k_t)\bigg)\bigg(\frac{1}{p}\sum_{j\in\sI_0}\overline{A_{j,\mu i}(k_{t+s/2})}\bigg)\bigg| \\
&  \lesssim (p^{-1/2+c}q)^s +\mathbb E\wt F^s(Z) ,
\end{split}
\end{equation}
for $r=4,...,K$ and any $(k_1,\cdots,k_s)\in \N^s$ satisfying $k_1 +\cdots+k_s=r$. Without loss of generality, we assume that $k_t=0$ for $l+1\le t \le s$, $k_t \ge 1$ for $1\le t\le l$, and \smash{$\sum_{t=1}^l k_t=r$}. Then, it suffices to prove that
\begin{equation*}%
n^{-2}q^{r-4}\sum_{i\in\mathcal I_0}\sum_{\mu\in\mathcal I_1\cup \cal I_2}\bbE \wt F^{s-l}(Z)\prod_{t=1}^{l}\bigg|\frac{1}{p}\sum_{j\in\sI_0} A_{j,\mu i}(k_{t})\bigg|\lesssim  (p^{-1/2+c}q )^s+\mathbb E\wt F^s(Z),
\end{equation*}
for $r=4,...,K$ and any $1\le l \le s$. 

Using equation \eqref{priorim} and a similar argument as in equation \eqref{GG*}, we obtain that for $1\le t \le l$,
\begin{equation}\label{average_bound}
\Big| \sum_{j\in\sI_0} A_{j,\mu i}(k_{t})\Big|\lesssim 1
\end{equation}
with overwhelming probability.
Moreover, taking $\mathbf u =\mathbf v= \mathbf e_j$ in equation \eqref{eq_comp_Rs}, we define
$$\mathcal R_{j,i}^{(\mu)}:=|\mathbf e_j^\top G{ \bu_i^{(\mu)}}|,\quad \mathcal R_{j,\mu}:=|\mathbf e_j^\top G\mathbf e_{ \mu}|.$$
Similar to equation \eqref{eq_comp_r2}, we have that
\begin{align}
\sum_{i\in\sI_0}(\mathcal R_{j,i}^{(\mu)})^2 =\OO(1),\quad  \sum_{\mu\in\sI_1 \cup \cal I_2}\mathcal R_{j,\mu}^2 =\OO(1), \quad \text{w.o.p.}\label{eq_comp_r2_add}
\end{align}
Using equation \eqref{average_bound} and applying equation \eqref{eq_comp_A2} to $A_{j, \mu i}(k_t)$, we obtain that 
\begin{align*}
\prod_{t=1}^{l}\bigg|\frac{1}{p}\sum_{j\in\sI_0} A_{j,\mu i}(k_{t})\bigg| &\prec \one(r\ge 2l-1)p^{-l} +\one(r\le 2l-2)p^{-l}\sum_{j_1,j_2\in \sI_0}\mathcal R_{j_1,i}^{(\mu)}\mathcal R_{j_2,i}^{(\mu)}\mathcal R_{j_1,\mu}\mathcal R_{j_2,\mu} ,
\end{align*}
where we use an argument that is similar to the one for estimate \eqref{eq_comp_r1}. Summing this equation over $i\in \cal I_0$, $\mu\in \cal I_1\cup \cal I_2$ and using estimate \eqref{eq_comp_r2_add}, we get that 
\begin{align*}
&\, n^{-2}q^{r-4}\sum_{i\in\mathcal I_0}\sum_{\mu\in\mathcal I_1\cup \cal I_2}\bbE \wt F^{s-l}(Z)\prod_{t=1}^{l}\bigg|\frac{1}{p}\sum_{j\in\sI_0} A_{j,\mu i}(k_{t})\bigg|\\
\prec &\, q^{r-4}\bbE \wt F^{s-l}(Z) \left[ \one(r\ge 2l-1)p^{-(l-1)} n^{-1} + \one(r\le 2l-2)n^{-2}p^{-(l-2)}\right].
\end{align*}
We consider the following cases (recall that $r\ge 4$ and $r\ge l$).
\begin{itemize}
\item If $r\ge 2l-1$ and $l\ge 2$, then we have $r\ge l+2$ and $l-1\ge l/2$, which gives 
$$p^{-(l-1)} q^{r-4}n^{-1}\le p^{-l/2} q^{r-2} \le (p^{-1/2}q)^l.$$
\item If $r\ge 2l-1$ and $l = 1$, then we have  
$$p^{-(l-1)} q^{r-4}n^{-1}\le n^{-1} \le (p^{-1/2}q)^l.$$
\item If $r\le 2l-2$ and $l\ge 4$, then we have $r\ge l$ and $l-2\ge l/2$, which gives 
$$p^{-(l-2)} q^{r-4}n^{-2}\le p^{-l/2} q^{r} \le (p^{-1/2}q)^l.$$
\item If $r\le 2l-2$ and $l < 4$, then we have $r= 4$ and $l=3$, which gives
$$p^{-(l-2)} q^{r-4}n^{-2}\le p^{-1} n^{-2} \le (p^{-1/2}q)^l.$$
\end{itemize}
Combining the above cases, we get that
\[n^{-2}q^{r-4}\sum_{i\in\mathcal I_0}\sum_{\mu\in\mathcal I_1\cup \cal I_2}\bbE \wt F^{s-l}(Z)\prod_{t=1}^{l}\bigg|\frac{1}{p}\sum_{j\in\sI_0} A_{j,\mu i}(k_{t})\bigg| \prec \bbE\wt F^{s-l}(X) (p^{-1/2}q)^{l}.\]
Applying Holder's inequality and Young's inequality, we can conclude that equation \eqref{eq_comp_selfestAvg} holds, which completes the proof of the averaged local law \eqref{aver_in} under the condition \eqref{3moment}. %

Finally, even if the condition \eqref{3moment} does not hold, using the self-consistent comparison argument in Section 9 of \citet{Anisotropic}, we can still prove equation \eqref{aver_in}. Since (almost) the same argument also works in our setting, we omit the details.
\end{proof}

\section{Proofs for the Model Shift Setting}\label{app_iso_cov}

In this section, we present the proofs of Theorems \ref{cor_MTL_loss}, \ref{prop_lb}, and \ref{thm_Sigma2Id}, and Proposition \ref{claim_model_shift}. 
At the end of this section, we will also derive an approximate estimate of the bias under arbitrary covariate and model shifts.

\subsection{Proof of Theorem \ref{cor_MTL_loss}}\label{subsec:HPSmodel}

The variance estimate \eqref{Lvar_samplesize} can be derived from Theorem \ref{thm_main_RMT} or equation \eqref{fact_tr}. Hence, we focus on the proof of the bias estimate \eqref{Lbias_samplesize}. In the setting of Theorem \ref{cor_MTL_loss}, we can write that
\begin{align*}
L_{\bias} &=\bv^\top {Z^{(1)}}^\top Z^{(1)} \left({Z^{(1)}}^\top Z^{(1)}+ {Z^{(2)}}^\top Z^{(2)}\right)^{-2 }  {Z^{(1)}}^\top Z^{(1)} \bv\\
&=\bv^\top \cal Q^{(1)} \left(\cal Q^{(1)}+ \frac{n_2}{n_1}\cal Q^{(2)}\right)^{-2 } \cal Q^{(1)} \bv,
\end{align*}
where we use the simplified notations
$$\bv :={\Sigma^{(1)}}^{1/2}\left(\beta^{(1)}- \beta^{(2)}\right), \quad \cal Q^{(1)}: = \frac1{n_1}{Z^{(1)}}^\top Z^{(1)},\quad \cal Q^{(2)}: = \frac1{n_2}{Z^{(2)}}^\top Z^{(2)}.$$
Note that $\cal Q^{(1)}$ and $\cal Q^{(2)}$ are both Wishart matrices. Using the rotational invariance of the laws of $\cal Q^{(1)}$ and $\cal Q^{(2)}$, we can simplify $L_{\bias} $ as follows. %

\begin{lemma}\label{claim_reduce_rota}
In the setting of Theorem \ref{cor_MTL_loss}, we have that
\be\label{red_bias1}L_{\bias}  =\left[ 1+\OO_\prec(p^{-1/2})\right] \frac{\|\mathbf v\|^2}{p}\bigtr{\left(\cal Q^{(1)}+ \frac{n_2}{n_1}\cal Q^{(2)}\right)^{-2} (\cal Q^{(1)})^2} .\ee
\end{lemma}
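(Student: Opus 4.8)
The plan is to exploit the rotational invariance of the two Wishart matrices $\cal Q^{(1)}$ and $\cal Q^{(2)}$ to ``decouple'' the quadratic form from the trace. First I would write $M_a:=a^2\cal Q^{(1)}+\frac{n_2}{n_1}\cal Q^{(2)}$ and observe that the matrix $R_a:=\cal Q^{(1)} M_a^{-2}\cal Q^{(1)}$ is a symmetric positive semi-definite matrix built only from $\cal Q^{(1)}$ and $\cal Q^{(2)}$, so $L_{\bias}(a)=\bv_a^\top R_a \bv_a$. Since the entries of $Z^{(1)}$ and $Z^{(2)}$ are i.i.d. Gaussian, the laws of $\cal Q^{(1)}$ and $\cal Q^{(2)}$ are invariant under the conjugation $\cal Q^{(i)}\mapsto O\cal Q^{(i)}O^\top$ for any fixed orthogonal $O\in\R^{p\times p}$, and hence so is the law of $R_a$. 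This is the structural fact that makes the reduction possible; note it is exactly here that the Gaussianity hypothesis of Theorem \ref{cor_MTL_loss} is used.

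The main step is then a concentration estimate: for a fixed unit vector $u\in\R^p$ and a random symmetric matrix $R$ whose law is orthogonally invariant, $u^\top R u$ concentrates around $\frac1p\tr R$. Concretely I would condition on $R_a$, so that $\bv_a$ is a fixed vector (recall $\beta^{(1)},\beta^{(2)}$ are independent of $Z^{(1)},Z^{(2)}$, hence of $R_a$), and use orthogonal invariance to replace $R_a$ in distribution by $O^\top R_a O$ with $O$ Haar-distributed on the orthogonal group and independent of $R_a$; equivalently, $\bv_a^\top R_a\bv_a\stackrel{d}{=}\|\bv_a\|^2\, \hat u^\top R_a \hat u$ where $\hat u$ is uniform on the unit sphere in $\R^p$ and independent of $R_a$. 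Then $\hat u=g/\|g\|$ with $g\sim\mathcal N(0,\id_{p\times p})$, and writing $\hat u^\top R_a\hat u=\frac{g^\top R_a g}{\|g\|^2}$, I would apply the quadratic-form concentration bound \eqref{eq largedev1}--\eqref{eq largedev3} of Lemma \ref{largedeviation} (Gaussian entries have bounded moments of all orders, so these stronger estimates apply) to get $g^\top R_a g=\tr R_a+\OO_\prec(\|R_a\|_F)$ and $\|g\|^2=p+\OO_\prec(\sqrt p)$. It remains to show $\|R_a\|_F\prec p^{-1/2}\tr R_a$, or more precisely $\|R_a\|_F\lesssim \sqrt p\,\|R_a\|$ together with $\tr R_a\sim p\,\|R_a\|$; both follow from the operator-norm control on $\cal Q^{(1)},\cal Q^{(2)}$, namely $\|\cal Q^{(i)}\|\sim 1$ and $\lambda_{\min}(\cal Q^{(i)})\gtrsim 1$ provided by Lemma \ref{SxxSyy} / Corollary \ref{fact_minv} (using $\rho_1,\rho_2\ge 1+\tau$; the general $\rho_1<1$ case is handled by the remark at the start of the appendix or by a direct argument since $M_a$ involves $\cal Q^{(2)}$ which is invertible). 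Combining these gives $\bv_a^\top R_a\bv_a=\bigl[1+\OO_\prec(p^{-1/2})\bigr]\,\frac{\|\bv_a\|^2}{p}\tr R_a$, which is exactly \eqref{red_bias1} once one notes $\tr R_a=\tr\bigl(M_a^{-2}(\cal Q^{(1)})^2\bigr)$ by cyclicity of trace (here $M_a$, $\cal Q^{(1)}$ do not commute, but $\tr(\cal Q^{(1)} M_a^{-2}\cal Q^{(1)})=\tr(M_a^{-2}(\cal Q^{(1)})^2)$ still holds).

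The part requiring the most care is making the orthogonal-invariance substitution rigorous while keeping the $\OO_\prec$ bounds uniform; I would emphasize that everything is done after conditioning on $(\cal Q^{(1)},\cal Q^{(2)})$ (equivalently on $R_a$), on the high-probability event where the operator-norm bounds of Lemma \ref{SxxSyy}/Corollary \ref{fact_minv} hold, so that $\|R_a\|$, $\tr R_a$, and $\|R_a\|_F$ are all comparable deterministic-on-that-event quantities, and then the randomness used in Lemma \ref{largedeviation} is only that of $g$ (or equivalently the Haar rotation), which is independent of $R_a$. One minor subtlety: the statement is claimed for a fixed $a$, so no $\ve$-net over $a$ is needed here — the uniformity over $a\in\R$ in the final Theorem \ref{cor_MTL_loss} is obtained separately via the net argument already used in Section \ref{app_firstpf}. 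Another subtlety is the degenerate case where $\|\bv_a\|=0$ (e.g. $a\beta^{(1)}=a^2\beta^{(2)}$), in which case $L_{\bias}(a)=0$ and both sides of \eqref{red_bias1} vanish trivially, so the relative-error formulation is still valid.
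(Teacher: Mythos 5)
Your proof is correct and follows essentially the same route as the paper: rotationally invariance of $R_a=\cal Q^{(1)}M_a^{-2}\cal Q^{(1)}$ (the paper's $\cal A$) to replace $\bv_a/\|\bv_a\|$ by $g/\|g\|$ with Gaussian $g$ independent of $R_a$, quadratic-form concentration via \eqref{vcalA2}, and the Frobenius-norm bound $\|R_a\|_F\le\sqrt p\,\|R_a\|\lesssim p^{-1/2}\tr R_a$ from the operator-norm control of Lemma \ref{SxxSyy}. The extra remarks you add (cyclicity of trace, the degenerate $\|\bv_a\|=0$ case, no $\ve$-net for fixed $a$, the $\rho_1<1$ caveat) are all consistent with the paper and correct.
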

\begin{proof}
It is easy to see that the law of the matrix $\cal A:=\cal Q^{(1)} (\cal Q^{(1)}+ \frac{n_2}{n_1}\cal Q^{(2)})^{-2 } \cal Q^{(1)}$ is rotationally invariant.
Thus, we have that
\begin{align}\label{inserteq0}
\bv^\top \cal A \bv \stackrel{d}{=}  \|\mathbf v\|^2 \frac{\mathbf g^\top}{\|\mathbf g\|} \cal A \frac{\mathbf g}{{\|\mathbf g\|}},
\end{align}
where ``$\stackrel{d}{=}$" means ``equal in distribution", and $\mathbf g=(g_1,\cdots, g_p)$ is a random vector that is independent of $\cal A$ and has i.i.d. Gaussian entries of mean zero and variance one.
By Lemma \ref{SxxSyy}, we have that w.o.p.,
$$\|\cal A\|_F\le p^{1/2}\|\cal A\|\lesssim p^{-1/2}\tr [\cal A].$$
Using equation \eqref{vcalA2}, we get that
\be\label{inserteq1}\|\mathbf g\| = p + \OO_\prec (p^{1/2}),\quad |\mathbf g^\top \cal A \mathbf g- \tr \cal A|\prec \|\cal A\|_F \prec p^{-1/2}\tr \cal A.\ee
Plugging equation \eqref{inserteq1} into equation \eqref{inserteq0}, we conclude that equation \eqref{red_bias1} is true.
\end{proof}

Now, based on estimate \eqref{red_bias1}, we can write that
\begin{align}
    L_{\bias}  &=- \left[ 1+\OO_\prec(p^{-1/2})\right]  {\|\mathbf v\|^2} \cdot  \left. \frac{\dd  h_\al (t)}{\dd t}\right|_{t=0}, \text{ where } \al:= n_2/n_1  \text{ and } \label{dft}\\
h_{\al}(t) &\define \frac1p \bigtr{\frac{1}{\cal Q^{(1)} + t (\cal Q^{(1)})^2+ \al \cal Q^{(2)}}}.\nonumber
\end{align}
Hence, to obtain the asymptotic limit of $L_{\bias} $, we need to calculate the values of $h_\al(t)$ for $t$ around $0$.
Without loss of generality, in the following proof, we mainly focus on the case that $0 < \al\le 1$. That is, the ratio between $n_2$ and $n_1$ does not grow with $p$. Later, we will explain how to extend the proof to the $\al>1$ case.

We calculate $h_\al(t)$ using the Stieltjes transform method in random matrix theory and free additive convolution (or free addition) in free probability theory. We briefly describe the basic concepts that are needed for the proof and refer the interested readers to classical texts such as \citet{bai2009spectral} for a more thorough introduction. The Stieltjes transform of a probability measure $\mu$ supported on $\R$ is a complex function defined as
\be\label{def_stj}m_\mu(z):= \int_0^\infty \frac{\dd\mu(x)}{x-z}, \quad \text{ for } \ \ z\in \C\setminus \supp(\mu).\ee
Given any $p\times p$ symmetric matrix $M$, let $\mu_M:=p^{-1}\sum_{i} \delta_{\lambda_i(M)}$ denote the empirical spectral distribution (ESD) of $M$, where $\lambda_i(M)$ denotes the $i$-th eigenvalue of $M$ and $\delta_{\lambda_i(M)}$ is the point mass measure at $\lambda_i(M)$. Then, it is easy to see that the Stieltjes transform of $\mu_M$ is %
\[ m_{\mu_M}(z) \define \frac{1}{p}\sum_{i=1}^p \frac{1}{\lambda_i(M) - z}= \frac1{p}\tr\left[(M-z\id_{p\times p})^{-1}\right]. \]
Given two $p\times p$ matrices $A_p$ and $B_p$, suppose their ESDs $\mu_{A_p}$ and $\mu_{B_p}$ converge weakly to probability measures $\mu_{A}$ and $\mu_{ B}$, respectively. Let $U_p$ be a sequence of $p\times p$ Haar distributed orthogonal matrices. Then, it is known in free probability theory that the ESD of $ A_p + U_p B_p U_p^\top$ converges to the \emph{free addition} of $\mu_A$ and $\mu_B$, denoted by $\mu_A \boxplus \mu_B$.

It is well-known that the ESDs of Wishart matrices $\Qa$ and $\Qb$ converge weakly to the famous Marchenko-Pastur (MP) law \citep{MP}: $\mu_{\Qa}\Rightarrow \mu^{(1)}$ and $\mu_{\Qb}\Rightarrow \mu^{(2)},$ where $\mu^{(1)}$ and $\mu^{(2)}$ have densities
 $$\rho^{(i)}(x)= \frac1{2\pi \xi_i x}{\sqrt{(\lambda_+^{(i)}-x)(x-\lambda_-^{(i)})}}\mathbf 1_{x\in [\lambda_-^{(i)},\lambda_+^{(i)}]},\quad i=1,2,$$
where we denote $\xi_i={p}/{n_i}$ and the spectrum edges are given by $\lambda_{\pm}^{(i)}:=(1\pm \sqrt{\xi_i})^2.$
Moreover, the Stieltjes transforms of $\mu^{(1)}$ and $\mu^{(2)}$ satisfy the self-consistent equations
$$z\xi_i m^2_{\mu^{(i)}} - (1-\xi_i - z)m_{\mu^{(i)}} +1 =0,\quad i=1,2.$$
With this equation, we can check that $g_i(m_{\mu^{(i)}}(z))=z$, where the function $g_i$ is defined by
\be\label{g_i} g_i(m)= \frac{1}{1+\xi_i m}-\frac1m,\quad i=1,2.\ee
The sharp convergence rates of $\mu_{\Qa}$ and $\mu_{\Qb}$ have also been obtained in Theorem 3.3 of \citet{PY}, that is, %
\be\label{Kol_dist}
d_K\left( \mu_{\cal Q^{(i)}},\mu^{(i)}\right) \prec  p^{-1}, \quad i=1,2,
\ee
where $d_K$ denotes the Kolmogorov distance between two probability measures:
$$d_K\left( \mu_{\cal Q^{(i)}},\mu^{(i)}\right):=\sup_{x\in \R} \left| \mu_{\cal Q^{(i)}}\left((-\infty,x]\right)-\mu^{(i)}\left((-\infty,x]\right)\right|.$$

For any fixed $\al,t\ge 0$, the ESDs of $\al\Qb$ and $\Qa+t(\Qa)^2$ converge weakly to two measures $\mub_\al$ and $\mua_t$ defined through
$$\mub_\al((-\infty,x])= \int \mathbf 1_{\al y \in (-\infty, x]} \dd \mub(y),\quad \mua_t((-\infty,x])= \int \mathbf 1_{y+ ty^2 \in (-\infty, x]} \dd \mua(y).$$
Hence, their Stieltjes transforms are given by
\be\label{def_mt}
m_{\mub_\al}(z)= \frac1{\al}m_{\mu^{(2)}}\left(\frac{z}{\al}\right),\quad m_{\mua_t}(z)= \int \frac{\dd \mua(x)}{x+tx^2 - z}.
\ee
Note that the eigenmatrices of $\Qa+t(\Qa)^2$ and $\al \Qb$ are independent Haar-distributed orthogonal matrices. Hence, the ESD of $\Qa+t(\Qa)^2 + \al \Qb$ converges weakly to the free addition {$\mua_t\boxplus \mub_\al$}. In particular, we will use the following almost sharp estimate on the difference between the Stieltjes transforms of $\mu_{\Qa+t(\Qa)^2 + \al \Qb}$ and  $\mua_t\boxplus \mub_\al$.

\begin{lemma}\label{lem_distance_ab}
In the setting of Theorem \ref{cor_MTL_loss}, suppose $\al, t \in [0,C]$ for a constant $C>0$. Then, we have that
\begin{align}
& \left|\frac{1}{p}\bigtr{\frac{1}{\Qa+t(\Qa)^2 + \al \Qb}} - m_{\mua_t\boxplus \mub_\al}(z=0)\right| \nonumber\\
\prec& \frac1p+ d_K\left( \mu_{\Qa+t(\Qa)^2},\mu^{(1)}_t\right) + d_K\left( \mu_{\al\cal Q^{(2)}},\mu_\al^{(2)}\right) .\label{distance_ab}
 \end{align}
\end{lemma}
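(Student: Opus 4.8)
\textbf{Proof plan for Lemma~\ref{lem_distance_ab}.}
The plan is to establish \eqref{distance_ab} in two conceptually separate steps: first replace the empirical Wishart-type matrices $\cal Q^{(1)}+t(\cal Q^{(1)})^2$ and $\al\cal Q^{(2)}$ by deterministic spectral models that are close to them in Kolmogorov distance, and then control the \emph{stability} of the free-convolution fixed-point equations under this replacement. For the first step, observe that the matrices $\cal A_1:=\cal Q^{(1)}+t(\cal Q^{(1)})^2$ and $\cal A_2:=\al\cal Q^{(2)}$ are functions of mutually independent Wishart matrices, and — crucially in the Gaussian setting of Theorem~\ref{cor_MTL_loss} — their eigenmatrices are independent Haar-distributed orthogonal matrices. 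Hence $\cal A_1+\cal A_2$ is distributed as $\cal A_1 + O\cal A_2 O^\top$ with $O$ Haar, so the ESD of $\cal A_1+\cal A_2$ concentrates around the free additive convolution of its two marginals. I would quote the quantitative version of this statement — a local law / rate of convergence for the free addition of a random matrix with a deterministic one, e.g.\ from \citet{BES_free1,BES_free2} or the anisotropic-resolvent machinery already invoked in Section~\ref{appendix RMT} — which gives, at the spectral point $z=0$ (which is away from the spectrum since $\rho_1,\rho_2>1+\tau$ forces $\lambda_{\min}(\cal Q^{(i)})\gtrsim 1$ w.h.p.\ by Lemma~\ref{SxxSyy}),
\[
\Bigl|\tfrac1p\tr\bigl[(\cal A_1+\cal A_2)^{-1}\bigr] - m_{\mu_{\cal A_1}\boxplus\mu_{\cal A_2}}(0)\Bigr| \prec \frac1p .
\]

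The second step replaces $\mu_{\cal A_1}\boxplus\mu_{\cal A_2}$ by $\mu^{(1)}_t\boxplus\mu^{(2)}_\al$ and pays the two Kolmogorov-distance terms on the right-hand side of \eqref{distance_ab}. Here I would use the subordination characterization of free addition: the Stieltjes transform $m_{\nu_1\boxplus\nu_2}$ is determined by subordination functions $\omega_1,\omega_2$ solving $m_{\nu_1}(\omega_1(z))=m_{\nu_2}(\omega_2(z))=m_{\nu_1\boxplus\nu_2}(z)$ together with $\omega_1(z)+\omega_2(z)-z=-1/m_{\nu_1\boxplus\nu_2}(z)$. Since $z=0$ sits at a positive distance from the supports of $\mu^{(1)}_t$, $\mu^{(2)}_\al$ and their free convolution (again using $\rho_i\ge1+\tau$ and $\al,t=\OO(1)$), the Stieltjes transforms and subordination functions are Lipschitz in the input measures in the relevant region, so a standard perturbation/contraction argument — of the same flavor as Lemma~\ref{lem_stabw} in the excerpt — shows that perturbing $\mu_{\cal A_1}\mapsto\mu^{(1)}_t$ and $\mu_{\cal A_2}\mapsto\mu^{(2)}_\al$ in Kolmogorov distance perturbs $m_{\cdot\boxplus\cdot}(0)$ by at most a constant multiple of those distances. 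Combining the two steps via the triangle inequality yields \eqref{distance_ab}. The uniformity in $\al,t\in[0,C]$ follows because every constant produced above depends only on $\tau$, $C$ and the distance of $0$ to the spectra, all of which are uniform on that compact parameter range; if desired one can upgrade pointwise-in-$(\al,t)$ bounds to uniform ones by the same $\e$-net argument used in Section~\ref{app_firstpf}, using that $m_{\mu^{(1)}_t\boxplus\mu^{(2)}_\al}(0)$ and the empirical trace are both Lipschitz in $(\al,t)$ on $[0,C]^2$.

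The main obstacle I anticipate is the first step: getting the $\OO_\prec(1/p)$ rate (rather than a weaker rate) for the concentration of $\tfrac1p\tr[(\cal A_1+\cal A_2)^{-1}]$ around the deterministic free-convolution value. This is a statement about the free addition of $\cal Q^{(1)}+t(\cal Q^{(1)})^2$ — which is a \emph{polynomial} in a Wishart matrix, not a Wishart matrix itself — with an independent Haar-rotated Wishart matrix, evaluated at a fixed point outside the spectrum. One clean way around it is to note that at $z=0$ the quantity $\tfrac1p\tr[(\cal A_1+\cal A_2)^{-1}]$ is an analytic function of the entries of $Z^{(1)}$ and $Z^{(2)}$ with bounded derivatives (by the resolvent a priori bound, cf.\ Lemma~\ref{lemm apri}), so one can either invoke an existing anisotropic local law for such a model in the references \citep{BES_free1,BES_free2} or run a self-consistent comparison / resolvent expansion directly, exactly as in Section~\ref{sec_Gauss}; since the entries are Gaussian here, the comparison step is in fact vacuous and the local law follows from the Gaussian case alone. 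I would present the lemma by citing the relevant local law and then noting that the remaining stability argument is routine, deferring the detailed resolvent bookkeeping.
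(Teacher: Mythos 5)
Your approach matches the paper's: both reduce the lemma to the free-addition local laws of \citet{BES_free1,BES_free2}, adapted from the bulk ($z=E+\ii\eta$ with $\eta>0$) to $z=0$ using the spectral gap at $0$ guaranteed by $\rho_i\ge 1+\tau$. The paper's proof is essentially a one-line citation to Theorem 2.5 of \citet{BES_free1} and Theorem 2.4 of \citet{BES_free2} together with the remark that the argument transfers verbatim to $z=0$; your two-step decomposition (a rate-$1/p$ local law for the trace against the free convolution of the \emph{empirical} ESDs, then a subordination-stability argument paying the two Kolmogorov-distance terms) is a reasonable unpacking of what those cited theorems deliver.
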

\begin{proof}
This lemma is a consequence of Theorem 2.5 of \citet{BES_free1}.
In fact, equation \eqref{distance_ab} is proved for $z=E+\ii\eta$ with {$E\in \supp(\mua_t\boxplus \mub_\al)$} and $\eta>0$ in a follow-up work of \citet{BES_free1}, but the proof there can be repeated almost verbatim in our setting with $z=0$.
\end{proof}

With the above lemma, in order to calculate the right-hand side of \eqref{dft}, we need to calculate $\partial_t m_{\mua_t\boxplus \mub_\al}(z=0)$ at $t=0$. This is given by the following lemma.

\begin{lemma}\label{lem_m'}
We have
\be\label{derv_freeadd}\left. \frac{\dd }{\dd t}m_{\mua_t\boxplus \mub_\al}(0)\right|_{t=0} =- \frac{1 - 2f_1(\al) f_3(\al)  +f_2(\al) f_3(\al) ^2  }{ 1 - \xi_2f_2(\al) f_3(\al)^2 } ,\ee
where the functions $f_1$, $f_2$ and $f_3$ are defined as
\begin{align}
 f_1(\al)&:= m_{\mua_0\boxplus \mub_\al}(0) \nonumber  \\
 &=\frac2{ \al(1-\xi_2) + (1-\xi_1)+ \sqrt{[\al(1-\xi_2) +(1-\xi_1)]^2 + 4\al (\xi_1+\xi_2 -\xi_1\xi_2)}},\label{defnf1}\\
 f_2(\al)&:= \left( \frac1{f_1(\al)^2} - \frac{\xi_1 }{(1+\xi_1f_1(\al) )^2}\right)^{-1},\label{defnf2}\\
 f_3(\al)&:= \frac{\al }{1+ \al \xi_2f_1(\al)}.\label{defnf3}
 \end{align}
\end{lemma}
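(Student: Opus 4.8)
The plan is to use the subordination characterization of the free additive convolution. Write $m(t,z):=m_{\mua_t\boxplus\mub_\al}(z)$. For $(t,z)$ in a complex neighborhood of $(0,0)$ there exist analytic subordination functions $\omega_1(t,z),\omega_2(t,z)$ with
\begin{equation}\label{subord_plan}
m(t,z)=m_{\mua_t}(\omega_1(t,z))=m_{\mub_\al}(\omega_2(t,z)),\qquad \omega_1(t,z)+\omega_2(t,z)=z-\frac1{m(t,z)} .
\end{equation}
Joint analyticity near $(0,0)$, and in particular existence of $\partial_t m(0,0)$, follows from the implicit function theorem applied to \eqref{subord_plan}; its nondegeneracy is exactly the non-vanishing of the denominator $1-\xi_2 f_2 f_3^2$ appearing in \eqref{derv_freeadd}. (The fact that $0$ stays outside the support of $\mua_t\boxplus\mub_\al$ for small $t$, so that \eqref{subord_plan} is meaningful at $z=0$, is guaranteed by the same inputs behind Lemma \ref{lem_distance_ab}; see \citet{BES_free1,BES_free2}.)

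First I would evaluate \eqref{subord_plan} at $(t,z)=(0,0)$ to locate $w_i:=\omega_i(0,0)$. Since $\mua_0=\mua$ and $g_1(m_{\mua}(\cdot))=\mathrm{id}$, $g_2(m_{\mub}(\cdot))=\mathrm{id}$ (equation \eqref{g_i}), together with the scaling $m_{\mub_\al}(w)=\al^{-1}m_{\mub}(w/\al)$, one gets
\begin{equation}\label{w12_plan}
w_1=g_1(f_1)=\frac1{1+\xi_1 f_1}-\frac1{f_1},\qquad w_2=\al\, g_2(\al f_1)=f_3-\frac1{f_1},
\end{equation}
where $f_1=m(0,0)$. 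Substituting \eqref{w12_plan} into $w_1+w_2=-1/f_1$ recovers $f_1\big(\tfrac1{1+\xi_1 f_1}+\tfrac{\al}{1+\xi_2\al f_1}\big)=1$, which is equivalent to the closed form \eqref{defnf1}; moreover it shows $w_1=-1/f_1-w_2=-f_3$.

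Next I would differentiate the three relations in \eqref{subord_plan} in $t$ at $(0,0)$. The only nontrivial ingredient is the explicit $t$-derivative of $m_{\mua_t}(w)=\int(x+tx^2-w)^{-1}\,\mathrm{d}\mua(x)$: from $\tfrac{x^2}{(x-w)^2}=1+\tfrac{2w}{x-w}+\tfrac{w^2}{(x-w)^2}$ one gets $\partial_t m_{\mua_t}(w)\big|_{t=0}=-\big(1+2w\,m_{\mua}(w)+w^2 m_{\mua}'(w)\big)$. Using $g_1(m_{\mua}(\cdot))=\mathrm{id}$ gives $m_{\mua}'(w_1)=1/g_1'(f_1)=f_2$ (equation \eqref{defnf2}), and the chain rule together with $g_2(m_{\mub}(\cdot))=\mathrm{id}$ gives $m_{\mub_\al}'(w_2)=\big(f_1^{-2}-\xi_2 f_3^2\big)^{-1}$. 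Writing $\dot m:=\partial_t m(0,0)$ and $\dot\omega_i:=\partial_t\omega_i(0,0)$, the differentiated relations form the linear system
\begin{equation}\label{lin_sys_plan}
\dot m=-\big(1+2w_1 f_1+w_1^2 f_2\big)+f_2\,\dot\omega_1,\qquad \dot m=\big(f_1^{-2}-\xi_2 f_3^2\big)^{-1}\dot\omega_2,\qquad \dot\omega_1+\dot\omega_2=\frac{\dot m}{f_1^2}.
\end{equation}
Eliminating $\dot\omega_1$ and $\dot\omega_2$ yields $\dot m\,(1-\xi_2 f_2 f_3^2)=-\big(1+2w_1 f_1+w_1^2 f_2\big)$, and inserting $w_1=-f_3$ turns the right-hand side into $-(1-2f_1 f_3+f_2 f_3^2)$, which is exactly \eqref{derv_freeadd}.

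I do not expect a serious obstacle: once the subordination framework is in place, the rest is a short exact computation. The one point requiring genuine care is the analytic justification of \eqref{subord_plan} down to $z=0$ and the differentiability of $m(t,z)$ there; as noted above this can be handled by the arguments used for Lemma \ref{lem_distance_ab}, or alternatively by carrying out the computation first for $z=E+\ii\eta$ with $E$ in the bulk and $\eta>0$ and then passing to the limit $z\to0$ by continuity.
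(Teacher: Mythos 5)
Your proposal is correct and follows essentially the same route as the paper: both use the free-subordination system, identify $f_1,f_2,f_3$ via the inverse-function relations $g_1,g_2$ for the two MP-type Stieltjes transforms, compute the explicit $t$-derivative of $m_{\mua_t}(w)$ at fixed $w$ by the same decomposition $x^2=(x-w)^2+2w(x-w)+w^2$, and differentiate the subordination relations in $t$ at $(0,0)$ to get a linear system for $\partial_t m$. The only cosmetic differences are that your subordination labeling swaps $\omega_1\leftrightarrow\omega_2$ relative to the paper's (which uses the crossed form in \eqref{free_eq0}), and that the paper first reduces the subordination system to the single equation \eqref{eq_m12} for $\omega_2$ before differentiating whereas you differentiate all three relations and eliminate; the resulting algebra is identical.
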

\begin{proof}
We calculate the Stieltjes transform of the free addition $\mua_t\boxplus\mub_\al$ using the following lemma. The following lemma is known in the literature. %
\begin{lemma}
Given two probability measures, $\mu_1$ and $\mu_2$ on $\R$, there exist unique analytic functions $\omega_1,\omega_2:\C^+\to \C^+$, where $\C^+:=\{z\in \C: \im z>0\}$ is the upper half complex plane, such that the following equations hold: for any $z\in \C^+$,
 \be\label{free_eq0}
 m_{\mu_1}(\omega_2(z))= m_{\mu_2}(\omega_1(z)),\quad \omega_1(z)+\omega_2(z) - z= -\frac{1}{m_{\mu_1}(\omega_2(z))}.
 \ee
Moreover, $m_{\mu_1}(\omega_2(z))$ is the Stieltjes transform of $\mu_1\boxplus \mu_2$, that is,
$$ m_{\mu_1\boxplus \mu_2}(z)=m_{\mu_1}(\omega_2(z)).$$
\end{lemma}

We now solve equation \eqref{free_eq0} for $\mu_1=\mua_t$ and $\mu_2=\mub_\al$ when $z\to 0$:
 \be\label{free_eq}
 m_{\mua_t}(\omega_{2}(\al,t))= m_{\mub_\al}(\omega_1(\al, t)),\quad \omega_1(\al,t)+\omega_2(\al,t) = -\frac{1}{m_{\mua_t}(\omega_2(\al,t))},
 \ee
 where, for simplicity, we omit the argument $z=0$ from $\omega_1(z=0,\al,t)$ and $\omega_2(z=0,\al,t)$. Using the definition of $m_{\mub_\al}$ in equation \eqref{def_mt}, we can check that
\be\label{g2al}
\al g_2\big( \al m_{\mub_\al}(z)\big)=z,
\ee
where $g_2$ is defined in equation \eqref{g_i}. Applying equation \eqref{g2al} to the first equation of \eqref{free_eq}, we get that
\be\nonumber
\omega_1 = \frac{\al}{1+\xi_2  \al m_{\mua_t}(\omega_2)} - \frac1{m_{\mua_t}(\omega_2)}.
\ee
Plugging this equation into the second equation of \eqref{free_eq}, we get that
\be\label{eq_m12}
 \frac{\al}{1+\xi_2  \al m_{\mua_t}(\omega_2)} + \omega_2=0 \ \ \Leftrightarrow \ \
 \al + \omega_2 \left[1+  \al \xi_2   m_{\mua_t}(\omega_2)\right]=0.
\ee
This gives a self-consistent equation of $m_{\mua_t\boxplus \mub_\al}(z=0)= m_{\mua_t}(\omega_2(\al,t))$.

Now, we define the following quantities at $t=0$:
$$f_1(\al):=m_{\mua_0}(\omega_2(\al,0))= m_{\mua_0\boxplus \mub_\al}(z=0),$$
$$ f_2(\al) :=\left. \frac{\dd m_{\mua_0}(z)}{\dd z}\right|_{z=\omega_2(\al, 0)}= \int \frac{ \dd \mua(x)}{[x - \omega_2(\al,0)]^2},\quad f_3(\al):=-\omega_2(\al, 0) .$$
First, from equation \eqref{eq_m12} we can obtain that equation \eqref{defnf3} is true. Using the fact that $g_1$ in equation \eqref{g_i} is the inverse function of $m_{\mua_0}$, we can write equation \eqref{eq_m12} into an equation of $f_1$ only when $t=0$:
\be\nonumber
 \al + \left( \frac{1}{1+\xi_1 f_1} -\frac1{f_1}\right) \left(1+  \al \xi_2  f_1\right)=0 .
 \ee
This equation can be reduced to a quadratic equation:
\be\label{eq_m1234}
 \al \left( \xi_1+\xi_2-\xi_1\xi_2\right)f_1^2 + \left[ \al(1-\xi_2)+(1-\xi_1)\right]f_1 -1=0.
 \ee
By definition, $f_1$ is the Stieltjes transform of $\mua_0\boxplus \mub_\al$ at $z=0$. Since  $\mua_0\boxplus \mub_\al$ is supported on $(0,\infty)$, $f_1$ is positive by equation \eqref{def_stj}. Then, it is not hard to see that the only positive solution of equation \eqref{eq_m1234} is given by equation \eqref{defnf1}. Finally, calculating the derivative of $m_{\mua_0}$ using its inverse function, we obtain that \smash{$ f_2(\al)= [g_1'(f_1)]^{-1}$}, which implies that equation \eqref{defnf2} holds.

To conclude the proof, we still need to calculate $\partial_t m_{\mua_t}(\omega_2(\al,t))|_{t=0}$. Taking the derivative of equation \eqref{eq_m12} with respect to $t$ at $t=0$, we get that
\be\label{calc_partial}\partial_t \omega_2(\al, 0) \cdot \left[1+  \al \xi_2  f_1(\al)\right] - \al \xi_2 f_3(\al) \cdot \left.\partial_t  m_{\mua_t}(\omega_2(\al, t))\right|_{t=0}=0.\ee
Using equation \eqref{def_mt}, we can calculate that
\be\nonumber
 \partial_t  m_{\mua_t}(\omega_2(\al, t))= \partial_t \int \frac{\dd \mua(x)}{x+tx^2 - \omega_2(\al,t)}= -  \int \frac{[x^2 -\partial_t \omega(\al,t)]\dd \mua(x)}{[x+tx^2 - \omega_2(\al,t)]^2}.
\ee
Taking $t=0$ in the above equation, we get that
\begin{align*}
& \left.\partial_t  m_{\mua_t}(\omega_2(\al, t))\right|_{t=0}\\
 =& \partial_t \omega(\al,0)\cdot f_2(\al)-  \int \frac{[(x-\omega_2(\al,0))^2 +2 \omega_2(\al,0)(x-\omega_2(\al,0))+\omega_2(\al,0)^2] \dd \mua(x)}{[x - \omega_2(\al,0)]^2}\\
 =& \partial_t \omega(\al,0)\cdot f_2(\al)-1+ 2 f_1(\al) f_3(\al)- f_2(\al) f_3(\al)^2.
\end{align*}
We can solve from this equation that
\begin{align*}
 \partial_t \omega(\al,0)= \frac{1}{f_2(\al)}\left[\left.\partial_t  m_{\mua_t}(\omega_2(\al, t))\right|_{t=0} +1- 2 f_1(\al)f_3(\al)+ f_2(\al) f_3(\al)^2\right].
\end{align*}
Inserting it into equation \eqref{calc_partial}, we can solve that
$$ \left.\partial_t  m_{\mua_t}(\omega_2(\al, t))\right|_{t=0} =- \frac{1- 2f_1(\al) f_3(\al) + f_2(\al)f_3(\al)^2}{1- \frac{\al \xi_2 f_2(\al)f_3(\al)}{1+  \al \xi_2  f_1(\al) }}. $$
Using $(1+  \al \xi_2  f_1(\al))^{-1}=\al^{-1}f_3(\al)$ by equation \eqref{eq_m12}, we conclude Lemma \ref{lem_m'}.
\end{proof}

Now, we are ready to give the proof of Theorem \ref{cor_MTL_loss}.

\begin{proof}%
The estimate \eqref{Lvar_samplesize} is a special case of Theorem \ref{thm_main_RMT} with $M=\id_{p\times p}$. For a sanity check, we show that it is consistent with the result obtained from the free addition technique. 
We can write the variance term \eqref{Lvar} as
\begin{align}\label{Lvar_pf1}
L_{\vari} = \sigma^2 \bigtr{ \frac1{n_1 \Qa+n_2\Qb}}= \frac{p\sigma^2}{n_1 }\cdot \frac{1}{p}\bigtr{ \frac1{ \Qa+\al \Qb}}.
\end{align}
Combining Lemma \ref{lem_distance_ab} with estimate \eqref{Kol_dist}, we get that
$$\left|\frac{1}{p}\bigtr{\frac{1}{\Qa+  \al \Qb}} - m_{\mua_0 \boxplus \mub_\al}(0)\right|  \prec \frac1p.$$
Recall that $m_{\mua_0\boxplus \mub_\al}(0) =f_1(\al)$ by estimate \eqref{defnf1}. Plugging into $\al= n_2/n_1 $, we indeed get that  
$\frac{p}{n_1}f(\al)=L_1$.

Next, for the bias term, we first consider the case $\al = n_2/n_1\le 1$. Recall that the bias limit is given by estimate \eqref{dft}. Taking $t=p^{-1/2}$, we can use Lemma \ref{SxxSyy} to check that w.o.p.,
$$ \left| \left. \frac{\dd  h_\al (t)}{\dd t}\right|_{t=0} - \frac{h_\al(t)-h_\al(0)}{t}\right| \lesssim t.
$$
Similarly, we have w.o.p.,
 $$ \left| \left. \frac{\dd  m_{\mua_t\boxplus \mub_\al}(0)}{\dd t}\right|_{t=0} - \frac{m_{\mua_t\boxplus \mub_\al}(0)-m_{\mua_0\boxplus \mub_\al}(0)}{t}\right| \lesssim t.$$
On the other hand, using Lemma \ref{lem_distance_ab} and estimate \eqref{Kol_dist}, we get that
  $$\left|h_\al(0) - m_{\mua_0 \boxplus \mub_\al}(0)\right| + \left|h_\al(t) - m_{\mua_t \boxplus \mub_\al}(0)\right|  \prec \frac1p.$$
 Combining the above three estimates, we obtain that
 \begin{align}
 \left| \left. \frac{\dd  h_\al (t)}{\dd t}\right|_{t=0} -\left. \frac{\dd  m_{\mua_t\boxplus \mub_\al}(0)}{\dd t}\right|_{t=0} \right|&\prec t + \frac{\big|h_\al(0) - m_{\mua_0 \boxplus \mub_\al}(0)\big| + \big|h_\al(t) - m_{\mua_t \boxplus \mub_\al}(0)\big| }{t} \nonumber\\
 &\prec p^{-1/2}.\label{eq_derivative}
 \end{align}
Plugging this estimate into estimate \eqref{dft}, after a straightforward calculation estimate using equation \eqref{derv_freeadd}, we can get the estimate \eqref{Lbias_samplesize} when $\al\le 1$. 

Finally, we consider the $\al>1$ (recall that $\alpha = \frac{n_2}{n_1}$) case for the bias term. We rewrite equations \eqref{red_bias1} and \eqref{dft} as
\begin{align*}
L_{\bias} &=\left( 1+\OO_\prec(p^{-\frac 1 2})\right) \cdot \frac{\|\mathbf v\|^2}{\al^2 p}\bigtr{\left(\al^{-1}\cal Q^{(1)}+ \cal Q^{(2)}\right)^{-2} (\cal Q^{(1)})^2} \nonumber\\
&=- \left( 1+\OO_\prec(p^{-\frac 1 2})\right)   {\|\mathbf v\|^2} \cdot  {\wt\al^2}\left. \frac{\dd  \wt h_{\wt\al} (t)}{\dd t}\right|_{t=0}, %
\end{align*}
where $\wt\al:=\al^{-1}=n_1/n_2$ and $\wt h_{\wt\al}(t)$ is defined by
$$\wt h_{\wt\al}(t):=\frac1p \bigtr{\frac{1}{\wt\al \cal Q^{(1)} + t (\cal Q^{(1)})^2+  \cal Q^{(2)}}}.$$
We can estimate $\left. \frac{\dd  \wt h_{\wt\al} (t)}{\dd t}\right|_{t=0}$ in the same way as $\left. \frac{\dd  h_\al (t)}{\dd t}\right|_{t=0}$ for $\wt\al<1$, by establishing analogues of Lemma \ref{lem_distance_ab} and Lemma \ref{lem_m'}. Through these calculations, we can find that $\wt\al^2 \left. \frac{\dd  \wt h_{\wt\al} (t)}{\dd t}\right|_{t=0}$ converges to the same asymptotic limit as $\left. \frac{\dd h_\al (t)}{\dd t}\right|_{t=0}$ as expected. However, the error term becomes 
$$ \OO_\prec (\wt\al^2 p^{-\frac 1 2}) =\OO_\prec\left(\frac{p^{-\frac 1 2}n_1^2}{(n_1+n_2)^2}\right),$$
where we use that $\wt\al< \frac{2n_1}{n_1+n_2}$ when $n_1<n_2$. This concludes equation \eqref{Lbias_samplesize} for the case when $\al>1$. \end{proof}

\subsection{Proof of Proposition \ref{claim_model_shift}}\label{sec:claim_model_shift}
\begin{proof}
Let $a= {p}^{-1}\bigtr{\Sigma^{(1)}}$. Since $\|(\Sigma^{(1)})^{1/2}\left(\beta^{(1)}- \beta^{(2)}\right)\|^2 = (2+\OO(p^{-1/2+c})){\mu^2}a$ with high probability, the limit of $L(\hat{\beta}_2^{\MTL})$ is equal to
\begin{equation}\label{eq:ln1}
    \ell(n_1) \define \sigma^2 L_1 +   2\mu^2 a L_2= \frac{\sigma^2 p}{n_1+n_2-p} +  2 {\mu^2}a \cdot \frac{n_1^2 (n_1+n_2-p)+p n_1n_2}{(n_1+n_2)^2(n_1+n_2-p)}.
\end{equation}
By Theorem \ref{cor_MTL_loss}, %
we have that with high probability,
\begin{equation}\label{eq:Lbeta2}
    L(\hat{\beta}_2^{\MTL}) = \left(1+\OO(p^{-1/2+c})\right)\cdot \ell(n_1).
\end{equation} 
By equation \eqref{fact_tr}, with high probability over the randomness of $X^{(2)}$, the excess risk of the OLS estimator is
\begin{align}\label{eq:Lbeta_STL-add}
    L(\hat{\beta}_2^{\STL})
    = \sigma^2 \cdot \bigtr{\Sigma^{(2)}\big({X^{(2)}}^{\top} X^{(2)} \big)^{-1}}
    =  \left(1+\OO(p^{-1+c})\right)\cdot \frac{\sigma^2 p}{n_2 - p} .
\end{align}
Thus, whether or not $L(\hat{\beta}_2^{\MTL}) \le L(\hat{\beta}_2^{\STL})$ reduces to comparing $\ell(n_1)$ and $\frac{\sigma^2 p}{n_2 - p}$. Let $h(n_1)$ be their difference:
\[ h(n_1) = 2\mu^2 a \cdot \frac{n_1^2 (n_1 + n_2 - p) + p n_1 n_2}{(n_1 + n_2)^2 (n_1 + n_2 - p)} - \frac{\sigma^2 p n_1}{(n_1 + n_2 - p)(n_2 - p)}. \]
The sign of $h(n_1)$ is the same as the sign of the following second-order polynomial in $n_1$:
\begin{align*}
    &\tilde h(n_1)
    = 2\mu^2 a (n_2 - p) (n_1 (n_1 + n_2 - p) + p n_2) - \sigma^2 p (n_1 + n_2)^2 \\
    &= \big(2\mu^2 a (n_2 - p) - \sigma^2 p\big) n_1^2 + (2\mu^2 a (n_2 - p)^2 - 2\sigma^2 p n_2) n_1 + \big(2\mu^2 a (n_2 - p) p n_2 - \sigma^2 p n_2^2\big).
\end{align*}
Let $C_0, C_1, C_2$ be the coefficients of $n_1^0, n_1^1, n_1^2$ terms in $\tilde h(n_1)$.
We prove each claim as follows.
\begin{enumerate}%
    \item If $\mu^2a \le \frac{\sigma^2 p}{2(n_2 - p)}$, then $C_0, C_1, C_2$ are all non-positive, which gives $\tilde h(n_1) \le 0$.

    \item If $\frac{\sigma^2 p}{2(n_2 - p)}< \mu^2a < \frac{\sigma^2 n_2}{2(n_2 - p)}$, then $C_2 > 0$ and $C_0 < 0$.
    Thus, $\tilde{h}(n_1)$ has a positive root and a negative root.
    Let the positive root of $\tilde h(n_1)$ be $n_0\in \R$.
    Then, $\tilde h(n_1) \le 0$ if $n_1 \le n_0$, and $\tilde h(n_1) \ge 0$ otherwise.

    \item If $ \mu^2a\ge \frac{\sigma^2 n_2}{2(n_2 - p)}$, then $C_0$ and $C_2$ are both non-negative.
    Furthermore, we can check that $C_1\ge 0$ using the assumption $n_2 \ge 3p$. Hence, we have $\tilde h(n_1) \ge 0$ for all $n_1$.
\end{enumerate}
Combining these three cases with equations \eqref{eq:Lbeta2} and \eqref{eq:Lbeta_STL-add} concludes the proof.
\end{proof}

\subsubsection{Proof of the Dichotomy} %
\begin{proof}
   The derivative of $\ell(n_1)$ is equal to
   \begin{align*}
       \ell'(n_1)  = -\frac{p\sigma^2}{(n-p)^2} +  \frac{2\mu^2a n_2}{(n-p)^2} \left( 2\left( 1- \frac{n_2}{n}\right)\left( 1- \frac{2p}{n}\right)\left( 1- \frac{p}{n}\right) + \frac{p(n_2-p)}{n^2}\right),
   \end{align*}
   where we denote $n=n_1+n_2$. Hence, the sign of $\ell'(n_1)$ is given by the sign of the following function:
   $$ f(n)\define 2\left( 1- \frac{n_2}{n}\right)\left( 1- \frac{2p}{n}\right)\left( 1- \frac{p}{n}\right) + \frac{p(n_2-p)}{n^2} - \frac{p\sigma^2}{2\mu^2a n_2}.$$
Under the assumption $n\ge n_2\ge 3p$, we see that 
$$f'(n) > \frac{2n_2}{n^2} \left( 1- \frac{2p}{n}\right)\left( 1- \frac{p}{n}\right) - \frac{2p(n_2-p)}{n^3} \ge 0, $$
i.e., $f(n)$ is strictly increasing with respect to $n$. Hence, if $f(n_2)\ge 0$, then case 1 holds; otherwise, if $f(n_2)<0$, then case 2 holds. This concludes the proof.
\end{proof}

\subsection{Proof of Theorem \ref{prop_lb}}\label{proof_lb}

\begin{proof}%
Since $X^{(1)}$ and $ X^{(2)}$ are both isotropic Gaussian matrices, with high probability, they satisfy the following normalization conditions:
    there exists a constant $c>0$, such that for $i=1,2$ and any vector $\theta\in \R^p$,
    \begin{align}
        c \bignorm{\theta} \le \frac 1 {\sqrt {n_i}} \bignorm{X^{(i)} \theta} \le \frac{1}{c} \bignorm{\theta}, \label{eq_norm}
    \end{align}
    where we use $X_j^{(i)}$ to denote the $j$-th column of the matrix $X^{(i)}$.
    For reference, see, e.g., Section 3.2 of \citet{raskutti2011minimax}.
    These two estimates also follow directly from Lemma \ref{SxxSyy}.
    For the rest of the proof, we will condition on $X^{(1)}$ and $X^{(2)}$, which are regarded as fixed design matrices satisfying the normalization condition \eqref{eq_norm}.
    For convenience, denote $n = n_1 + n_2$ and $r_1=n_1/n$.
Our proof is divided into two cases.
    \begin{enumerate}
	   \item[i)] $\frac{n_1^2\mu^2}{(n_1+n_2)^2} \le \frac{\sigma^2 p}{n_1 + n_2}$. Notice that $\frac{\sigma^2 p}{n_1 + n_2}$ is the minimax lower bound for fixed design linear regression when $\mu = 0$, i.e., all of the $n_1 + n_2$ samples are drawn from the same linear regression model.
    Thus, by invoking some standard arguments (see e.g., Example 15.14 of \citet{wainwright2019high}), we can show equation \eqref{eq_lb}.
	   \item[ii)] $\frac{n_1^2\mu^2}{(n_1+n_2)^2} > \frac{\sigma^2 p}{n_1 + n_2}$. Then, we need to show that the minimax rate is at least of order $\min(\frac{n_1^2\mu^2}{(n_1+n_2)^2}, \frac{\sigma^2 p} {n_2} )$. This can be done by constructing an $\OO(r_1\mu )$-cover of the subset of parameter vectors.
    \end{enumerate}
    We next prove each case separately following the arguments in Example 15.14 of  \citet{wainwright2019high}. %
    
    For case i), 
    we design a covering set, denoted as $\cS=\set{\wt\theta^1, \wt\theta^2, \dots, \wt\theta^M}\cup \set{\theta^1, \theta^2, \dots, \theta^M}$, 
    such that the followings hold:
    \begin{itemize}
        \item $\set{\theta^1, \theta^2, \dots, \theta^M}$ is a $2\delta $-packing of the subset of $p$-dimensional vectors of Euclidean length at most $4\delta $.
        \item $\|\wt\theta^i - \theta^i\| = \mu$ for $i = 1, 2, \dots, M$. We achieve this by adding a fixed shift $a$ of length $\mu$ to $\theta^i$, i.e,  $\wt\theta^i = \theta^i + a$.
        \item The size of $\cal S$ can be as large as $2^{p+1}$.
    \end{itemize}
    For any $j=1,\ldots, M$, let $\mathbb P_j$ denote the law of the label vector  $y = \begin{pmatrix} Y^{(1)} \\ Y^{(2)} \end{pmatrix}$   of the two tasks when the true regression vectors are $\beta^{(1)} = \wt\theta^j$ and $\beta^{(2)} = \theta^j$.
    By definition of the linear model, conditioning on $X^{(1)}$ and $X^{(2)}$, 
    $\P_j$ follows a multivariate normal distribution $\cN\left(\begin{pmatrix} X^{(1)} \wt\theta^j \\ X^{(2)} \theta^j \end{pmatrix}, \sigma^2 \id_{n \times n}\right)$.
    Similarly, for any $k\ne j$, let $\mathbb P_k$ denote the law of the label vector when the true regression vectors are $\wt\theta^k$ and $ \theta^k$.
    By standard facts on the KL divergence between two normal distributions, we have
    \begin{align}
        D_{KL}(\mathbb P_j \| \mathbb P_k) &= \frac 1 {2\sigma^2} \left( \bignorm{X^{(1)} (\wt\theta^j - \wt\theta^k)}^2 + \bignorm{X^{(2)} (\theta^j - \theta^k)}^2 \right) \nonumber \\
        &\le \frac{1} {2\sigma^2}\Big(c^{-2} n_1 \| \wt\theta^j - \wt\theta^k \|^2 + c^{-2} n_2 \bignorm{\theta^j - \theta^k}^2 \Big) \nonumber\\ %
        &\le \frac{n}{2c^2\sigma^2}  \bignorm{\theta^j - \theta^k}^2 %
        \le \frac{32 n \delta^2} {c^2 \sigma^2}, \nonumber
    \end{align}
    where we use equation \eqref{eq_norm} in the second step. 
    We need the above bound to satisfy the following condition (see equation (15.35b), Section 15.3.3 of \citet{wainwright2019high}): %
    \begin{align}\frac{1}{2}\log  {|\cS|}  \ge \frac{32n \delta^2}{c^2\sigma^2} +\log 2,\label{eq_kl_condition}\end{align}
   which can be satisfied by setting
    $\delta^2 = \frac{c^2 \sigma^2 p}{100 n}$
    for large enough $p$. 
    Thus, we conclude that
    \begin{align}
        \inf_{\hat\beta}\sup_{\Theta(\mu)} \ex{\frac 1 {n_2}\bignorm{\wt X^{(2)} (\hat\beta - \beta^{(2)} )}^2} 
        &=\inf_{\hat\beta}\sup_{\Theta(\mu)} \ex{\bignorm{\hat\beta - \beta^{(2)}}^2} \label{eq_lb_35} \\
        &\ge \frac{\delta^2} 2 %
        \ge \frac{ c^2 \sigma^2 p} {200 n}
        \ge \frac{c^2} {400} \left(\frac{n_1^2\mu^2}{(n_1+n_2)^2} + \frac{\sigma^2 p} {n_1 + n_2} \right). \nonumber
    \end{align}

    Next, we consider case ii), which is further divided into two cases.
    If $r_1^2\mu^2 \ge \frac{\sigma^2 p}{n_2}$, then the source task samples are not helpful in improving the rate.
    In this case, we just need to follow a similar calculation as above, but use the label vector $y = Y^{(2)}$ instead with the KL divergence $\frac 1 {2\sigma^2} \bignorm{X^{(2)}(\theta^i - \theta^k)}^2$.
    Then, the lower bound in equation \eqref{eq_lb_35} becomes
    \[ \frac {c^2} {200} \frac{\sigma^2 p} {n_2} \ge \frac {c^2} {400} \left(\frac {\sigma^2 p} {n_2} + \frac{\sigma^2 p}{n_1 + n_2}\right). \]  %
    On the other hand, if $\frac{\sigma^2 p}{n_1 + n_2} < r_1^2\mu^2 < \frac{\sigma^2 p}{n_2}$, we again construct a $2\delta$-packing of the subset of $p$-dimensional vectors of Euclidean length at most $4\delta$, but we choose the label vector as $y = Y^{(2)}$. %
    Following a similar argument as above, we reach the conclusion that
    \[ D_{KL}(\mathbb P_j \| \mathbb P_k) \le  \frac {32 n_2 \delta^2} {c^2 \sigma^2}. \] 
    By setting { $\delta^2 = \frac {c^2r_1^2\mu^2} {100}$}, we know that the above bound is less than $\frac{1}{2}(p-1)\log 2$. 
    Thus, the condition for the KL bound \eqref{eq_kl_condition} is satisfied.
    Hence, the lower bound in equation \eqref{eq_lb_35} becomes
    \[ \frac{\delta^2 } 2 =  \frac { c^2r_1^2\mu^2} {200} \ge \frac { c^2} {400} \left(\frac{n_1^2\mu^2}{(n_1+n_2)^2} + \frac{\sigma^2 p}{n_1 + n_2}\right). \] %
    Combining all three cases together and renaming the constant $c$, we complete the proof of estimate \eqref{eq_lb}.
\end{proof}

\subsection{Proof of Theorem \ref{thm_Sigma2Id}}\label{sec_add_modelcov}

Theorem \ref{thm_Sigma2Id} can be proved in a similar way to Theorem \ref{cor_MTL_loss} by using free additions.
\begin{proof}%
In the current setting with $\Sigma^{(2)}=\id$, we denote $\al=n_2/n_1$ and
$$\cal Q^{(1)}: = \frac1{n_1}{X^{(1)}}^\top X^{(1)}= \frac1{n_1}{\Sigma^{(1)}}^{1/2}{Z^{(1)}}^\top Z^{(1)}{\Sigma^{(1)}}^{1/2},\quad \cal Q^{(2)}: = \frac1{n_2}{Z^{(2)}}^\top Z^{(2)}.$$
In terms of these notations, $L_{\vari}$ still satisfies equation \eqref{Lvar_pf1}, and $L_{\bias}$ satisfies a similar equality as estimate \eqref{red_bias1}:
\be\label{red_bias_new}
\begin{split}
L_{\bias} &=\left( 1+\OO_\prec(p^{-1/2})\right) \cdot \frac{2\mu^2}{p} \bigtr{\left(\cal Q^{(1)}+ \al \cal Q^{(2)}\right)^{-2} (\cal Q^{(1)})^2} \\
&=- \left( 1+\OO_\prec(p^{-1/2})\right) \cdot  {2\mu^2} \left. \frac{\dd  h_\al (t)}{\dd t}\right|_{t=0},
\end{split}
\ee
where we use that under the random-effect model, $\beta^{(1)}-\beta^{(2)}$ has i.i.d.~Gaussian entries of mean zero and variance $2\mu^2 / p$.

The ESD of $\Qa$ converges weakly to a deformed MP law $\mu^{(1)}$ determined by the eigenvalues of $\Sigma^{(1)}$ \cite{MP} as described below. Let $\mu^{(0)}$ be the asymptotic ESD of $\cal Q^{(0)}:=X^{(1)}{X^{(1)}}^\top/n_1$. Since $\cal Q^{(0)}$ has the same non-zero eigenvalues as $\Qa$ and $n_1-p$ more zero eigenvalues, we have that
$$\mu^{(0)} = \frac{p}{n_1}\mu^{(1)} + \frac{n_1-p}{n_1 }\delta_0. $$
Hence, the Stieltjes transforms of $\mu^{(0)}$ and $\mu^{(1)}$ satisfy the following relation:
\begin{equation*}%
m_{\mu^{(1)}} =\frac{n_1}{p} m_{\mu^{(0)}} + \frac{n_1-p}{p z}.
\end{equation*}
Moreover, similar to \eqref{eq_self_at}, $m_{\mu^{(0)}}$ satisfies the deformed MP equation
$$\frac{1}{m_{\mu^{(0)}}(z)}=-z+ \frac{1}{n_1}\sum_{i=1}^p \frac{\sigma_i^{(1)}}{1+ \sigma_i^{(1)} m_{\mu^{(0)}}(z)}, $$
where $\sigma_i^{(1)}$, $i=1,\cdots p$, are the eigenvalues of $\Sigma^{(1)}$. Thus, we have that $g_0(m_{\mu^{(0)}}(z))=z$, where recall that the function $g_0$ is defined in \eqref{g1new}.
The sharp convergence rate of $\mu_{\cal Q^{(0)}}$ follows from Theorem 3.12 of \citet{Anisotropic}:
\begin{equation*}%
d_K\left( \mu_{\cal Q^{(0)}},\mu^{(0)}\right) \prec  p^{-1}.
\end{equation*}

Similar to the proof of Theorem \ref{cor_MTL_loss} in Appendix \ref{subsec:HPSmodel}, we need to calculate $m_{\mua_t\boxplus \mub_\al}(0)$ and its derivative $\partial_t m_{\mua_t\boxplus \mub_\al}(0)$ at $t=0$.
First, we follow exactly the same argument as in the proof of Lemma \ref{lem_m'} and derive the self-consistent equation \eqref{eq_m12} for $\omega_2(\al,t)$. Then, we again introduce the following functions:
$$f_1(\al):=m_{\mua_0}(\omega_2(\al,0))= \frac{n_1}{p} m_{\mu^{(0)}}(\omega_2(\al,0)) + \frac{n_1-p}{p \omega_2(\al,0)},$$
$$  f_2(\al) :=\left. \frac{\dd m_{\mua_0}(z)}{\dd z}\right|_{z=\omega_2(\al, 0)}, \quad f_3(\al):=-\omega_2(\al, 0) .$$
We rewrite equation \eqref{eq_m12} at $t=0$ and $\al=n_2/n_1$ as
\be\label{eq_m12_new}
\frac{n_2}{n_1}\left(1+ \frac{\xi_2}{\xi_1} -\xi_2 \right) + \omega_2 \cdot \left[1+   m_{\mu^{(0)}}(\omega_2(\al,0))\right]=0 .
\ee
Using the fact that $g_0$ is the inverse function of $m_{\mu^{(0)}}$, we can rewrite \eqref{eq_m12_new} into an equation of $y_0:=m_{\mu^{(0)}}(\omega_2(\al,0))$ at $t=0$:
\be\nonumber
\frac{n_1+n_2-p}{n_1} + g_0(y_0)  \cdot \left( 1+    y_0\right)=0 ,
\ee
which is the equation \eqref{y0_simple_eq}. We can express $f_1(\al)$ and $f_3(\al)$ as in equation \eqref{f1f2f3_new_eq}. Regarding $f_2$, we have that
$$  f_2(\al)= \frac{n_1}{p} m_{\mu^{(0)}}'(\omega_2(\al,0)) - \frac{n_1-p}{p [\omega_2(\al,0)]^2}= \frac{n_1}{p\cdot g_0'(y_0)} - \frac{n_1-p}{p\cdot [g_0(y_0)]^2} .$$
Finally, following the argument below line \eqref{calc_partial}, we can obtain line \eqref{derv_freeadd}.

From these calculations, we obtain $m_{\mua_0\boxplus \mub_\al}(0)=f_1(\al)$ and $ \partial_t m_{\mua_t\boxplus \mub_\al}(0)|_{t=0}$ given by equation \eqref{derv_freeadd}.
Plugging them into equations \eqref{Lvar_pf1} and \eqref{red_bias_new}, we conclude the proof.
\end{proof}

\subsection{An Approximate Estimate under Covariate and Model Shifts}\label{sec_approx_cm}

With the local convergence of the resolvent, we can also derive an approximate estimate of the bias under arbitrary covariate and model shifts, which may be of independent interest.
The accuracy of this estimate increases as $n_1/p$ increases.

\begin{theorem}[Anisotropic covariance]\label{prop_main_RMT}
Under Assumption \ref{assm_big1}, if $n_1\gg p$, then for any small constant $c>0$, 
with high probability over the randomness of the training samples, the following holds: %
	\begin{align}
		& \bigabs{ L_{\bias} -   \bignorm{\Pi^{1/2} {\Sigma^{(1)}}^{1/2}  \big(\beta^{(1)}-   \beta^{(2)}\big)}^2  } \notag\\
		=& \OO\left(\left(\sqrt{\frac p {n_1}} + n_1^{{2}/{\varphi} - 1/2 + c}\right) \frac{\lambda_1^2}{\lambda_p^2}\right)
        \bignorm{{\Sigma^{(1)}}^{1/2} \left(\beta^{(1)}-  \beta^{(2)}\right) }^2.
        \label{lem_cov_derv_eq}
	\end{align}
				 Above, $\lambda_1$ and $\lambda_p$ are the largest and smallest singular values of $M$, respectively.
				$\Pi $ is a $p\times p$ matrix defined as
				$$\Pi =\frac{n_1^2 }{(n_1+n_2)^2}  M \frac{\al_3   M^\top M+ \al_4 + 1 }{\left(\al_1   M^\top M + \al_2 \right)^2} M^\top,$$
    where $\al_1$ and $\al_2$ are defined in equation \eqref{eq_a12extra}, and 			  $\al_{3}$ and $\al_4$ are the unique positive solutions of the following system of equations:
		\be  \label{eq_a34extra}
		\begin{split}
				& \al_3 + \al_4 = \frac{1}{n_1 + n_2}\sum_{i=1}^p \frac{1}{\lambda_i^2 \al_1 + \al_2}, \\
				& \al_3 + \frac{1}{n_1 + n_2} \sum_{i=1}^p \frac{\lambda_i^2 (\al_2 \al_3-\al_1 \al_4 )}{\left(\lambda_i^2 \al_1 + \al_2\right)^2} = \frac{1}{n_1 + n_2} \sum_{i=1}^p \frac{\lambda_i^2 \al_1}{\left(\lambda_i^2 \al_1 + \al_2\right)^{2}}.
			\end{split}
			\ee
\end{theorem}

\begin{proof}
We first show an estimate on 
 $$\wt L_{\bias} :=n_1^2 \left\| {\Sigma^{(2)}}^{1/2}\hat \Sigma^{-1} \Sigma^{(1)}\left(\beta^{(1)}- \beta^{(2)}\right) \right\|^2.$$
We claim that for any small constant $c>0$, there exists a high probability event on which the following estimate holds:
\begin{align}
				& \bigabs{ \wt L_{\bias}  -  \bignorm{  \Pi^{1/2} {\Sigma^{(1)}}^{1/2} (\beta^{(1)}- \beta^{(2)})}^2   }   \prec  n^{-1/2}Q  \left\|{\Sigma^{(1)}}^{1/2} \left(\beta^{(1)}- \beta^{(2)}\right) \right\|^2  . \label{lem_cov_derv_app1}
			\end{align}

Define the vector $\bv:=V^\top  {\Sigma^{(2)}}^{-1/2} \Sigma^{(1)}\left( \beta^{(1)}-  \beta^{(2)}\right)\in \R^{p}$, and its embedding in $\R^{p+n}$, $\mathbf w =(\bv^\top, \mathbf 0_n^\top)^\top$, where $\mathbf 0_n$ is an $n$-dimensional zero vector. Then, we can write that 
$$ \wt L_{\bias} = \bw^\top \frac{n_1^2}{( \Lambda U^\top {Z^{(1)}}^\top Z^{(1)} U\Lambda  + V^\top {Z^{(2)}}^\top Z^{(2)}V )^2} \bw= \frac{n^2_1}{n^2}\bw^\top G'(0) \bw,$$
where $G'(0)$ denotes the derivative of $G(z)$ with respect to $z$ at $z=0$. Again, we introduce the truncated matrices $\wt Z^{(1)}$ and $\wt Z^{(2)}$ in equation \eqref{truncate1}. With a similar argument as in the above proof of Theorem \ref{thm_main_RMT}, we can show that \eqref{aniso_law} holds for $\bw^\top G(z)\bw$ on the event $\Xi_1=\{\wt Z^{(1)} = Z^{(1)},  \wt Z^{(2)} = Z^{(2)}\}$. 
Combining equation \eqref{aniso_law} with Cauchy's integral formula, we get that on $\Xi_1$, %
\be\label{apply derivlocal}
\begin{split}
  \bw^\top  G'(0)\bw  = \frac{1}{2\pi \ii}\oint_{\cal C} \frac{ \bw^\top G(z)\bw }{z^2}\dd z &=  \frac{1}{2\pi \ii}\oint_{\cal C} \frac{ \bw^\top\Gi(z)\bw}{z^2}\dd z +\OO_\prec(n^{-\frac12}Q\|\mathbf w\|^2) \\
  &=  \bw^\top \Gi'(0)\bw + \OO_\prec(n^{-\frac12}Q\|\mathbf w\|^2),
\end{split}
\ee
where $\cal C$ is the contour $\{z\in \C: |z| = (\log n)^{-1} \}$. With equation \eqref{defn_piw}, we can calculate the derivative $\bw^\top \Gi'(0)\bw$ as
\begin{equation*}%
\bw^\top \Gi'(0)\bw = \bv^\top  \frac{\al_3  \Lambda^2+(1+\al_4)\id_{p\times p}}{(\al_{1}    \Lambda^2 + \al_{2}\id_{p\times p})^2}\bv, 
\end{equation*}
where
$$ \al_3 := - \left. \frac{\dd \al_1(z)}{\dd z}\right|_{z=0}, \quad \al_4: = -\left. \frac{\dd \al_2(z)}{\dd z}\right|_{z=0}.$$
Taking derivatives of the system of equations in equation \eqref{selfomega_a} with respect to $z$ at $z=0$, we can derive equation \eqref{eq_a34extra}. Together with equation \eqref{apply derivlocal}, this concludes that equation \eqref{lem_cov_derv_app1} is true.

Now, with equation \eqref{lem_cov_derv_app1}, to conclude equation \eqref{lem_cov_derv_eq} it remains to bound %
\begin{align*}
L_{\bias}-\wt L_{\bias} 
&= {2 n_1}(\beta^{(1)} - \beta^{(2)})^{\top}{\Sigma^{(1)}}^{1/2}\Delta \left[{\Sigma^{(1)}}^{1/2}\hat \Sigma^{-1} \Sigma^{(2)} \hat \Sigma^{-1} {\Sigma^{(1)}}^{1/2}\right] {\Sigma^{(1)}}^{1/2} (\beta^{(1)} - \beta^{(2)}) \nonumber
		\\
		&\quad + \bignorm{{\Sigma^{(2)}}^{1/2} \hat \Sigma^{-1} {\Sigma^{(1)}}^{1/2}\Delta  {\Sigma^{(1)}}^{1/2}(\beta^{(1)} - \beta^{(2)})}^2,
		\end{align*}
		where we abbreviate $\Delta = {Z^{(1)}}^{\top}Z^{(1)} - {n_1}\id_{p\times p}$. From this equation, we get that 
\begin{align*}
		 \left|L_{\bias}-\wt L_{\bias}\right| 
		 \le \left(\left( n_1 + \|\Delta\|\right)^2 -n_1^2 \right) \bignorm{{\Sigma^{(1)}}^{\frac 1 2} \hat \Sigma^{-1}  \Sigma^{(2)}  \hat \Sigma^{-1} {\Sigma^{(1)}}^{\frac 1 2}} 
		  \cdot \bignorm{{\Sigma^{(1)}}^{\frac 1 2} (\beta^{(1)} - \beta^{(2)})}^2.
	\end{align*}
 By Corollary \ref{fact_minv}, the following estimates holds with high probability for any constant $c>0$:
	\begin{align*}
	\left( n_1 + \|\Delta\|\right)^2 -n_1^2 & \le n_1^2\left[\left( 1+\sqrt{\frac{p}{n_1}}\right)^4 - 1 + n_1^{-1/2+2/\varphi + c}\right], \\
	\left\|{\Sigma^{(2)}}^{1/2}\hat \Sigma^{-1}  {\Sigma^{(2)}}^{1/2}\right\| &\le \frac{1+ n_1^{-1/2+2/\varphi + c}}{ \lambda_p^2 (\sqrt{n_1}-\sqrt{p})^2 + (\sqrt{n_2}-\sqrt{p})^2} .
	\end{align*}
	Combining the above three estimates and $\bignorms{M} \le \lambda_1$, we obtain that with high probability,
	\begin{align*}
	\left|L_{\bias}-\wt L_{\bias}\right| 
	\le \left(\left( 1+\sqrt{\frac{p}{n_1}}\right)^4 - 1 + n_1^{-1/2+2/\varphi + c}\right) \frac{ \lambda_1^2  n_1^2 \left\|{\Sigma^{(1)}}^{\frac 1 2} \left(\beta^{(1)}-  \beta^{(2)}\right) \right\|^2}{  ( \lambda_p^2(\sqrt{n_1}-\sqrt{p})^2 + (\sqrt{n_2}-\sqrt{p})^2)^2} .
	\end{align*}
	Combining this estimate with equation \eqref{lem_cov_derv_app1}, we conclude the proof of Theorem \ref{prop_main_RMT}. 
\end{proof}

\section{Proofs for the Multiple Tasks Setting}\label{sec_multiproof}

We first derive the HPS estimator. For the optimization objective $\ell(A, B)$ in equation \eqref{eq_mtl_same_cov}, using the local optimality condition $\frac {\partial f} {\partial B} = 0$, we can obtain $\hat{B}$ as a function of $A$:
	\begin{align*}
		\hat{B}(A) 
		&= (X^{\top} X)^{-1} X^{\top} Y A^{\top} (AA^{\top})^{+}, %
	\end{align*}
	where $Y := [Y^{(1)}, Y^{(2)}, \dots, Y^{(t)}]$ and $(AA^{\top})^{+}$ denotes the pseudoinverse of $AA^{\top}$.
	Plugging $\hat{B}(A)$ into equation \eqref{eq_mtl_same_cov}, we obtain the following objective that depends only on $A$ (in matrix notation):
	\begin{align}\label{eq_mtl_output_layer}
		g(A) = \bignormFro{X (X^{\top}X)^{-1}X^{\top} Y A^{\top} (AA^{\top})^{+} A - Y}^2.
	\end{align}
	Note that $A^{\top} (AA^{\top})^{+} A$ is a projection onto the subspace spanned by the rows of $A$. For simplicity, we write it into the form
	$$A^{\top} (AA^{\top})^{+} A= U_A U_A^\top,$$
	where $U_A \in \R^{t\times r}$ is a $t\times r$ partial orthonormal matrix (i.e., $U_A^\top U_A=\id_{r\times r}$). Hence, we also denote the function $g(A)$ by $g(U_A)$.

    Now, to find the optimal solution for $U_A$, we write $X (X^{\top} X)^{-1} X^{\top} = P_X$ (which is an $n$ by $n$ projection matrix of rank $p$), and expand $g(A)$ as:
    \begin{align*}
        g(A) = \bignormFro{P_X Y U_A U_A^{\top} - Y}^2
        &= \bignormFro{P_X Y U_A U_A^{\top}}^2 + \bignormFro{Y}^2 - 2\inner{P_X Y U_A U_A^{\top}}{Y} \\
        & = \bignormFro{Y}^2 - \inner{P_X Y U_A U_A^{\top}}{Y} = \bignormFro{Y}^2 - \inner{U_A U_A^{\top}}{Y^{\top} P_X Y},
    \end{align*}
    where we use the fact that $P_X^2 = P_X$ since it is a projection matrix and $U_A^{\top} U_A = \id_{r\times r}$.
    Thus, to minimize $g(A)$, we just need to maximize the inner project between $Y^{\top} P_X Y$ and $U_A U_A^{\top}$---this can be achieved by finding the best rank-$r$ SVD of $Y^{\top} P_X Y$ and taking the top-$r$ singular vectors, which form the partial orthonormal matrix $U_r = U_{\hat A}$.
    Lastly, we can insert $U_r$ as $A$ back to $\hat B A$ to obtain $\hat B \hat A$ as the HPS estimators for all tasks from $1$ to $t$.

    \subsection{Proof of Theorem \ref{thm_many_tasks}}\label{proof_mtl}
    
	First, we can use the concentration estimates in Appendix \ref{app_tool} to simplify the expression of $g(U_A)$.
	In the following proof, we always let $Q=n^{2/\varphi}$.

	\begin{lemma}\label{lem_exp_opt}
	In the setting of Theorem \ref{thm_many_tasks}, for any small constant $c>0$ and large constant $C>0$, there exists a high probability event $\Xi$, on which the following estimate holds:
		  \begin{align}
		\bigabs{g(U_A) - h(U_A)} \le &\ Q n^{\frac 1 2+c} \bignorm{\Sigma^{\frac 1 2} B^{\star} (U_AU_A^{\top} - \id_{t\times t})}_F^2  +  \sigma^2 n^{\frac 1 2 + c} + p^{-C} \bignorm{\Sigma^{\frac 1 2} B^{\star}}_F^2,\label{eq_gA_err}
	\end{align}
	uniformly in all rank-$r$ partial orthonormal matrices $U_A\in \R^{t\times r}$. Here, $h$ is defined by
	\be\label{same_hA}h(U_A):= n \bignormFro{\Sigma^{1/2} B^{\star} \bigbrace{U_{A} U_{A}^{\top} - \id_{t\times t}}}^2 + \sigma^2 (n t - p r).\ee
	\end{lemma}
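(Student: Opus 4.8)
The plan is to expand $g(U_A)$ algebraically into a few quadratic forms in the noise matrix $\mathcal E := [\varepsilon^{(1)},\dots,\varepsilon^{(t)}]$ and $X$, match the two leading pieces with the two terms of $h(U_A)$ in \eqref{same_hA}, and control the remainders with the concentration estimates of Section~\ref{app_tool}. Throughout, $t$ and $r$ are fixed constants, so $\|U_AU_A^\top\|_F$ and $\|\id_t-U_AU_A^\top\|_F$ are $\OO(1)$, and I would work on the high-probability event of Corollary~\ref{fact_minv} on which $X^\top X$ is invertible and $Z$ has bounded support $Q=n^{2/\varphi}$. Write $P := X(X^\top X)^{-1}X^\top$, $P^\perp := \id_n-P$, $\Pi_r := U_AU_A^\top$ and $\Pi_r^\perp := \id_t-\Pi_r$. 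Since $PX=X$ and $Y=XB^\star+\mathcal E$, we have $PY=XB^\star+P\mathcal E$, and a direct computation gives
\[
	PY\Pi_r - Y \;=\; \underbrace{XB^\star(\Pi_r-\id_t)}_{=:T_1}\;\underbrace{-\,P^\perp\mathcal E\,\Pi_r}_{=:T_2}\;\underbrace{-\,\mathcal E\,\Pi_r^\perp}_{=:T_3}.
\]
Hence $g(U_A)=\|T_1\|_F^2+\|T_2\|_F^2+\|T_3\|_F^2+2\langle T_1,T_2\rangle+2\langle T_1,T_3\rangle+2\langle T_2,T_3\rangle$, and $\langle T_1,T_2\rangle=0$ exactly because $X^\top P^\perp=0$.

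For the noise-only terms I would use $\|T_2\|_F^2=\tr[\Pi_r\,\mathcal E^\top P^\perp\mathcal E]$, $\|T_3\|_F^2=\tr[\Pi_r^\perp\,\mathcal E^\top\mathcal E]$ and $\langle T_2,T_3\rangle=\tr[\Pi_r\,\mathcal E^\top P^\perp\mathcal E\,\Pi_r^\perp]$. Conditioning on $X$ and applying Corollary~\ref{cor_calE} (equation \eqref{vcalA2}) entrywise to these $t\times t$ matrices gives $\mathcal E^\top P^\perp\mathcal E=\sigma^2(n-p)\id_t+\OO_\prec(\sigma^2\sqrt n)$ (using $\tr P^\perp=\|P^\perp\|_F^2=n-p$) and $\mathcal E^\top\mathcal E=\sigma^2 n\,\id_t+\OO_\prec(\sigma^2\sqrt n)$. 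The leading parts contribute $\sigma^2(n-p)\tr\Pi_r+\sigma^2 n\tr\Pi_r^\perp=\sigma^2[(n-p)r+n(t-r)]=\sigma^2(nt-pr)$, which is exactly the second term of $h(U_A)$; all remainders, and $\langle T_2,T_3\rangle$ in its entirety (since $\Pi_r\Pi_r^\perp=0$), are $\OO_\prec(\sigma^2\sqrt n)$. These estimates are automatically uniform in $U_A$ because the random matrices involved do not depend on $U_A$.

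The crucial point is that the error in $\|T_1\|_F^2=\|XB^\star(\Pi_r-\id_t)\|_F^2$ must be bounded \emph{relative to} $\|\Sigma^{1/2}B^\star(\Pi_r-\id_t)\|_F^2$, not the full $\|\Sigma^{1/2}B^\star\|_F^2$. Writing $c_j$ for the columns of $B^\star(\Pi_r-\id_t)$ and $X=Z\Sigma^{1/2}$, we have $\|T_1\|_F^2=\sum_j (\Sigma^{1/2}c_j)^\top Z^\top Z\,(\Sigma^{1/2}c_j)$, and every $\Sigma^{1/2}c_j$ lies in the fixed subspace $\mathcal V:=\mathrm{range}(\Sigma^{1/2}B^\star)$ of dimension $d\le t$. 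Taking an orthonormal basis $O\in\real^{p\times d}$ of $\mathcal V$, the $d^2$ entries of $(ZO)^\top(ZO)-n\id_d$ are of the form $\|Zo\|^2-n$ and $(Zo)^\top(Zo')$ with $o,o',o\pm o'$ fixed vectors of bounded norm, hence all $\OO_\prec(\sqrt n\,Q)$ by Corollary~\ref{cor_largedeviation} (equation \eqref{Zv_cor}) and polarization; since $d=\OO(1)$, this gives $\|\Pi_{\mathcal V}(Z^\top Z-n\id_p)\Pi_{\mathcal V}\|\prec\sqrt n\,Q$ and therefore, uniformly in $U_A$,
\[
	\bigl|\,\|T_1\|_F^2 - n\,\|\Sigma^{1/2}B^\star(\Pi_r-\id_t)\|_F^2\,\bigr|\;\prec\;\sqrt n\,Q\,\|\Sigma^{1/2}B^\star(\Pi_r-\id_t)\|_F^2 ,
\]
with leading term equal to the first term of $h(U_A)$. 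The last cross term $\langle T_1,T_3\rangle=-\langle XB^\star(\Pi_r-\id_t),\,\mathcal E\,\Pi_r^\perp\rangle$ is handled by the same device: the columns of $XB^\star(\Pi_r-\id_t)$ all lie in the $(\le t)$-dimensional, $X$-measurable subspace $\mathcal W:=\mathrm{range}(XB^\star)$, so $\langle T_1,T_3\rangle=\langle T_1,\,\Pi_{\mathcal W}\mathcal E\,\Pi_r^\perp\rangle$, and by Cauchy--Schwarz $|\langle T_1,T_3\rangle|\le\|T_1\|_F\,\|\Pi_{\mathcal W}\mathcal E\|_F$. Conditioning on $X$, $\|\Pi_{\mathcal W}\mathcal E\|_F^2$ is a sum of at most $t^2$ squared inner products $\langle w,\varepsilon^{(j)}\rangle^2$ with $w$ a fixed unit vector, each $\prec\sigma^2$ by \eqref{vcalE}, so $\|\Pi_{\mathcal W}\mathcal E\|_F\prec\sigma$; combined with the previous display, $|\langle T_1,T_3\rangle|\prec\sigma\sqrt n\,\|\Sigma^{1/2}B^\star(\Pi_r-\id_t)\|_F$, which by AM--GM is $\le\sigma^2 n^{1/2+c}+Q\,n^{1/2+c}\|\Sigma^{1/2}B^\star(\Pi_r-\id_t)\|_F^2$. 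Summing all contributions and using $Q=n^{2/\varphi}\ge1$ to absorb the stray $\sqrt n$ factors into $n^{1/2+c}Q$ (relabelling $c$) yields \eqref{eq_gA_err}. The slack term $p^{-C}\|\Sigma^{1/2}B^\star\|_F^2$ is not needed in this argument; it is, however, exactly the error produced if one instead establishes the estimate for each fixed $U_A$ and extends it to all $U_A$ by a $p^{-C_0}$-net of the compact set $\{U:U^\top U=\id_{r\times r}\}$ together with the deterministic Lipschitz continuity of $g$, in the style of the proof of Proposition~\ref{lem_HPS_loss}. The main obstacle is precisely the uniform-in-$U_A$ control of $\|T_1\|_F^2$ and $\langle T_1,T_3\rangle$ with the correct relative error, which both hinge on the observation that the relevant random bilinear forms only probe a fixed $\OO(1)$-dimensional subspace, collapsing them to ``thin'' isotropic sample-covariance estimates.
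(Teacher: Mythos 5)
Your proposal is correct, and the underlying decomposition is essentially the paper's: both expand $g(U_A)=\|PY\Pi_r-Y\|_F^2$ with $P=X(X^\top X)^{-1}X^\top$ into a signal part $T_1=XB^\star(\Pi_r-\id_t)$ plus noise remainders, match the leading pieces with the two terms of $h(U_A)$, and control the errors with Corollary~\ref{cor_largedeviation} and Corollary~\ref{cor_calE}. Your only structural change is to split the noise remainder $\mathcal W-\mathcal E$ one step further into $T_2=-P^\perp\mathcal E\Pi_r$ and $T_3=-\mathcal E\Pi_r^\perp$; this is harmless and in fact makes $\langle T_1,T_2\rangle$ and $\langle T_2,T_3\rangle$ vanish identically, simplifying the bookkeeping. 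The noise-only computation agrees entrywise with the paper's Part~3 (both yield $\sigma^2(nt-pr)+\OO_\prec(\sigma^2\sqrt n)$).

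Where you genuinely diverge is the uniformity in $U_A$. The paper proves the estimate for each fixed partial-orthonormal $U_A$ and then extends to all $U_A$ via a $p^{-C_0}$-net of the compact Stiefel manifold, which is what produces the $p^{-C}\|\Sigma^{1/2}B^\star\|_F^2$ slack term in~\eqref{eq_gA_err}. Your observation --- that every vector $\Sigma^{1/2}B^\star(\Pi_r-\id_t)e_j$ lies in the fixed $\OO(1)$-dimensional subspace $\mathrm{range}(\Sigma^{1/2}B^\star)$, and every column of $T_1$ lies in the $X$-measurable $\OO(1)$-dimensional subspace $\mathrm{range}(XB^\star)$ --- upgrades the per-vector concentration from Corollary~\ref{cor_largedeviation} to a single operator-norm estimate on an $\OO(1)\times\OO(1)$ matrix, which is automatically uniform over $U_A$ on a single high-probability event. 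This buys you a direct, net-free proof that in fact establishes a slightly stronger statement (the $p^{-C}$ term is superfluous), at the modest cost of the polarization argument needed to control the $d^2$ entries of $(ZO)^\top(ZO)-n\id_d$. Both routes are valid; yours is a touch cleaner and sharper, the paper's is more mechanical and is reused verbatim in several other proofs (e.g., Proposition~\ref{lem_HPS_loss}), which is presumably why it was preferred.

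One small nit: you observe $\Pi_r\Pi_r^\perp=0$ but then still record $\langle T_2,T_3\rangle=\OO_\prec(\sigma^2\sqrt n)$. As your own parenthetical suggests, $\tr[\Pi_r\,\mathcal E^\top P^\perp\mathcal E\,\Pi_r^\perp]=\tr[(\Pi_r^\perp\Pi_r)\mathcal E^\top P^\perp\mathcal E]=0$ exactly, so you could drop this term outright.
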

	\begin{proof} %
	With Corollary \ref{fact_minv} and Corollary \ref{cor_largedeviation}, we can choose a high probability event $\Xi_1$ on which the estimate \eqref{Zv_cor} holds and
    \be\label{Op_norm3}
        C^{-1} n \le \lambda_p ( Z^\top Z)  \le  \lambda_1( Z^\top  Z) \le Cn
    \ee
    for a large constant $C>0$. Throughout the following proof, we assume that event $\Xi_1$ holds.

    To facilitate the analysis, we introduce the following matrix notations. Denote
    \[\cE  :=[\epsilon^{(1)},\epsilon^{(2)},\cdots, \epsilon^{(t)}],  \quad \text{ and } \quad \cW \define X(X^{\top} X)^{-1} X^{\top} \cE U_AU_A^\top. \]
	For any $j = 1,2,\dots, t$, denote
	\begin{align}\label{defnHj}
		H_j &\define  B^{\star} \bigbrace{U_{A} U_{A}^{\top} - \id_{t\times t}}e_j ,  \quad E_j \define (\cW -\cal E)e_j ,
	\end{align}
	where $e_j$ is the standard basis unit vector along the $j$-th direction.
	Then, plugging $Y=XB^\star + \cal E$ into equation \eqref{eq_mtl_output_layer}, we can write the function $g(U_A)$   as
	\[ g(U_A) = \sum_{j=1}^t \bignorm{X H_j + E_j}^2. \]
	We will divide $g(A)$ into three parts.

The first part is %
	\begin{align*} %
		\sum_{j=1}^t \bignorm{X H_j}^2 = \bignormFro{X B^{\star} \bigbrace{U_{A} U_{A}^{\top} - \id_{t\times t}}}^2.
	\end{align*}
	Applying estimate \eqref{Zv_cor} to $X H_j = Z \Sigma^{\frac 1 2} H_j$, we obtain that on $\Xi_1$,
	\begin{align*}
	    \bignorm{X H_j}^2
        &= { n \|\Sigma^{\frac 1 2} H_j\|^2} \cdot \left(1 + \OO_\prec (n^{-\frac 1 2}Q)\right) \\
        &= n \bignorm{\Sigma^{\frac 1 2} B^{\star} \bigbrace{U_{A} U_{A}^{\top} - \id_{t\times t}}e_j}^2\cdot \left(1 + \OO_\prec (n^{-\frac 1 2}Q)\right),
    \end{align*}
	which gives that %
	\begin{align*}
	\bigg|\sum_{j=1}^t \|X H_j\|^2 -  n \bignormFro{\Sigma^{1/2} B^{\star} \bigbrace{U_{A} U_{A}^{\top} - \id_{t\times t}}}^2 \bigg| \prec Q   n^{1/2} \bignorm{\Sigma^{1/2} B^{\star} (U_{A}U_A^{\top} - \id_{t\times t})}_F^2. %
	\end{align*}

The second part is the cross term %
	\begin{align*}%
		2\sum_{j=1}^t\inner{XH_j}{E_j} = 2\inner{XB^{\star} (U_A U_A^{\top} - \id_{t\times t})}{\cW  - \cE}
		= - 2\inner{X B^{\star} (U_A U_A^{\top} - \id_{t\times t})}{\cE}.
	\end{align*}
	Using estimate \eqref{vcalE}, we obtain that %
	\begin{align*}
		  |\inner{XB^{\star} (U_AU_A^{\top} - \id_{t\times t})}{\cE}|  & \prec \sigma   \bignormFro{XB^{\star} (U_AU_A^{\top} - \id_{t\times t})} \nonumber \\
		&\lesssim \sigma n^{ 1/2}  \bignormFro{\Sigma^{1/2} B^{\star}(U_AU_A^{\top} -\id_{t\times t})}  \\
		& \le \sigma^2 n^{1/2} + n^{ 1/2}  \bignormFro{\Sigma^{1/2} B^{\star}(U_AU_A^{\top} -\id_{t\times t})}^2, %
	\end{align*}
where we use $X=Z\Sigma^{\frac 1 2}$ and equation \eqref{Op_norm3} in the second step, and use the AM-GM inequality in the third step. %

The last part is 
	\begin{align*}%
		\sum_{j=1}^t \norm{E_j}^2 &= \bignormFro{\cW  - \cE}^2
		= \bignormFro{\cE}^2 - \inner{\cW }{\cE},
	\end{align*}
	where in the second step we use $\norm{\cW}_F^2 = \inner{\cW }{\cE}$ by algebraic calculation.
	Using estimate \eqref{vcalA2}, we get that
	\be\label{eq_gA_err2} \left| \| \ve^{(i)}\|^2 - \sigma^2 n\right\| \prec \sigma^2 \sqrt n,\ee
	and
	\begin{align}
	   & \ \left|{\epsilon^{(i)}}^\top X(X^{\top} X)^{-1} X^{\top} \epsilon^{(j)} - \delta_{ij}\cdot p\sigma^2 \right| \nonumber\\
	   =& \ \left|{\epsilon^{(i)}}^\top X(X^{\top} X)^{-1} X^{\top} \epsilon^{(j)} - \delta_{ij}\cdot \sigma^2 \tr[X(X^{\top} X)^{-1} X^{\top} ]\right| \nonumber\\
	   \prec &\ \sigma^2 \left\|X(X^{\top} X)^{-1} X^{\top}\right\|_F = \sigma^2\sqrt p. \label{eq_gA_err3.5}
	\end{align}
	Summing up equation \eqref{eq_gA_err2} over $i$, we obtain that %
	\begin{align}
		\abs{\normFro{\cE}^2 - \sigma^2 n t}\le \sum_{i=1}^t \left| \|\ve^{(i)}\|^2 - \sigma^2 n \right|\prec  \sigma^2 \sqrt n. \label{eq_gA_err3}
	\end{align}
	Using line \eqref{eq_gA_err3.5}, we can estimate the inner product between $\cW$ and $\cE$ as %
	\begin{align}
		\bigabs{\inner{\cW }{\cE} - \sigma^2 pr} &= \bigabs{\bigtr{\left(\cE^\top X(X^{\top} X)^{-1} X^{\top} \cE  - p\sigma^2 \cdot  \id_{t\times t} \right)  U_AU_A^{\top} }} \nonumber  \\
		&\le r\bignorm{\cE^\top X(X^{\top} X)^{-1} X^{\top} \cE - p \sigma^2 \cdot  \id_{t\times t}} \prec \sigma^2 \sqrt{n} .\label{eq_gA_err4}
	\end{align}
	Combining equations \eqref{eq_gA_err3} and \eqref{eq_gA_err4}, we obtain that
	\begin{align*}%
		\sum_{j=1}^t \norm{E_j}^2 &=\sigma^2 (nt-pr)+  \OO_\prec(\sigma^2 \sqrt{n} ).
	\end{align*}
	
    Combining the concentration estimates for all three parts, we obtain that on event $\Xi_1$,
	\begin{align*}
	\bigabs{g(U_A) - h(U_A)} &\prec  Q \sqrt{n} \bignorm{\Sigma^{1/2} B^{\star} (U_AU_A^{\top} - \id_{t\times t})}_F^2  +  \sigma^2 \sqrt{n} 
		\end{align*}
	for any fixed $U_A$. Then, using a standard $\ve$-net argument with $\ve=p^{-C}$, we can conclude that equation \eqref{eq_gA_err} is true uniformly in all rank-$r$ partial orthonormal matrices $U_A$.%
	\end{proof}
	From equation \eqref{same_hA}, we see that the global minimizer of $h(U_A)$ is the best rank-$r$ approximation $A^\star$ of ${B^{\star}}^{\top}\Sigma B^{\star}$ defined in equation \eqref{eq_A_star}. On the other hand, let $U_{\hat A}$ be a global minimizer of $g(U_A)$. We have the following characterization of $U_{\hat A}U_{\hat A}^\top$, which is a consequence of Lemma \ref{lem_exp_opt}.

	\begin{lemma}\label{claim_opt_dist}
		In the setting of Theorem \ref{thm_many_tasks}, we have that %
		\[ \bignormFro{U_{\hat{A}} U_{\hat{A}}^{\top} - A^{\star} {A^{\star}}^{\top}}^2
				\lesssim  n^{-1/2+c}  \frac{ Q  \|{B^{\star}}^\top\Sigma B^{\star}\|  +  \sigma^2 }{\lambda_r - \lambda_{r+1} } ,\]
		on the high probability event $\Xi$ in Lemma \ref{lem_exp_opt}.
	\end{lemma}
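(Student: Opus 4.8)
The plan is to combine the uniform concentration estimate of Lemma \ref{lem_exp_opt} with a deterministic curvature (Davis--Kahan-type) bound for the population objective $h$ around its minimizer $A^{\star}$. Write $S := {B^{\star}}^{\top}\Sigma B^{\star}$. Since $U_AU_A^{\top}$ is a rank-$r$ orthogonal projection, a short algebraic identity gives $\bignormFro{\Sigma^{1/2}B^{\star}(U_AU_A^{\top}-\id_{t\times t})}^2 = \bigtr{S} - \inner{U_AU_A^{\top}}{S}$, so up to the additive constant $\sigma^2(nt-pr)$, minimizing $h(U_A)$ amounts to maximizing $\inner{U_AU_A^{\top}}{S}$; by definition \eqref{eq_A_star} this is attained at $A^{\star}$, whose range is the top-$r$ eigenspace of $S$, with maximal value $\lambda_1+\cdots+\lambda_r$. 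Thus $A^{\star}$ is the global minimizer of $h$.

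The first step I would carry out is the curvature bound: for \emph{every} rank-$r$ orthogonal projection $P = U_AU_A^{\top}$,
\[ \inner{A^{\star}{A^{\star}}^{\top} - P}{S} \ \ge\ \frac{\lambda_r-\lambda_{r+1}}{2}\,\bignormFro{P - A^{\star}{A^{\star}}^{\top}}^2 . \]
This follows by splitting the ambient space into the eigenspace $V$ of $S$ above the gap and the eigenspace $W$ below it, writing $\inner{A^{\star}{A^{\star}}^{\top}-P}{S} = \bigtr{(S|_V)(\id_V - \Pi_V P\Pi_V)} - \bigtr{(S|_W)\Pi_W P\Pi_W}$, bounding the first trace below by $\lambda_r\,\bigtr{\id_V - \Pi_V P\Pi_V}$ and the second above by $\lambda_{r+1}\,\bigtr{\Pi_W P\Pi_W}$, and using $\bigtr{\id_V - \Pi_V P\Pi_V} = \bigtr{\Pi_W P\Pi_W} = r - \inner{P}{A^{\star}{A^{\star}}^{\top}} = \tfrac12\bignormFro{P - A^{\star}{A^{\star}}^{\top}}^2$. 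Multiplying by $n$ yields $h(U_A) - h(U_{A^{\star}}) \ge n\,\tfrac{\lambda_r-\lambda_{r+1}}{2}\,\bignormFro{U_AU_A^{\top}-A^{\star}{A^{\star}}^{\top}}^2$.

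The second step is the sandwich argument. Since $U_{\hat A}$ minimizes $g$, we have $g(U_{\hat A}) \le g(U_{A^{\star}})$, hence on the event $\Xi$ of Lemma \ref{lem_exp_opt},
\[ h(U_{\hat A}) - h(U_{A^{\star}}) \le \bigabs{h(U_{\hat A}) - g(U_{\hat A})} + \bigabs{g(U_{A^{\star}}) - h(U_{A^{\star}})} \le 2\sup_{U_A}\bigabs{g(U_A)-h(U_A)} . \]
I would then bound the supremum using \eqref{eq_gA_err} together with the crude estimates $\bignormFro{\Sigma^{1/2}B^{\star}(U_AU_A^{\top}-\id_{t\times t})}^2 \le (t-r)\norm{S} \lesssim \norm{S}$ and $\bignormFro{\Sigma^{1/2}B^{\star}}^2 \le t\norm{S}$ (the $p^{-C}$ term being negligible for $C$ large), which reduces the right-hand side to $\lesssim n^{1/2+c}\bigbrace{Q\norm{S} + \sigma^2}$ with $Q = n^{2/\varphi}$. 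Combining with the curvature lower bound and dividing by $n(\lambda_r-\lambda_{r+1})$ gives exactly the claimed estimate $\bignormFro{U_{\hat A}U_{\hat A}^{\top} - A^{\star}{A^{\star}}^{\top}}^2 \lesssim n^{-1/2+c}\bigbrace{Q\norm{{B^{\star}}^{\top}\Sigma B^{\star}} + \sigma^2}/(\lambda_r - \lambda_{r+1})$.

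The only genuinely delicate point — and what I expect to be the main obstacle — is establishing the curvature inequality \emph{globally}, i.e.\ for all rank-$r$ projections rather than merely those already close to $A^{\star}{A^{\star}}^{\top}$; this is precisely where the strict separation $\lambda_r > \lambda_{r+1}$ assumed in Theorem \ref{thm_many_tasks} is needed, and it is this quadratic growth that upgrades the $O(1)$-scale fluctuation of $g-h$ into an $O(n^{-1/2+c})$ bound on the squared distance between the two projections. Everything else is routine bookkeeping with the estimate already proved in Lemma \ref{lem_exp_opt}.
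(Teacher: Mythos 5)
Your proposal is correct and follows essentially the same route as the paper: combine the optimality of $U_{\hat A}$ for $g$ with the uniform bound $|g-h|\lesssim n^{1/2+c}(Q\|S\|+\sigma^2)$ from Lemma~\ref{lem_exp_opt}, then lower-bound $h(U_{\hat A})-h(A^\star)$ by $\tfrac{n(\lambda_r-\lambda_{r+1})}{2}\,\|U_{\hat A}U_{\hat A}^\top-A^\star{A^\star}^\top\|_F^2$ using the eigengap. Your packaging of that last step as a clean Davis--Kahan-type curvature inequality for arbitrary rank-$r$ projections is exactly what the paper does in eigenvector coordinates (its displayed chain ending at $n(\lambda_r-\lambda_{r+1})\sum_{i>r}\|U_{\hat A}^\top v_i\|^2$ is the same bound, since $\sum_{i>r}\|U_{\hat A}^\top v_i\|^2=\tfrac12\|U_{\hat A}U_{\hat A}^\top-A^\star{A^\star}^\top\|_F^2$), so the ``delicate point'' you flag is already handled by the computation you sketch.
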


\begin{proof}%
	Using triangle inequality, we upper bound $h(U_{\hat{A}})-h(A^{\star})$ as
	\begin{align}
		h(U_{\hat{A}})- h(A^{\star})   &\le (g(U_{\hat{A}}) - g({A}^{\star})) + \bigabs{g(A^{\star}) - h(A^{\star})} +  \big|g(U_{\hat{A}}) - h(U_{\hat{A}})\big| \label{eq_g_gap} \\
		&\le \bigabs{g(A^{\star}) - h(A^{\star})}  + \big|g(U_{\hat{A}}) - h(U_{\hat{A}})\big|   \lesssim n^{1/2+c}\left(Q  \|\Sigma^{1/2} B^{\star}\|_F^2 +  \sigma^2 \right), \nonumber
	\end{align}
	on the event $\Xi$.	Here, in the second step, we used the fact that $U_{\hat A}$ is the global minimizer of $g(\cdot)$, so that $g(U_{\hat{A}}) \le g({A}^\star)$, and in the third step we used equation \eqref{eq_gA_err},  $\|U_A U_A^{\top} - \id_{t\times t}\|\le 1$ and $p^{-C}\le Qn^{1/2+c}$ for large enough $C>0$.
	Using the definition of $h(U_A)$ in equation \eqref{same_hA}, we can check that
	$$h(U_{\hat{A}})-h(A^\star) = n \cdot \bigtr{{B^\star}^\top\Sigma B^\star ( A^\star {A^\star}^\top -U_{\hat A}U^\top_{\hat A})} .$$
	For $1\le i \le t$, let $\lambda_i $ be the $i$-th largest eigenvalue of ${B^\star}^\top\Sigma B^\star$ and $v_i$ be the corresponding eigenvector.
	Then, we have $A^\star {A^\star}^\top =\sum_{i=1}^r v_i v_i^\top$, and
	\begin{align}
	 h(U_{\hat{A}})-h(A^\star) & = n \sum_{i=1}^r \lambda_i - n\sum_{i=1}^t \lambda_i \| U^\top_{\hat A} v_i\|^2  \nonumber\\
	&= n\sum_{i=1}^r \lambda_i\left(1 -  \| U^\top_{\hat A} v_i\|^2\right)-n\sum_{i=r+1}^t \lambda_i \| U^\top_{\hat A} v_i\|^2 \nonumber\\
	& \ge  n(\lambda_r-\lambda_{r+1}) \sum_{i=r+1}^t \| U^\top_{\hat A} v_i\|^2 , \label{bdd_A-A}
	\end{align}
	where in the last step we use that
	$$\sum_{i=1}^r \left(1 -  \| U^\top_{\hat A} v_i\|^2\right) = r- \sum_{i=1}^r  \| U^\top_{\hat A} v_i\|^2 =\sum_{i=r+1}^t \| U^\top_{\hat A} v_i\|^2  .$$
	From equation \eqref{eq_g_gap} and equation \eqref{bdd_A-A}, we obtain that
	\be\label{bdd_A-A2} \sum_{i=r+1}^t \| U^\top_{\hat A} v_i\|^2 \lesssim n^{-1/2+c}\frac{Q \bignormFro{\Sigma^{1/2} B^{\star}}^2 + \sigma^2 }{\lambda_r- \lambda_{r+1}}  .\ee
	On the other hand, we have
	\begin{align*}
	\left\| A^\star {A^\star}^\top -U_{\hat A}U^\top_{\hat A}\right\|_F^2
	&= 2r - 2\left\langle A^{\star}{A^{\star}}^{\top}, U_{\hat{A}} {U_{\hat{A}}}^{\top}\right\rangle = 2 \sum_{i=r+1}^t \| U^\top_{\hat A} v_i\|^2.
	\end{align*}
	Combined with equation \eqref{bdd_A-A2} and $\|\Sigma^{1/2} B^{\star}\|_F^2\lesssim  \|{B^{\star}}^\top\Sigma B^{\star}\| $, we conclude the proof.
	\end{proof}

	The last piece of the proof of Theorem \ref{thm_many_tasks} is the following concentration estimate on the prediction loss of $\hat{\beta}_i^{\MTL}(A)$. %
	\begin{lemma}\label{claim_pred_err}
	In the setting of Theorem \ref{thm_many_tasks}, denote %
	$$ \hat{\beta}_i^{\MTL}(U_A):= \hat{B}(A)A=(X^{\top} X)^{-1} X^{\top} Y U_A U_A^{\top} ,$$
	and ${a}_i := U_A U_A^{\top} e_i.$ Then, for any small constant $c>0$ and large constant $C>0$, there exists a high probability event $\Xi$, on which the following estimate holds:
		\begin{align}
			&\bigabs{L_i(\hat{\beta}_i^{\MTL}(U_A)) - L_i(B^{\star} {a}_i) - \sigma^2 \norm{{a}_i}^2 \cdot \bigtr{\Sigma (X^{\top}X)^{-1}}}\nonumber \\
			\le&  n^{-1/2+c} \left(L_i(B^{\star} {a}_i) + \sigma^2 \| a_i\|^2\right) + p^{-C} \bignorm{\Sigma^{1/2} B^{\star}}_F^2 ,\label{eq_Li_err}
		\end{align}
		uniformly in all rank-$r$ partial orthonormal matrices $U_A\in \R^{t\times r}$, where $L_i$ is defined
in \eqref{ith_loss}.
\end{lemma}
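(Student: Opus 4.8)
The plan is to follow the same two-step strategy used for Proposition~\ref{lem_HPS_loss} and Lemma~\ref{lem_exp_opt}: first prove the estimate for a single fixed $U_A$ using the noise concentration bounds of Corollary~\ref{cor_calE} and the spectral estimate of Corollary~\ref{fact_minv}, and then upgrade to a statement uniform in $U_A$ by an $\varepsilon$-net over the Stiefel manifold. For the fixed-$U_A$ part I would substitute $Y = X B^\star + \cE$ into the definition of $\hat\beta_i^{\MTL}(U_A)$, which gives the clean decomposition
\begin{align*}
\hat\beta_i^{\MTL}(U_A) - \beta^{(i)} = \big(B^\star a_i - \beta^{(i)}\big) + (X^\top X)^{-1}X^\top \cE a_i .
\end{align*}
Writing $u \define \Sigma^{1/2}(B^\star a_i - \beta^{(i)})$ (deterministic given $U_A$, since the $\beta^{(j)}$ are fixed) and $w \define \Sigma^{1/2}(X^\top X)^{-1}X^\top \cE a_i$, we get
\begin{align*}
L_i(\hat\beta_i^{\MTL}(U_A)) = \bignorm{u}^2 + 2\inner{u}{w} + \bignorm{w}^2 = L_i(B^\star a_i) + 2\inner{u}{w} + \bignorm{w}^2 ,
\end{align*}
so it remains to show the cross term is negligible and $\bignorm{w}^2$ concentrates around the stated variance. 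Throughout I would work on the high-probability event of Corollary~\ref{fact_minv} on which $\lambda_{\min}(Z^\top Z) \sim n$, and use the identity $X(X^\top X)^{-1}\Sigma^{1/2} = Z(Z^\top Z)^{-1}$ (valid since $X = Z\Sigma^{1/2}$), which gives $\bignorm{X(X^\top X)^{-1}\Sigma^{1/2}}^2 = \lambda_{\min}(Z^\top Z)^{-1} \lesssim n^{-1}$.

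For the cross term, $\inner{u}{w}$ is a linear form in the vector $\cE a_i = \sum_{j}(a_i)_j \epsilon^{(j)}$, whose entries are i.i.d.\ with mean zero, variance $\sigma^2\bignorm{a_i}^2$ and bounded moments; since $t$ is a constant, \eqref{vcalE} yields $|\inner{u}{w}| \prec \sigma\bignorm{a_i}\cdot\bignorm{X(X^\top X)^{-1}\Sigma^{1/2}}\cdot\bignorm{u} \prec n^{-1/2}\sigma\bignorm{a_i}\bignorm{u}$, and AM--GM bounds this by $\OO_\prec\big(n^{-1/2}[L_i(B^\star a_i) + \sigma^2\bignorm{a_i}^2]\big)$. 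For the variance term I would write $\bignorm{w}^2 = (\cE a_i)^\top \mathcal B\,(\cE a_i)$ with $\mathcal B \define X(X^\top X)^{-1}\Sigma(X^\top X)^{-1}X^\top = Z(Z^\top Z)^{-2}Z^\top$, expand $(\cE a_i)^\top \mathcal B(\cE a_i) = \sum_{j,k}(a_i)_j(a_i)_k\,(\epsilon^{(j)})^\top \mathcal B\,\epsilon^{(k)}$, and apply \eqref{vcalA2} together with $\sum_{j,k}|(a_i)_j(a_i)_k| \lesssim \bignorm{a_i}^2$; this gives $\bignorm{w}^2 = \sigma^2\bignorm{a_i}^2\bigtr{\mathcal B} + \OO_\prec(\sigma^2\bignorm{a_i}^2\,\bignorm{\mathcal B}_F)$. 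Cyclicity of the trace gives $\bigtr{\mathcal B} = \bigtr{(Z^\top Z)^{-1}} = \bigtr{\Sigma(X^\top X)^{-1}}$, and $\bignorm{\mathcal B}_F \le \sqrt p\,\bignorm{\mathcal B} \lesssim \sqrt p/n \le n^{-1/2}$, so the error is $\OO_\prec(n^{-1/2}\sigma^2\bignorm{a_i}^2)$. Collecting the three pieces proves \eqref{eq_Li_err} (without the $p^{-C}$ term) for any fixed $U_A$.

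The last step is the passage to uniformity. I would take a net of mesh $p^{-C_0}$ of the compact set of rank-$r$ partial orthonormal matrices $\{U : U^\top U = \id_{r\times r}\} \subset \real^{t\times r}$; since $t$ and $r$ are constants, such a net has cardinality at most $p^{C_0 tr}$, so a union bound over it costs nothing. Between net points I would use the deterministic Lipschitz estimates that hold on the event where $\bignorm{X} \lesssim \sqrt n$, $\lambda_{\min}(Z^\top Z) \gtrsim n$ and $\bignorm{\epsilon^{(i)}}^2 \lesssim n\sigma^2$: both $L_i(\hat\beta_i^{\MTL}(U_A))$ and $L_i(B^\star a_i) + \sigma^2\bignorm{a_i}^2\bigtr{\Sigma(X^\top X)^{-1}}$ are polynomials in $U_A$ that are Lipschitz on this region with constant $\lesssim (\bignormFro{\Sigma^{1/2}B^\star}^2 + \sigma^2)$ up to polynomial factors in $p$, so taking $C_0$ large produces the residual error $p^{-C}(\bignormFro{\Sigma^{1/2}B^\star}^2 + \sigma^2)$, which matches the right-hand side of \eqref{eq_Li_err} after absorbing the harmless $p^{-C}\sigma^2$ into the main error $n^{-1/2+c}\sigma^2\bignorm{a_i}^2$ (the degenerate case $a_i = 0$ being trivial). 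This is essentially the $\varepsilon$-net argument already carried out in Section~\ref{app_firstpf}, so I would only sketch it.

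The individual estimates are routine applications of Corollaries~\ref{cor_calE} and~\ref{fact_minv}; the only genuine care required is bookkeeping — ensuring every error is controlled \emph{relative to} $L_i(B^\star a_i) + \sigma^2\bignorm{a_i}^2$ rather than a larger quantity such as $\bignormFro{\Sigma^{1/2}B^\star}^2$ — and checking that the (standard but tedious) net argument really does produce only the stated $p^{-C}\bignormFro{\Sigma^{1/2}B^\star}^2$ error. I expect this to be the main, though not conceptually difficult, obstacle.
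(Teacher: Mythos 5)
Your proof is correct and follows essentially the same three-part decomposition as the paper: substituting $Y = XB^\star + \cE$ to obtain the bias $L_i(B^\star a_i)$, the cross term, and the noise quadratic form, then controlling the latter two via Corollaries~\ref{cor_calE}--\ref{fact_minv} and passing to uniformity by an $\varepsilon$-net. The one subtlety you flag — whether the residual $p^{-C}\sigma^2$ from the net can truly be folded into the stated error terms when $\|a_i\|$ is tiny but nonzero — is a genuine loose end, but the paper leaves it equally implicit (it only says "similar $\varepsilon$-net argument"), so your treatment is on par with the original.
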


\begin{proof}%
    The following proof is restricted to a high probability event $\Xi_1$ on which equation \eqref{Op_norm3} holds.
    For any fixed $U_A$, the prediction loss of $\hat{\beta}_i^{\MTL}(U_A)$ is %
	\begin{align*}
		L_i(\hat{\beta}_i^{\MTL}(U_A)) %
		&= \bignorm{\Sigma^{1/2} \left((X^{\top} X)^{-1} X^{\top} Y a_i - \beta^{(i)}\right)}^2  = \bignorm{\Sigma^{1/2} (H_i + R_i)}^2,
	\end{align*}
    where we denote $R_i = (X^{\top} X)^{-1} X^{\top} \cE  {a}_i$ and $H_i$ is defined in equation \eqref{defnHj}. The rest of the proof is similar to that of Lemma \ref{lem_exp_opt}. We divide the prediction loss into three parts.
	
	{\it Part 1:} The first part is the bias term $\norm{\Sigma^{1/2}H_i}^2 = L_i(B^{\star}  {a}_i).$
        
	{\it Part 2:} The second part is the cross term
	$2\inner{\Sigma^{1/2}H_i}{\Sigma^{1/2}R_i}$.
	We can bound it as %
	\begin{align*}
		 \bigabs{\inner{\Sigma^{1/2}H_i}{\Sigma^{1/2}R_i}} &= \bigabs{\inner{X (X^{\top}X)^{-1} \Sigma H_i  }{\cE{a}_i}}  \le \sum_{j=1}^t \abs{ {a}_i(j)} \cdot \bigabs{\inner{X (X^{\top} X)^{-1} \Sigma H_i}{\varepsilon^{(j)}}} \\
		&\prec \sum_{j=1}^t \abs{ {a}_i(j)} \cdot  \sigma \bignorm{X (X^{\top}X)^{-1} \Sigma H_i}  \lesssim   \frac{\norm{ {a}_i}   \sigma}{ n^{1/2} }\bignorm{ \Sigma^{1/2}H_i} \\
		&\le  n^{-1/2}\sigma^2  \norm{ {a}_i}^2 + n^{-1/2}L_i(B^\star a_i).
	\end{align*}
	Here, in the second step, we use $ {a}_i(j)$ to denote the $j$-th coordinate of $ {a}_i$; in the third step, we use equation \eqref{vcalE}; in the fourth step, we use equations \eqref{Op_norm3}, \eqref{assm3} and $\sum_j |a_i(j)|\le \sqrt{t}\|a_i\|$ by Cauchy-Schwarz inequality; in the last step, we use AM-GM inequality and $\|\Sigma^{1/2}H_i\|^2= L_i(B^\star a_i)$.

	{\it Part 3:}  The last part is %
	\begin{align}
	\|\Sigma^{1/2}R_i\|^2 &= \bigg\|\sum_{j=1}^t  {a}_i(j) \Sigma^{1/2}(X^{\top} X)^{-1} X^{\top} \epsilon^{(j)}\bigg\|^2 \nonumber\\
	& = \sum_{1\le j , k \le t} a_i(j)  a_i(k)  {\epsilon^{(j)}}^\top X(X^{\top} X)^{-1}\Sigma (X^{\top} X)^{-1} X^{\top} {\epsilon^{(k)}}.\label{E_i^2}
	\end{align}
	Using estimate \eqref{vcalA2} and $\tr[X(X^{\top} X)^{-1}\Sigma (X^{\top} X)^{-1} X^{\top}]=\tr[ \Sigma (X^{\top} X)^{-1}] $, we obtain that %
	\begin{align}
	& \left|{\epsilon^{(j)}}^\top X(X^{\top} X)^{-1}\Sigma (X^{\top} X)^{-1} X^{\top} {\epsilon^{(k)}} -\delta_{jk}\cdot \sigma^2\tr\left[\Sigma(X^\top X)^{-1}\right]\right|\nonumber \\
	 \prec & \ \sigma^2  \bignormFro{X(X^{\top} X)^{-1}\Sigma (X^{\top} X)^{-1} X^{\top} } \lesssim  \sigma^2 p^{1/2} n^{-1} \le  \sigma^2  n^{-1/2 }, \label{jeqk}
	\end{align}
	where we use estimate \eqref{Op_norm3} in the second step.
	Plugging  estimate \eqref{jeqk} into estimate \eqref{E_i^2}, we obtain that
	\begin{align*}
		\bigabs{\bignorm{\Sigma^{1/2} R_i}^2 -  \sigma^2 \| a_i\|^2 \cdot\tr\left[\Sigma(X^\top X)^{-1}\right]} &\prec  \sigma^2  n^{-1/2 }  \sum_{1\le j,k\le t}|a_i(j)|| a_i(k)| \\
		&\lesssim   n^{-1/2} \sigma^2 \|a_i\|^2.
	\end{align*}

	Combining the concentration error estimates for all three parts, we obtain that on event $\Xi_1$,
	\begin{align}
			\bigabs{L_i(\hat{\beta}_i^{\MTL}) - L_i(B^{\star} {a}_i) - \sigma^2 \norm{{a}_i}^2 \bigtr{\Sigma (X^{\top}X)^{-1}}} 
            \prec  n^{-\frac 1 2} \left(L_i(B^{\star} {a}_i) + \sigma^2 \| a_i\|^2\right), \label{eq_mtl_bv}
		\end{align}
	for any fixed $U_A$. Then, using a standard $\ve$-net argument with $\ve=p^{-C}$, we can conclude equation \eqref{eq_Li_err} holds uniformly in all rank-$r$ partial orthonormal matrices $U_A$. 
\end{proof}

	Combining Lemma \ref{claim_opt_dist} and Lemma \ref{claim_pred_err}, we are ready to prove Theorem \ref{thm_many_tasks}. %
	\begin{proof} %
		Using Lemma \ref{claim_pred_err} and applying Theorem \ref{thm_main_RMT} (with $n_1=0$, $n_2=n$ and $\Sigma^{(2)}=\Sigma$) %
		to $\bigtr{\Sigma (X^{\top}X)^{-1}}$, we get that for $\hat a_i := U_{\hat{A}} U_{\hat{A}}^{\top}e_i$,
		\begin{align}
			&\bigabs{L_i(\hat{\beta}_i^{\MTL}(U_{\hat A})) - L_i(B^{\star} \hat {a}_i) - \frac{p\sigma^2 }{n-p} \norm{\hat {a}_i}^2}\nonumber \\
			\le&  n^{-\frac 1 2+c}  L_i(B^{\star} \hat {a}_i) +  p^{-\frac 1 2}n^{ \frac {2} {\varphi} - \frac 1 2 + c} \frac{p\sigma^2 \| \hat a_i\|^2}{n} + p^{-C}  \|{\Sigma^{1/2} B^{\star}}\|_F^2  \label{eq_Li_err_add0}
		\end{align}
		with high probability. From this equation, using $\varphi>4$, we obtain that
		\begin{align}
			& \bigabs{L_i(\hat{\beta}_i^{\MTL}(U_{\hat A})) - L_i(B^{\star}  {a}^\star_i) - \frac{p\sigma^2  }{n-p}\norm{ {a}^\star_i}^2} \nonumber\\
			 \le&  n^{-1/2+c} \left(\bignorms{\Sigma^{1/2} B^{\star}}^2 + \sigma^2 \|  a^\star_i\|^2\right) +\left(\bignorms{\Sigma^{1/2} B^{\star}}^2  +\sigma^2\right)  \bignorm{\hat a_i-a_i^\star }, \label{eq_Li_err_add}
		\end{align}
		where we also use that $L_i(B^{\star} {a}^\star_i)\le \bignorms{\Sigma^{1/2} B^{\star}}^2$. On the other hand, using Lemma \ref{claim_opt_dist}, we can bound that with high probability,
		$$\bignorm{\hat a_i-a_i^\star }^2
				\lesssim  n^{-1/2+c} \frac{ Q  \bignorms{{B^{\star}}^\top\Sigma B^{\star}} +  \sigma^2 }{\lambda_r - \lambda_{r+1} }.$$
	Plugging it into equation \eqref{eq_Li_err_add}, we conclude that equation \eqref{Li_multi1} is true.
\end{proof}

\subsection{Proof of Proposition \ref{claim_re_multi}}\label{sec:pf-claim_re_multi}

\begin{proof}%
First, as a corollary of equation \eqref{Li_multi1}, by averaging over $i = 1,\dots,t$, with high probability over the randomness of the training examples, the averaged risk satisfies
\begin{align}
        &
        \left|\frac{1}{t}\sum_{i=1}^t L_i(\hat{\beta}_i^{\MTL}) - \frac1t\bignorm{\Sigma^{1/2} B^{\star} (A^\star {A^\star}^{\top} - \id_{t\times t})}_F^2 - \frac{  \sigma^2p}{n-p} \frac{r}{t}  \right|  \nonumber\\
         \le   &
         \bignorm{{B^\star}^\top\Sigma B^\star}_2  n^{\frac 2 {\varphi} - \frac 1 2+c}  +   \sigma^2 n^{-\frac 1 2+c}. \label{Li_multi2}
    \end{align}
However, note that the error bound in equation \eqref{Li_multi2} for the averaged risk is tighter than that in equation \eqref{Li_multi1} for each individual risk. The proof of estimate \eqref{Li_multi2} is postponed until we complete the proof of Proposition \ref{claim_re_multi}.
    
    By Definition \eqref{eq_A_star}, the bias term is equal to the sum of the smallest $(t - r)$ eigenvalues of ${B^{\star}}^{\top}\Sigma B^{\star}$:
    \begin{align*}%
        \bignorm{\Sigma^{1/2} B^{\star} (A^\star {A^\star}^{\top} - \id_{t\times t})}_F^2
        = \bigtr{{B^{\star}}^{\top} \Sigma B^{\star} (\id_{p \times p} - A^{\star}{A^{\star}}^{\top})}.
    \end{align*}
   By the concentration estimates of random vectors in Lemma \ref{largedeviation}, with high probability, the $(i, j)$-th entry of ${B^{\star}}^{\top} \Sigma B^{\star}$ is equal to  
    \begin{align*}%
    	{\beta^{(i)}}^{\top}\Sigma  {\beta^{(j)}} =\beta_0^\top \Sigma \beta_0 + \delta_{ij} \frac{\mu^2 }{p}\bigtr{\Sigma} + \OO\left(p^{-1/2+c'}\|\beta_0\|^2+ p^{-1/2+c'} \mu^2\right),
    \end{align*}
    for any constant $c'>0$. %
    Thus, ${B^{\star}}^{\top} \Sigma B^{\star}$ is (approximately) equal to a rank-$1$ matrix with spectral norm $t\cdot \beta_0^{\top} \Sigma \beta_0$ plus a scalar matrix $\frac{\mu^2 }{p}\bigtr{\Sigma} \id_{t\times t}$. %
    Thus, we have that
    \begin{align*}
        \lambda_1 &= \Big(1+\OO(p^{-1/2+c'})\Big)\left(t\cdot \beta_0^\top \Sigma \beta_0  + \frac{\mu^2}{ p}\bigtr{\Sigma}\right) ,  \\
        \lambda_i &= \Big(1+\OO(p^{-1/2+c'})\Big) \frac{\mu^2}{ p}\bigtr{\Sigma}, \ \forall~i=2,\cdots, t.
    \end{align*}
    This implies that the sum of the smallest $(t - r)$ eigenvalues of ${B^{\star}}^{\top}\Sigma  B^{\star}$ is equal to 
   \[ \left(1+\OO(p^{-1/2+c'})\right)\cdot(t-r)\frac{\mu^2 \bigtr{\Sigma}}{p}, \]
   with high probability.
   Thus, we conclude that with high probability,
    \begin{align}
        g_r(n, \mu) - \frac{\sigma^2 p}{n - p} &=\left(1+\OO(p^{-1/2+c'})\right)\cdot \left({1 - \frac{r}{t}} \right)\frac{\mu^2}{p}\bigtr{\Sigma} + \frac{r}{t} \cdot \frac{ \sigma^2 p}{n - p} - \frac{\sigma^2 p}{n - p} \nonumber \\
        &= \left(1 - \frac r t\right)  \left[\left(1+\OO(p^{-1/2+c'})\right)\cdot\frac{\mu^2\bigtr{\Sigma}}{p} - \frac{\sigma^2p}{n - p}\right] . \label{eq_re_final}
    \end{align}
    Now, we are ready to finish the proof. Let $c'$ be a positive constant smaller than $c$.
    \begin{enumerate}%
        \item If $\mu^2 >(1+p^{-\frac 1 2 + c}) \frac{\sigma^2 p^2}{(n - p) \tr[\Sigma]}$, the coefficient of $(1 - \frac r t)$ in equation \eqref{eq_re_final} is nonnegative and Claim 1 follows.
        \item If $\mu^2 < (1 - p^{-\frac 1 2 + c}) \frac{\sigma^2 p^2}{(n - p)\tr[\Sigma]}$, the coefficient of $(1 - \frac r t)$ in equation \eqref{eq_re_final} is negative.
        Thus, equation \eqref{eq_re_final} is minimized when $r = 1$, and Claim 2 also follows.%
    \end{enumerate}
    This completes the proof of Proposition \ref{claim_re_multi}.
\end{proof}

It remains to prove the estimate in equation \eqref{Li_multi2}.

\begin{proof}
    Summing up equation \eqref{eq_Li_err_add0} over $i$, we obtain that
	\begin{align}
		&\bigabs{\sum_i L_i(\hat{\beta}_i^{\MTL}(U_{\hat A})) -\bignorm{\Sigma^{\frac 1 2} B^{\star} (U_{\hat A}U_{\hat A}^{\top} - \id_{t\times t})}_F^2 - \frac{\sigma^2 p \cdot r}{n-p} } \nonumber \\
		\le& n^{-\frac 1 2 + c} \left(\bignorm{\Sigma^{\frac 1 2} B^{\star} (U_{\hat A}U_{\hat A}^{\top} - \id_{t\times t})}_F^2 + \sigma^2 \right) + p^{-C}  \|{\Sigma^{\frac 1 2} B^{\star}}\|_F^2,\label{eq_Li_err_add1}
	\end{align}
	with high probability, where we use that $\sum_{i=1}^t\| \hat a_i\|^2=r$ and
	$$\sum_{i=1}^t L_i(B^{\star} \hat {a}_i)=\bignorm{\Sigma^{\frac 1 2} B^{\star} (U_{\hat A}U_{\hat A}^{\top} - \id_{t\times t})}_F^2.$$
    With the result from equation \eqref{same_hA}, we can get
	\begin{align*}
		  \bignorm{\Sigma^{\frac 1 2} B^{\star} (U_{\hat A}U_{\hat A}^{\top} - \id_{t\times t})}_F^2 &= \frac{h(U_{\hat A})-\sigma^2 (n t - p r)}{n} \\
		& \ge \frac{h( A^\star)-\sigma^2 (n t - p r)}{n} =  \bignorm{\Sigma^{\frac 1 2} B^{\star} (A^\star{A^\star}^{\top} - \id_{t\times t})}_F^2,
	\end{align*}
	where in the second step we use the fact that $A^\star$ is the global minimizer of $h$. On the other hand, using equation \eqref{eq_g_gap}, we can get that with high probability
	\begin{align*}
	    &\frac{h(U_{\hat A})-\sigma^2 (n t - p r)}{n} 	\le \frac{h(A^\star)-\sigma^2 (n t - p r)}{n} + \OO\left( n^{-\frac 1 2+c} \left(Q \|\Sigma^{\frac 1 2} B^{\star}\|_F^2 +  \sigma^2\right)\right).
	\end{align*}
	Combining the above two estimates, we get that with high probability,
	\begin{align*}
		\bignorm{\Sigma^{\frac 1 2} B^{\star} (U_{\hat A}U_{\hat A}^{\top} - \id_{t\times t})}_F^2 = \bignorm{\Sigma^{\frac 1 2} B^{\star} (A^\star{A^\star}^{\top} - \id_{t\times t})}_F^2
		+ \OO\left( n^{-\frac 1 2+c} \left(Q \|\Sigma^{\frac 1 2} B^{\star}\|_F^2 +  \sigma^2\right)\right).
    \end{align*}
    Plugging this estimate into equation \eqref{eq_Li_err_add1}, we conclude that equation \eqref{Li_multi2} is true.
\end{proof}

\section{Proofs for SPS Estimation}\label{sec:pf_decomp_SPS}

We first derive the bias-variance decomposition for the SPS estimator in Lemma \ref{lem_SPS_loss}.

\subsection{Proof of Lemma \ref{lem_SPS_loss}}

\begin{proof}
Minimizing equation \eqref{target-SPS} leads to the solution for $z$ as:
\begin{equation*}
    \hat z = \bigbrace{{X^{(1)}}^{\top} X^{(1)}+\lambda}^{-1}{X^{(1)}}^{\top} \bigbrace{Y^{(1)}-X^{(1)}B}.
\end{equation*}
Inserting the expression of $\hat z$ into $\ell(B, z)$ leads us to the solution of $B$, which is the SPS estimator given as follows (recall that $\SigSPS$ has been defined in equation \eqref{eq:SigSPS}): 
\begin{align}
    \hat{\beta}_2^{\SPS} 
        &= \SigSPS^{-1} \Bigbrace{\lambda^{-1}{X^{(1)}}^{\top} X^{(1)} {X^{(2)}}^{\top} Y^{(2)} +  {X^{(1)}}^{\top} Y^{(1)} + {X^{(2)}}^{\top}Y^{(2)}} \nonumber\\
        &= \beta^{(2)} + \SigSPS^{-1}
        \Big( {X^{(1)}}^{\top} X^{(1)}\bigbrace{\beta^{(1)} - \beta^{(2)}} 
        +  {X^{(1)}}^{\top}\varepsilon^{(1)} + \bigbrace{\lambda^{-1}{X^{(1)}}^{\top}{X^{(1)}}+1} {X^{(2)}}^{\top}\varepsilon^{(2)} \Big).\nonumber
\end{align}
Hence, the excess risk of $\hat{\beta}_2^{\SPS}$ is given by 
$$L(\hat{\beta}_2^{\SPS})=\left\|{\Sigma^{(2)}}^{\frac 1 2}\SigSPS^{-1}
        \Big( \hat\Sigma^{(1)}\bigbrace{\beta^{(1)} - \beta^{(2)}} 
        +  {X^{(1)}}^{\top}\varepsilon^{(1)} + \bigbrace{ \lambda^{-1}\hat\Sigma^{(1)} +1} {X^{(2)}}^{\top}\varepsilon^{(2)} \Big)\right\|^2.$$

We can now take the expectation of $L(\hat{\beta}_2^{\SPS})$ with respect to $\varepsilon^{(1)}$ and $\varepsilon^{(2)}$.
Equations \eqref{Lbias_SPS} and \eqref{Lvar_SPS} follow after rearranging the terms in the expectation, and we can conclude that equation \eqref{L_SPS_simple} is true.
\end{proof}

\subsection{Proof of Proposition \ref{thm_SPS}} %

As part of the proof, we are going to need the following result.

\begin{proposition}\label{prop_aniso_SPS}
Under the setting of Proposition \ref{thm_SPS}, suppose that $Z^{(1)}$ satisfies the bounded support condition \eqref{eq_support} with $Q= n^{\frac{2} {\varphi}}$. Then, the following anisotropic local law holds: 
for any deterministic unit vectors $\mathbf u, \mathbf v \in \mathbb R^{p}$, we have that
\begin{equation}\label{aniso_SPS}
 \sup_{t\in [-1/2,1/2]}\left| \mathbf u^\top [G_t(-1)-\Gi_t(-1)] \mathbf v \right|  \prec  n^{-1/2}Q,
\end{equation}
\end{proposition}
\begin{proof}
For each fixed $t$, the anisotropic local law was established in the work of \citet{DY} (and its follow-up). Then, using a standard $\ve$-net argument in $t$, we can extend the result uniformly to all $t\in [-1/2,1/2]$ and conclude that equation \eqref{aniso_SPS} is true.
\end{proof}

Now we are ready to present the proof of Proposition \ref{thm_SPS}.
\begin{proof}%
As in Lemma \ref{lem_HPS_loss}, we can first show the following estimate that is similar to equation \eqref{large_devh1}: %
\[ \bigabs{L(\hat{\beta}_2^{\MTL}) - L_{\bias}(\lambda) - L_{\vari}(\lambda) } \prec \frac {L_{\bias}(\lambda) + L_{\vari}(\lambda)}{\sqrt p}. \]
The proof essentially relies on the estimates \eqref{Op_norm} and \eqref{Op_norm2} for the SPS estimation, employing the same reasoning as outlined in Appendix \ref{app_firstpf}.

Next, for $t\in \R$, we denote
\[ \bv:={\Sigma^{(2)}}^{\frac 1 2}\left(\beta^{(1)}- \beta^{(2)}\right),\ \  H_t :=n_2 \left({Z^{(1)}}^{\top}{Z^{(1)}}\right)^{-1} +\frac{n_2}{\lambda}\Sigma^{(2)} + (1+t) \id_{p\times p}. \]
Then, under the setting of Theorem \ref{thm_SPS}, we can write $L_{\bias}(\lambda)$ and $L_{\vari}(\lambda)$ in equations \eqref{Lbias_SPS} and \eqref{Lvar_SPS} as 
\begin{align}\label{SPS_var_bias}
    L_{\bias}(\lambda) = \bv^{\top}  H_0^{-2}\bv,\quad
    L_{\vari}(\lambda) = \frac{p\sigma^2}{n_2} - \frac{\sigma^2}{n_2} \bigtr{H_0^{-1}}- \frac{\sigma^2}{\lambda} \bigtr{\Sigma^{(2)} H_0^{-2}}.
\end{align}
Note that $H_0^{-2}=-\left.\partial_t H_t^{-1}\right|_{t=0}.$
Hence, to evaluate equation \eqref{SPS_var_bias}, we need to derive precise estimates on $\bigtr{H_t^{-1}}$ and $\bigtr{\Sigma^{(2)}H_t^{-1}}$ for $t$ around 0. 
For this purpose, we rewrite $H_t^{-1}$ as 
$$H_t^{-1}= \frac{n_1}{n_2}\wt\Sigma_t^{-\frac 1 2}\left( \id_{p\times p} - \frac{1}{n_1^{-1}\wt\Sigma_t^{\frac 1 2}{Z^{(1)}}^{\top}{Z^{(1)}}\wt\Sigma_t^{\frac 1 2}+1}\right)\wt\Sigma_t^{-\frac 1 2},  $$
where $  \wt \Sigma_t :=\frac{n_1}{\lambda} \Sigma^{(2)} + (1+t) \frac{n_1}{n_2 }\id_{p\times p}.$
Now, define the resolvent $G_t(z)$ and its averaged trace as 
$$G_t(z):=\left(\frac1{n_1}\wt\Sigma_t^{\frac 1 2} {Z^{(1)}}^{\top}{Z^{(1)}} \wt\Sigma_t^{\frac 1 2}-z\id_{p\times p}\right)^{-1},\quad m_t(z):= \frac1{n_1}\tr[G_t(z)].$$
It is well-known that $m_t(z)$ converges to the Stieltjes transform of the MP law \cite{MP}, defined as the unique solution $a_t(z)$ to the deformed MP equation
\be\label{eq_self_at}
\frac{1}{a_t(z)}=- z + \frac{1}{n_1}\bigtr{\frac{\wt\Sigma_t}{\id_{p\times p}+a_t(z)\wt \Sigma_t}}
\ee
such that $\im a_t(z)\ge 0$ whenever $\im z\ge 0$. Furthermore, $G_t(z)$ is close to a deterministic matrix 
$$\Gi_t(z):=-\frac{1}{z(1+a_t(z)\wt\Sigma_t)},$$
in the sense of Proposition \ref{prop_aniso_SPS}, in which we take $z = -1$.

Now, we define 
\be\label{Pit-SPS}\Pi_t: = \frac{n_1}{n_2}\wt\Sigma_t^{-\frac 1 2}\left[ \id_{p\times p} - \Gi_t(-1)\right]\wt\Sigma_t^{-\frac 1 2}=\frac{n_1}{n_2}\frac{a_t(-1)}{\id_{p\times p}+a_t(-1)\wt\Sigma_t}. \ee
By Proposition \ref{prop_aniso_SPS}, $H_t^{-1}$ is well-approximated by $\Pi_t$ in the sense of anisotropic local law. Furthermore, using the argument in the proof of Theorem \ref{cor_MTL_loss} (i.e., the argument above equation \eqref{eq_derivative}), we obtain that for any deterministic unit vectors $\mathbf u, \mathbf v \in \mathbb R^{p}$,
\begin{equation}\label{aniso_SPS2}
\left| \mathbf u^\top ( H_0^{-2} +\left.\partial_t\Pi_t\right|_{t=0} )\mathbf v \right|  \prec  n^{-\frac 1 4}Q^{\frac 1 2}.
\end{equation}
With estimates \eqref{SPS_var_bias}, \eqref{aniso_SPS} and \eqref{aniso_SPS2}, to conclude estimates \eqref{Lvar_sps} and \eqref{Lbias_sps}, it remains to calculate $\Pi_0$ and $\left.\partial_t\Pi_t\right|_{t=0}$.

First, when $t=0$, $\Pi_0$ is equal to the matrix $M_1$ defined in equation \eqref{def-M12-SPS} with $a_0(-1)=a_0$ and $\wt\Sigma_0=\wt\Sigma$. Second, taking the derivative of equation \eqref{eq_self_at} with respect to $t$ at $t=0$, we obtain that 
\be\label{solve_a_SPS}
    \frac{\partial_t a_t(-1)|_{t=0}}{a_0^2}= -\frac{n_1}{n_2}x_0 + \partial_t a_t(-1)|_{t=0} \frac{1}{n_1}\bigtr{\left(\frac{\wt\Sigma_0}{\id_{p\times p}+a_0 \wt \Sigma_0}\right)^2},
\ee
where $x_0$ is defined in equation \eqref{x0-SPS}. Using equation \eqref{eq_self_at}, we obtain that  
\begin{align*}
    \frac{1}{n_1}\bigtr{\left(\frac{\wt\Sigma_0}{\id_{p\times p}+a_0 \wt \Sigma_0}\right)^2}&= \frac{1}{a_0^2}\left( -\frac{p}{n_1} + \frac{2a_0}{n_1}\tr \frac{\wt\Sigma_0}{1+a_0\wt\Sigma_0} + x_0\right)\\
    &= \frac{1}{a_0^2}\left[  -\frac{p}{n_1} + 2a_0\left( \frac{1}{a_0}-1\right)  + x_0\right]
    = \frac{1}{a_0^2}\left( 2 -\frac{p}{n_1} - 2a_0 + x_0\right).
\end{align*}
Plugging it into estimate \eqref{solve_a_SPS}, we can solve that 
\be\label{dervay_SPS}
\partial_t a_t(-1)|_{t=0}= \frac{n_1}{n_2}y_0,
\ee
where $y_0$ is defined in equation \eqref{x0-SPS}. With estimate \eqref{dervay_SPS}, we can calculate that
\begin{align}\label{M2_sps}
    \left.\partial_t\Pi_t\right|_{t=0} = \left(\frac{n_1}{n_2}\right)^2\frac{y_0  - a_0^2}{(\id_{p\times p}+a_0 \wt \Sigma_0)^2}=-M_2,
\end{align}
for $M_2$ defined in equation \eqref{def-M12-SPS}.
 
Finally, we combine the above ingredients to complete the proof. Using the truncation argument in the proof of Theorem \ref{thm_main_RMT} (i.e., the argument below equation \eqref{truncate1}), it suffices to assume that $Z^{(1)}$ satisfies the bounded support condition \eqref{eq_support} with $Q= n^{\frac {2} {\varphi}}$. Then, applying equations \eqref{aniso_SPS}, \eqref{Pit-SPS}, \eqref{aniso_SPS2} and \eqref{M2_sps}, we can evaluate equation \eqref{SPS_var_bias} and conclude that both equations \eqref{Lvar_sps} and \eqref{Lbias_sps} are true. The proof of Proposition \ref{thm_SPS} is finished.
\end{proof}

\bibliography{rf.bib}

\end{document}